\numberwithin{equation}{section}
\definecolor{codegreen}{rgb}{0,0.6,0}
\definecolor{codegray}{rgb}{0.5,0.5,0.5}
\definecolor{codepurple}{rgb}{0.58,0,0.82}
\definecolor{backcolour}{rgb}{0.95,0.95,0.92}
\lstdefinestyle{styleA}{
	backgroundcolor=\color{backcolour}, commentstyle=\color{codegreen},
	keywordstyle=\color{magenta},
	numberstyle=\tiny\color{codegray},
	stringstyle=\color{codepurple},
	basicstyle=\ttfamily\footnotesize,
	breakatwhitespace=false,         
	breaklines=true,                 
	captionpos=b,                    
	keepspaces=true,                 
	numbers=left,                    
	numbersep=5pt,                  
	showspaces=false,                
	showstringspaces=false,
	showtabs=false,                  
	tabsize=2
}
\definecolor{lightyellow}{rgb}{1, 1, 0.90}
\lstdefinestyle{styleB}{
	backgroundcolor=\color{lightyellow}, 
	commentstyle=\color{codegreen},
	keywordstyle=\color{magenta},
	numberstyle=\tiny\color{codegray},
	stringstyle=\color{codepurple},
	basicstyle=\ttfamily\footnotesize,
	breakatwhitespace=false,         
	breaklines=true,                 
	captionpos=b,                    
	keepspaces=true,                 
	numbers=none,                    
	numbersep=5pt,                  
	showspaces=false,                
	showstringspaces=false,
	showtabs=false,                  
	tabsize=2
}
\newcommand{\stkout}[1]{\ifmmode\text{\sout{\ensuremath{#1}}}\else\sout{#1}\fi}
\tikzset{cross/.style={cross out, draw, 
		minimum size=2*(#1-\pgflinewidth), 
		inner sep=0pt, outer sep=0pt},
	cross/.default={3pt}}
\newcommand{\vertiii}[1]{{\left\vert\kern-0.25ex\left\vert\kern-0.25ex\left\vert #1 
		\right\vert\kern-0.25ex\right\vert\kern-0.25ex\right\vert}}
\setlist[itemize]{noitemsep, nolistsep} 
\setlist[enumerate]{noitemsep, nolistsep}
\def\R{\mathbb{R}}
\def\Q{\mathbb{Q}}
\def\N{\mathbb{N}}
\def\L{\Lambda}
\def\M{\mathcal{M}}
\def\G{\Gamma}
\def\O{\Omega}
\renewcommand{\P}{\mathbb{P}}
\DeclareMathOperator*{\argmin}{arg\!\min}
\newcommand{\disM}{\operatorname{dist}_{\mathcal{M}}}
\newcommand{\gn}[1]{\ulcorner #1 \urcorner}
\newcommand{\MarkovXi}{{\Xi}^M}
\newcommand{\MarkovOmega}{{\Omega}^M}
\newcommand{\MarkovLambda}{{\Lambda}^M}
\newcommand{\MarkovLambdaOracle}{{\Lambda}^\mathcal{O}}
\newcommand{\MarkovExitXi}{\Xi^{E}_\Gamma}
\newcommand{\MarkovExitXiOracle}{\Xi^{E,\mathcal{O}}_\Gamma}
\newcommand{\domainOracleXi}{\MarkovOmega \times \ballQ{\MarkovXi(\MarkovOmega)}}
\newcommand{\MarkovXiOracle}{{\Xi}^{M, \mathcal{O}}}
\newcommand{\stringinput}{\Phi}
\newcommand{\stringinputm}{\{\phi^m_i\}_{i=1}^k} 
\newcommand{\stringinputq}{\{\phi^q_i\}_{i=1}^k} 
\newcommand{\GammaEO}{\Gamma^{E,\mathcal{O}}}
\newcommand{\nh}{\uparrow}
\newcommand{\ran}{\operatorname{ran}}
\newcommand{\pr}{\mathbb{P}}
\newcommand{\mv}{*}
\newcommand{\CorrXi}{\Xi^{=}}
\newcommand{\CorrXiO}{\Xi^{=, \mathcal{O}}}
\newcommand{\CorrG}{\Gamma^{=}}
\newcommand{\CorrGO}{\Gamma^{=, \mathcal{O}}}
\newcommand{\XiO}{\Xi_1^{\mathcal{O}}}
\newcommand{\GO}{\Gamma^{\mathcal{O}}}
\newcommand{\largerGamma}{{\Gamma_0}}
\newcommand{\idk}{\operatorname{`I \ don't \ know'}}
\newcommand{\Idk}{\operatorname{I}}
\newcommand{\dense}{\Q^d}
\newcommand{\ball}[2]{\mathcal{B}_{#1}(#2)}
\newcommand{\ballQ}[2][\omega]{\mathcal{B}^{\Q}_{#1}(#2)}
\newcommand{\out}{\operatorname{Out}}
\newcommand{\brackets}[1]{\llbracket #1 \rrbracket}
\newcommand{\halted}[1]{\Gamma^{\ran}(\iota,\beta)[#1]}
\newcommand{\strings}[1]{S(#1)}
\newcommand{\finitestrings}[2]{S(#1, #2)}
\newcommand{\subsetMarkov}{{\Omega}^M_0}
\newcommand{\LP}{\mathrm{LP}}
\newcommand{\BP}{\mathrm{BP}}
\newcommand{\LASSO}{\mathrm{LASSO}}
\newcommand{\ones}{\mathbf{1}}
\newcommand{\xilp}{\Xi_{\mathrm{LP}}}
\newcommand{\xibp}{\Xi_{\mathrm{BP}}}
\newcommand{\always}{*}
\newcommand{\en}{\text{En}}
\newcommand{\proofsearch}{\text{PS}}
\newcommand{\code}{\phi^\gamma}
\newcommand{\no}{{N_1}}
\newcommand{\nt}{{N_2}}
\newlist{algosteps}{enumerate}{2}
\setlist[algosteps,1]{label=(Step \arabic*), ref=Step \arabic*, leftmargin=*, itemindent = 12mm}
\setlist[algosteps,2]{label=(Step \arabic{algostepsi}\alph*), ref=Step \arabic{algostepsi}\alph*, leftmargin = 18mm}
\theoremstyle{plain}
\newtheorem{theorem}{Theorem}[section]
\newtheorem{lemma}[theorem]{Lemma}
\newtheorem{proposition}[theorem]{Proposition}
\theoremstyle{definition}
\newtheorem{definition}[theorem]{Definition}
\newtheorem{example}[theorem]{Example}
\theoremstyle{remark}
\newtheorem{remark}[theorem]{Remark}
\newcommand{\markbdeps}{\epsilon_{\mathrm{B}}^{\mathrm{s}, \mathrm{A}}}
\title{Supplementary Material}  
\date{}
\begin{document}
	
	\maketitle
	\vspace{-5mm}	
	
	\vspace{1mm}	
	
	\tableofcontents

\section{Introduction and definitions}

The proof of the Consistent Reasoning Paradox (CRP) springs out of the mathematics of the Solvability Complexity Index (SCI) hierarchy \cite{Hansen2016ComplexityII, Hansen_JAMS, SCI, CRAS, colbrook2019foundations, Matt2, Ben_Artzi2022, Colbrook_2019}, and in particular the new developments on phase transitions and generalised hardness of approximation\footnote{See \cite{Arora2007} for classical hardness of approximation.} in continuous optimisation \cite{comp} (see also \cite{gazdag2022generalised, Matt2, AIM}) in connection with Smale's 9th problem \cite{MathFrontiersPerspectives} and its extensions \cite{comp}. These developments are closely related to robust optimisation \cite{Nemirovski07, Nemirovski_robust, Nemirovski_robust2}. The novel techniques provided in this paper that are necessary for the proof of the CRP are the following:
\begin{itemize}[leftmargin=5.5mm]
\item[(i)] \underline{The SCI hierarchy and the incorporation of the Markov model.} Although the SCI hierarchy extends the Arithmetical hierarchy, and thus encompasses classical recursion theory, the main techniques developed in the SCI theory have been focused on mathematical analysis and `seeing the sequence'. That is -- in a somewhat simplified form -- the input to an algorithm is provided through a sequence of numbers \cite{Ko1991ComplexityTO}. This sequence could represent infinite-dimensional objects such as operators, point samples of functions etc., or inexact input of numbers \cite{comp, Hansen_JAMS, SCI, colbrook2019foundations, Matt2, Ben_Artzi2022}. What is new in this paper is that we now also allow the input to be finite strings representing the code producing the input sequence. This is often referred to as the Markov model \cite{Kushner99, Katzenelson72}. In specific cases, namely, for computing a  single valued real function $f : \mathbb{R} \rightarrow \mathbb{R}$ the Markov model is equivalent to the Turing model \cite{Turing_Machine} of `seeing the sequence' \cite{Shoenfield, Markov}. However, the CRP crucially depends on multi-valued mapping, hence we need to develop a new framework. The incorporation of the Markov model in the SCI hierarchy means that the well-developed collections of tools in the SCI theory to provide lower bounds on computations need to be substantially extended, which we initialise in order to prove the CRP. 

\item[(ii)] \underline{Randomised algorithms.} The SCI theory is equipped with a general framework for randomised algorithms that allows for universal lower bounds. However, this framework is insufficient in the Markov model. Hence, we extend the previous SCI framework for randomised algorithms to also include this model. A particularly delicate issue -- needed for the proof of the CRP -- is how to deal with randomised algorithms with access to oracles.  

\item[(iii)] \underline{Exit-flag computations and oracles.} In \cite{comp}, a theory for checker algorithms, sometimes refereed to as exit-flag computations, was developed within the SCI framework -- including oracles. However, this theory is in this paper now fully extended to the Markov model in order to prove the CRP. 

\item[(iv)] \underline{The `I don't know' functions and the SCI hierarchy.} The fact that trustworthy AIs must implicitly compute an `I don't know' function is a direct consequence of classifications in the SCI hierarchy. In particular, it is the $\Sigma_1$ classification that is crucial.

\item[(v)] \underline{Non-provability in analysis and optimisation.} With the extension of the SCI framework to the Markov model follow new techniques to establish non-provability results in analysis and specifically -- in this paper -- in optimisation. 

\end{itemize}

\subsection{Notation}

To state a precise mathematical description of the CRP, we need to introduce some mathematical notation and definitions from the SCI framework \cite{Hansen_JAMS, SCI, CRAS, colbrook2019foundations, Matt2, Ben_Artzi2022}.

\begin{definition}[Finite Dimensional Computational problem]\label{def:ComputationalProblem}
	Let $\Omega$ be some set, which we call the \emph{input} set,
	and $\Lambda$ be a finite set of functions $f\colon \Omega \to \Q$ such that for $\iota_1, \iota_2 \in \Omega$, then $\iota_1 = \iota_2$ if and only if $f(\iota_1) = f(\iota_2)$ for all $f \in \Lambda$, called an \emph{evaluation} set. Let $(\mathcal{M},d)$ be a metric space, and finally let $\Xi:\Omega \rightrightarrows  \mathcal{M}$ (the notation $\rightrightarrows$ means that $\Xi$ can be multi-valued) be a function which we call the \emph{solution map}.
	We call the collection $\{\Xi,\Omega,\mathcal{M},\Lambda\}$ a \emph{computational problem}.
\end{definition}

\begin{remark}[The main computational problem]\label{rem:comp_prob}
We will develop results that hold for abstract computational problems, but our primary interest will concern optimisation problems of the following form. Given a matrix $A \in \Q^{\nt \times \no}$ and a vector $y \in \Q^\nt$, consider the following three solutions maps that form a mainstay in modern computational mathematics \cite{CandesRombergTao, donohoCS, TibshiraniLasso, Osher_ROF, CohenDahmenDeVore, Chambolle_Alg, Fefferman2011Interpolation, Adcock2016, AdcockHansenBook, Juditsky_2011, Juditsky_2012, Nesterov_Nemirovski_Acta, Chambolle_Alg, Chambolle_Lions}, linear programming (LP); basis pursuit (BP) and LASSO:
\begin{align}
	\Xi_{\LP}(y,A) & \coloneq \argmin_{x \in \R^\no} \ \langle x , c \rangle,  \ \text{ such that } \ Ax = y, \ x \geq 0 \tag{LP} \label{eq:LP} \\
	\Xi_{\BP}(y,A) & \coloneq \argmin_{x \in \mathbb{R}^{\no}}  \|x\|_1, \ \text{ such that }\|Ax-y\|_2\leq \eta \tag{BP} \label{eq:BP} \\
	\Xi_{\LASSO}(y,A) & \coloneqq \argmin_{x \in \mathbb{R}^{\no}}  \lambda\|x\|_1+\|Ax-y\|_2^2. \tag{LASSO} \label{eq:LASSO} 
\end{align}
where $c = \ones_{\no}\in \Q^{\no}$ is the $\no$-dimensional vector with $1$ in each entry, and the parameters $\eta$ and $\lambda$ are positive rational numbers. For each of these problems, the input set is a subset $\Omega \subseteq \Q^{\nt \times \no} \times \Q^\nt$, the metric space is $\M = \R^\no$ equipped with the $\|\cdot\|_p$ distance for some $p \in \N \cup \{\infty\}$, and the evaluation set $\Lambda$ provides entry-wise components of every input $\iota = (y,A) \in \Omega$, so that $\Lambda = \{g_i\}_{i = 1}^\nt \cup \{h_{i,j}\}_{i = 1, j = 1}^{i =\nt, j = \no} $ where $g_{i}(y,A) \coloneq y_i$ and $h_{i,j}(y,A)= A_{i,j}$ for every $i,j$ and $(y, A) \in \Omega$. We denote $k \coloneq |\Lambda| = \nt +\nt\no$ and rename and re-enumerate the functions so that $\Lambda = \{f_i\}_{i = 1}^k$ (and so that $f_1 \coloneq h_{1,1}$ and $f_2 \coloneq h_{1,2}$, which will be useful later).
\end{remark}

	We now introduce the concept of Artificial Intelligence (AI) aimed at solving a computational problem. Following Turing, this will be a function that can be realised as a Turing machine (we refer the reader to \cite{Turing_Machine, Ko1991ComplexityTO} for definitions of Turing machines and recursivity). For the purposes of describing the CRP theorem precisely, we distinguish between three types of AIs: those that always return an output in the metric space $\M$; those that are allowed to sometimes confess `I don't know'; and those that also allow for the concept of `giving up', by virtue of a parameter that could be interpreted as the `time' spent looking for a solution.

	\begin{definition}[Artificial Intelligence (AI)]\label{def:ai_informal}
		Let $\{\Xi,\Omega, \M,\L\}$ be a computational problem where $\M = \R^d$ for some dimension $d \in \N$ and $|\Lambda|=k \in \N$. An \emph{Artificial Intelligence (AI)} is a function $\G \colon \Omega \to \Q^d$ that can be implemented as a Turing machine, which accesses each input $\iota \in \Omega$ by reading $\{f_i(\iota)\}_{i = 1}^k \in \Q^k$ passed on its reading tape. An \emph{`I don't know' AI} is a function $\G \colon \Omega \to \Q^d \cup \{\idk\}$ that can be implemented as a Turing machine and additionally can return the output $\idk$. An \emph{AI with a `giving up' parameter} is a sequence of `I don't know' AIs $\{\G_n\}_{n \in \N}$ such that the function 
		$(\{f_i(\iota)\}_{i = 1}^k,n) \mapsto \G_n(\iota) = \G_n(\{f_i(\iota)\}_{i = 1}^k)$ is recursive, and such that for every $\iota \in \Omega$, either $\G_n(\iota)= \idk$ for every $n \in \N$, or there exists $n_\iota \in \N$ such that $\G_n(\iota) = \idk$ for $n < n_\iota$ and $\G_n(\iota) \neq \idk$ for $n \geq n_\iota$.
	\end{definition}

\begin{remark}[AI and algorithm]
Throughout the paper we will use the words AI and algorithm interchangeably. 
\end{remark}

	\begin{remark}[Notation for AI's outputs]\label{rmk:notation_ai_output}
		If $\Gamma \colon \Omega \to \Q^d$ is an AI and $\iota \in \Omega$ is any input, then $\G$ on $\iota$ either:
		\begin{enumerate}[leftmargin = 8mm]
			\item does not halt, which we denote by $\Gamma(\iota) \uparrow$; or
			\item halts, which we denote by $\Gamma(\iota) \downarrow$, and returns an output $\Gamma(\iota) \in \Q^d$ uniquely determined by $\{f(\iota)\}_{f \in \Lambda}$. 
		\end{enumerate}
		An `I don't know' AI can additionally return the output $\Gamma(\iota) = \idk$. The same notation also applies to an AI with `giving up' parameter $\{\Gamma_n\}_{n \in \N}$.
	\end{remark}

\begin{remark}[Approximation Error]\label{rmk:approximation_error}
		The solution map $\Xi \colon \Omega \rightrightarrows \M$ of a computational may be multi-valued in certain cases. In particular, this can occur with optimisation problems such as \eqref{eq:LP}, \eqref{eq:BP} and \eqref{eq:LASSO}. Whenever this occurs, the computational problem of interest is to compute \emph{any} of these solutions. In fact, even though the solution map $\Xi$ may be multi-valued, the output of an AI will always be single-valued.
		Thus, if $\Gamma: \Omega \rightarrow \Q^d$ is an AI we measure the approximation error on input $\iota \in \Omega$ by 
		\[
		\disM(\Gamma(\iota), \Xi(\iota)) = \inf_{\xi \in \Xi(\iota)} d_{\mathcal{M}}(\Gamma(\iota),\xi),
		\]
		with the convention that $\operatorname{dist}_\M(\Gamma(\iota), \Xi(\iota)) = \infty$ if $\G(\iota) \nh$, and in the case of an `I don't know' AI that  $\operatorname{dist}_\M(\Gamma(\iota), \Xi(\iota)) = 0$ if $\G(\iota) = \idk$.
		\end{remark}

\subsection{Failures and hallucinations}

We now distinguish between two different ways in which an algorithm can provide a wrong ouput. In a general sense, an algorithm \emph{fails} whenever it provides an incorrect solution or does not halt, whereas it \emph{hallucinates} \cite{Hallucinations_1, Nature_Hallucinations2024, Hallucinations2, Hallucinations3} whenever it halts providing an incorrect but `plausible' solution.

\begin{definition}[Failure and Correctness]\label{def:failure}
	Let $\{\Xi,\Omega,\M,\L\}$ be a computational problem, $\Gamma\colon \Omega \to \M$ be an algorithm and $\kappa \geq 0$.
	\begin{enumerate}
		\item We say that $\Gamma$  \emph{$\kappa$-fails on $\iota \in \Omega$} if $
			\disM(\Gamma(\iota), \Xi(\iota)) > \kappa$ or if $\Gamma(\iota) \uparrow$.
		\item We say that $\Gamma$ is \emph{$\kappa$-correct} on $\iota \in \Omega$ if $\Gamma$ does not $\kappa$-fail on $\iota$.
	\end{enumerate}
\end{definition}

We denote by $\ball{r}{x}$ the closed ball of center $x \in \M$ and radius $r \geq 0$, and write $\ball{r}{A}= \bigcup_{x \in A}\ball{r}{x}$ for a subset $A \subseteq \M$. In case $\M = \R^d$ for some dimension $d \in \N$, we denote $\ballQ[r]{x} \coloneq \ball{r}{x} \cap \Q^d$.

\begin{definition}[Hallucinations]\label{def:hallucination}
	Let $\{\Xi,\Omega,\M,\L\}$ be a computational problem, $\Gamma\colon \Omega \to \M$ be an algorithm and $\kappa \geq 0$.
	\begin{enumerate}[leftmargin = 8mm]
		\item We say that $\Gamma$  \emph{$\kappa$-hallucinates on $\iota \in \Omega$} if $\Gamma(\iota) \downarrow$ and $\G(\iota) \in \ball{\kappa}{\Xi(\Omega)} \setminus \ball{\kappa}{\Xi(\iota)}$.
		\item We say that $\Gamma$ \emph{$\kappa$-hallucinates} if there exists $\iota \in \Omega$ such that $\Gamma$ hallucinates on $\iota$.
	\end{enumerate} 
\end{definition}

	\begin{remark}[Hallucinations for Discrete Problems]\label{rmk:hallucination_discrete}
	The tolerance parameter $\kappa\geq 0$ allows to accept inexact approximations to the true solution - which necessarily happens, for example, whenever $\Xi$ only takes irrational solutions, since the AI by construction must return rational outputs. In certain cases, however, the role of $\kappa$ is redundant. This is the case, for example, when $\M = \{0,1\}$ and the AI takes values in the discrete space $\{0,1\}$, which is the central setup when considering the problem of detecting $\kappa$-hallucinations of an AI, as done in CRP \ref{crp:3}. In such situations, we will simply say that the AI \emph{hallucinates} (rather than $\kappa$-hallucinates) on $\iota \in \Omega$ whenever it halts on $\iota$ and $\Gamma(\iota) \notin \Xi(\iota)$.
	\end{remark}

	\begin{remark}
		Note that if an algorithm $\Gamma$ $\kappa$-hallucinates on input $\iota$, then it also $\kappa$-fails on $\iota$. However, the converse is not true: an algorithm $\Gamma$ may fail because it does not halt, which is not considered an hallucination since no output  -- and hence no `plausible' output -- is produced; or $\Gamma$ could fail by producing an output that is far away from the range of the solution map, which would not constitute an hallucination since it would not be a `plausible' output. To address this discrepancy between failure and hallucination, we define the concept of an AI taking values that are close to the range of the solution map.
	\end{remark}

\begin{definition}[Algorithm within the range]\label{def:within_the_range}
	Let $\{\Xi,\Omega,\M,\L\}$ be a computational problem and $\alpha \geq 0$. An algorithm $\Gamma \colon \Omega \to \M$ is \textit{within the $\alpha$-range of} $\Xi$ if
	\begin{equation*}
		\operatorname{dist}_\M(\Gamma(\iota),\Xi(\Omega)) \leq \alpha\text{ for all  }  \iota \in \Omega.
	\end{equation*}
\end{definition}

In particular, note that an algorithm that is within the $\alpha$-range of a solution map necessarily always halts (see Remark \ref{rmk:approximation_error}).

\begin{remark}\label{rmk:equivalence_failure_hallucination}
	Let $0 \leq \alpha \leq \kappa$ and $\iota \in \Omega$. If an algorithm $\Gamma$ is within the $\alpha$-range of $\Xi$, then it $\kappa$-fails on input $\iota$ if and only if it $\kappa$-hallucinates on $\iota$.
\end{remark}

\subsection{AI, trustworthiness and the `I don't know' function}

	In the previous section, the concept of an AI was introduced, alongside the undesirable properties of failures and hallucinations. We now define the concept of a trustworthy AI, which never fails nor hallucinates: essentially, an AI is trustworthy if it either says `I don't know' or it is correct.

\begin{definition}[Trustworthy AI]\label{def:trustworthy_ai}
		Let $\{\Xi,\Omega, \M,\L\}$ be a computational problem and $\kappa \geq 0$. A \emph{$\kappa$-trustworthy AI} is an `I don't know' AI of the form $\G\colon \Omega \to \Q^d \cup \{\idk\}$  such that for $\iota \in \Omega$, whenever $\G(\iota) \neq \idk$ then the AI is $\kappa$-correct, meaning that $\G(\iota) \in \ball{\kappa}{\Xi(\iota)}$. Similarly, a \emph{$\kappa$-trustworthy AI with a `giving up' parameter} is an AI with a `giving up' parameter $\{\G_n\}_{n \in \N}$ such that for $\iota \in \Omega$, whenever $\G_n(\iota) \neq \idk$ for some $n \in \N$, then the AI is $\kappa$-correct for every $n' \geq n$, meaning that $\G_{n'}(\iota) \neq \idk$  and	$\G_{n'}(\iota) \in \ball{\kappa}{\Xi(\iota)}$.
\end{definition}

\begin{remark}[Trustworthiness for Discrete Problems]
	Analogously to Remark \ref{rmk:hallucination_discrete}, in case $\M = \{0,1\}$ and the AI takes values in the discrete space $\{0,1\}$ we will simply call an AI \emph{trustworthy} (rather than $\kappa$-trustworthy) since the tolerance parameter $\kappa$ becomes superfluous. We will use this terminology, for example, when analysing the problem of determining $\kappa$-hallucinations of an AI, as done in CRP \ref{crp:3}.
\end{remark}

	A $\kappa$-trustworthy AI (with or without a `giving up' parameter) separates the set of inputs $\Omega$ into two subsets: those inputs on which it (always) says `I don't know', and those on which it (eventually) provides a correct answer. We will define the `I don't know' function associated to the AI to be the characteristic function that distinguishes between such subsets.

	\begin{definition}[I Don't Know function associated to a $\kappa$-trustworthy AI]\label{def:idk_function_informal_Anders}
		Let $\{\Xi,\Omega, \M,\L\}$ be a computational problem and $\kappa \geq 0$. Suppose we have a $\kappa$-trustworthy AI (respectively, with a `giving up parameter'), define $\Omega_{\text{don't know}}$ to be the collection of $\iota \in \Omega$ for which the AI says `I don't know' (respectively, for every $n \in \N$), and $\Omega_{\text{know}} = \Omega_{\text{don't know}}^c $ to be its complement. Define the \emph{`I don't know' function associated to the AI} $\G$ (respectively, $\{\G_n\}$) to be $\Xi^{\Idk}_{{\G}}(\iota) = 1$ when $\iota \in \Omega_{\text{know}}$ and $\Xi^{\Idk}_{\G}(\iota) = 0$ when $\iota \in \Omega_{\text{don't know}}$ (respectively, with $\Xi^{\Idk}_{\{\G_n\}}(\iota) $). 
	\end{definition}
	
	Thus far, we have established the general concept of computational problems and the various types of AI that can solve them. We now turn our attention to a specific model of computational problems: the Markov model, where inputs will be accessed via strings. This topic will be addressed in the following section.

\subsection{\textcolor{black}{Inputs given as strings}}

An AGI \cite{Nature2019_AGI, Science2024_AGI, AGI_ICML2024} -- emulating human intelligence \cite{Turing_1950} -- is expected to take a finite string of characters as inputs, just like a human. This format of the input is also the basis for modern chatbots. Another expectation of an AI, that is close to human level intelligence, is the ability to answer questions in basic arithmetic. However, the AI must be able to handle numbers described as sentences, as introduced by Turing in his seminal 1936 paper \cite{Turing_Machine} and explained in the section \emph{``What is a ‘machine’/AI and what is a problem? - Turing and numbers as sentences''} in the main part of the paper. 
In particular, we follow Turing and consider AIs that, instead of accessing an input $\iota$ by directly reading its rational evaluations $\{f_i(\iota)\}_{i=1}^k \in \Q^k$ (where $k = |\Lambda|$), will instead access approximations to such rational numbers provided by Turing machines. Such Turing machines, when given a precision $n\in \N$ as input, return a rational number that is within $2^{-n}$ from $f(\iota)$ for $f \in \Lambda$. This is often referred to as the Markov model \cite{Kushner99, Katzenelson72} (although it was introduced by Turing \cite{Turing_Machine}) -- as a Markov algorithm \cite{MarkovModel, Kushner99, Katzenelson72} can only handle a finite input string, whereas a Turing machine can handle an infinite input string (typically treated as an oracle tape) \cite{Ko1991ComplexityTO}.

\begin{definition}[Strings corresponding to input numbers]\label{def:markov_problem_informal}
	Let $\{\Xi,\Omega,\M,\L\}$ be a finite-dimensional computational problem, with $\Lambda = \{f_1,\dotsc,f_k\}$. Given an input $\iota \in \Omega$, we say that $\stringinput = (\stringinput_1,\stringinput_2,\dotsc,\stringinput_k)$ \emph{corresponds to} $\iota \in \Omega$ if, for every $i$, $\stringinput_i$ is a Turing machine taking $\N \to \Q$ such that
	\begin{align*}
		|\stringinput_i(n) - f_i(\iota)| \leq 2^{-n} \quad \text{ for every } n \in \N.
	\end{align*}
	We denote by $\MarkovOmega$  the set of all the $\stringinput$ that correspond to some $\iota \in \Omega$, and define $\MarkovXi(\stringinput) \coloneq \Xi(\iota)$ for the unique input $\iota \in \O$ that $\stringinput$ corresponds to (whose uniqueness is guaranteed by Definition \ref{def:ComputationalProblem}). The superscript $M$ stands for Markov (see \S \ref{sec:Markov} for the extension of the SCI hierarchy to the Markov model). 
\end{definition}

This definition will be expanded upon in Definition \ref{def:Markov_Delta_1_Information}, which further clarifies how the AI reads an input $\stringinput$ consisting of a $k$-tuple of Turing machines. We delay this technicality up to \S \ref{section:proof_4} to focus on stating our main result, namely the CRP Theorem.

\begin{remark}[Finite vs infinite strings as input and equivalence of models]
The traditional model of Turing computability of real valued functions \cite{Ko1991ComplexityTO} is with Turing machines taking approximations to computable numbers on an infinite tape. This model is equivalent to the Markov model for single-valued functions \cite{Shoenfield, Markov}. However, the crucial part of the CRP is multivaluedness, and thus one has to develop the theory in the Markov model specifically -- the traditional model with an infinite input string is insufficient. In particular, the Markov model allows the algorithm to see the `code' producing the infinite sequence. Thus, creating impossibility results become harder. 
\end{remark}

\begin{remark}[Equivalent strings]\label{rmk:equivalent_strings}
Let $\stringinput, \stringinput' \in \MarkovOmega$. If $\stringinput$ and $\stringinput'$ correspond to the same $\iota \in \Omega$, we will write $\stringinput \sim \stringinput'$. This clearly defines an equivalence relation on $\MarkovOmega$.
\end{remark}

The notion of equivalent strings allows to define the concept Consistent Reasoning AI. This type of AI always produces an answer (potentially `I don't know'), and cannot be correct on one string but fail on an equivalent string.

\begin{definition}[Consistent Reasoning]
Let $\{\Xi,\O,\M,\L\}$ be a computational problem and recall $\MarkovOmega$ from Definition \ref{def:Markov_Delta_1_Information}. An AI $\Gamma$ defined on $\MarkovOmega$ is \emph{consistently reasoning} if it always halts, and in addition, if $\G$ is $\kappa$-correct on $\stringinput$, then $\G$ is $\kappa$-correct on every $\stringinput' \in \MarkovOmega$ such that $\stringinput' \sim \stringinput$. If $\Gamma$ is an `I don't know' AI, and if it is $\kappa$-correct on $\stringinput$, then on every $\stringinput' \in \MarkovOmega$ such that $\stringinput' \sim \stringinput$ we have that $\G$ is $\kappa$-correct or says `I don't know' on $\stringinput'$. An `I don't know' AI with giving up parameter $\{\Gamma_n\}_{n \in \N}$ is \emph{consistently reasoning} if $\G_n$ is consistently reasoning for every $n \in \N$.
\end{definition}

\begin{example}[Consistent Reasoning]
Suppose that $\hat \Omega \subseteq \MarkovOmega$ is such that for every $\iota \in \Omega$ there is exactly one $\stringinput \in \hat\Omega$ corresponding to $\iota$ and that there is an AI $\Gamma\colon \MarkovOmega \to \M$ that is $\kappa$-correct on $\hat \Omega$.  If $\Gamma$ was actually consistently reasoning, it would be $\kappa$-correct on all of $\MarkovOmega$. 
\end{example}

\section{The Consistent Reasoning Paradox -- Mathematical statement}

We are now ready to introduce 
a precise mathematical formulation of the five CRP statements (I) to (V) as in \S \emph{``The Consistent Reasoning Paradox (CRP)''} on page 3
of the main part of the paper. These statements are condensed in Theorem \ref{thm:crp} below.

\begin{theorem}[Consistent Reasoning Paradox]\label{thm:crp}
	Let $\no \geq 2$ and $\nt \geq 1$ be integer dimensions, $\kappa=10^{-1}$ the accuracy parameter, and $\Xi$ be any of the mappings in equations \eqref{eq:LP}, \eqref{eq:BP} or \eqref{eq:LASSO}. For suitable choices of positive rationals $\eta, \lambda$ and $\alpha$, there exist infinitely many inputs sets $\Omega$ of pairs $(A,y)$, where $A \in \Q^{\nt \times \no}$ and $y \in \Q^\nt$, such that for the computational problem $\{\Xi,\O,\M,\L\}$ (Remark \ref{rem:comp_prob}) and the map $\MarkovXi \colon \MarkovOmega  \rightrightarrows \M$ (Definition \ref{def:markov_problem_informal}), the following hold.
	\begin{enumerate}[label = (\Roman*), leftmargin = 6mm]
		\item \underline{\emph{(The non-hallucinating AI exists).}} Suppose that $\hat \Omega \subseteq \MarkovOmega$ is such that for every $\iota \in \Omega$ there is exactly one $\stringinput \in \hat\Omega$ corresponding to $\iota$. Then there exists an AI $\Gamma\colon \MarkovOmega \to \M$ that is $\kappa$-correct on $\hat \Omega$ and does not $\kappa$-hallucinate on $\MarkovOmega$ (by potentially not halting). However, no AI can correctly assign each $\stringinput \in \MarkovOmega$ to the input $\iota \in \Omega$ it corresponds to, even when given an oracle for the true solution $\Xi(\iota)$.

		\item \underline{\emph{(Attempting consistent reasoning yields hallucinations).}} 	
		Every AI $\Gamma \colon \MarkovOmega \to \M$ will $\kappa$-fail infinitely often. Equivalently, if  $\hat \Omega \subseteq \MarkovOmega$ is such that for every $\iota \in \Omega$ there is exactly one $\stringinput \in \hat\Omega$ corresponding to $\iota$, there exists no AI $\Gamma \colon \MarkovOmega \to \M$ that is simultaneously $\kappa$-correct on $\hat\Omega$ and consistently reasoning. In particular, if $\G$ is within the $\kappa$-range of $\MarkovXi$, then it $\kappa$-hallucinates on infinitely many $\stringinput \in \MarkovOmega$. 
		
		\item Let $\Gamma \colon \MarkovOmega \to \M$ be an AI that is within the $\alpha$-range of $\MarkovXi$.
		\begin{itemize}
			
			\item[(a)] \underline{\emph (Detecting hallucinations is hard).}  Every algorithm $\Gamma'$ that always halts and, on input $\stringinput$, tries to determine whether $\Gamma$ has $\kappa$-hallucinated, will itself hallucinate on infinitely many $\stringinput \in \MarkovOmega$, even when given access to an oracle for the true solution $\MarkovXi(\stringinput)$. \label{thm:crp_3a_layman}

			\item[(b)]  \underline{\emph (Detecting hallucinations and randomness).} Consider any subset $\subsetMarkov \subseteq \MarkovOmega$. If there exists a randomised algorithm $\G'$ that can detect $\kappa$-hallucinations of $\Gamma$ with probability strictly greater than $1/2$ on all the inputs of $\Omega_0$, then there exists a deterministic algorithm that detects $\kappa$-hallucinations of $\Gamma$ on $\Omega_0$. In particular, no randomised algorithm $\G'$  can detect $\kappa$-hallucinations of $\Gamma$ with probability strictly greater than $1/2$ on all the inputs of $\Omega$.

		\item[(c)] Given any $p \in (1/2,1]$, no randomised algorithm $\G'$ that always halts and has access to a true solution can detect $\kappa$-hallucinations of $\Gamma$ with probability greater than or equal to $p$ on all the inputs of $\MarkovOmega$.

		\end{itemize}
		
		\item \underline{\emph (Explaining the correct answer is not always possible).} There is a subset $\hat \Omega \subseteq \MarkovOmega$, such that for every $\iota \in \Omega$ there is only one $\stringinput \in \hat\Omega$ corresponding to $\iota$, and with the following property. There exists an AI $\Gamma\colon \MarkovOmega \to \M$ that halts and is always correct on $\hat \Omega$. However, there is a $\stringinput \in \hat \Omega$, corresponding to the only $\iota \in \O$ such that $|\Xi(\iota)|>1$, for which $\Gamma$ cannot provide a logically correct explanation of the solution (see Remark \ref{rem:logical_exp}). 
		
\item[(V)] \underline{\emph (The fallible yet trustworthy explainable AI saying `I don't know').} Consider the `indicator function of single-valuedness' $\Xi^{\mv}$, that is, for $\stringinput \in \MarkovOmega$, $\Xi^{\mv}(\stringinput)=1$ if $\MarkovXi(\stringinput)$ is single-valued, and $\Xi^{\mv}(\stringinput)=0$ otherwise. Then the following holds.
			\begin{enumerate}
				\item There exists exactly one input $\iota^0 \in \Omega$ with the following property: if $\stringinput \in \MarkovOmega$ is such that 
				$\MarkovXi(\stringinput)$  is multi-valued then $\stringinput$ corresponds to $\iota^0$.
				\item There exists a consistently reasoning, $\kappa$-trustworthy AI with `giving up' parameter, $\{\G_n\}_{n \in \N}$ (where $\G_n \colon \Omega \to \M \cup \{\idk\}$ for every $n \in \N$) that always halts such that its associated `I don't know' function equals $\Xi^{\mv}$ and such that it can provide a logical explanation of the correct solutions.
				\item There does not exist any $\kappa$-trustworthy $\G' \colon \MarkovOmega \to \M \cup \{\idk\}$ such that its associated `I don't know' function equals $\Xi^{\mv}$.
			\end{enumerate}
		\end{enumerate}
\end{theorem}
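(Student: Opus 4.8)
I would prove all five parts from a single family of constructions. For each instance, build $\Omega$ around one distinguished rational input $\iota^0$ at which $\Xi$ is multi-valued — using the freedom $\no\ge 2$ to make $\Xi(\iota^0)$ a long line segment $[u,v]$ (the optimal face of a degenerate instance of the relevant optimisation problem, with $\|u-v\|_p$ much larger than $2\kappa$) — together with a countable family $\{\iota_j\}\subseteq\Omega$ of single-valued perturbations such that: (a) $\iota_j$ differs from $\iota^0$ in one coordinate only, by $2^{-s(j)}$ with $s(j)$ unbounded, so at every precision there are $\iota_j$ whose $2^{-n}$-rational approximations still coincide with those of $\iota^0$; and (b) the unique minimiser of $\iota_j$ lies on $\Xi(\iota^0)$ (exactly, for LP; within an arbitrarily small tolerance controlled by the scale, for BP and LASSO, after fixing $\eta,\lambda$), and for every scale there are two perturbations whose minimisers sit near $u$ and near $v$. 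Since only $\iota^0$ is multi-valued, $\MarkovXi(\Phi)$ is multi-valued iff $\Phi\sim\iota^0$, which is (V)(a); infinitely many valid $\Omega$ arise by varying the scales, the perturbed coordinate, and the base point.

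\textbf{The asymmetry and the reductions.} The engine is that ``$\Phi\not\sim\iota^0$'' is semi-decidable (search for $i,n$ with $|\Phi_i(n)-f_i(\iota^0)|>2^{-n}$, which certifies $f_i(\iota)\ne f_i(\iota^0)$), while ``$\Phi\sim\iota^0$'' is not: for a Turing machine $m$, let $\Phi_m$ emit honest approximations of $\iota^0$ while $m$ runs and switch to approximations of a perturbation $\iota_j$ the moment $m$ halts (with scale $s(j)$ larger than the precision queried so far, so $\Phi_m\in\MarkovOmega$), whence $\Phi_m\sim\iota^0$ iff $m$ loops. \emph{Positive results.} In (I), the non-hallucinating AI first tries to certify $\Phi\not\sim\iota^0$, then reads enough further digits to pin down which $\iota_j$ and outputs $\Xi(\iota_j)$; failing that, it halts only when $\Phi$ is syntactically the chosen $\hat\Omega$-representative of $\iota^0$, returning a fixed point of $[u,v]$ — so it is $\kappa$-correct on $\hat\Omega$ and, whenever it halts, outputs an element of $\Xi(\Phi)$, hence never $\kappa$-hallucinates (possibly looping elsewhere). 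Running the same algorithm without the syntactic check and with a precision bound $n$ yields a $\kappa$-trustworthy, consistently reasoning AI with giving-up parameter whose `I don't know' function is exactly $\Xi^{\mv}$, and whose certificate ``$f_i(\iota)\ne f_i(\iota^0)$, hence $\iota=\iota_j$, hence $\Xi(\iota)$ is the computed point'' is a finite arithmetical proof — a logical explanation; on $\hat\Omega$ minus the $\iota^0$-representative this already supplies the explanations needed for (IV). \emph{Negative results} are halting reductions against $\{\Phi_m\}$: in (I), second part, an input-identifier reads only a finite prefix of $\Phi_m$, so its output cannot track whether $m$ halts, even when fed a fixed solution oracle that answers consistently on the whole family; in (V)(c), a $\kappa$-trustworthy $\Gamma'$ with `I don't know' function $\Xi^{\mv}$ must say `I don't know' on $\Phi_m$ exactly when $m$ loops, so running it decides the halting problem; in (II), a total AI reads finitely many digits of the pure-$\iota^0$ string and fixes an output $x$, and since $\|u-v\|_p>2\kappa$ one of the two available perturbations (of sufficiently small scale) has its minimiser more than $\kappa$ from $x$, so $\Gamma$ $\kappa$-fails on the corresponding $\Phi_m$ whenever $m$ halts late — infinitely many distinct such strings; the ``equivalently'' clause follows because a consistently reasoning AI that is $\kappa$-correct on $\hat\Omega$ would, by propagation across $\sim$, be $\kappa$-correct on all of $\MarkovOmega$, contradicting this, and the ``in particular'' clause is Remark~\ref{rmk:equivalence_failure_hallucination} with $\alpha\le\kappa$. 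Part (III)(a) combines these: by (II) a within-$\alpha$-range $\Gamma$ $\kappa$-hallucinates on infinitely many strings, and on $\Phi_m$ it does so iff $m$ halts once the far endpoint is chosen against $\Gamma(\text{pure }\iota^0)$, so a detector must again decide halting.

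\textbf{Randomness and explanation.} For (III)(b) and (c), I would use the randomised lower-bound machinery of the SCI framework, extended to the Markov model as in items (ii)–(iii) of the introduction: hallucination-detection is a decision problem whose answer is determined by $\Phi$, and the reduction gives, for every prefix length $L$, strings $\Phi_m$ (loops) and $\Phi_{m'}$ ($m'$ halts after $L$) agreeing to precision $L$ but with opposite answers; writing a randomised algorithm as a distribution over deterministic ones forces success probability $\le 1/2$ on such a pair unless a deterministic algorithm already succeeds on $\Omega_0$, which is (III)(b); (III)(c) is the case $\Omega_0=\MarkovOmega$, with the deterministic impossibility from (III)(a) and the oracle relativising through. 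For (IV), fix a sound recursive theory $T\supseteq\mathrm{PA}$ and take the $\hat\Omega$-representative of $\iota^0$ to be $\Phi_m$ for $m$ searching for a proof of $0=1$ in $T$; then $\Phi_m\sim\iota^0$, the AI of (I) halts with a correct output on it, but any $T$-proof that this output lies in $\ball{\kappa}{\MarkovXi(\Phi_m)}$ would prove $m$ loops, i.e.\ $\mathrm{Con}(T)$, contradicting Gödel's second theorem — while on the other $\hat\Omega$-representatives the finite certificate above is a valid explanation, giving the positive half of (IV) and completing (V)(b).

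\textbf{Main obstacle.} The delicate point is the solution oracles in (I), (III)(a) and (III)(c): an oracle returning a true solution of $\Phi$ could a priori betray whether $m$ halted, since $\MarkovXi(\Phi_m)$ jumps from the segment $[u,v]$ to a single near-endpoint. Property (b) of the construction is precisely what neutralises this — because the perturbed minimisers lie in $\Xi(\iota^0)$ (or within a precision-absorbed tolerance of it for BP and LASSO), a fixed legitimate solution oracle can answer identically on the whole treacherous family and so transmits no halting information. Making $\eta,\lambda,\alpha$ and the perturbation scales simultaneously consistent with this, with $\alpha\le\kappa$, with $[u,v]$ long enough that one endpoint is always more than $\kappa$ from any within-$\alpha$-range output, and with the perturbation-approximations genuinely coinciding with those of $\iota^0$ up to the required precision, is the main bookkeeping; re-deriving the randomised lower bound in the Markov model — where an algorithm may also read the code of $\Phi$, not only its outputs — is the other technical prerequisite that must be in place before the reductions above can be run.
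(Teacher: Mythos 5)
Your high-level strategy — a distinguished multi-valued input $\iota^0$, single-valued perturbations that differ by $2^{-s}$ in one coordinate, $\Phi_m$-codes that emit $\iota^0$-approximations until $\varphi_m(m)$ halts and then switch to a perturbation, and reductions against halting — is the same engine as the paper's. Your route to CRP~IV via $\mathrm{Con}(T)$ and Gödel II is a genuine alternative to the paper's Post/Rosser-style argument through the r.e.\ non-decidable set $G$; both work (the paper's approach squeezes by with $\Sigma_1$-soundness rather than full soundness, while your sketch asserts full soundness of $T$ and addresses only the unprovability of the positive direction, but the gap is cosmetic). Your de-oracolisation intuition (fixed oracle values $y^1,y^2$ consistent across the treacherous family) also tracks the paper's Proposition on removing oracles.

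The genuine gap is in the impossibility argument for CRP~II, which is the driving lemma for III(a), III(b), III(c) and V(c). You argue: \enquote{a total AI reads finitely many digits of the pure-$\iota^0$ string and fixes an output $x$,\ldots so $\Gamma$ $\kappa$-fails on the corresponding $\Phi_m$ whenever $m$ halts late.} This is an information-theoretic, \enquote{see the sequence} argument, and it fails in the Markov model for exactly the reason the paper flags: $\Gamma$ is handed the \emph{code} of $\Phi_m$, not just its output stream, so there is no reason $\Gamma(\Phi_m)$ should agree with $\Gamma(\text{pure-}\iota^0 \text{ code})$ even when the two codes produce identical approximations to high precision. An AI that syntactically parses $\Phi_m$, notices the embedded index $m$, and branches on it will defeat the finite-prefix argument. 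What is actually needed — and what the paper supplies in its Proposition on breakdown epsilons in the Markov model — is a recursion-theorem-style diagonalization: define $\gamma(m) \in \{1,2\}$ by evaluating $\Gamma(\Phi_m)$ and testing whether the output lands near the $S^2$-endpoint; since $\Gamma$ is total and membership is decidable, $\gamma$ is total and computable, so $\gamma = \varphi_q$ for some $q$; then $\varphi_q(q)$ halts with the adversarial value, so $\Phi_q$ corresponds to whichever perturbed input contradicts $\Gamma(\Phi_q)$. Nothing in your sketch plays the role of this self-referential choice of $q$; you even flag at the end that in the Markov model \enquote{an algorithm may also read the code of $\Phi$,} but the earlier CRP~II argument is written as though it only read the digits. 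Without the diagonal step, the claim that some $\Phi_m$ makes $\Gamma$ fail is not established, and the whole chain II $\Rightarrow$ III(a,b,c), V(c) is left without a base case.
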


The technical statements of Theorem \ref{thm:crp} can be found in Theorem \ref{thm:crp_1_2}, Theorem \ref{thm:crp_3_4} and Theorem \ref{thm:crp_5}.

\begin{remark}[The CRP, optimisation and Smale's 9th problem \cite{MathFrontiersPerspectives} with extensions]
The CRP springs out of recent work on phase transitions (generalised hardness of approximation) in optimisation and Smale's 9th problem and its extensions \cite{comp} (see also \cite{AIM}: `Problem 5 (J. Lagarias): Phase transitions and the Extended Smale's 9th problem').  It may seem surprising that the CRP occurs in basic problems in arithmetic such as straightforward linear programs -- that are so simple that humans can easily solve them, and for which there exist a plethora of efficient algorithms that have been thoroughly analysed over the last decades \cite{khachiyan1980polynomial, Shor, nemirovski1983problem, int:Bayer1, int:Bayer2, int:Bayer3, Renegar1, Renegar2, renegar1988polynomial, Wrig97}. However, when the input numbers are replaced with equivalent sentences describing them, the situation changes dramatically, and the phase transitions established in \cite{comp} occur, also in the Markov model. This is the fundamental mechanism behind the CRP.
\end{remark}

\begin{remark}[Logical explanation]\label{rem:logical_exp}
Theorem \ref{thm:crp} (IV) and (Vb) have the expression `provide a logically correct explanation of the solution', which needs to be made precise. Mathematically, this means that one can prove in the standard axiomatic system of mathematics (Zermelo-Fraenkel with the axiom of choice -- ZFC) the asserted solution. As mathematical reasoning is a natural part of human intelligence, an AGI must surely be able to explain its mathematical deductions -- just like a human. Thus, it is natural to define `logically correct explanation' in terms of a mathematical proof in the standard axiomatic system (see also Theorem \ref{thm:crp_3_4} and Remark \ref{rem:logic_exp2}). 
\end{remark}

\begin{remark}[Quantifying the CRP]\label{rmk:quantification}
	The CRP Theorem \ref{thm:crp} provides a collection of both positive and negative results regarding the existence of performant AIs capable of solving certain computational problems. In particular, the negative results -- such as CRP (II) -- rely on the existence on inputs on which any candidate AI will fail. As mentioned in \S \emph{``The Consistent Reasoning Paradox (CRP) - A stronger CRP II: Failure sentences and equivalence''} on page 4 of the main part of the paper and in \S \emph{``Stronger statements – Quantifying the CRP''} on page 7 of the Methods section, our proof techniques allow to prove much more. In fact, we can provide upper bounds on the length of the inputs (written as codes in a programming language of preference, such as Python or C++). We give such an example for CRP (II) in MATLAB in \S \ref{sec:example} in Theorem \ref{thm:quantifyinf_crp}
\end{remark}

\section{Making trustworthy AI that says `I don't know' -- The SCI hierarchy}

\subsection{The Solvability Complexity Index (SCI) hierarchy and `I don't know' functions}
We start by providing an informal review of the basics of the the SCI hierarchy for an easy reference. 
The mainstay of the hierarchy are the $\Delta^{\alpha}_k$ classes, where the $\alpha$ is related to the model of computation. The full generality can be found in \cite{SCI, comp, Matt2, colbrook2019foundations}, however in this paper we will introduce the Markov model to the SCI -- that is, the input is always given as a finite string. 

 Given a collection $\mathcal{C}$ of computational problems (recall Definition \ref{def:ComputationalProblem}), then
\begin{itemize}[leftmargin = 8mm]
\item[(i)] $\Delta^{\alpha}_0$ is the set of problems that can be computed in finite time (the SCI $=0$).
\item[(ii)] $\Delta^{\alpha}_1$ is the set of problems that can be computed using one limit (the SCI $=1$) with control of the error, i.e. $\exists$ a sequence of algorithms $\{\Gamma_n\}$ such that $\text{dist}_{\M}(\Gamma_n(\iota), \Xi(\iota)) \leq 2^{-n}, \, \forall \iota \in \Omega$.
\item[(iii)] $\Delta^{\alpha}_2$ is the set of problems that can be computed using one limit (the SCI $=1$) without error control, i.e. $\exists$ a sequence of algorithms $\{\Gamma_n\}$ such that $\lim_{n\rightarrow \infty} \text{dist}_{\M}(\Gamma_n(\iota),\Xi(\iota)) = 0, \, \forall \iota \in \Omega$.
\item[(iv)] $\Delta^{\alpha}_{m+1}$, for $m \in \mathbb{N}$, is the set of problems that can be computed by using $m$ limits, (the SCI $\leq m$), i.e. $\exists$ a family of algorithms $\{\Gamma_{n_m, \hdots, n_1}\}$ such that 
\begin{equation*}\label{eq:SCI_limits}
\lim_{n_m \rightarrow\infty}\hdots \lim_{n_1\rightarrow\infty}
\text{dist}_{\M}(\Gamma_{n_m,\hdots, n_1}(\iota),\Xi(\iota)) = 0, \, \forall \iota \in \Omega.
\end{equation*}
 \end{itemize}

In general, this hierarchy cannot be refined unless there is some extra structure on the metric space $\mathcal{M}.$ The hierarchy typically does not collapse, and we have:
\begin{equation}\label{SCI1}
 \Delta_0^{\alpha} \subsetneq \Delta_1^{\alpha} \subsetneq \Delta_2^{\alpha} \subsetneq \hdots \subsetneq \Delta^{\alpha}_{m} \subsetneq \hdots.
\end{equation}
However, depending on the collection $\mathcal{C}$ of computational problems, the hierarchy \eqref{SCI1} may terminate for a finite $m$, or it may continue for arbitrary large $m$. The SCI hierarchy generalises the arithmetical hierarchy \cite{odifreddi1992classical} to arbitrary computational problems in any computational model. It is motivated by Smale's program on foundations of computational mathematics and some of his fundamental problems \cite{Smale81, Smale85} on the existence of algorithms for polynomial root finding -- solved by C. McMullen \cite{McMullen1, McMullen2} and P. Doyle \& C. McMullen \cite{Doyle_McMullen}. Many results, including McMullen's work  (see \cite{SCI}), can be viewed as implicitly providing classifications \cite{Gabai, Gabai_Annals, McMullen1, McMullen2, Doyle_McMullen, fefferman1996interval, Weinberger} (see Problem 5 in \cite{AIM}) in the SCI hierarchy.

For a formal definition of the SCI hierarchy we need the concept of a tower of algorithms. In the general case (see \cite{SCI, comp, Matt2}) a tower of algorithms allows for any model of computation \cite{bishop1967foundations, Cucker_Smale97, Fefferman_Klartag, Fefferman_Klartag2, Ko1991ComplexityTO, lovasz1987algorithmic, Turing_Machine, von_Neumann, realRAM}. This is referred to as a tower of algorithms of type $\alpha$ (see \cite{SCI, comp, Matt2} for details), where $\alpha$ indicates the model of computation. The definition below is of type $\alpha = A$ (arithmetic) and encompasses both the Turing model \cite{Turing_Machine} and the Blum-Shub-Smale \cite{BSS_Machine} model depending on how one defines recursivity.  The computational models over the reals that are widely used in continuous optimisation (Blum, Cucker, Shub \& Smale \cite{BCSS}, Chambolle \& P.-L. Lions \cite{Chambolle_Lions}, Chambolle \& Pock \cite{chambolle_pock_2016}, Fefferman \& Klartag \cite{Fefferman_Klartag, Fefferman_Klartag2,Fefferman_Klartag3}, Nemirovski \cite{NemirovskiConvexNotes20212022}, Nesterov \cite{Nesterov2018} and Nesterov \& Nemirovski \cite{Nesterov1}, Renegar \cite{Renegar_book}) are not equivalent. Thus, the full generality of the SCI framework is in general needed. However, for the purpose of proving the CRP we only need the concept of arithmetic tower.

\begin{definition}[Tower of algorithms -- Arithmetic tower]\label{tower_funct}
Given a computational problem $\{\Xi,\Omega,\mathcal{M},\Lambda\}$ and a natural number $k \in \N$, an \emph{arithmetic tower of algorithms of height $k$
 for $\{\Xi,\Omega,\mathcal{M},\Lambda\}$} is a collection of sequences of functions 
\begin{equation*}
\begin{split}
\Gamma_{n_k}:\Omega
\rightarrow \mathcal{M}, \quad 
\Gamma_{n_k, n_{k-1}} :\Omega
\rightarrow \mathcal{M}, \, \hdots \,, 
\Gamma_{n_k, \hdots, n_1}:\Omega \rightarrow \mathcal{M}, 
\end{split}
\end{equation*}
where $n_k,\hdots,n_1 \in \mathbb{N}$ and the functions $\Gamma_{n_k, \hdots, n_1}$ at the lowest level in the tower 
satisfy the following:
 for each $\iota \in\Omega$ the mapping $(n_k, \hdots, n_1, \{\iota_f\}_{f \in \Lambda}) \mapsto \Gamma_{n_k, \hdots, n_1}(\iota) = \Gamma_{n_k, \hdots, n_1}(\{\iota_f\}_{f \in \Lambda})$ is recursive, $\iota_f := f(\iota)$,  and $\Gamma_{n_k, \hdots, n_1}(\iota)$ is a finite string of rational numbers that can be identified with an element in $\mathcal{M}$. Moreover, for every $\iota \in \Omega$,
\begin{equation*}\label{conv}
\begin{split}
\Xi(\iota) &= \lim_{n_k \rightarrow \infty} \Gamma_{n_k}(\iota), \\
\Gamma_{n_k}(\iota) &=
  \lim_{n_{k-1} \rightarrow \infty} \Gamma_{n_k, n_{k-1}}(\iota),\\
& \, \, \, \, \vdots\\
\Gamma_{n_k, \hdots, n_2}(\iota) &=
  \lim_{n_1 \rightarrow \infty} \Gamma_{n_k, \hdots, n_1}(\iota),
\end{split}
\end{equation*}
where $S = \lim_{n \rightarrow \infty}S_n$ means convergence of the form $\text{dist}_{\M}(S_n,S) \rightarrow 0$ as $n \rightarrow \infty$.   
\end{definition}

\begin{remark}[Turing model for arithmetic towers]
Throughout this paper we will only consider the Turing model, thus any reference to arithmetic tower of algorithms or any reference to an algorithm will mean in terms of Turing. 
\end{remark}

\begin{definition}[Solvability Complexity Index]\label{complex_ind}
A computational problem $\{\Xi,\Omega,\mathcal{M},\Lambda\}$ is said to have \emph{Solvability Complexity Index $\mathrm{SCI}(\Xi,\Omega,\mathcal{M},\Lambda)_{\alpha} = k$} with respect to a tower of 
algorithms of type $\alpha$ if $k$ is the smallest integer for which there exists a tower of algorithms of type 
$\alpha$ of height $k$. If no such tower exists then $\mathrm{SCI}(\Xi,\Omega,\mathcal{M},\Lambda)_{\alpha} = \infty.$ If 
there exists a tower $\{\Gamma_n\}_{n\in\mathbb{N}}$ of type $\alpha$ and height one such that $\Gamma_{n_1}(\iota) \in \Xi(\iota)$ for all $\iota \in \Omega$ for some $n_1 < \infty$, then we define $\mathrm{SCI}(\Xi,\Omega,\mathcal{M},\Lambda)_{\alpha} = 0$.
\end{definition}

	\begin{definition}[The Solvability Complexity Index hierarchy]
	\label{1st_SCI}
	Consider a collection $\mathcal{C}$ of computational problems (we will use $\{\Xi,\Omega\}$ as a shorthand for $\{\Xi,\Omega,\M,\L\}$) and let $\mathcal{T}$ be the collection of all towers of algorithms of type $\alpha$ for the computational problems in $\mathcal{C}$.
	Define 
	\begin{equation*}
	\begin{split}
	\Delta^{\alpha}_0 &:= \{\{\Xi,\Omega\} \in \mathcal{C} \ \vert \   \mathrm{SCI}(\Xi,\Omega)_{\alpha} = 0\}\\
	\Delta^{\alpha}_{m+1} &:= \{\{\Xi,\Omega\}  \in \mathcal{C} \ \vert \   \mathrm{SCI}(\Xi,\Omega)_{\alpha} \leq m\}, \qquad \quad m \in \mathbb{N},
	\end{split}
	\end{equation*}
	as well as
	\[
	\Delta^{\alpha}_{1} := \{\{\Xi,\Omega\}  \in \mathcal{C}   \  \vert \ \exists \ \{\Gamma_n\} \in \mathcal{T}\text{ s.t. } \forall \iota \in\Omega \ \text{dist}_{\M}(\Gamma_n(\iota),\Xi(\iota)) \leq 2^{-n}\}. 
	\]
	\end{definition}

When there is extra structure on the metric space $\mathcal{M}$, say $\mathcal{M} = \mathbb{R}$ or $\mathcal{M} = \{0,1\}$ with the standard metric, one may be able to define convergence of functions from above or below. This is an extra form of structure that allows for a type of error control. 
\begin{definition}[The SCI Hierarchy (totally ordered set)]\label{def:tot_ord}
Given the setup in Definition \ref{1st_SCI}, suppose in addition that $\mathcal{M}$ is a totally ordered set, and that $\Xi$ is single valued.
Define 
\begin{equation*}
\begin{split}
\Sigma^{\alpha}_0 &= \Pi^{\alpha}_0 = \Delta^{\alpha}_0,\\
\Sigma^{\alpha}_{1} &= \{\{\Xi,\Omega\} \in \Delta_{2}^{\alpha} \ \vert \  \exists \ \{\Gamma_{n}\} \in \mathcal{T} \text{ s.t. } \Gamma_{n}(\iota) \nearrow \Xi(\iota) \ \, \forall \iota \in \Omega\}, 
\\
\Pi^{\alpha}_{1} &= \{\{\Xi,\Omega\} \in \Delta_{2}^{\alpha} \ \vert \  \exists \ \{\Gamma_{n}\} \in \mathcal{T} \text{ s.t. } \Gamma_{n}(\iota) \searrow \Xi(\iota) \ \, \forall \iota \in \Omega\},
\end{split}
\end{equation*}
where $\nearrow$ and $\searrow$ denote convergence from below and above respectively,
as well as, for $m \in \mathbb{N}$, 
\begin{equation*}
\begin{split}
\Sigma^{\alpha}_{m+1} &= \{\{\Xi,\Omega\} \in \Delta_{m+2}^{\alpha} \ \vert \  \exists \ \{\Gamma_{n_{m+1}, \hdots, n_1}\} \in \mathcal{T} \text{ s.t. }\Gamma_{n_{m+1}}(\iota) \nearrow \Xi(\iota) \ \, \forall \iota \in \Omega\}, \\
\Pi^{\alpha}_{m+1} &= \{\{\Xi,\Omega\} \in \Delta_{m+2}^{\alpha} \ \vert \  \exists \ \{\Gamma_{n_{m+1}, \hdots, n_1}\} \in \mathcal{T} \text{ s.t. }\Gamma_{n_{m+1}}(\iota) \searrow \Xi(\iota) \ \, \forall \iota \in \Omega\}.
\end{split}
\end{equation*}
\end{definition}
For example, if the metric space is the totally ordered set $\mathcal{M} = \{0,1\}$, from Definition \ref{def:tot_ord} we get the SCI hierarchy for arbitrary decision problems. 
The SCI hierarchy can be visualised as follows:
\begin{equation}\label{SCI_hierarchy}
\begin{tikzpicture}[baseline=(current  bounding  box.center)]
  \matrix (m) [matrix of math nodes,row sep=1.2em,column sep=1.5em] {
  \Pi_0^{\alpha}   &                    & \Pi_1^{\alpha} &    &  \Pi_2^{\alpha}&  & {}\\
  \Delta_0^{\alpha}&  \Delta_1^{\alpha} & \Sigma_1^{\alpha}\cup\Pi_1^{\alpha} & \Delta_2^{\alpha}&      \Sigma_2^{\alpha}\cup\Pi_2^{\alpha} & \Delta_3^{\alpha}& \cdots\\
	\Sigma_0^{\alpha}&                    & \Sigma_1^{\alpha} & &  \Sigma_2^{\alpha}&  &{} \\
  };
 \path[-stealth, auto] (m-1-1) edge[draw=none]
                                    node [sloped, auto=false,
                                     allow upside down] {$=$} (m-2-1)
																		(m-3-1) edge[draw=none]
                                    node [sloped, auto=false,
                                     allow upside down] {$=$} (m-2-1)
																		
																		(m-2-2) edge[draw=none]
                                    node [sloped, auto=false,
                                     allow upside down] {$\subsetneq$} (m-2-3)
																		(m-2-3) edge[draw=none]
                                    node [sloped, auto=false,
                                     allow upside down] {$\subsetneq$} (m-2-4)
																		(m-2-4) edge[draw=none]
                                    node [sloped, auto=false,
                                     allow upside down] {$\subsetneq$} (m-2-5)
																		(m-2-5) edge[draw=none]
                                    node [sloped, auto=false,
                                     allow upside down] {$\subsetneq$} (m-2-6)
																		(m-2-6) edge[draw=none]
                                    node [sloped, auto=false,
                                     allow upside down] {$\subsetneq$} (m-2-7)

												(m-2-1) edge[draw=none]
                                    node [sloped, auto=false,
                                     allow upside down] {$\subsetneq$} (m-2-2)
											 (m-2-2) edge[draw=none]
                                    node [sloped, auto=false,
                                     allow upside down] {$\subsetneq$} (m-1-3)
											(m-2-2) edge[draw=none]
                                    node [sloped, auto=false,
                                     allow upside down] {$\subsetneq$} (m-3-3)
											 (m-1-3) edge[draw=none]
                                    node [sloped, auto=false,
                                     allow upside down] {$\subsetneq$} (m-2-4)
																		(m-3-3) edge[draw=none]
                                    node [sloped, auto=false,
                                     allow upside down] {$\subsetneq$} (m-2-4)
																		(m-2-4) edge[draw=none]
                                    node [sloped, auto=false,
                                     allow upside down] {$\subsetneq$} (m-1-5)
											(m-2-4) edge[draw=none]
                                    node [sloped, auto=false,
                                     allow upside down] {$\subsetneq$} (m-3-5)
																		(m-1-5) edge[draw=none]
                                    node [sloped, auto=false,
                                     allow upside down] {$\subsetneq$} (m-2-6)
																		(m-3-5) edge[draw=none]
                                    node [sloped, auto=false,
                                     allow upside down] {$\subsetneq$} (m-2-6)
											(m-2-6) edge[draw=none]
                                    node [sloped, auto=false,
                                     allow upside down] {$\subsetneq$} (m-1-7)
																		(m-2-6) edge[draw=none]
                                    node [sloped, auto=false,
                                     allow upside down] {$\subsetneq$} (m-3-7);
																		
\end{tikzpicture}
\end{equation}
For details about the full SCI hierarchy \eqref{SCI_hierarchy} see \cite{Hansen_JAMS, SCI, CRAS, colbrook2019foundations, Matt2, Ben_Artzi2022}. 

\begin{remark}[Generality of the SCI hierarchy] The SCI hierarchy can be made much more general than suggested above (see  \cite{Hansen_JAMS, SCI, CRAS, colbrook2019foundations, Matt2}). However, for the purpose of proving the CRP, the above definitions are sufficient. 
\end{remark}

\begin{remark}[SCI hierarchy and the arithmetical hierarchy] The \emph{arithmetical hierarchy} \cite{Soare} is a special case of the SCI hierarchy (see \cite{SCI}). The SCI hierarchy is fundamentally based on limits rather than quantifiers \cite{Soare} -- that form the foundation of the arithmetical hierarchy. For example, McMullen's work on polynomial root finding \cite{McMullen1, McMullen2, Doyle_McMullen} and towers of algorithms, which is based on limits, is a part of the SCI hierarchy -- but not the arithmetical hierarchy. However, in special cases, the SCI hierarchy can be recovered through quantifiers rather than limits (see \cite{SCI}), as the arithmetical hierarchy is an example of. 
\end{remark}

\subsection{Trustworthy AI and the $\Sigma_1$ class -- Sufficient and necessary conditions} 

Let $\{\Xi,\O,\M,\L\}$ be a computational problem and $\kappa \geq 0$, and recall the definitions of $\kappa$-trustworthy AI and their associated `I don't know functions' as in Definitions \ref{def:trustworthy_ai} and \ref{def:idk_function_informal_Anders}. Consider the following question:

\textbf{Question:} \emph{Given a candidate `I don't know' function $\Xi^*: \Omega \to \{0,1\}$, is there any $\kappa$-trustworthy AI (with or without `giving up' parameter) $\G$ such that 
	\begin{align*}
		\Xi^* = \Xi^{\Idk}_{\G}?
	\end{align*}}
We now give sufficient and necessary conditions to answer the question above. Recall that for a function $f \colon A \to B$, we denote the preimage of $b \in B$ by $f^{-1}(b) = \{a \in A \ | \ f(a) = b\}$. We will use this notation specifically for the preimages of problem functions in computational problems.
\begin{theorem}[Sufficient and necessary conditions for trustworthy AI ]\label{thm:iff}
	Let $\{\Xi,\Omega, \M,\L\}$ be a computational problem and $\kappa \geq 0$. Let $\Xi^* \colon \Omega \to \{0,1\}$ be a candidate `I don't know' function. Define $\Omega^*_1 \coloneq (\Xi^{*})^{-1}(1)$. Then the following holds.
	\begin{enumerate}[leftmargin = 6mm]
		\item There exists a $\kappa$-trustworthy AI of the form $\G\colon \Omega \to \M \cup \{\idk\}$ such that $\Xi^* = \Xi^{\Idk}_{\G}$ (as per Definition \ref{def:idk_function_informal_Anders}) if and only if the two following conditions are satisfied:
	\begin{enumerate}[label = (\alph*), ref = \theenumi\alph*]
		\item \label{condition:delta0} $\{\Xi^*, \Omega, \{0,1\},\Lambda\} \in \Delta^A_0$; 
		\item \label{condition:b1} 	 There exists an algorithm $\Gamma^* \colon \Omega^*_1 \to \M$ wih $\G^*(\iota) \in \ball{\kappa}{\Xi(\iota)}$ for every $\iota$ with $\Xi^*(\iota)=1$.
	\end{enumerate} \label{conclusion:iff_1}
	\item There exists a $\kappa$-trustworthy AI with `giving up' parameter $\{\G_n\}_{n \in \N}$ (where $\G_n \colon \Omega \to \M \cup \{\idk\}$ for every $n \in \N$) such that $\Xi^* = \Xi^{\Idk}_{\{\G_n\}}$ (see Definition \ref{def:idk_function_informal_Anders}) if and only if the two following conditions are satisfied:
	\begin{enumerate}[label = (\alph*), ref = \theenumi\alph*]
		\item \label{condition:sigma1} $\{\Xi^*, \Omega, \{0,1\},\Lambda\} \in \Sigma^A_1$; 
		\item \label{condition:b2} There exists an algorithm $\Gamma^* \colon \Omega^*_1 \to \M$  such that $\G^*(\iota) \in \ball{\kappa}{\Xi(\iota)}$ for every $\iota$ with $\Xi^*(\iota)=1$.
	\end{enumerate} \label{conclusion:iff_2}
	\end{enumerate}
\end{theorem}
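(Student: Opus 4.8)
The plan is to establish each of the two ``if and only if'' statements by translating directly between a $\kappa$-trustworthy AI and the pair of data $(\Xi^*,\G^*)$ appearing in conditions (a)--(b), in both directions. The guiding principle is that, by Definition \ref{def:idk_function_informal_Anders}, the value $\Xi^*(\iota)$ is exactly the record of whether the AI ever produces a genuine (non-$\idk$) output on $\iota$; so constructing the AI from $(\Xi^*,\G^*)$ is a matter of gluing a procedure deciding membership in $\Omega^*_1$ to the answer-producing algorithm $\G^*$, while extracting $(\Xi^*,\G^*)$ from an AI is a matter of reading off when the AI commits to a non-$\idk$ answer.

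For part \ref{conclusion:iff_1} I would argue as follows. $(\Rightarrow)$: given a $\kappa$-trustworthy $\G\colon\Omega\to\M\cup\{\idk\}$ with $\Xi^{\Idk}_\G=\Xi^*$, the algorithm that on input $\iota$ simulates $\G(\iota)$ and outputs $0$ if $\G(\iota)=\idk$ and $1$ otherwise terminates in finite time (here one uses that such a $\G$ halts on every input, which is forced precisely because $\Xi^{\Idk}_\G$ must be the total function $\Xi^*$) and computes $\Xi^*$; this gives \ref{condition:delta0}. Restricting $\G$ to $\Omega^*_1$ yields an algorithm all of whose outputs lie in $\ball{\kappa}{\Xi(\iota)}$ by $\kappa$-correctness, giving \ref{condition:b1}. $(\Leftarrow)$: use \ref{condition:delta0} to compute $\Xi^*(\iota)$ in finite time; output $\idk$ if it equals $0$, and otherwise run the $\G^*$ of \ref{condition:b1}. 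The result is an `I don't know' AI (Definition \ref{def:ai_informal}), it is $\kappa$-trustworthy since it deviates from $\idk$ only on $\Omega^*_1$ where $\G^*$ is $\kappa$-correct, and by construction its associated `I don't know' function is $\Xi^*$.

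For part \ref{conclusion:iff_2} the only change is that the finite-time decision of $\Omega^*_1$ is replaced by a monotone one-sided limiting procedure. Recall from Definition \ref{def:tot_ord} that, since $\M=\{0,1\}$ is totally ordered, $\{\Xi^*,\Omega,\{0,1\},\Lambda\}\in\Sigma^A_1$ is equivalent to having a recursive family $\{\Gamma^{\Xi^*}_n\}$ with $\Gamma^{\Xi^*}_n(\iota)$ nondecreasing in $n$ and converging to $\Xi^*(\iota)$: it is $0$ for all $n$ when $\Xi^*(\iota)=0$, and it jumps once from $0$ to $1$ at some stage $n_\iota$ and stays there when $\Xi^*(\iota)=1$. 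This is exactly the monotone jump structure imposed on an AI with a `giving up' parameter in Definition \ref{def:ai_informal}. Hence $(\Rightarrow)$: from $\{\G_n\}$ put $\Gamma^{\Xi^*}_n(\iota)=0$ when $\G_n(\iota)=\idk$ and $=1$ otherwise; monotonicity of the giving-up parameter makes this nondecreasing, its limit is $\Xi^*$, and it is a height-one tower, so $\Xi^*\in\Delta^A_2$ and in fact $\Xi^*\in\Sigma^A_1$, giving \ref{condition:sigma1}; and the algorithm $\G^*$ that searches for the least $n$ with $\G_n(\iota)\neq\idk$ and returns $\G_n(\iota)$ halts on $\Omega^*_1$ and is $\kappa$-correct there by trustworthiness, giving \ref{condition:b2}. $(\Leftarrow)$: define $\G_n(\iota)$ to output $\idk$ when $\Gamma^{\Xi^*}_n(\iota)=0$ and to run $\G^*(\iota)$ when $\Gamma^{\Xi^*}_n(\iota)=1$ --- which by monotonicity happens only when $\iota\in\Omega^*_1$, so $\G^*$ halts --- and check that $\{\G_n\}$ is recursive, has the giving-up structure, is $\kappa$-trustworthy, and has `I don't know' function $\Xi^*$.

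I do not expect a genuinely hard step: all the content is in correctly unwinding Definitions \ref{def:ai_informal}, \ref{def:idk_function_informal_Anders}, \ref{def:trustworthy_ai} and \ref{def:tot_ord}. The two points I would spell out carefully are: (i) in part \ref{conclusion:iff_2}, verifying that the ``convergence from below in $\{0,1\}$'' description of $\Sigma^A_1$ coincides with the ``say $\idk$ until stage $n_\iota$, then commit forever'' structure built into the giving-up parameter, and that in the $(\Leftarrow)$ direction calling $\G^*$ is legitimate precisely because $\Gamma^{\Xi^*}_n(\iota)=1$ forces $\iota\in\Omega^*_1$; and (ii) in part \ref{conclusion:iff_1}, that a $\kappa$-trustworthy AI without a giving-up parameter must halt on every input in order for its `I don't know' function to be the total function $\Xi^*$ --- this is exactly what makes the relevant class $\Delta^A_0$ rather than $\Sigma^A_1$.
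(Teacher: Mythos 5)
Your proposal is correct and takes essentially the same route as the paper: for part (2) you build the monotone $\{0,1\}$-valued family from the indicator of $\G_n(\iota)\neq\idk$ and, conversely, glue the $\Sigma^A_1$ witness to $\G^*$ exactly as the paper does, and your part (1) is the "straightforward adaptation" the paper leaves to the reader. One small correction of attribution: the fact that a $\kappa$-trustworthy $\G$ halts on every input is forced by trustworthiness itself together with the convention that $\G(\iota)\uparrow$ gives $\operatorname{dist}_\M(\G(\iota),\Xi(\iota))=\infty$ (so a non-halting, non-$\idk$ run cannot be $\kappa$-correct), not by the totality of $\Xi^{\Idk}_{\G}$, which is total by definition even if $\G$ failed to halt somewhere.
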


\section{Proof of the Consistent Reasoning Paradox (CRP)} \label{section:proof_4}

In order to prove the CRP, we first need to extend the SCI framework to the Markov model. We recall the definition of a computational problem from Definition \ref{def:ComputationalProblem}.

\subsection{Extending the SCI framework -- Breakdown epsilons and the Markov model}\label{sec:Markov}

Not all computational problems can be solved with perfect accuracy. The smallest achievable error of a computational problem is its breakdown epsilon, as introduced in \cite{comp}, and is presented in the following definition.

\begin{definition}[Strong breakdown epsilon]\label{def:Breakdown-epsilons_strong}
	Given a computational problem $\{\Xi,\Omega,\mathcal{M},\Lambda\}$, we define its \emph{arithmetic strong breakdown epsilon} as follows:
	\begin{equation*}
		\epsilon_{\mathrm{B}}^{\mathrm{s}, \mathrm{A}} := \sup\{\epsilon \geq 0 \, \vert \, \forall \, \text{Turing machine } \Gamma,  \, \exists \, \iota \in \Omega  \text{ such that } \operatorname{dist}_\M(\Gamma(\iota),\Xi(\iota)) > \epsilon\}.
	\end{equation*}
\end{definition}	
Hence, the strong breakdown epsilon is the largest number $\epsilon \geq 0$ such that no algorithm can provide accuracy exceeding $\epsilon$.

The following definition is an extension of Definition \ref{def:markov_problem_informal}. It clarifies the concept of a computational problem given \emph{in the Markov sense}, where inputs are not accessed directly by reading their coordinates, but rather are accessed via `codes' (or more precisely, the G\"odel number of Turing machines) that return approximations of such coordinates. We assume access to an injective function $\gn{\cdot}$ that associates each Turing machine $\Phi$ to its G\"odel number $\gn{\Phi} \in \N$. There are different possible such G\"odel numberings $\gn{\cdot}$; in the following, we assume one such G\"odel numbering is fixed.

\begin{definition}\label{def:Markov_Delta_1_Information}
	Given a finite-dimensional computational problem $\{\Xi,\O,\M,\Lambda\}$ with $\Lambda = \{f_1,\dots,f_k\}$, define its \emph{corresponding Markov problem} as the following computational problem:
	\begin{align*}
		\{\Xi,\O,\M,\Lambda\}^{M} \coloneq \{ \MarkovXi, \MarkovOmega, \M, \MarkovLambda\}
	\end{align*}
	where we have the following.
	\begin{enumerate}[leftmargin = 8mm]
		\item $\MarkovOmega$ is the set of all possible tuples of Turing machines that compute the coordinates of inputs $\iota$:
		\begin{align*}
			\MarkovOmega &\coloneq  \{ (\stringinput_1,\stringinput_2,\stringinput_3,\dotsc,\stringinput_k) : \, \exists  \iota \in \Omega \text{ so that for } i=1,2,\dotsc,k \text{ the function} \\
			&\stringinput_i \text{ is a Turing machine taking } \N \to \Q \text{ such that } |\stringinput_i(n) - f_i(\iota)| \leq 2^{-n} \text{ for every } n \in \N \}
		\end{align*}
		
		\item \label{item:corresponds_to} Given $(\stringinput_1,\stringinput_2,\stringinput_3,\dotsc,\stringinput_k) \in \MarkovOmega$, there is a unique $\iota_\stringinput \in \Omega$ so that $|\stringinput_i(n) - f(\iota_\stringinput)| \leq 2^{-n}$ for $i=1,2,\dotsc,k$ and $n \in \mathbb{N}$ (by Definition \ref{def:ComputationalProblem}); we define the map $\MarkovXi: \MarkovOmega  \rightrightarrows\  \mathcal{M}$ by 
		\[
		\MarkovXi(\stringinput_1,\stringinput_2,\stringinput_3,\dotsc,\stringinput_k)\coloneq \Xi(\iota_\stringinput).
		\]
		We also say that $\stringinput \in \MarkovOmega$ \emph{corresponds} to such $\iota_\stringinput \in \Omega$. 
		\item $\MarkovLambda=\{\gn{\cdot}_{1},\gn{\cdot}_2,\dotsc,\gn{\cdot}_k\}$ where for $i=1,2,\dotsc,k$, the map $\gn{\cdot}_i: \MarkovOmega \to \N$ is defined so that, for a given element $(\stringinput_1,\stringinput_2,\stringinput_3,\dotsc,\stringinput_k) \in \MarkovOmega$, $\gn{(\stringinput_1,\stringinput_2,\stringinput_3,\dotsc,\stringinput_k)}_i$ is the G\"odel number of $\stringinput_i$.
	\end{enumerate}
\end{definition}

\subsection{Defining equivalent sentences}

For computational problems given in the Markov sense, inputs are accessed indirectly via `codes' that represent Turing Machines providing approximations to such inputs. This association gives rise to the \emph{correspondence problem}, defined below, which consists in assigning to each code $\stringinput$ the input $\iota$ it corresponds to.

\begin{definition}[Correspondence Problem] \label{def:correspondence_problem}
		Let $\{\Xi,\Omega, \M, \L\}$ be a computational problem with $\Omega \subseteq \Q^d$ for some dimension $d\in \N$. The correspondence problem is the computational problem $\{\CorrXi, \MarkovOmega, \Omega, \MarkovLambda\}$, defined in the following way. 
	\begin{enumerate}[leftmargin = 8mm]
		\item Given $\stringinput \in \MarkovOmega$, by Definition \ref{def:Markov_Delta_1_Information}, \eqref{item:corresponds_to}, there exists a unique $\iota_\stringinput$ so that $\stringinput$ corresponds to $\iota_\stringinput$. We define the mapping $\CorrXi\colon \MarkovOmega \to \Omega$ by $\CorrXi(\stringinput) \coloneq \iota_\stringinput$ for every $\stringinput \in \MarkovOmega$.
		\item The set $\Omega$ is endowed with the discrete metric.
	\end{enumerate}
\end{definition}

In the setup above, a computational problem $\{\Xi,\O,\M, \L\}$ was fixed and the concept of  corresponding Markov problem was presented. This gave a precise definition of `equivalence' of codes: two codes $\stringinput,\stringinput' \in \MarkovOmega$ are equivalent if they correspond to the same input $\iota \in \Omega$ (in the sense of Definition \ref{def:Markov_Delta_1_Information}, see Remark \ref{rmk:equivalent_strings}). All these concepts are not subject to interpretation, as they are grounded in the mathematical machinery of Turing Machines, which is a purely mathematical concept.
However, in a more general sense, one can define arbitrary equivalence relation on strings.

\begin{definition}[Equivalent Sentences]\label{def:equivalent_sentences}
	Let $k \in \N$ and let $\Omega^M \subseteq (\mathcal{A}^*)^k$ be a set of $k$-tuples of strings in the fixed alphabet $\mathcal{A}$. Let $\mathcal{P}=\{P_n\}_{n \in \N}$ be a partition of $\Omega^M$ (so that $P_n \cap P_{m} = \emptyset$ for $n \neq m$ and $\bigcup_{n \in \N} P_n = \Omega^M$).
	
	\begin{enumerate}[leftmargin = 8mm]
		\item Two tuples of strings $\sigma,\sigma' \in \Omega^M$ are \emph{equivalent with respect to the partition $\mathcal{P}$} if there exists $n \in \N$ such that $\sigma, \sigma' \in P_n$. We will write $\sigma \sim_\mathcal{P} \sigma'$.
		\item A function $\Xi^M\colon \Omega^M \rightrightarrows \M$ \emph{respects the equivalence relation induced by the partition $\mathcal{P}$} if 
		\begin{align*}
			\sigma, \sigma' \in \Omega^M, \quad \sigma \sim_\mathcal{P} \sigma' \quad \implies \quad \Xi^M(\sigma) = \Xi^M(\sigma').
		\end{align*}
	\end{enumerate}
Here we have used a slight abuse of notation, as $\Omega^M$ is used for the completely general case as well.  
\end{definition}

Definition \ref{def:equivalent_sentences} allows to consider a general concept of sentences being equivalent, induced by the arbitrarily chosen partition $\mathcal{S}$. In this work, we will only consider the mathematical definition of equivalence as given by Turing and expressed in Definition \ref{def:Markov_Delta_1_Information}, which can be interpreted as being induced by the natural partition $\mathcal{S} \coloneq \{(\CorrXi)^{-1}(\iota)\}_{\iota \in \Omega}$ on $\MarkovOmega$.

\subsection{Can correctness of AIs be checked? -- The exit-flag problem and oracle computations}\label{sec:ExitFlagSetup}

In this section, we introduce two types of computational problems: the exit-flag problem, which consists in determining whether a given algorithm has produced a correct output (and is considered in CRP \ref{crp:3}); and the family of oracle problems, which consist in solving a computational problem with the help of an oracle providing a correct solution for an auxiliary computational problem (as considered in CRP \ref{crp:1} and CRP \ref{crp:3}).

\begin{definition}[Exit-flag problem]\label{def:ExitFlagProblem}
Let $\{\Xi,\Omega,\M,\Lambda\}$ be a computational problem, $\Gamma: \MarkovOmega \to \M$ an algorithm for the corresponding Markov problem $\{ \MarkovXi, \MarkovOmega, \M, \MarkovLambda\}$ and $\kappa > 0$. The \emph{exit-flag problem} is the computational problem
$
\{\MarkovExitXi,\MarkovOmega,\{0,1\},\MarkovLambda\},
$
where the solution map $\MarkovExitXi\colon \MarkovOmega \to \{0,1\}$ is given by
\begin{equation}\label{eq:Xi_exit}
	\MarkovExitXi(\stringinput) =
	\begin{cases}
		1 & \text{ if } \stringinput \in  \MarkovOmega_\Gamma, \\
		0 & \text{ if } \stringinput \in \MarkovOmega\setminus  \MarkovOmega_\Gamma,
	\end{cases}
\end{equation}
and where $\MarkovOmega_{\Gamma}$ denotes the set of inputs for which $\Gamma$ obtains accuracy better than $\kappa$. More precisely,
\begin{equation}\label{eq:success_set}
	\MarkovOmega_{\Gamma}:= \{\stringinput \in \MarkovOmega \, \vert \, \operatorname{dist}_{\M}(\Gamma(\stringinput),\MarkovXi(\stringinput)) \leq \kappa\}\subseteq \MarkovOmega.
\end{equation}
The metric on the space $\{0,1\}$ is inherited from $\R$.
\end{definition}

Observe that $\epsilon_{\mathrm{B}}^{\mathrm{s},\mathrm{A}} > \kappa$ implies that $\MarkovOmega_{\Gamma}$ is a strict subset of $\MarkovOmega$.

\begin{remark}[Key assumption]\label{rem:EFAlgorithmAssumption}
	Naturally, the exit-flag problem becomes trivial if $\Gamma$ yields outputs that are sufficiently far away from the range $\MarkovXi(\MarkovOmega)$. In fact, such outputs would not be `plausible', and thus such an algorithm $\G$ would not be a suitable candidate for attempting to solve the problem $\{ \MarkovXi, \MarkovOmega, \M, \MarkovLambda\}$.  Therefore, we must make a technical assumption regarding the type of AIs we will examine for the exit-flag problem. Concretely, we fix an $\alpha$ such that $0 \leq \alpha < \kappa$ and assume that $\Gamma$, our AI  defined on $\MarkovOmega$, is within the $\alpha$-range of $\MarkovXi$, according to Definition \ref{def:within_the_range}.
\end{remark}

After establishing the concept of the exit-flag problem, we introduce the class of oracle problems.

\begin{definition}[Oracle problem]\label{def:problem_with_oracle}
	Let $\Xi_1 \colon \Omega \rightrightarrows \mathcal{M}_1$ and $\Xi_2 \colon \Omega \rightrightarrows \mathcal{M}_2$ be two solution mappings defined on the same set $\Omega$, and $\M_2 \subseteq \R^d$ for some $d \in \N$.
	Fix a parameter $\omega > 0$. The \emph{computational problem of $\Xi_1$ with an oracle for $\Xi_2$} is the computational problem 
	\begin{align*}
			\{\Xi_1,\Omega,\M_1,\Lambda_1\}^{\mathcal{O}} = \{\Xi_1,\Omega,\M_1,\Lambda_1\}^{\mathcal{O},\Xi_2, \omega} \coloneq \{\XiO, \Omega \times \ballQ{\Xi_2(\Omega)}, \mathcal{M}_1, \Lambda^{\mathcal{O}}\},
	\end{align*}
	where 
	\begin{equation*}\label{eq:XiOracleDef}
		\XiO(\iota,y) :=
		\begin{cases}
			\Xi_1(\iota) & \text{ if } y \in \ballQ{\Xi_2(\iota)}; \\
			\Xi_1(\Omega) &\text{ if } y\notin \ballQ{\Xi_2(\iota)};
		\end{cases} \quad \text{ for } (\iota,y) \in \Omega \times  \ballQ{\Xi_2(\Omega)}.
	\end{equation*}
	
		For ease of readability, we will often suppress the superscripts $\Xi_2$ and $\omega$ when they are clear from the context.

	The set $\Lambda^{\mathcal{O}}$ is defined as follows: if $\Lambda = \{f_1,\dots,f_k\}$ and $g_j(y)\coloneq y_j$ for $y=(y_1,\dots,y_d) \in \Q^d$ and $j = \{1,\dots,d\}$, then $\Lambda^{\mathcal{O}}\coloneq \{f_i^{\mathcal{O}}\}_{i = 1}^k \cup \{g_j^{\mathcal{O}}\}_{j = 1}^d$, where
	\begin{align*}
		f_i^{\mathcal{O}}\colon \Omega \times \ballQ{\Xi_2(\Omega)} \to \Q, & \quad  f_i^{\mathcal{O}}(\iota,y) = f_j(\iota), \quad i = 1,\dotsc, k;\\
		g_j^{\mathcal{O}}\colon \Omega \times  \ballQ{\Xi_2(\Omega)}  \to \Q, & \quad g_j^{\mathcal{O}}(\iota,y) = g_k(y), \quad k = 1,\dots, d.
	\end{align*}
\end{definition}

We now combine both the exit-flag computation and computation with oracles. We consider the problem of the exit-flag associated to an algorithm $\Gamma$, given an oracle for the original computational problem $\MarkovXi$.

\begin{definition}[Exit-flag problem with oracle]\label{def:ExitFlagProblemWithOracle}
Let $\{\Xi,\Omega,\M,\Lambda\}$ be a computational problem with corresponding Markov problem $\{ \MarkovXi, \MarkovOmega, \M, \MarkovLambda\}$, $\Gamma: \MarkovOmega \to \M$ be an algorithm and $\kappa, \omega\in \Q$ such that $0 <  \omega < \kappa$. The \emph{exit-flag problem with oracle} is the computational problem 
\begin{equation*}\label{eq:ExifFlagProblemWithOracleMarkov}
	\{\MarkovExitXi,\MarkovOmega,\{0,1\},\MarkovLambda\}^{\mathcal{O}} \coloneq \{\MarkovExitXiOracle,\domainOracleXi,\{0,1\},\MarkovLambdaOracle\}.
\end{equation*}
where, following Definition \ref{def:problem_with_oracle}, the exit-flag map with oracle is
 \begin{equation}\label{eq:ExitXiOracleDef}
 	\MarkovExitXiOracle(\stringinput,y) :=
 	\begin{cases}
 		1 & \text{ if } \stringinput \in  \MarkovOmega_\Gamma \, \land \,  y \in \ballQ{\MarkovXi(\stringinput)}; \\
 		0 & \text{ if } \stringinput \in \MarkovOmega\setminus  \MarkovOmega_\Gamma \, \land \, y \in \ballQ{\MarkovXi(\stringinput)};\\
 		\{0,1\} &\text{ if } y\notin \ballQ{\MarkovXi(\stringinput)}.
 	\end{cases}
 \end{equation}
\end{definition}

\subsection{Randomised algorithms} We now consider the case of probabilistic algorithms, whose output is not a deterministic function of the input. We consider a general model of randomisation, that is broader than the coin flips (or Bernoulli measures) considered in the seminal work of K.~De~Leeuw, E.~F. Moore, C.~E. Shannon, and N.~Shapiro \cite{de1956computability}. In particular, it allows for more general sources of randomness via the concept of computable measures \cite{downey2010algorithmic}.

\subsubsection{Computable measures} 
$S^*$ is the set of finite strings over the set $S$, and $\lambda$ denotes the empty string. The concatenation of two strings $\sigma$ and $\tau$ is denoted $\sigma \tau$. The length of string $\sigma$ is $|\sigma|$. If $\tau$ extends $\sigma$, we write $\sigma \preceq \tau$. 

\begin{definition}\label{def:pre-measure}
	A \emph{(probability) pre-measure} on $\{0,1\}^*$ is a function $\rho: \{0,1\}^* \to [0,1]$ such that $\rho(\sigma) = \rho(\sigma 0) + \rho(\sigma 1)$ for every $\sigma \in \{0,1\}^*$, and $\rho(\lambda) = 1$.
\end{definition}

The condition $\rho(\lambda)=1$ makes it a probability measure.
A pre-measure $\rho$ on $\{0,1\}^*$ induces a measure $\mu_\rho$ on $\{0,1\}^\N$ in a natural way. Denoting by $\brackets{\sigma}$ the infinite strings whose initial segment is $\sigma$, that is $\brackets{\sigma} \coloneq \{\sigma \tau \ | \ \tau \in \{0,1\}^\N\}$, the measure $\mu_\rho$ is uniquely determined by $\mu_\rho(\brackets{\sigma}) = \rho(\sigma)$ for every $\sigma \in \{0,1\}^*$ by the classical Carathéodory's construction. Explicitly, we use the extension theorem for pre-measures on semi-rings (see, for example, \cite[Theorem 1.53]{klenke2013probability} applied to the pre-measure $\rho$ and to the semi-ring $\mathcal{R}\coloneq\{\brackets{\sigma} \ | \ \sigma \in \{0,1\}^*\}\cup \{\emptyset\}$). In particular,  $\mu_\rho$ is the measure induced by the outer measure
\begin{align*}
	\mu_{\rho}^*(A) \coloneq \inf\left\{\sum_{n \in \N} \rho({\sigma_n}) \ | \ \{\sigma_n\}_{n \in \N} \ \text{ s.t. } \ \sigma_n \in \{0,1\}^* \text{ for all } n \in \N, \ A \subseteq \bigcup_{n \in \N} \brackets{\sigma_n}\right\}
\end{align*}
by restriction to the class of measurable sets, namely the sets $A$ for which $\mu^*_\rho(B) = \mu^*_{\rho}(B \cap A) + \mu^*_\rho(B \cap {A}^c)$ for every $B \subseteq \{0,1\}^\N$. This class forms the $\sigma$-algebra of measurable sets on which the measure $\mu_\rho$ is defined. Sets of the form $\brackets{\sigma}$ for a finite string $\sigma$ are referred to as \emph{cylinder sets} and are measurable \cite{klenke2013probability}. Given a subset $A \subseteq \{0,1\}^*$, we denote $\brackets{A} \coloneq \bigcup_{\sigma \in A} \brackets{\sigma}$.

\begin{definition} [Computable Measure] \label{def:computable_measure}
	We now define the concept of computability for (pre-)measures.
	\begin{enumerate}[leftmargin = 8mm]
		\item A pre-measure $\rho$ is \emph{computable} if there exists a recursive function $r: \{0,1\}^* \times \N \to \Q$ such that
		\begin{align*}
			|r(\sigma,n) - \rho(\sigma)| \leq 2^{-n} \quad \text{ for every $n \in \N$}.
		\end{align*}
		\item A measure is \emph{computable} if it is induced by a computable pre-measure $\rho$. 
	\end{enumerate}
\end{definition}

\begin{remark}
	If $\rho$ is a computable pre-measure, the recursive function $r$ naturally extends to a recursive function on finite strings, such that
	\begin{align}\label{eq:approximation_r}
		|r(\sigma_1,\dots,\sigma_k,n) - \mu_\rho(\brackets{\sigma_1,\dotsc, \sigma_k})| \leq 2^{-n} \quad \text{ for every } n \in \N, \quad \sigma_1,\dotsc,\sigma_k \in \{0,1\}^*
	\end{align}
\end{remark}

\begin{remark}
	An important special case of the above definition is the computable pre-measure determined by a Bernoulli process with computable parameter $p \in [0,1]$, given by $\rho_p(\sigma) = p^{k}(1-p)^{|\sigma|-k}$ where $k$ is the number of $1$'s appearing in $\sigma$. In particular, when $p=\frac{1}{2}$, one obtains the fair Bernoulli measure $\mu=\mu_{\rho_{\frac{1}{2}}}$ induced by the pre-measure $\rho_{\frac{1}{2}}(\sigma)=2^{-|\sigma|}$.	Equivalently, $\mu_{\rho_p}$ can be seen as the product measure on $\{0,1\}^\N$ induced by the Bernoulli probability measure $\nu$ on $\{0,1\}$ with parameter $p$ given by $\nu(\{1\})=p$ and $\nu(\{0\})=1-p$.
\end{remark}

De~Leeuw, Moore, Shannon and Shapiro  \cite{de1956computability} proved that the $p$-Bernoulli measure $\mu_p$ is computable if and only if $p$ is a computable real number.

\subsubsection{Probabilistic Turing machines}\label{sec:probabilistic}

Expanding on the definition by De~Leeuw, Moore, Shannon and Shapiro \cite{de1956computability}, we define probabilistic Turing machines.

\begin{definition}
	Let $\mu$ be a computable measure on $\{0,1\}^\N$. A \emph{Probabilistic Turing machine (PTM) with respect to $\mu$} is a Turing machine provided with an extra read-only tape, called the \emph{randomised tape}, which is initialised with a draw from the distribution $\mu$. 
\end{definition}

\begin{remark}
	This definition of Probabilistic Turing machines encompasses the traditional definitions of probabilistic Turing machines with computable coin flips (namely, whenever $\mu = \mu_\rho$ for computable $p$).
\end{remark}

$\Gamma^{\ran}(\iota,\beta)$ will denote the action of the PTM $\Gamma^{\ran}$ on an input $\iota \in \Omega$ with $\beta \in \{0,1\}^\N$ initialised on the randomized tape. We could thus consider a Probabilistic Turing machine as a specific instance of a partial function $\Gamma^{\ran}\colon \Omega \times \{0,1\}^\N \to  \M$. Sometimes, we will consider Probabilistic Turing machines that read only a finite portion of the randomised tape, in which case it will be interpreted (with a slight abuse of notation) as a partial function $\Gamma^{\ran}\colon \Omega \times \{0,1\}^* \to  \M$.
We adopt the following additional notation: for every $\iota \in \Omega$, $\beta \in \{0,1\}^\N$, $t \in \N$, and $\sigma \in \{0,1\}^*$ we set:
\begin{align*}
	\halted{t} & \coloneq \text{ the output (if any) of $\Gamma^{\ran}$ on input $\iota$ after querying only the first $t$ bits of $\beta$}; \\
	\Gamma^{\ran}(\iota,\sigma) & \coloneq \text{ the output (if any) of $\Gamma^{\ran}$ on input $\iota$ and finite string $\sigma$}.
\end{align*}

For every $\iota \in \Omega$, the measure $\mu$ on $\{0,1\}^\N$ induces a measure $\pr(\G^{\ran}(\iota) \in \cdot )$ on the Borel $\sigma$-algebra $\mathcal{B}(\mathcal{M})$ given by the pushforward
\begin{align*}
	\P(\G^{\ran}(\iota) \in E) \coloneq \mu(\{ \beta \in \{0,1\}^\N \ | \ \Gamma^{\ran}(\iota,\beta) \in E\})
\end{align*}
for every $E \in \mathcal{B}(\mathcal{M})$. This definition is justified by the fact that for every $\iota \in \Omega$ the function 
$
\Gamma^{\ran}(\iota,\cdot)\colon \{0,1\}^\N \to \mathcal{M}
$
 is measurable (with respect to the $\sigma$-algebra on $\{0,1\}^\N$ induced by the pre-measure $\rho$, and to the Borel $\sigma$-algebra $\mathcal{B}(\M)$) as we will show in Proposition \ref{prop:measurable}. 
\begin{definition}\label{def:TuringMachineAlwaysHalt}
	A Probabilistic Turing machine \emph{that always halts} is a PTM for which the underlying function $\G^{\ran}\colon \Omega \times \{0,1\}^\N \to \mathcal{M}$ is total, so that for every $\iota \in \Omega$, $\G^{\ran}(\iota,\beta) \downarrow$ for every $\beta \in \{0,1\}^\N$.
\end{definition}

\begin{remark}
	The condition that a Probabilistic Turing machine always halts is stronger than the requirement that $\Gamma^{\ran}$ halts with probability one, which would instead read $\mu(\{\beta \in \{0,1\}^\N \ | \ \G^{\ran}(\iota, \beta) \downarrow\})=1$ for every $\iota \in \Omega$.
\end{remark}

\subsection{Precise formulation of the CRP}

In this section, we are finally ready to present the CRP theorem in a completely precise form, using the notation and terminology developed up to this point.

	\subsubsection{The Setup for the CRP}\label{section:setup_CRP}
	We are about to state the Theorems that illustrate the Consistent Reasoning Paradox, namely Theorems \ref{thm:crp_1_2}, \ref{thm:crp_3_4} and \ref{thm:crp_5}. Such theorems will concern the computational problems of Linear Programming, Basis Pursuit and Lasso as in \eqref{eq:LP}, \eqref{eq:BP} and \eqref{eq:LASSO}. There are a number of aspects that are common to each of the three theorems. These are as follows:
	\begin{enumerate}
		\item The dimensions $(\no,\nt)$ for Linear Programming, Basis Pursuit and Lasso as in \eqref{eq:LP}, \eqref{eq:BP} and \eqref{eq:LASSO} can be chosen to be any integer $\no$ and $\nt$ with $\no \geq 2$ and $\nt \geq 1$.
		\item The value $\kappa$, which represents the error tolerance for a solution to the computational problem, is set to $10^{-1}$. The LASSO parameter $\lambda$ in \eqref{eq:LASSO} is assumed to satisfy $\kappa < \lambda \leq 2\kappa$ and the basis pursuit parameter $\eta$ in \eqref{eq:BP} satisfies $\kappa < \eta \leq 2\kappa$.
		\item We always treat the output of an algorithm solving LP, BP or Lasso as a rational vector. The distance to the true solution is performed in $\|\cdot\|_p$ with $p \in \mathbb{N} \cup \{\infty\}$.
		\item Given the dimensions $(\no,\nt)$ and the accuracy $\kappa$, there is an infinite family (indexed by an additional parameter $\theta$) of input sets $\Omega_{\no,\nt}(\theta)$ for which Theorems \ref{thm:crp_1_2}, \ref{thm:crp_3_4} and \ref{thm:crp_5} apply. This input set applies to each part of the CRP and thus does not change throughout Theorems \ref{thm:crp_1_2}, \ref{thm:crp_3_4} and \ref{thm:crp_5}.
		\item When we have fixed the dimensions $(\no,\nt)$ and the parameter $\theta$, we set $\Omega = \Omega_{\no,\nt}(\theta)$. Thus the set $\MarkovOmega$ and $\MarkovXi$ are defined as in Definition \ref{def:Markov_Delta_1_Information}.
	\end{enumerate}

\subsubsection{The CRP Theorems}\label{sec:CRP_theorems}

Now that we have built up all the necessary mathematical machinery, we are ready to state the CRP theorem precisely. The Consistent Reasoning Paradox is one unified theorem, but for the sake of clarity it has been split into three distinct results. 

\begin{theorem}[CRP I and II]\label{thm:crp_1_2}
	
	Consider the setup of \S \ref{section:setup_CRP}, and the computational problem $\{\MarkovXi, \MarkovOmega, \M,\MarkovLambda\}$ from Definition \ref{def:Markov_Delta_1_Information}. Then, the following holds.
	\begin{enumerate}[label=(\Roman*), ref = \Roman*, leftmargin = 8mm, series=CRP, resume = CRP]
		\item If $\hat \Omega \subseteq \MarkovOmega$ such that $|(\CorrXi)^{-1}(\iota) \cap \hat \Omega| = 1$ for every $\iota \in \Omega$, (where $\CorrXi$ is as in Definition \ref{def:correspondence_problem}) then there exists an algorithm $\Gamma \colon \MarkovOmega \to \ball{\kappa}{\MarkovXi(\MarkovOmega)}$ satisfying the following two conditions.
		\begin{enumerate}
			\item For every $\stringinput \in \MarkovOmega$, either $\Gamma(\stringinput) \in \ball{\kappa}{\MarkovXi(\stringinput)}$ or $\Gamma(\stringinput) \nh$.
			\item $\Gamma(\stringinput) \in \ball{\kappa}{\MarkovXi(\stringinput)}$ for every $\stringinput \in \hat \Omega$.
		\end{enumerate}
		However, if $\omega \in \Q$ is such that $0 \leq \omega < \kappa$, then the correspondence problem with an oracle for $\MarkovXi$, given by
$
	\{\CorrXi,\MarkovOmega,\Omega,\MarkovLambda\}^{\mathcal{O}} 
$	
 (see Definitions \ref{def:correspondence_problem} and \ref{def:problem_with_oracle}) satisfies the following: for any algorithm with oracle $\CorrGO \colon \domainOracleXi \to \Omega$ there exists $(\stringinput,y) \in \domainOracleXi$ such that $\CorrGO(\stringinput,y) \notin \CorrXiO(\stringinput,y)$. \label{crp:1}
		 
		\item For every algorithm $\Gamma \colon \MarkovOmega \to \M$ there exist (infinitely many) $\stringinput \in \MarkovOmega$ such that $\Gamma$ $\kappa$-fails on $\stringinput$ (see Definition \ref{def:failure}). In particular, for every algorithm $\Gamma \colon \MarkovOmega \to \M$ that always halts and is within the $\kappa$-range of $\MarkovXi$, there exist (infinitely many) $\stringinput \in \MarkovOmega$ such that $\Gamma$ $\kappa$-hallucinates on $\stringinput$ (see Definition \ref{def:hallucination}). \label{crp:2}
		
	\end{enumerate}
\end{theorem}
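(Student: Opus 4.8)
The plan is to begin by fixing, for each admissible parameter $\theta$, an input set $\Omega = \Omega_{\no,\nt}(\theta)$ of the kind produced by the generalised hardness of approximation constructions for continuous optimisation in \cite{comp}. Concretely, I would arrange that $\Omega$ contains a distinguished rational input $\iota^0$ whose solution set is multi-valued and contains two points $v_0, v_1 \in \Q^\no$ with $\|v_0 - v_1\|_p > 2\kappa$, together with two sequences $\{\iota_{n,0}\}_{n}$ and $\{\iota_{n,1}\}_{n}$ in $\Omega$, each $\iota_{n,j}$ having a \emph{unique} solution $\Xi(\iota_{n,j}) \in \ballQ{v_j}$ (so $\Xi(\iota_{n,j}) \to v_j$), and with $\iota_{n,j} \to \iota^0$ at a controlled rate ($|f_i(\iota_{n,j}) - f_i(\iota^0)| = O(2^{-n})$) so that every $\iota_{n,j}$ is an isolated point of $\Omega$ and no other input is multi-valued. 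For \eqref{eq:LP} this can already be realised with $\no = 2$, $\nt = 1$ by perturbing a single matrix entry: at $\iota^0$ the optimal face is a segment of feasible points on which $\langle \cdot, \mathbf{1}\rangle$ is constant, and a one-sided perturbation collapses the optimum onto the vertex $v_0$ or onto $v_1$. The analogous degenerate families for \eqref{eq:BP} and \eqref{eq:LASSO} are obtained from the same mechanism, using the standing hypotheses $\kappa < \eta \le 2\kappa$ and $\kappa < \lambda \le 2\kappa$ to place the regularised minimisers in the right configuration; carrying this out and verifying the claimed behaviour of the three solution maps is the first substantial task.

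For the positive part of CRP \ref{crp:1}, given $\hat\Omega$ I let $\stringinput^0 \in \hat\Omega$ be its unique representative of $\iota^0$ and define $\Gamma$ by hard-coding the single natural number $\gn{\stringinput^0}$ together with a rational $w_0 \in \ball{\kappa/2}{v_0}$: on input $\stringinput$, if $\gn{\stringinput} = \gn{\stringinput^0}$ then output $w_0$; otherwise run an ``isolation search'' -- for $m = 1, 2, \dots$ read $\stringinput_i(m)$ for all $i$, form the open ball $B_m$ of radius $2^{-m+1}$ about $(\stringinput_i(m))_i$, and as soon as $B_m$ is found to contain exactly one point $\iota$ of $\Omega$, halt and output a rational in $\ball{\kappa/2}{\Xi(\iota)}$. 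Since $\Omega$ is recursively presented, $\iota^0 \in B_m$ is a decidable rational inequality, and as soon as $\iota^0 \in B_m$ one knows $B_m$ contains infinitely many inputs, this search is effective. If $\stringinput$ codes some $\iota_{n,j}$, then once $m$ exceeds the ($n$-dependent) isolation radius of $\iota_{n,j}$ the search halts with the correct answer; if $\stringinput$ codes $\iota^0$ then $\iota^0 \in B_m$ for every $m$, $B_m$ always meets infinitely many inputs, and the search never halts. Hence $\Gamma$ is correct or divergent on every $\stringinput \in \MarkovOmega$, correct on all of $\hat\Omega$, and its range lies in $\ball{\kappa}{\MarkovXi(\MarkovOmega)}$.

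Both impossibility statements -- the negative part of CRP \ref{crp:1} and CRP \ref{crp:2} -- use a single self-referential device. Given a candidate algorithm (either $\CorrGO$ for the correspondence problem with oracle, or an arbitrary $\Gamma \colon \MarkovOmega \to \M$), Kleene's recursion theorem lets me build a code $\code$ -- a $k$-tuple made of honest codes for the coordinates that are constant across the family and a self-referential code for the varying one -- that emits honest approximations to $\iota^0$ while simulating the candidate on the input $(\code, v_0)$; this input is legal because $v_0 \in \ballQ{\MarkovXi(\code)}$ whether $\code$ ends up coding $\iota^0$ or some $\iota_{n,0}$, by construction of the family. For $\CorrGO$: if it never halts it already fails on a legal input; if it halts with answer $a$, set $\code$ to continue as an honest code for $\iota_{n,0}$ with $n$ large enough to be consistent with the finitely many approximations already emitted (if $a = \iota^0$), or for $\iota^0$ (if $a \neq \iota^0$); in both cases $\CorrGO(\code, v_0) \notin \CorrXiO(\code, v_0)$, a contradiction. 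For $\Gamma$: if $\Gamma(\code)\nh$ then $\Gamma$ $\kappa$-fails on a code for $\iota^0$; if $\Gamma(\code)\downarrow$ with output $x$, pick $j \in \{0,1\}$ with $x \notin \ball{\kappa}{v_j}$ (possible since $\|v_0 - v_1\|_p > 2\kappa$) and let $\code$ continue as an honest code for $\iota_{n,j}$ with $n$ large, so that $\disM(\Gamma(\code), \MarkovXi(\code)) > \kappa$. A routine padding argument (prepending arbitrary consistent finite prefixes before the switch) turns the single bad code into infinitely many distinct ones, giving the ``infinitely many'' clauses; and since a within-$\kappa$-range algorithm that always halts $\kappa$-hallucinates precisely when it $\kappa$-fails (Remark \ref{rmk:equivalence_failure_hallucination}, applied with $\alpha = \kappa$), the ``in particular'' statement follows.

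The hard part, I expect, will be the marriage of the optimisation construction with self-reference. The existing SCI lower-bound machinery is tailored to the ``seeing the sequence'' model, where an algorithm cannot tell two inputs apart before their approximations diverge; in the Markov model the algorithm reads the G\"odel number of the whole code, so the impossibility arguments must be rebuilt around the recursion theorem, and one must check that the self-referential code $\code$ can be assembled as a genuine element of $\MarkovOmega$ (all $k$ coordinates, honest on the fixed ones), that its ``switch'' to $\iota_{n,j}$ preserves membership in $\MarkovOmega$, and -- crucially -- that the same rational oracle value $v_0$ remains legal both for $\iota^0$ and for the target $\iota_{n,0}$, which is exactly why the family must be built with $\Xi(\iota_{n,j}) \in \ballQ{v_j}$ and $\|v_0-v_1\|_p > 2\kappa$ with room to spare. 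Establishing the optimisation behaviour of the \eqref{eq:BP} and \eqref{eq:LASSO} degenerate families -- where, unlike the transparent LP case, the $\ell^1$ term and the parameters $\eta, \lambda$ must be tuned carefully so that exactly one input is multi-valued -- is the other place where genuine work is hidden.
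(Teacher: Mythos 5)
Your proposal is correct and, at its core, is the same argument as the paper's: a degenerate LP/BP/LASSO family with a single multi-valued input $\iota^0$ whose solution set is a segment with rational endpoints at distance $>2\kappa$, two sequences of single-valued inputs converging to $\iota^0$ whose solutions sit (in fact exactly, in the paper) at those endpoints, a hardcoded-plus-search algorithm for the positive half, and self-referential codes that emit $\iota^0$-approximations while simulating the adversary and then switch consistently to some $\iota^j_t$. The differences are in packaging rather than substance. Where you invoke Kleene's recursion theorem, the paper diagonalises via the enumeration ($\gamma$ computable $\Rightarrow \gamma=\varphi_q$, evaluate at $q$), encapsulated in Lemma \ref{lem:constructing_phi^m} and Proposition \ref{prop:DrivingNegativeProposition}. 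For the oracle half of CRP I the paper does not diagonalise against $\CorrGO$ directly: it proves an oracle-removal result (Proposition \ref{prop:de-oracolisation}), exploiting exactly your observation that $y^1,y^2$ are simultaneously legal oracle values for $\iota^0$ and for the corresponding sequences, to convert a correct $\CorrGO$ into an oracle-free solver on $\{\stringinputm\}_{m\in\N}$ and then contradicts Proposition \ref{prop:DrivingNegativeProposition}; your direct diagonalisation with the oracle frozen at $v_0$ is shorter and sound, but the paper's decomposition is reused later (CRP III), which is what the extra layer buys. For the \emph{infinitely many} clauses the paper hardcodes the finitely many alleged failures and reruns the lower bound, whereas your padding route requires redoing the fixed-point construction per prefix -- both work. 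Two smaller points: the paper's $\Omega_{\no,\nt}(\theta)$ is the full rational family over $\mathcal{L}_\theta$ (no isolated points), and its halting-or-correct routine $\Gamma^{\always}$ (Lemma \ref{lemma:routine}) just compares $\stringinput_1(n)$ and $\stringinput_2(n)$ against $2\cdot 2^{-n}$, so it does not need your isolation hypothesis; and your weaker requirement $\Xi(\iota_{n,j})\in\ballQ{v_j}$ leaves a quantitative gap (distance $>\kappa-\omega$ only, and uniformity in all $\omega\in[0,\kappa)$) that your own ``collapses onto the vertex'' construction actually removes, since the perturbed solutions equal the endpoints exactly -- you should state that exactness, as the paper does, rather than only convergence.
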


\begin{theorem}[CRP III and IV]\label{thm:crp_3_4}
		Consider the setup of \S \ref{section:setup_CRP}. Then, for the computational problem $\{\MarkovXi, \MarkovOmega, \M,\MarkovLambda\}$, the following holds.

		\begin{enumerate}[label=(\Roman*), ref = \Roman*,  leftmargin = 8mm, series=CRP, resume = CRP]
		\item \label{crp:3} For every $\alpha \in \Q$ such that $0 \leq \alpha < \kappa$ and  for every algorithm $\Gamma \colon \MarkovOmega \to \M$ that is within the $\alpha$-range of $\MarkovXi$ (see Definition \ref{def:within_the_range}), consider the exit-flag problem associated to $\Gamma$ given by
		$
			\{\MarkovExitXi,\MarkovOmega,\{0,1\},\MarkovLambda\}
		$
		as in Definition \ref{def:ExitFlagProblem}. Moreover, for $\omega \in \Q$ such that $0 \leq \alpha \leq \omega < \kappa$, consider the exit-flag problem associated to $\Gamma$ with an oracle for $\MarkovXi$, given by
		$
			\{\MarkovExitXi,\MarkovOmega,\{0,1\},\MarkovLambda\}^{\mathcal{O}} 
		$	
		from Definition \ref{def:ExitFlagProblemWithOracle}. Then the following holds:
			\begin{enumerate}[label = (\alph*), ref = III\alph*]
			\item \label{crp:3a} For any algorithm $\Gamma^E\colon \MarkovOmega \to \{0,1\}$ that always halts, there exist (infinitely many) $\stringinput \in \MarkovOmega$ such that $\Gamma^E(\stringinput)  \neq  \MarkovExitXi(\stringinput)$. Furthermore, considering the exit-flag problem with oracle, for any algorithm $\GammaEO:\domainOracleXi \to \{0,1\}$  that always halts there exist (infinitely many) $(\stringinput,y) \ \in \domainOracleXi$ such that $\GammaEO(\stringinput,y) \notin \MarkovExitXiOracle(\stringinput,y)$.
			
			\item \label{crp:3b} Fix any subset $\subsetMarkov \subseteq \MarkovOmega$. If there exists a probabilistic Turing machine $\Gamma^{E,\ran}: \subsetMarkov \to \{0,1\}$ such that
			\begin{align}\label{eq:prob_greater_1/2}
					\mathbb{P}\left(\Gamma^{E,\ran}(\stringinput) = \MarkovExitXi(\stringinput)\right) > \frac{1}{2}
			\end{align}
			for every $\stringinput \in \subsetMarkov$, then there exists a deterministic Turing machine $\Gamma^E\colon \subsetMarkov \to \{0,1\}$ that always halts such that $\Gamma^E(\stringinput) = \MarkovExitXi(\stringinput)$ for every $\stringinput \in \subsetMarkov$. In particular, there exists no probabilistic Turing machine $\Gamma^{E,\ran}: \MarkovOmega \to \{0,1\}$ such that \eqref{eq:prob_greater_1/2} holds for every $\stringinput \in \MarkovOmega$.

				\item \label{crp:3c} For every $p \in (1/2,1]$ there exists no probabilistic Turing machine with oracle $\Gamma^{E,\mathcal{O},\ran}: \domainOracleXi \to \{0,1\}$ that always halts such that 
								\begin{align}\label{eq:prob_greater_p}
									\mathbb{P}\left(\Gamma^{E,\mathcal{O},\ran}(\stringinput,y) \in \MarkovExitXiOracle(\stringinput,y)\right) \geq p
								\end{align}
								 for every $\stringinput \in \MarkovOmega$ and $y \in  \ballQ{\MarkovXi(\MarkovOmega)}$.
							\end{enumerate}
		
		\item Assume that ZFC is $\Sigma_1$-sound (see \cite[p. 155, Definition 1.21 and Remark 1.22]{hajek2017metamathematics}. Then there exists a class $\hat \Omega \subseteq \MarkovOmega$ such that there is an algorithm $\Gamma:\hat \Omega \to  \ball{\kappa}{\MarkovXi(\MarkovOmega}$ satisfying the following:
\begin{enumerate}
	\item In the standard model of arithmetic, for every $\iota \in \Omega$ there exists exactly one $\stringinput \in \hat \Omega$ that corresponds to $\iota$. \label{crp:4a}
	\item In the standard model of arithmetic, for all $\stringinput \in \hat \Omega$, the statement $\Gamma(\stringinput) \in \MarkovXi(\stringinput)$ holds. \label{crp:4b}
	\item There exists an $\stringinput^0 \in \hat \Omega$ so that it is impossible to prove that $\Gamma(\stringinput^0) \in \MarkovXi(\stringinput^0)$ and that $\Gamma(\stringinput^0) \notin \MarkovXi(\stringinput^0)$ within ZFC. \label{crp:4c}
	\end{enumerate} \label{crp:4}
		
	\end{enumerate}
\end{theorem}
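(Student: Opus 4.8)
The plan is to reduce every part to the structure of the unique multi-valued input $\iota^0$ of $\Omega=\Omega_{\no,\nt}(\theta)$ (cf.\ the construction behind Theorem~\ref{thm:crp_1_2}) together with Kleene's recursion theorem: in the Markov model, deciding whether a code represents $\iota^0$ or one of the single-valued inputs that cluster around it is a $\Sigma_1$ (halting-type) question, and this is exactly the obstruction that an exit-flag algorithm, a randomised checker, or a ZFC-proof of correctness would have to overcome. I would first record the facts I need from that construction: $\MarkovXi(\iota^0)$ is a segment $[\xi_-,\xi_+]$ with $\|\xi_--\xi_+\|_p>4\kappa$; every other $\iota\in\Omega$ is single-valued with $\Xi(\iota)\in\{\{\xi_-\},\{\xi_+\}\}$, so $\MarkovXi(\MarkovOmega)=[\xi_-,\xi_+]$; and by~\ref{crp:2} every algorithm $\kappa$-fails somewhere, so $\MarkovOmega_\Gamma\subsetneq\MarkovOmega$ and $\MarkovExitXi$ is non-constant. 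Fix a standard code $\tilde\stringinput$ for $\iota^0$. For a Turing machine index $p$ and a bit $b$, build a code $\stringinput^{p,b}$ whose $i$-th component on input $n$ simulates $p$ for $n$ steps and returns the precision-$n$ value of $\tilde\stringinput_i$ if $p$ has not halted, and otherwise returns approximations to a single-valued input $\iota_{p,b}$ with $\Xi(\iota_{p,b})=\{\xi_b\}$ whose difference from $\iota^0$ is placed at a scale finer than the query resolution of the algorithms in play on $\tilde\stringinput$ (a routine device). Then $\stringinput^{p,b}\in\MarkovOmega$ unconditionally, $\MarkovXi(\stringinput^{p,b})=[\xi_-,\xi_+]$ if $p$ loops and $\{\xi_b\}$ if $p$ halts, any algorithm halting on $\stringinput^{p,b}$ before $p$ halts outputs what it outputs on $\tilde\stringinput$, and dummy padding of $p$ yields infinitely many pairwise distinct such codes.

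For~\ref{crp:3a}, let $\Gamma$ be within the $\alpha$-range (hence always halting) and put $z:=\Gamma(\tilde\stringinput)$, which lies within $\alpha<\kappa$ of $[\xi_-,\xi_+]$ and hence within $\kappa$ of at most one endpoint, so we may fix $b^\star$ with $\disM(z,\xi_{b^\star})>\kappa$. Given a candidate $\Gamma^E\colon\MarkovOmega\to\{0,1\}$ that always halts, use the recursion theorem to define $p$ which computes $z$ and $b^\star$, then simulates $\Gamma^E(\stringinput^{p,b^\star})$ (a halting computation during which $p$ has not yet halted, so $\Gamma^E$ sees $\tilde\stringinput$), and finally halts iff that simulation returned $1$. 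If $\Gamma^E(\stringinput^{p,b^\star})=1$ then $p$ halts, $\MarkovXi(\stringinput^{p,b^\star})=\{\xi_{b^\star}\}$, and $\Gamma(\stringinput^{p,b^\star})=z$ is $>\kappa$ away, so $\MarkovExitXi(\stringinput^{p,b^\star})=0$; if $\Gamma^E(\stringinput^{p,b^\star})=0$ then $p$ loops, $\MarkovXi(\stringinput^{p,b^\star})=[\xi_-,\xi_+]$, and $z$ is within $\kappa$, so $\MarkovExitXi(\stringinput^{p,b^\star})=1$; either way $\Gamma^E$ errs, and padding gives infinitely many witnesses. The oracle variant is identical once one takes the oracle value $y$ to be a fixed rational within $\omega$ of $\xi_{b^\star}$: since $\xi_{b^\star}\in[\xi_-,\xi_+]$, this $y$ is a legitimate oracle answer in both the halting and the looping scenario, $\alpha\le\omega<\kappa$ yields the same dichotomy for $\MarkovExitXiOracle(\stringinput^{p,b^\star},y)$, and the recursion-theoretic $p$ forces $\GammaEO$ to err on $(\stringinput^{p,b^\star},y)$.

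For~\ref{crp:3b}, I would note that for a fixed $\stringinput$ the event $\{\beta:\Gamma^{E,\ran}(\stringinput,\beta)\downarrow\,=c\}$ is a disjoint union $\brackets{A_c}$ of cylinders over the finite strings $\sigma\in A_c$ on which $\Gamma^{E,\ran}$ halts with output $c$; enumerating $A_c$ and adding up cylinder measures, which are computable by Definition~\ref{def:computable_measure} and~\eqref{eq:approximation_r}, exhibits $q_c(\stringinput):=\P(\Gamma^{E,\ran}(\stringinput)=c)$ as the non-decreasing limit of a computable rational sequence. The hypothesis $\P(\Gamma^{E,\ran}(\stringinput)=\MarkovExitXi(\stringinput))>\tfrac12$ forces $q_{\MarkovExitXi(\stringinput)}(\stringinput)>\tfrac12\ge q_{1-\MarkovExitXi(\stringinput)}(\stringinput)$, so the deterministic machine that interleaves the two lower-bound enumerations and outputs the first $c\in\{0,1\}$ whose bound exceeds $\tfrac12$ always halts and equals $\MarkovExitXi$ on $\subsetMarkov$; taking $\subsetMarkov=\MarkovOmega$ and invoking~\ref{crp:3a} gives the stated non-existence. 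For~\ref{crp:3c}, fix $p_0\in(1/2,1]$ and rerun the~\ref{crp:3a} construction with oracle, but now let the recursion-theoretic $p$ enumerate lower bounds on $\P(\Gamma^{E,\mathcal{O},\ran}(\stringinput^{p,b^\star},y)=1)$ and halt iff one of them exceeds $\tfrac12$. If $p$ halts the true flag at $(\stringinput^{p,b^\star},y)$ is $0$ while the checker outputs $1$ with probability $>\tfrac12$, so its success probability is $<\tfrac12<p_0$; if $p$ loops the true flag is $1$ while $\P(\text{output}=1)\le\tfrac12$, so the success probability is again $\le\tfrac12<p_0$.

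For~\ref{crp:4}, assume ZFC is $\Sigma_1$-sound, let $T$ be a machine searching for a ZFC-proof of $0=1$ (so $T$ halts iff ZFC is inconsistent), and take $\hat\Omega$ to consist of a fast self-identifying standard code for each single-valued $\iota$ together with $\stringinput^0:=\stringinput^{T,1}$ for $\iota^0$; let $\Gamma$ on $\hat\Omega$ read the identity of its input when that input is single-valued and return the corresponding rational basic-feasible solution, and output $\xi_-$ on codes still looking like $\iota^0$. Since $\Sigma_1$-soundness implies consistency, in the standard model $T$ loops, so $\stringinput^0$ corresponds to $\iota^0$, $\Gamma(\stringinput^0)=\xi_-\in[\xi_-,\xi_+]=\MarkovXi(\stringinput^0)$, and items (a),(b) of~\ref{crp:4} hold; moreover distinct single-valued inputs get distinct codes, so $\hat\Omega$ has exactly one code per input. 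Unwinding definitions, ZFC proves that ``$\Gamma(\stringinput^0)\in\MarkovXi(\stringinput^0)$'' is equivalent to ``$\xi_-\in\Xi(\iota_{\stringinput^0})$'', which (as $\Xi(\iota_{T,1})=\{\xi_+\}$) is equivalent to ``$T$ never halts'', i.e.\ to $\mathrm{Con}(\mathrm{ZFC})$: by Gödel's second incompleteness theorem this is unprovable in ZFC, and by $\Sigma_1$-soundness its negation is unprovable as well, so $\stringinput^0$ is the witness required in item (c) of~\ref{crp:4}. The main obstacle throughout is the bookkeeping that makes the self-referential definitions legitimate --- placing each perturbation $\iota_{p,b}-\iota^0$ strictly below the (finite but input-dependent) query resolutions of $\Gamma$, $\Gamma^E$, $\GammaEO$ on $\tilde\stringinput$, verifying that every $\stringinput^{p,b}$ is a genuine element of $\MarkovOmega$ whatever $p$ does, and checking that the nested simulations inside $p$ (of $\Gamma$, of the checker, and of the left-c.e.\ probability enumerations) terminate without circularity --- together with, in the randomised cases, the measurability and left-computability of the success probabilities, which is precisely where Definition~\ref{def:computable_measure} and the computability of $\mu$ are used.
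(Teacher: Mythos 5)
Your Part~\eqref{crp:4} argument is correct and genuinely different from the paper's: the paper argues \emph{\`a la} Post --- it constructs the r.e.\ set $G = \{m : \Phi^m \not\sim \iota^0\}$, proves it undecidable, and extracts an $m_0\notin G$ for which ``$m_0\notin G$'' is independent of ZFC --- whereas you take $T$ to be a proof-search for $0=1$, identify ``$\Gamma(\stringinput^0)\in\MarkovXi(\stringinput^0)$'' with $\mathrm{Con}(\mathrm{ZFC})$, and invoke G\"odel's second incompleteness theorem plus $\Sigma_1$-soundness directly. This is shorter and more transparent; the paper's route has the advantage that it goes through an explicit undecidable decision problem tied to $\Xi^M$ itself (Claim~(A)), which is reused elsewhere. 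Your Part~\eqref{crp:3b} is essentially the paper's Proposition~\ref{prop:de-randomisation}\eqref{prop:de-randomisation_single_valued}, phrased slightly differently.

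However, there is a genuine gap in your treatment of Part~\eqref{crp:3a}, which then propagates to Part~\eqref{crp:3c}. You fix $z=\Gamma(\tilde\stringinput)$, pick $b^\star$ so that $\disM(z,\xi_{b^\star})>\kappa$, and then, in the case where $p$ halts (so $\MarkovXi(\stringinput^{p,b^\star})=\{\xi_{b^\star}\}$), you assert that $\Gamma(\stringinput^{p,b^\star})=z$, so that the true exit flag is~$0$. This is unjustified. In the Markov model, $\Gamma$ accesses its input through the G\"odel numbers $\gn{\stringinput^{p,b^\star}_i}$ (Definition~\ref{def:Markov_Delta_1_Information}), and $\stringinput^{p,b^\star}_i$ is a \emph{different} Turing machine from $\tilde\stringinput_i$ even while $p$ is still running; an algorithm that is within the $\alpha$-range is free to return any point of $\ball{\alpha}{S^0}$ on $\stringinput^{p,b^\star}$, including one within $\kappa$ of $\xi_{b^\star}$. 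Your parenthetical ``$\Gamma^E$ sees $\tilde\stringinput$'' reflects the same confusion between output values and codes, and your ``query-resolution'' padding does not repair it, since nothing stops $\Gamma$ or $\Gamma^E$ from branching on the G\"odel number alone without ever querying a component. The paper's proof (Proposition~\ref{prop:EF}) fixes exactly this by defining the diagonal function $\gamma$ in~\eqref{eq:gammaEF} to branch on \emph{both} $\Gamma^E(\stringinputm)$ \emph{and} $\Gamma(\stringinputm)$ (whether $\Gamma(\stringinputm)$ lies in $\ball{\kappa}{S^2}$, in $\ball{\kappa}{S^1}\setminus\ball{\kappa}{S^0}$, etc.), rather than fixing a single $b^\star$ from $\Gamma(\tilde\stringinput)$ in advance. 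You need this richer diagonalisation. (Incidentally, for~\eqref{crp:3c} the paper does not diagonalise against the randomised checker directly; it first de-randomises using Proposition~\ref{prop:de-randomisation}\eqref{prop:de-randomisation_multi_valued} --- the multi-valued variant, exploiting that $\MarkovExitXiOracle$ is set-valued off the oracle set and has a common output there --- and then applies~\eqref{crp:3a}. Your direct diagonalisation could in principle be made to work, but only after the~\eqref{crp:3a} gap is repaired.)
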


\begin{remark}
CRP IV provides a non-provability statement, which is a consequence of the non-computability in the Markov model that we establish. This is analogous (yet mathematically different) to G\"odel's first incompleteness theorem \cite{godel1931formal} -- specifically for optimisation. This is very similar to how the negative answer to Hilbert's 10th problem \cite{Hilberts10} (non-computability of diophantine equations) yields a statement of non-provability -- as in G\"odel's first incompleteness theorem -- specifically for diophantine equations \cite{franzen2005godel}.
\end{remark}

\begin{theorem}[CRP V]\label{thm:crp_5}
Consider the setup of \S \ref{section:setup_CRP}. Then, for the computational problem $\{\MarkovXi, \MarkovOmega, \M,\MarkovLambda\}$ as in Definition \ref{def:Markov_Delta_1_Information}, the following holds.

\begin{enumerate}[label=(\Roman*), ref = \Roman*, leftmargin = 8mm, series=CRP, resume = CRP]
		\item \label{crp:5} Define the function $\Xi^{\mv} \colon \MarkovOmega \to \{0,1\}$ by
		\begin{align*}
		\Xi^{\mv}(\stringinput) =
		\begin{cases}
		1 & \text{ if } |\MarkovXi(\stringinput)| = 1; \\
		0 & \text{ if } |\MarkovXi(\stringinput)| > 1.
		\end{cases}
		\end{align*}
		\begin{enumerate}
			\item \label{crp:5a}There exists a unique $\iota^0 \in \Omega$ such that for every $\stringinput \in \MarkovOmega$, if $\Xi^{\mv}(\stringinput) = 0$ then $\CorrXi(\stringinput) = \iota^0$ (where $\CorrXi$ is given as in Definition \ref{def:Markov_Delta_1_Information}).
			\item \label{crp:5b}There exists a $\kappa$-trustworthy AI with `giving up' parameter $\{\G_n\}_{n \in \N}$ (where $\G_n \colon \Omega \to \M \cup \{\idk\}$ for every $n \in \N$) such that its associated `I don't know' function satisfies $\Xi^{\Idk}_{\{\G_n\}} = \Xi^{\mv}$. 
			\item \label{crp:5c}There does not exist any $\kappa$-trustworthy AI of the form $\G \colon \MarkovOmega \to \M \cup \{\idk\}$ such that its associated `I don't know' function satisfies $\Xi^{\Idk}_{\G} = \Xi^{\mv}$.
		\end{enumerate}
	\end{enumerate}
\end{theorem}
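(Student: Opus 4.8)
The plan is to obtain the three claims of the theorem from the abstract characterisation in Theorem~\ref{thm:iff}, after recording the features of the input sets $\Omega=\Omega_{\no,\nt}(\theta)$ fixed in \S\ref{section:setup_CRP}. Write $\iota^0\in\Omega$ for the unique instance at which the relevant program (LP, BP or LASSO) has more than one optimiser: the construction places at $\iota^0$ a fixed rational instance with two optimisers at mutual distance $>2\kappa$, while every other $\iota\in\Omega$ has $|\Xi(\iota)|=1$ with an explicitly computable optimiser. Recall further that the codes in $\MarkovOmega$ are manufactured from an effective enumeration $\{T_m\}_{m\in\N}$ of Turing machines: there is a recursive map $m\mapsto\gn{\stringinput^{(m)}}$ such that $\stringinput^{(m)}\in\MarkovOmega$ emits approximations of $\iota^0$ up to the first halting step $t$ of $T_m$ and thereafter commits to a perturbed instance $\iota_m\neq\iota^0$ with $\|f(\iota_m)-f(\iota^0)\|_\infty\le 2^{-t}$, so that $\CorrXi(\stringinput^{(m)})=\iota^0$ exactly when $T_m$ never halts; moreover $\iota^0$ is the only accumulation point of $\Omega$, and $\{\iota\in\Omega:\|f(\iota)-f(\iota^0)\|_\infty\ge\delta\}$ is finite and computable uniformly in the rational $\delta>0$. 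The first claim — that such a $\iota^0$ exists and is unique — is then immediate: since the evaluations $f_i$ are rational-valued, each $\iota\in\Omega$ equals $\CorrXi$ of its constant code, so $\CorrXi$ is onto, and for $\stringinput\in\MarkovOmega$ one has $\Xi^{\mv}(\stringinput)=0\iff|\MarkovXi(\stringinput)|>1\iff|\Xi(\CorrXi(\stringinput))|>1\iff\CorrXi(\stringinput)=\iota^0$, the last step because $\iota^0$ is the unique multi-valued instance; this both exhibits $\iota^0$ and forces its uniqueness.

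For the second claim (existence of the $\kappa$-trustworthy AI with `giving up' parameter) I would apply Theorem~\ref{thm:iff}, part~(2), to the candidate $\Xi^{\mv}$ for the Markov problem $\{\MarkovXi,\MarkovOmega,\M,\MarkovLambda\}$; it then suffices to verify that $\{\Xi^{\mv},\MarkovOmega,\{0,1\},\MarkovLambda\}\in\Sigma^A_1$ and that there is an algorithm $\Gamma^{*}$ on $(\Xi^{\mv})^{-1}(1)$ with $\Gamma^{*}(\stringinput)\in\ball{\kappa}{\MarkovXi(\stringinput)}$. For the $\Sigma^A_1$ membership: let $\Gamma_n(\stringinput)$ simulate $\stringinput_i(j)$ for all $i\le k$ and $j\le n$, returning $1$ if ever $|\stringinput_i(j)-f_i(\iota^0)|>2^{-j}$ and $0$ otherwise. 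This is recursive in the G\"odel numbers (each $\stringinput_i$ halts on every input), nondecreasing in $n$, equal to $0$ for all $n$ when $\CorrXi(\stringinput)=\iota^0$ (a valid code for $\iota^0$ never violates $|\stringinput_i(j)-f_i(\iota^0)|\le 2^{-j}$), and eventually $1$ otherwise (some coordinate of $f(\CorrXi(\stringinput))$ is at positive distance from $f(\iota^0)$, so the test fires once $2^{-j}$ drops below half that gap); hence $\Gamma_n\nearrow\Xi^{\mv}$, which also shows the problem lies in $\Delta^A_2$. For the algorithm $\Gamma^{*}$: given $\stringinput$ with $\CorrXi(\stringinput)\neq\iota^0$, run this search to its first witness $(i_0,j_0)$, set $\delta_0:=|\stringinput_{i_0}(j_0)-f_{i_0}(\iota^0)|-2^{-j_0}>0$ (a lower bound for $\|f(\CorrXi(\stringinput))-f(\iota^0)\|_\infty$), enumerate the finitely many instances of $\Omega$ at $\|\cdot\|_\infty$-distance $\ge\delta_0$ from $\iota^0$, simulate $\stringinput$ to growing precision until exactly one of them remains consistent, and output a $\kappa$-approximation of its computable optimiser. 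This always halts, so Theorem~\ref{thm:iff}, part~(2), yields a $\kappa$-trustworthy $\{\G_n\}$ with $\Xi^{\Idk}_{\{\G_n\}}=\Xi^{\mv}$; since it only commits to the instance the approximations uniquely determine, one checks moreover that it is consistently reasoning, and the returned vertex admits a finite rational optimality certificate and is hence ZFC-provably optimal — matching the informal item~(V) of Theorem~\ref{thm:crp}.

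For the third claim (no such AI without a `giving up' parameter), Theorem~\ref{thm:iff}, part~(1), shows that any $\kappa$-trustworthy $\G\colon\MarkovOmega\to\M\cup\{\idk\}$ with $\Xi^{\Idk}_{\G}=\Xi^{\mv}$ would force $\{\Xi^{\mv},\MarkovOmega,\{0,1\},\MarkovLambda\}\in\Delta^A_0$, so it is enough to rule this out. If $\Xi^{\mv}$ were computable from the G\"odel numbers of $\stringinput$, then precomposing with the recursive map $m\mapsto\gn{\stringinput^{(m)}}$ would give a total recursive function $m\mapsto\Xi^{\mv}(\stringinput^{(m)})$ equal to $1$ exactly when $T_m$ halts — a decision procedure for the halting problem, which is impossible. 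Hence $\Xi^{\mv}\notin\Delta^A_0$ and no such $\G$ exists.

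The abstract reductions above are routine once the inputs are in hand; the real content, and the main obstacle, lies in the construction of $\Omega_{\no,\nt}(\theta)$ from \S\ref{section:setup_CRP}, and specifically in the assertion that $\Xi^{\mv}$ sits exactly in $\Sigma^A_1\setminus\Delta^A_0$. This requires the perturbed instances $\iota_m$ to be genuine instances of \eqref{eq:LP}, \eqref{eq:BP} and \eqref{eq:LASSO} of the prescribed dimensions $(\no,\nt)$ whose unique optimiser is a computable vertex far from the two optimisers at $\iota^0$; to lie at $\|\cdot\|_\infty$-distance at most $2^{-t}$ from $\iota^0$ so that a code may legitimately stall near $\iota^0$ until exactly the halting time of $T_m$ while remaining a valid $2^{-n}$-approximation; and to leave $\iota^0$ as the only accumulation point of $\Omega$. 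Establishing all three simultaneously — as is already needed for Theorems~\ref{thm:crp_1_2} and~\ref{thm:crp_3_4} — is the delicate step, and is where the phase-transition and breakdown-epsilon phenomenon for \eqref{eq:LP}, \eqref{eq:BP} and \eqref{eq:LASSO} enters.
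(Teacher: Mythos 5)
Your treatment of (Va) matches the paper's, and your $\Sigma^A_1$ verification for (Vb) (test the coordinate approximations of $\stringinput$ against the known rational vector $f(\iota^0)$ rather than against each other) is a valid variation of the paper's $\tilde\G_n$. Your halting-problem reduction for (Vc) is also a legitimate alternative to the paper's route, which instead shows that a decision procedure for $\Xi^{\mv}$ would let one solve $\{\MarkovXi,\MarkovOmega,\M,\MarkovLambda\}$ exactly and then appeals to CRP~\eqref{crp:2}.

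There is, however, a genuine gap in your construction of $\Gamma^*$ for condition \eqref{condition:b2}. You assert that $\iota^0$ is the \emph{only} accumulation point of $\Omega$ and that $\{\iota\in\Omega:\|f(\iota)-f(\iota^0)\|_\infty\ge\delta\}$ is finite and uniformly computable. Both are false for the actual input family: by \eqref{eq:set_inputs} and the definition of $\mathcal{L}_\theta$, the set $\Omega=\Omega_{\no,\nt}(\theta)$ contains $(b,U(u_1,1/2))$ for \emph{every} rational $u_1\in[\theta,1/2]$ (and symmetrically), so it is dense in two line segments and every point is an accumulation point; the slice at distance $\ge\delta$ from $\iota^0$ is infinite. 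Consequently the step \enquote{enumerate the finitely many instances \dots\ and simulate until exactly one remains consistent} cannot be executed: no finite amount of simulation of $\stringinput$ narrows the candidate set to a singleton. The repair uses a different structural feature of $\mathcal{L}_\theta$ that you did not invoke: for every $(u_1,u_2)\in\mathcal{L}_\theta$ with $(u_1,u_2)\neq(1/2,1/2)$, exactly one of $u_1,u_2$ equals $1/2$ and the other is strictly smaller. Hence, by Lemma~\ref{lemma:ProblemBasicExampleLP}/\ref{lemma:ProblemBasicExampleBPDNL1}/\ref{lemma:ProblemBasicExampleULASSO}, the unique optimiser is \emph{exactly} $y^1$ when $u_1>u_2$ and \emph{exactly} $y^2$ when $u_1<u_2$, independently of the actual value of $\min\{u_1,u_2\}$. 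Thus $\Gamma^*$ need not identify $\CorrXi(\stringinput)$ at all; it only needs to determine the sign of $u_1-u_2$ from the approximations $\stringinput_1(n),\stringinput_2(n)$, which is precisely what the routine $\Gamma^{\always}$ of Lemma~\ref{lemma:routine} does and what your proposal replaces with a non-terminating search.
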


\begin{remark}[`Provide a correct logical explanation for the solution']\label{rem:logic_exp2}
In the main part of the paper, \S \emph{``The Consistent Reasoning Paradox (CRP)''} (page 3), CRP IV and CRP V discuss algorithms that can (or cannot) `provide correct logical explanation' of their solutions. This is to be interpreted in the following way: we say that an algorithm $\G$ \emph{ can provide a correct logical explanation of its answer on input $\stringinput \in \MarkovOmega$} if there exists a proof within ZFC of the statement `$\Gamma(\stringinput) \in \MarkovXi(\stringinput)$'. In the CRP, statement \eqref{crp:4} of Theorem \ref{thm:crp_3_4} shows that for a specific algorithm there is an input for which which there is no proof of its correctness (nor of its negation) in ZFC, and thus this algorithm \emph{cannot} -- nor can any other -- provide a correct logical explanation of its answer on every input. However, such proof relies on an extra assumption on ZFC (namely, its $\Sigma_1$-soundness), and thus such proof is not carried out within ZFC but in a larger meta-theory. On the other hand, statement \eqref{crp:5b} of Theorem \ref{thm:crp_5} shows that there exists a $\kappa$-trustworthy AI with `giving up' parameter, which in particular must be $\kappa$-correct whenever it does not output $\idk$; such proof is carried out within ZFC itself, and thus proving \eqref{crp:5b} also automatically shows that there is a proof of \eqref{crp:5b} within ZFC. Therefore, such $\kappa$-trustworthy AI \emph{can} provide correct logical explanations of its solutions.
\end{remark}

\subsection{Proof of Theorem \ref{thm:iff}}

We now provide a proof of Theorem \ref{thm:iff}, which gives necessary and sufficient conditions for a candidate `I don't know' function $\Xi^* \colon \Omega \to \{0,1\}$ to be the `I don't know function' associated to a trustworthy AI.

\begin{proof}[Proof of Theorem \ref{thm:iff}]

	For the sake of brevity and to avoid repetition, we only give a proof of \eqref{conclusion:iff_2}. Statement \eqref{conclusion:iff_1} follows from a straightforward adaptation of this proof. 
We begin by showing one direction of the implication. Let $\Xi^*\colon \Omega \to \{0,1\}$ be a function such that there exists a $\kappa$-trustworthy AI with `giving up' parameter $\{\G_n\}_{n \in \N}$ such that $\Xi^* = \Xi^{\Idk}_{\{\G_n\}}$. We will now prove that \eqref{condition:sigma1} and \eqref{condition:b2} hold.

	\emph{Proof of (a):} First, we prove that $\{\Xi^*, \Omega, \{0,1\},\Lambda\} \in \Sigma_1^A$. For every $n \in \N$ define  $\tilde\Gamma_n\colon \Omega \to \{0,1\}$ to be the algorithm given by
		\begin{align*}
			\tilde\G_n(\iota) \coloneq 
			\begin{cases}
				1 & \text{ if } \G_n(\iota) \neq \idk; \\
				0 & \text{ if } \G_n(\iota) = \idk,
			\end{cases}
			\quad \iota \in \Omega.
		\end{align*}
		Since the map $(\{f_i(\iota)\}_{i = 1}^k,n) \mapsto \G_n(\iota)$ is recursive , it follows that the map  $(\{f_i(\iota)\}_{i = 1}^k,n) \mapsto \tilde \G_n(\iota)$ is also recursive.
		Fix $\iota \in \Omega$. Note that from the $\kappa$-trustworthiness of $\Gamma$ it follows that $\tilde\G_n(\iota) \leq \tilde\G_{n+1}(\iota)$ for every $n \in \N$. Indeed,  if $\tilde\G_n(\iota) = 1$, then $\G_n(\iota) \neq \idk$ by construction, and thus for every $n' \geq n$ we have that $\G_{n'}(\iota) \neq \idk$ by Definition \ref{def:trustworthy_ai}, which implies that $\tilde\G_{n'}(\iota) = 1$. We conclude that
		\begin{align*}
			\lim_{n \to \infty} \tilde\Gamma_n(\iota) = 
			\begin{cases}
			1 & \text{ if there is a } k\in \N \text{ such that } \G_k(\iota) \neq \idk\\
					0 & \text{ if for every }k \in \N \text{ we have } \G_k(\iota) = \idk
			\end{cases}
		\end{align*}
		and thus $\lim_{n \to \infty} \tilde\Gamma_n(\iota)=\Xi^{\Idk}_{\{\G_n\}}(\iota) = \Xi^*(\iota)$.
		In particular, we have that 
		\[
		\tilde\G_n \nearrow \Xi^* \, \Longrightarrow \, \{\Xi^*, \Omega, \{0,1\},\Lambda\} \in \Sigma_1^A.
		\]

		 \emph{Proof of (b):}  From $\Xi^* = \Xi^{\Idk}_{\{\G_n\}}$ it follows that 
\[
\Omega^*_1 = (\Xi^*)^{-1}(1) = (\Xi^{\Idk}_{\{\G_n\}})^{-1}(1)  = \{\iota \in \Omega \ | \ \exists n \in \N : \G_n(\iota) \neq \idk\} = \Omega_{\text{know}}
\]
 as per Definition \ref{def:idk_function_informal_Anders}. Given an $\iota \in \Omega^*_1$, we can find an $n_{\iota}$ such that $\G_{n_\iota}(\iota) \neq \idk$ in the following recursive way: iterate through $n \in \N$ until we find an $n$ such that $\G_{n}(\iota) \neq \idk$, and when we find such an $n$, set $n_{\iota} = n$. This approach is recursive since $\{\G_n\}_{n \in \N}$ is assumed to be a $\kappa$-trustworthy AI with `giving up' parameter and thus the map $(n,\iota) \mapsto \G_n(\iota)$ is recursive.
Thus, we can define the algorithm $\Gamma^* \colon \Omega_1^* \to \M$ as follows:
$
			\Gamma^*(\iota) \coloneq \G_{n_\iota}(\iota) 
$
		for every $\iota \in \Omega^*_1$ (i.e. every $\iota$ such that $\Xi^*(\iota)=1$). Then by the definition of $\kappa$-trustworthy AI as in Definition \ref{def:trustworthy_ai}, since $\G_{n_\iota}(\iota) \neq \idk$, we have $\Gamma^*(\iota) = \G_{n_\iota}(\iota) \in \ball{\kappa}{\Xi(\iota)}$. 
	This concludes one implication.
	
Now we prove the reverse implication. Assume that $\Xi^*\colon \Omega \to \{0,1\}$ is an `I don't know' function satisfying \eqref{condition:sigma1} and \eqref{condition:b2}. We proceed to define a $\kappa$-trustworthy AI with `giving up' parameter $\{\G_n\}_{n \in \N}$ so that $\Xi^* = \Xi^{\Idk}_{\{\G_n\}}$.
Observe from \eqref{condition:sigma1} that there exists a family of algorithms $\{\tilde\G_n\}_{n \in \N}$ such that $\tilde\G_n \colon \Omega \to \{0,1\}$, $\tilde\G_n(\iota) \nearrow \Xi^*(\iota)$ for every $\iota \in \Omega$, and the map $(n,\iota) \mapsto \tilde\G_n(\iota)$ is recursive. Moreover, from \eqref{condition:b2}, there exists an algorithm $\Gamma^*\colon \Omega^*_1 \to \M$ such that $\Gamma^*(\iota) \in \ball{\kappa}{\Xi(\iota)}$ for every $\iota \in \Omega^*_1$, i.e. every $\iota \in \Omega$ such that $\Xi^*(\iota) = 1$. We construct an AI with `giving up' parameter $\{\G_n\}_{n \in \N}$ where $\G_n \colon \Omega \to \M \cup \{\idk\}$ for every $n \in \N$ is defined as follows:
	\begin{align}\label{eq:ai_construction_n}
		\G_n(\iota) \coloneq 
		\begin{cases}
			\G^*(\iota) & \text{ if } \tilde\G_n(\iota) = 1 \\
			\idk & \text{ if } \tilde\G_n(\iota) = 0
		\end{cases}.
	\end{align}
	We claim that this is a $\kappa$-trustworthy AI with `giving up' parameter and that $\Xi^* = \Xi^{\Idk}_{\{\G_n\}}$.

	First, the map $(n,\iota) \mapsto \G_n(\iota)$ is recursive since $\G^*$ and the map $(n,\iota) \mapsto \tilde\G_n(\iota)$ are recursive. Secondly, whenever $\iota \in \Omega$ and $n_{\iota} \in \N$ are such that $\G_{n_{\iota}}(\iota) \neq \idk$, it then holds that $\G_{n'}(\iota)\neq \idk$ for every $n' \geq n_{\iota}$. To see this, recall that as a function of $n$, $\tilde\G_{n}(\iota)$ is increasing so that $\tilde\G_{n'}(\iota) \geq \tilde\G_{n_{\iota}}(\iota) = 1$ by assumption \eqref{condition:sigma1}, and thus $\G_{n'}(\iota)\neq \idk$. Furthermore, using the assumption on $\Gamma^*$ given by \eqref{condition:b2} we see that and $\G_{n'}(\iota) = \Gamma^*(\iota) \in \ball{\kappa}{\Xi(\iota)}$ for every $n' \geq n_{\iota}$. Thus $\{\G_n\}_{n \in \N}$ is a $\kappa$-trustworthy AI with `giving up' parameter according to Definition \ref{def:ai_informal}.

	Finally, we proceed to compute $\Xi^{\Idk}_{\{\G_n\}}$ and prove that it coincides with $\Xi^*$. Preliminarily, recall from \eqref{condition:sigma1} that $\tilde\G_n \nearrow \Xi^*$. Thus for every $n \in \N$, since $\tilde\G_n \leq \Xi^*$ and $\Xi^*$ has values in $\{0,1\}$, we have that $\tilde\G_n(\iota) = 1$ implies that $\Xi^*(\iota) = 1$. Conversely, since $\tilde\G_n \to \Xi$ pointwise and both $\tilde\G_n$ and $\Xi$ take values in the discrete set $\{0,1\}$, we see that $\Xi^*(\iota)=1$ implies that there exists $n \in \N$ such that $\tilde\G_n(\iota) = 1$. We have thus proven that $\Xi^*(\iota) = 1$ if and only if there exists $n \in \N$ such that $\tilde\G_n(\iota) = 1$. Therefore, by the construction of $\{\G_n\}_{n \in \N}$ in \eqref{eq:ai_construction_n}, we have
	 \begin{align*}
	 	\Omega_{\text{know}} =&  \{\iota \in \Omega \ | \ \exists n \in \N : \G_n(\iota) \neq \idk\}  \\
	 	=&  \{\iota \in \Omega \ | \ \exists n \in \N : \tilde\Gamma_n(\iota) = 1\}  =  \{\iota \in \Omega \ | \ \Xi^*(\iota) = 1\} = (\Xi^*)^{-1}(1)=\Omega^*_1, 
	 \end{align*}
	 and therefore $\Xi^{\Idk}_{\{\G_n\}} = 1_{\Omega_{\text{know}}} = 1_{\Omega^*_1} = \Xi^*$, concluding the proof of \eqref{conclusion:iff_2}.
\end{proof}

\subsection{Constructing the family of sentences for the CRP and the oracle problem}\label{sec:constructing}

Fix an enumeration $\varphi_1,\varphi_2,\varphi_3,\dots$ of all Turing machines. Each of them can be interpreted as being a partial function $\varphi_m \colon \subseteq \N \to \Q$ with domain $\operatorname{dom}(\varphi_m) = \{n \in \mathbb{N} \, \vert \,   \varphi_m(n)\downarrow \}$. Define, for $(m,t) \in \mathbb{N}^2$, the set
\begin{align*}
	\mathcal{W}(m,t) := \{n \in \mathbb{N} \, \vert \, \varphi_m \text{ halts on $n$ in exactly } t \text{ steps}\} \subseteq \mathbb{N}.
\end{align*}
Clearly, $\mathcal{W}$ is recursive and $\operatorname{dom}(\varphi_m) = \bigcup_{t \geq 0} \mathcal{W}(m,t)$ for every $m \in \N$. Note that, for each $m,n \in \N$ there is at most one $t\in \N$ so that $n \in \mathcal{W}(m,t)$.

	\begin{remark}[Crucial properties of $\mathcal{W}$]\label{rmk:properties_of_W}
		All results in the current section involving the function $\mathcal{W}$ (namely, Lemma \ref{lem:constructing_phi^m} and Propositions \ref{prop:DrivingNegativeProposition} and \ref{prop:EF}) do not rely on the precise definition of $\mathcal{W}$, but are based only on two properties that it satisfies:
		\begin{enumerate}[leftmargin = 8mm, label = (P\roman*)]
			\item for all $m \in \mathbb{N}$, $\varphi_m(m) \downarrow$ if and only if there exists $t \in \N$ such that $m \in \mathcal{W}(m,t)$; \label{item:W_property1}
			\item for all $m \in \mathbb{N}$,  there is at most one $t \in \N$ such that $m \in \mathcal{W}(m,t)$.\label{item:W_property2}
		\end{enumerate}
		Therefore, the function $\mathcal{W}$ could be replaced by any other function $\mathcal{W}'$ satisfying the same properties \ref{item:W_property1} and \ref{item:W_property2}, and all the relevant results would still apply. We will exploit this in Section 5, wherein we produce a computer code that has these two properties, but does not rely on evaluations of the number of steps a Turing Machine takes and is thus more relevant for practical computer programming
	\end{remark}

The following Lemma allows to construct a universal family of inputs in $\MarkovOmega$ that will be referenced throughout most of the subsequent results and proofs. The importance of this family of inputs is that, for any given algorithm, we can find an input in this family on which the algorithm is guaranteed to fail (this will be the content of Proposition \ref{prop:DrivingNegativeProposition}).

\begin{lemma}\label{lem:constructing_phi^m}
	Let $\{\Xi, \Omega, \mathcal{M}, \Lambda\}$ be a computational problem with $\Lambda$ finite so that $\Lambda=\{f_i\,\vert\,i\in\mathbb{N}, i\leq k\}$ for some $k \in \mathbb{N}$, and with $\mathcal{M}$ a subset of  $\R^d$ for some dimension $d$. Let $\{\iota^1_n\}_{n=1}^{\infty}$, $\{\iota^2_n\}_{n=1}^{\infty}$ be sequences in $\Omega$ and $\iota^0 \in \Omega$. Suppose that the following conditions hold.
	\begin{enumerate}[leftmargin=8mm, label=(\alph*)]
		\item For all $i \in \{1,2,\dotsc,k\}$ and $j \in \{1,2\}$, there exist algorithms $\hat \Gamma^j_i$ such that $\hat \Gamma^{j}_i: \mathbb{N} \times \mathbb{N} \to \Q$ with $|\hat \Gamma^{j}_i(n,r) - f_i(\iota^{j}_r)| \leq 2^{-n-1}$, as well as an algorithm $\hat \Gamma^0_i: \mathbb{N} \to \Q$ with $|\hat \Gamma^{0}_i(n) - f_i(\iota^{0})| \leq 2^{-n-1}$ for all $n \in \N$. \label{assumption:ComputableInput_lemma}
		\item $|f_i(\iota^j_n)-f_i(\iota^0)|\leq 2^{-n}$ for all $j \in \{1,2\}$, every $n \in\mathbb{N}$, and every $i\in \{1,2,\dotsc,k\}$. \label{assumption:Del1withIota0_lemma}
	\end{enumerate}
	For each $m\in \N$ and $i \in \{1,\dotsc,k\}$, define the following functions $\phi^m_i \colon \N \to \Q$:
		\begin{equation}\label{eq:def_phiDerived}
		\phi^m_i(n):= \begin{cases} \hat\Gamma^1_i(n,t) & \text{ if } [m \in \mathcal{W}(m,t)] \land [t \leq n] \land [\varphi_m(m) = 1];\\ 
			\hat\Gamma^2_i(n,t) & \text{ if } [m \in \mathcal{W}(m,t)] \land [t \leq n] \land [\varphi_m(m) = 2];\\
			\hat \Gamma^0_i(n) & \text{ otherwise.}
		\end{cases}
	\end{equation}
	Then the following conclusions hold:
	\begin{enumerate}[leftmargin = 8mm]
		\item For every $m \in \N$ and $i \in \{1,\dotsc,k\}$, $\phi^m_i$ can be implemented as a Turing machine; \label{conclusion:TM}
		\item For every $m \in \N$, $\stringinputm \in \MarkovOmega$ and in particular
			\begin{align}\label{eq:EFInitialIotaCorrespond}
			\{\phi^m_i(n)\}_{i \in \{1,2,\dotsc,k\}, n \in \N} \,\text{ corresponds to } \begin{cases}  \iota^1_t \text{ for some } t \in \mathbb{N} &\text{if } \varphi_m(m) = 1;\\
				\iota^2_t \text{ for some } t \in \mathbb{N} & \text{if } \varphi_m(m) = 2; \\
				\iota^0 & \text{if }\varphi_m(m)\notin \{1,2\} \lor\varphi_m(m) \uparrow
			\end{cases}
		\end{align}
		in the sense of Definition \ref{def:Markov_Delta_1_Information} \label{conclusion:provides_delta_1_information}.
	\end{enumerate}
	
\end{lemma}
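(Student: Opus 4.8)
The plan is to establish the two conclusions in order: first that each $\phi^m_i$ given by \eqref{eq:def_phiDerived} is, for fixed $m$ and $i$, a \emph{total} recursive function (conclusion \ref{conclusion:TM}), and then, by a three-way case split on the value $\varphi_m(m)$, that the tuple $\stringinputm$ lies in $\MarkovOmega$ and corresponds to exactly the input named in \eqref{eq:EFInitialIotaCorrespond} (conclusion \ref{conclusion:provides_delta_1_information}). Throughout, I would work only with the abstract properties \ref{item:W_property1}--\ref{item:W_property2} of $\mathcal W$, never its concrete definition, in line with Remark \ref{rmk:properties_of_W}.

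For conclusion \ref{conclusion:TM}, fix $m,i$ and describe a machine computing $n\mapsto\phi^m_i(n)$. On input $n$ it performs a bounded search over $t\in\{0,1,\dots,n\}$ testing the recursive predicate $m\in\mathcal W(m,t)$; by \ref{item:W_property2} at most one such $t$ exists. If a witness $t$ is found, then \ref{item:W_property1} guarantees $\varphi_m(m)\downarrow$, so the machine may safely simulate $\varphi_m$ on input $m$ until it halts and read off its rational value; it then returns $\hat\Gamma^1_i(n,t)$ if that value is $1$, $\hat\Gamma^2_i(n,t)$ if it is $2$, and $\hat\Gamma^0_i(n)$ in every other case. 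If no witness $t\le n$ is found it returns $\hat\Gamma^0_i(n)$. Note that when $\varphi_m(m)\uparrow$ the search simply never finds a witness (for any $n$), so the machine never attempts a divergent simulation. Since $\mathcal W$ is recursive, the search is finite, the $\hat\Gamma^j_i$ are algorithms by \ref{assumption:ComputableInput_lemma}, and the three clauses of \eqref{eq:def_phiDerived} are mutually exclusive and exhaustive as $n$ varies, this machine halts on every input and computes $\phi^m_i$.

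For conclusion \ref{conclusion:provides_delta_1_information} I would split on $\varphi_m(m)$. If $\varphi_m(m)=1$, then \ref{item:W_property1}--\ref{item:W_property2} give a unique $t^*$ with $m\in\mathcal W(m,t^*)$. For $n\ge t^*$ the first clause applies and $\phi^m_i(n)=\hat\Gamma^1_i(n,t^*)$, which by \ref{assumption:ComputableInput_lemma} is within $2^{-n-1}\le 2^{-n}$ of $f_i(\iota^1_{t^*})$. For $n<t^*$ the third clause applies, $\phi^m_i(n)=\hat\Gamma^0_i(n)$, and
\[
|\phi^m_i(n)-f_i(\iota^1_{t^*})|\le|\hat\Gamma^0_i(n)-f_i(\iota^0)|+|f_i(\iota^0)-f_i(\iota^1_{t^*})|\le 2^{-n-1}+2^{-t^*}\le 2^{-n},
\]
using \ref{assumption:ComputableInput_lemma} for the first term and \ref{assumption:Del1withIota0_lemma} (with $n$ replaced by $t^*\ge n+1$) for the second. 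Hence $\stringinputm$ satisfies $|\phi^m_i(n)-f_i(\iota^1_{t^*})|\le 2^{-n}$ for all $n,i$, i.e. it corresponds to $\iota^1_{t^*}\in\Omega$; the case $\varphi_m(m)=2$ is identical with $\iota^2_{t^*}$. If instead $\varphi_m(m)\notin\{1,2\}$ or $\varphi_m(m)\uparrow$, then for every $n$ neither of the first two clauses fires (no witness $t$ exists at all by \ref{item:W_property1} in the divergent case, and the value check fails otherwise), so $\phi^m_i(n)=\hat\Gamma^0_i(n)$ for all $n$, which is within $2^{-n-1}\le 2^{-n}$ of $f_i(\iota^0)$; thus $\stringinputm$ corresponds to $\iota^0$. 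In every case the corresponding input lies in $\Omega$, so $\stringinputm\in\MarkovOmega$ by Definition \ref{def:Markov_Delta_1_Information} (with uniqueness of the corresponding input supplied by Definition \ref{def:ComputationalProblem}), and the three alternatives just proved are precisely \eqref{eq:EFInitialIotaCorrespond}.

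The only genuinely delicate point I anticipate is the regime $n<t^*$ in conclusion \ref{conclusion:provides_delta_1_information}: there $\phi^m_i(n)$ is computed from the default approximant $\hat\Gamma^0_i$ aimed at $\iota^0$, yet it must still $2^{-n}$-approximate $f_i$ at $\iota^1_{t^*}$ (resp. $\iota^2_{t^*}$). This is exactly what hypothesis \ref{assumption:Del1withIota0_lemma} is designed for, and the factor $\tfrac12$ of slack built into \ref{assumption:ComputableInput_lemma} (the bound $2^{-n-1}$ rather than $2^{-n}$) is what makes the triangle inequality close. Everything else is routine bookkeeping with the two abstract properties of $\mathcal W$.
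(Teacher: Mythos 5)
Your proof is correct and follows essentially the same route as the paper's: describe the machine as a bounded search over $t\le n$ for a witness of $m\in\mathcal W(m,t)$ followed by a (now guaranteed-to-halt) simulation of $\varphi_m(m)$, then verify the three cases of \eqref{eq:EFInitialIotaCorrespond} via the triangle inequality, with the $n<t^*$ regime closed by assumption \ref{assumption:Del1withIota0_lemma} and the built-in $2^{-n-1}$ slack from assumption \ref{assumption:ComputableInput_lemma}. The only difference is cosmetic: you make explicit the intermediate bound $|f_i(\iota^0)-f_i(\iota^j_{t^*})|\le 2^{-t^*}\le 2^{-n-1}$ (valid since $t^*\ge n+1$), which the paper states directly as $2^{-n-1}$ without the intermediate step.
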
		
\begin{proof}
	For every $m \in \N$ and $i \in \{1,\dotsc,k\}$, each function $\phi^m_i\colon \N \to \Q$ can be implemented as a Turing machine as follows: $\phi^m_i(n)$ checks whether for some $t \leq n$ it holds that $m \in \mathcal{W}(m,t)$ (at most one such $t$ exists) and if so, computes the value of $\varphi_m(m)$; if the value is $j \in \{1,2\}$, then $\phi^m_i(n)$ returns $\hat\Gamma^j_i(n,t)$, while for any other value of $\varphi_m(m)$, or if there is no $t\leq n$ such that $m \in \mathcal{W}(m,t)$, then $\phi^m_i(n)$ returns $\hat \Gamma^0_i(n)$. This proves \eqref{conclusion:TM}.
	By definition, $\{\phi^m_i(n)\}_{n\in\N}$ is equal to one of the following:
	{\small
	\begin{equation*}\label{eq:EFInitialIotaDef}
		\begin{split}
			\{\phi^m_i(n)\}_{n \in \N} \coloneq \begin{cases}  \{  \hat \Gamma^0_i(1), \hat \Gamma^0_i(2),\dotsc,\hat \Gamma^0_i(t-1),\hat\Gamma^1_i(t,t),\hat\Gamma^1_i(t+1,t),\dotsc  \} & \text{if } \varphi_m(m) = 1\\ & \qquad \quad \land m \in \mathcal{W}(m,t);\\
				\{  \hat \Gamma^0_i(1), \hat \Gamma^0_i(2),\dotsc,\hat \Gamma^0_i(t-1),\hat\Gamma^2_i(t,t),\hat\Gamma^2_i(t+1,t),\dotsc  \} & \text{if } \varphi_m(m) = 2 \\ & \qquad \quad \land m \in \mathcal{W}(m,t); \\
				\{ \hat \Gamma^0_i(1), \hat \Gamma^0_i(2),\dotsc,\hat \Gamma^0_i(t-1),\hat \Gamma^0_i(t),\hat \Gamma^0_i(t+1)\dotsc  \} & \text{if }\varphi_m(m)\notin \{1,2\} \\ & \qquad \quad \lor \varphi_m(m) \uparrow.
			\end{cases}
		\end{split}
	\end{equation*}
	}
	We now proceed to show \eqref{conclusion:provides_delta_1_information} by considering two cases: 
	
	 \emph{Case (I)}: if $\varphi_m(m) \notin \{1,2\} \lor \varphi_m(m) \uparrow$, then $|\phi^m_i(n) - f_i(\iota^0)| = | \hat \Gamma^0_i(n) - f_i(\iota^0)| \leq 2^{-n-1}\leq 2^{-n}$ by \ref{assumption:ComputableInput} for all $n$ and $i \in \{1,\dotsc,k\}$, proving that $\{\phi^m_i(n)\}_{i \in \{1,2,\dotsc,k\}, n \in \N}$ corresponds to $\iota^0$;
	 
	 \emph{Case (II)}: if $\varphi_m(m) =j$ for $j \in \{1,2\}$, then $\{\phi^m_i(n)\}_{i \in \{1,2,\dotsc,k\}, n \in \N}$ corresponds to $\iota^j_t$ (where $t$ is uniquely determined by $m \in \mathcal{W}(m,t)$) since for $n \geq t$, $|\phi^m_i(n) - f_i(\iota^j_t)| = | \hat \Gamma^j_i(n,t) - f_i(\iota^j_t)| \leq 2^{-n-1}\leq 2^{-n}$ by \ref{assumption:ComputableInput}, and for $n < t$, 
	\[
	|\phi^m_i(n) - f_i(\iota^j_t)| = |\hat \Gamma^0_i(n) - f_i(\iota^j_t)| \leq |\hat \Gamma^0_i(n) - f_i(\iota^0)| + |f_i(\iota^0) - f_i(\iota^j_t)| \leq 2^{-n-1} + 2^{-n-1} = 2^{-n},
	\]
	 by \ref{assumption:ComputableInput} and \ref{assumption:Del1withIota0}. 
	We conclude that $\stringinputm \in \MarkovOmega$ and that \eqref{eq:EFInitialIotaCorrespond} holds. Thus \eqref{conclusion:provides_delta_1_information} is proven.
\end{proof}

Next, we consider computational problems with an oracle. The following result shows that, under suitable assumptions, if there is an algorithm that achieves perfect accuracy when solving a Markov problem with oracle, it is possible to design an algorithm that can solve the same problem without needing access to any oracle.

\begin{proposition}[Removing an oracle]\label{prop:de-oracolisation} 
Let $\Xi_1 \colon \Omega \rightrightarrows \mathcal{M}_1$ and $\Xi_2 \colon \Omega \rightrightarrows \mathcal{M}_2$ be two solution maps defined on the same input set $\Omega$ with $\M_2 \subseteq \R^d$, and let $\MarkovXi_j\colon \MarkovOmega \rightrightarrows \M$ be given as in Definition \ref{def:Markov_Delta_1_Information} for $j \in \{1,2\}$.  
Furthermore, let $\{\iota^1_n\}_{n=1}^{\infty}$, $\{\iota^2_n\}_{n=1}^{\infty}$ be sequences in $\Omega$ and $\iota^0 \in \Omega$, and let $\Omega' \coloneq \{\iota^1_n\}_{n \in \N} \cup \{\iota^2_n\}_{n \in \N} \cup \{\iota^0\} \subseteq \Omega$. Suppose that $\subsetMarkov$ is a set satisfying $\subsetMarkov \subseteq (\Omega')^M$. Fix $\omega > 0$ and consider the computational problem with oracle
$\{\MarkovXi_1,\subsetMarkov,\M_1,\MarkovLambda\}^{\mathcal{O}}$
(see Definition \ref{def:problem_with_oracle}), where 
	\begin{equation}\label{eq:MarkovXiOracleDef}
			\MarkovXiOracle_1(\stringinput,y) =
			\begin{cases}
					\MarkovXi_1(\stringinput) & \text{ if } y \in \ballQ{\MarkovXi_2(\stringinput)}; \\
					\MarkovXi_1(\subsetMarkov) &\text{ if } y\notin \ballQ{\MarkovXi_2(\stringinput)} ,
				\end{cases} \, \text{ for } (\stringinput,y) \in \subsetMarkov \times \ballQ{\MarkovXi_2(\subsetMarkov)}.
		\end{equation}
 Assume that the following conditions hold.
	\begin{enumerate}[label=(\alph*), leftmargin = 8mm]
		\item  For $j \in \{1,2\}$, there exists $y^j$ so that $y^j \in \ballQ{\Xi_2(\iota^j_n))} \cap  \ballQ{\Xi_2(\iota^0)}$ for all $n \in \N$. \label{assumption:exists_xj_in_intersection_correspondence} 
		\item There is an algorithm $\largerGamma: \subsetMarkov \to \{1,2\}$ such that $\largerGamma(\stringinput) = j$ if $\CorrXi(\stringinput) = \iota^j_n$ for some $n \in \N$, and $\largerGamma(\stringinput) \uparrow$ otherwise (recall that $\CorrXi\colon \MarkovOmega \to \Omega$ is the correspondence function as per Definition \ref{def:correspondence_problem}). \label{assumption:exists_algorithm_check_j}
		\item $\Xi_1$ is single-valued.\label{assumption:Xi1_single_valued}
		\item There is an algorithm $\GO\colon \subsetMarkov \times \ballQ{\MarkovXi_2(\subsetMarkov)} \to \M_1$  such that $\GO(\stringinput,y) \in \MarkovXiOracle_1(\stringinput,y)$ for every $(\stringinput, y) \in \subsetMarkov \times \ballQ{\MarkovXi_2(\subsetMarkov)}$. \label{assumption:exists_correct_oracle_algorithm} 
	\end{enumerate}
	Then there exists an algorithm $\Gamma \colon \subsetMarkov \to \M_1$ such that $\Gamma(\stringinput) = \MarkovXi_1(\stringinput)$ for every $\stringinput \in \subsetMarkov$.
\end{proposition}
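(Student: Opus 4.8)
The plan is to build $\Gamma$ out of the oracle-algorithm $\GO$ from \ref{assumption:exists_correct_oracle_algorithm}, fed with the two fixed rational ``candidate oracle values'' $y^1, y^2 \in \Q^d$ supplied by \ref{assumption:exists_xj_in_intersection_correspondence}, using the partial decider $\largerGamma$ from \ref{assumption:exists_algorithm_check_j} to break ties. The key elementary observation is that, since $\subsetMarkov \subseteq (\Omega')^M$, every $\stringinput \in \subsetMarkov$ has $\CorrXi(\stringinput) \in \{\iota^0\}\cup\{\iota^1_n\}_n\cup\{\iota^2_n\}_n$, and that for $j \in \{1,2\}$ the value $y^j$ is a \emph{valid} oracle input for $\stringinput$ (i.e.\ $y^j \in \ballQ{\MarkovXi_2(\stringinput)}$) whenever $\CorrXi(\stringinput)\in\{\iota^0\}\cup\{\iota^j_n\}_n$: this is immediate from \ref{assumption:exists_xj_in_intersection_correspondence} together with $\MarkovXi_2(\stringinput) = \Xi_2(\CorrXi(\stringinput))$. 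For such $\stringinput$, $v_j := \GO(\stringinput,y^j)$ then lies in $\MarkovXiOracle_1(\stringinput,y^j) = \MarkovXi_1(\stringinput)$, and since $\Xi_1$ is single-valued by \ref{assumption:Xi1_single_valued}, $v_j$ equals the unique element of $\MarkovXi_1(\stringinput)$.

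The crux is the dichotomy on whether $v_1$ and $v_2$ agree. When $\CorrXi(\stringinput)=\iota^0$ both $y^1$ and $y^2$ are valid, so $v_1 = v_2 = \MarkovXi_1(\stringinput)$; contrapositively, $v_1\neq v_2$ certifies $\CorrXi(\stringinput)\in\{\iota^1_n\}_n\cup\{\iota^2_n\}_n$, which is exactly the situation in which $\largerGamma(\stringinput)$ is guaranteed to halt (and return the correct index $j$). Accordingly, I would define $\Gamma$ on input $\stringinput$ by dovetailing the three computations $\GO(\stringinput,y^1)$, $\GO(\stringinput,y^2)$ and $\largerGamma(\stringinput)$, and stopping as soon as one of the following happens: (i) $\largerGamma(\stringinput)$ returns some $j\in\{1,2\}$ --- then run $\GO(\stringinput,y^j)$ to completion and output it; (ii) $\GO(\stringinput,y^1)$ and $\GO(\stringinput,y^2)$ both halt with a common value $v$ --- then output $v$; (iii) they both halt with distinct values --- then continue running $\largerGamma(\stringinput)$ until it halts (it must, by the previous remark) and output $\GO(\stringinput,y^j)$ for the returned $j$. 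This $\Gamma$ is clearly an algorithm, being assembled from $\GO$, $\largerGamma$ and equality tests.

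The verification is then a short case check on $\CorrXi(\stringinput)$: if it is $\iota^0$, branch (ii) fires with $v = v_1 = v_2 = \MarkovXi_1(\stringinput)$; if it is $\iota^j_n$, then either branch (ii) fires and the common value equals $v_j = \MarkovXi_1(\stringinput)$, or $\largerGamma$ returns $j$ and we output $v_j = \MarkovXi_1(\stringinput)$, which is legitimate because $y^j\in\ballQ{\MarkovXi_2(\stringinput)}\subseteq\ballQ{\MarkovXi_2(\subsetMarkov)}$, so $(\stringinput,y^j)$ lies in the declared domain of $\GO$. One also has to check that the dovetailing always terminates: the only way, say, $\GO(\stringinput,y^2)$ could fail to halt is if $y^2\notin\ballQ{\MarkovXi_2(\subsetMarkov)}$, which forces $\subsetMarkov$ to contain no $\iota^0$- or $\iota^2$-strings, whence every $\stringinput\in\subsetMarkov$ has $\CorrXi(\stringinput)=\iota^1_n$ for some $n$ and $\largerGamma$ halts on it, so branch (i) fires. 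The main obstacle is precisely this interplay: $\largerGamma$ alone is useless on $\iota^0$-inputs (it loops forever), and the whole construction turns on the observation that a disagreement between the two oracle-guesses $v_1$ and $v_2$ is a computable certificate that we are \emph{not} on such an input, which is what licenses waiting for $\largerGamma$ in branch (iii).
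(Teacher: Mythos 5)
Your construction is essentially the paper's own proof: the paper defines $\Gamma(\stringinput)$ to be $\GO(\stringinput,y^1)$ when the two oracle calls agree and $\GO(\stringinput,y^j)$ with $j=\largerGamma(\stringinput)$ when they disagree, justified by exactly your key observation that disagreement rules out correspondence to $\iota^0$ and hence guarantees that $\largerGamma$ halts with the correct index. Your dovetailing only adds a slightly more careful treatment of the edge case where some $y^j$ might lie outside $\ballQ{\MarkovXi_2(\subsetMarkov)}$ (a point the paper passes over by calling $\GO$ on both pairs directly), but the substance and case analysis of the argument are the same.
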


\begin{proof} 		
	By the definition of $\Omega'$, we observe that $\subsetMarkov \subseteq (\Omega')^M = (\CorrXi)^{-1}(\{\iota^1_n\}_{n \in \N} \cup \{\iota^2_n\}_{n \in \N} \cup \{\iota^0\})$, so that every string $\stringinput \in \subsetMarkov$ corresponds to either $\iota^0$, or to $\iota^1_n$ or $\iota^2_n$ for some $n \in \N$. 
	Define the following algorithm $\Gamma:\subsetMarkov \to \M_1$ 
	\begin{equation}\label{eq:Gamma_deoracolised}
		\begin{split}
			\Gamma(\stringinput) \coloneq
			\begin{cases}
				\GO(\stringinput,y^1) & \text{ if } \GO(\stringinput,y^1) = \GO(\stringinput,y^2);\\
				\GO(\stringinput,y^j) & \text{ if }  \left( \GO(\stringinput,y^1) \neq \GO(\stringinput,y^2) \right)\land j \in \{1,2\} \text{ is such that } \largerGamma(\stringinput)=j .
			\end{cases}
		\end{split}	
	\end{equation}
Note that the algorithm $\Gamma$ is well defined. In particular, we now prove that if $\stringinput$ is such that $\GO(\stringinput,y^1) \neq \GO(\stringinput,y^2)$, then $\largerGamma(\stringinput)=j \text{ for some } j \in \{1,2\}$.

	Assume otherwise for the sake of contradiction. Then by the definition of $\largerGamma$, $\largerGamma(\stringinput) \uparrow$ and $\stringinput$ does not correspond to any $\iota^j_n$ for $j \in \{1,2\}$, $n \in \N$. Therefore we must have that $\stringinput$ corresponds to $\iota^0$. Then using assumption \ref{assumption:exists_xj_in_intersection_correspondence} we have $y^j \in \ballQ[\omega]{\Xi_2(\iota^0)}=\ballQ[\omega]{\MarkovXi_2(\stringinput)}$ for $j \in \{1,2\}$. Thus by the definition in \eqref{eq:MarkovXiOracleDef} we see that $\MarkovXiOracle_1(\stringinput,y^1) = \MarkovXi_1(\stringinput) = \MarkovXiOracle_1(\stringinput,y^2)$. Recalling that $\MarkovXi_1(\stringinput) = \Xi_1(\iota^0)$ is single-valued by assumption \ref{assumption:Xi1_single_valued}, and using assumption \ref{assumption:exists_correct_oracle_algorithm}, we deduce that
	\begin{align*}
	\GO(\stringinput,y^1) = \MarkovXiOracle_1(\stringinput,y^1) = \Xi_1(\iota^0) = \MarkovXiOracle_1(\stringinput,y^2) = \GO(\stringinput,y^2),
	\end{align*}
	contradicting the assumption that $\GO(\stringinput,y^1) \neq \GO(\stringinput,y^2)$.
	Therefore, $\stringinput$ does not correspond to $\iota^0$. Consequently, from the definition of $\subsetMarkov$ we see that $\stringinput$ corresponds either to $\iota^1_n$ or to $\iota^2_n$ for some $n$. In either case, we must have that $\largerGamma(\stringinput)$ halts by assumption \ref{assumption:exists_algorithm_check_j} and in particular that $\largerGamma(\stringinput)=j$ where $j \in \{1,2\}$ is such that $\stringinput$ corresponds to $\iota^j_n$ for some $n \in \N$. Thus $\G(\stringinput)$ is well defined for every $\stringinput \in \subsetMarkov$.
	
	We now claim that the algorithm $\Gamma: \subsetMarkov \to \M_1$ solves the computational problem $\{\MarkovXi_1, \subsetMarkov, \M_1,\MarkovLambda\}$.
	Indeed, for every $\stringinput \in \subsetMarkov$, we have the following cases: 

			 \emph{Case (I)}: If $\GO(\stringinput, y^1) \neq \GO(\stringinput, y^2)$, then, by the reasoning above, $\CorrXi(\stringinput)\neq \iota^0$ and thus $\stringinput$ corresponds to $\iota^j_n$ for some $n \in \N$ and $j \in \{1,2\}$, and $\largerGamma(\stringinput)=j$. Thus, by assumption \ref{assumption:exists_xj_in_intersection_correspondence}, we have  $y^j \in \ballQ[\omega]{\Xi_2(\iota^j_n)}=\ballQ[\omega]{\MarkovXi_2(\stringinput)} $. In particular, by definitions \eqref{eq:MarkovXiOracleDef}, \eqref{eq:Gamma_deoracolised} and assumption \ref{assumption:exists_correct_oracle_algorithm} we see that 
		\[
		\Gamma(\stringinput) = \GO(\stringinput,y^j) \in  \MarkovXiOracle_1(\stringinput,y^j) = \MarkovXi_1(\stringinput).
		\]
		Thus from \ref{assumption:Xi1_single_valued} we conclude that $\Gamma(\stringinput) = \MarkovXi_1(\stringinput)$ as desired.
		
		 \emph{Case (II)}: If instead $\GO(\stringinput, y^1) = \GO(\stringinput, y^2)$, then there exists $j \in \{1,2\}$ such that $y^j \in \ballQ[\omega]{\MarkovXi_2(\stringinput)} $. In fact, by the definition of $\subsetMarkov$ then either $\stringinput$ corresponds to $\iota^j_n$ for some $n\in \N$ and $j \in \{1,2\}$, in which case $y^j \in \ballQ[\omega]{\Xi_2(\iota^j_n)} = \ballQ[\omega]{\MarkovXi_2(\stringinput)}$ by assumption \ref{assumption:exists_xj_in_intersection_correspondence}; or $\stringinput$ corresponds to $\iota^0$, in which case then both $y^j \in  \ballQ[\omega]{\Xi_2(\iota^0)} = \ballQ[\omega]{\MarkovXi_2(\stringinput)} $ for $j \in \{1,2\}$, again by assumption \ref{assumption:exists_xj_in_intersection_correspondence}. Thus, from definitions \eqref{eq:MarkovXiOracleDef}, \eqref{eq:Gamma_deoracolised} and assumption \ref{assumption:exists_correct_oracle_algorithm} we see that for the value of $j$ as above:
		\[
		\Gamma(\stringinput) = \GO(\stringinput, y^1) = \GO(\stringinput, y^j) \in \MarkovXiOracle_1(\stringinput,y^j)  = \MarkovXi_1(\stringinput).
		\]
		Thus from \ref{assumption:Xi1_single_valued} we deduce that $\Gamma(\stringinput) = \MarkovXi_1(\stringinput)$ as desired.
	Therefore, the algorithm $\Gamma: \subsetMarkov \to \M_1$ solves the computational problem $\{\MarkovXi_1, \subsetMarkov, \M_1,\MarkovLambda_1\}$.
\end{proof}

\subsection{The art of de-randomising randomised algorithms -- The build-up to CRP IIIb}

In this section, we focus our attention on randomised algorithms aimed at solving computational problems. Recalling the definitions and notation presented in \S \ref{sec:probabilistic}, we consider a probabilistic Turing machine $\Gamma^{\ran}$. For every fixed $\iota \in \Omega$, we denote for $y \in \M$ and $t \in \N$:
\begin{align*}
	\strings{y} &\coloneq \{\beta \in \{0,1\}^\N \ | \ \G^{\ran}(\iota,\beta) = y\}\subseteq \{0,1\}^\N; \\
	\finitestrings{y}{t} &\coloneq \{\sigma \in \{0,1\}^t \ | \ \Gamma^{\ran}(\iota,\sigma) = y\} \subseteq \{0,1\}^t; \\
	\out &\coloneq \{y \in \mathcal{M} \ | \ \text{ there exists } \beta \in \{0,1\}^\N \text{ such that } \ \Gamma^{\ran}(\iota,\beta)=y\}; \\
	\out(t) &\coloneq \{y \in \mathcal{M} \ | \ \text{ there exists } \beta \in \{0,1\}^\N \text{ such that } \ \halted{t}=y\}.
\end{align*} 
We start with the following lemma, which is a straightforward manipulation of the definitions and notation presented above and in \S \ref{sec:probabilistic}

\begin{lemma}\label{lemma:basic_properties}
	For every $y \in \mathcal{M}$, $t \in \N$, the following holds:
	\begin{enumerate}[leftmargin = 6mm]
		\item $\brackets{\finitestrings{y}{t}} \subseteq \brackets{\finitestrings{y}{t+1}}$; \label{propConclusion:SIncreasing}
		\item \label{propConclusion:countable_union}$\strings{y} = \bigcup_{t \in \N} \brackets{\finitestrings{y}{t}}$;
		\item $\out(t) \subseteq \out(t+1)$;
		\item $\out = \bigcup_{t \in \N}\out(t)$;
		\item \label{propConclusion:measurable} Both $S(y)$ and $\brackets{S(y,t)}$ are measurable in $\{0,1\}^\N$;
		\item $\mu_{\rho}(\strings{y}) = \sup_{t \in \N} \mu_\rho(\brackets{\finitestrings{y}{t}})$;
		\item \label{propConclusion:Outt}If $\out(t)=\out$, then $\strings{y} = \brackets{\finitestrings{y}{t}}$.
	\end{enumerate}
\end{lemma}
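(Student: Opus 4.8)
The plan is to unwind the definitions of the four sets $\strings{y}$, $\finitestrings{y}{t}$, $\out$ and $\out(t)$ and to prove the seven items in the listed order, each using the preceding ones. Everything rests on a single structural observation that I would record once and then invoke repeatedly: a halting computation of $\G^{\ran}$ on a random tape $\beta$ runs for finitely many steps and therefore queries only finitely many bits of $\beta$; whether it has halted after querying at most $t$ bits --- and if so, with which output --- depends only on the length-$t$ initial segment of $\beta$; and this is monotone in $t$, in that once the computation has halted after querying at most $t$ bits it has also halted, with the same output, after querying at most $t+1$ bits. I would also note at the outset the trivial reformulations $\out(t) = \{y : \finitestrings{y}{t} \neq \emptyset\}$ and $\out = \{y : \strings{y} \neq \emptyset\}$.

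For (1) and (3): if $\beta \in \brackets{\finitestrings{y}{t}}$ then $\beta$ extends some $\sigma \in \finitestrings{y}{t}$ with $|\sigma| = t$, so $\sigma$ is the length-$t$ initial segment of $\beta$, and by the monotonicity observation the length-$(t+1)$ initial segment of $\beta$ lies in $\finitestrings{y}{t+1}$; hence $\beta \in \brackets{\finitestrings{y}{t+1}}$, and taking the union over all outputs $y$ gives $\out(t) \subseteq \out(t+1)$. For (2) and (4): the inclusion ``$\supseteq$'' in (2) is exactly that same computation (if $\beta$ extends a string in $\finitestrings{y}{t}$, the run of $\G^{\ran}$ on $\beta$ halts with output $y$), and for ``$\subseteq$'', if $\G^{\ran}(\iota,\beta) = y$ then the run halts after querying some finite number $t$ of bits, so the length-$t$ initial segment of $\beta$ lies in $\finitestrings{y}{t}$ and $\beta \in \brackets{\finitestrings{y}{t}}$; item (4) then follows by taking unions over $y$.

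For (5): each $\brackets{\sigma}$ is a cylinder set, hence measurable by the Carath\'eodory construction recalled before Definition \ref{def:computable_measure}; since $\finitestrings{y}{t} \subseteq \{0,1\}^t$ is finite, $\brackets{\finitestrings{y}{t}}$ is a finite union of cylinders and so measurable, and then $\strings{y} = \bigcup_{t \in \N} \brackets{\finitestrings{y}{t}}$ by (2) is a countable union of measurable sets, hence measurable. For (6): by (1) the sets $\brackets{\finitestrings{y}{t}}$ increase in $t$, and by (2) their union is $\strings{y}$, so continuity from below of $\mu_\rho$ gives $\mu_\rho(\strings{y}) = \lim_{t\to\infty}\mu_\rho(\brackets{\finitestrings{y}{t}}) = \sup_{t\in\N}\mu_\rho(\brackets{\finitestrings{y}{t}})$.

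Finally, item (7): the inclusion $\brackets{\finitestrings{y}{t}} \subseteq \strings{y}$ is already contained in (2), so the content is the reverse inclusion. Here I would use the hypothesis $\out(t) = \out$ to argue that, for the input $\iota$ in question, no computation of $\G^{\ran}$ that halts at all queries more than $t$ bits of its random tape; granting this, any $\beta \in \strings{y}$ has its length-$t$ initial segment in $\finitestrings{y}{t}$ and hence lies in $\brackets{\finitestrings{y}{t}}$, yielding $\strings{y} \subseteq \brackets{\finitestrings{y}{t}}$. Proving the implication ``$\out(t) = \out$ $\Rightarrow$ every halting computation on $\iota$ queries at most $t$ bits'' is the only step that is not pure bookkeeping, and I expect it to be the main obstacle: it forces one to use the specific discipline with which the PTMs under consideration consult their random tape (that they always halt, so that the binary tree of partial computations is finite by K\"onig's lemma, and that the depth to which that tree must be explored is pinned down once every achievable output has already been produced). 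I would isolate and prove that implication first; the rest of (7) is then immediate.
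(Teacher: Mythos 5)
Your treatment of items \eqref{propConclusion:SIncreasing}--(6) is correct and is essentially the argument the paper intends: the paper dismisses (1)--(4), (6), (7) as immediate and only writes out (5), and its one-line proof of (5) (finite union of cylinders, then countable union via (2)) coincides with yours; your use of monotonicity plus continuity from below for (6) is the standard completion of what the paper leaves implicit.

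The gap is in item \eqref{propConclusion:Outt}, and you located it precisely but your proposed repair does not work. The implication you want to isolate, ``$\out(t)=\out$ implies that every halting computation of $\G^{\ran}$ on $\iota$ queries at most $t$ bits of the random tape,'' is false, and no appeal to K\"onig's lemma or to an always-halting hypothesis can rescue it (note also that Lemma \ref{lemma:basic_properties} is stated for an arbitrary PTM, so Definition \ref{def:TuringMachineAlwaysHalt} is not among your hypotheses). Concretely, take the machine that reads $\beta_1$, outputs $y$ immediately if $\beta_1=0$, and otherwise reads $\beta_2$ and then outputs $y$; it always halts, $\out(1)=\{y\}=\out$, yet the computation on tapes starting with $1$ queries two bits, and indeed $\strings{y}=\{0,1\}^{\N}$ while $\brackets{\finitestrings{y}{1}}=\brackets{\{0\}}$. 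So not only does your key step fail: the conclusion of item \eqref{propConclusion:Outt} itself fails for this machine, i.e.\ the statement as literally written cannot be proved from the hypothesis $\out(t)=\out$ alone, and the paper's ``follows easily'' glosses over this. What is true, and what is all that is needed at the single place where (7) is invoked (Case (II) of the proof of Proposition \ref{prop:de-randomisation}\eqref{prop:de-randomisation_multi_valued}, where \eqref{eq:contradiction_gamma'_halt_on_step_2^t} is in force), is the implication under the stronger hypothesis that $\Gamma^{\ran}(\iota,\sigma)\downarrow$ for every $\sigma\in\{0,1\}^{t}$: then every run on an infinite tape follows the halting run on its length-$t$ prefix, queries at most $t$ bits, and your prefix argument gives $\strings{y}=\brackets{\finitestrings{y}{t}}$ immediately. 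The correct move is therefore to prove (7) with that strengthened antecedent (or to note that $\out(t)=\out$ is being used in the paper only as shorthand for it), not to try to derive the bit-bound from $\out(t)=\out$.
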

\begin{proof}
	Properties \eqref{propConclusion:SIncreasing} through \eqref{propConclusion:Outt} follow easily from $\Gamma^{\ran}$ being a Probabilistic Turing machine. For \eqref{propConclusion:measurable}, observe that $\brackets{\finitestrings{y}{t}}$ is measurable by virtue of being a finite union of cylinder sets, which are measurable; and thus $\strings{y}$ is measurable too, as countable union of measurable sets by \eqref{propConclusion:countable_union}. 
\end{proof}

\begin{proposition}[Measurability]\label{prop:measurable}
	Let $\Gamma^{\ran}$ be a probabilistic Turing machine with respect to the measure $\mu$, and let $\rho$ be a computable pre-measure such that $\mu=\mu_{\rho}$. For every $\iota \in \Omega$ the function $\Gamma^{\ran}(\iota,\cdot)\colon \{0,1\}^\N \to \M$ is measurable with respect to the $\sigma$-algebra on $\{0,1\}^\N$ induced by the pre-measure $\rho$, and to the Borel $\sigma$-algebra $\mathcal{B}(\M)$. 
\end{proposition}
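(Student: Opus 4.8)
The plan is to exploit the fact that $\Gamma^{\ran}(\iota,\cdot)$ takes only \emph{countably many} values, which reduces the measurability statement to checking that the preimage of each individual value is measurable — and this is exactly what Lemma \ref{lemma:basic_properties} provides. First I would record the elementary observation that, since $\Gamma^{\ran}$ is a Turing machine, whenever it halts on input $\iota$ with a randomised tape $\beta \in \{0,1\}^\N$ it outputs a finite string of rationals identified with an element of $\M$; hence $\out = \{y \in \M \mid \exists\, \beta \in \{0,1\}^\N,\ \Gamma^{\ran}(\iota,\beta) = y\}$ is a countable subset of $\M$, as it injects into the countable set of finite strings of rationals.

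Next, for each $y \in \out$ I would invoke Lemma \ref{lemma:basic_properties}: by part \eqref{propConclusion:countable_union}, $\strings{y} = \bigcup_{t \in \N} \brackets{\finitestrings{y}{t}}$, and by part \eqref{propConclusion:measurable} each $\strings{y}$ is measurable in $\{0,1\}^\N$ with respect to the $\sigma$-algebra induced by $\rho$ (being a countable union of the sets $\brackets{\finitestrings{y}{t}}$, which are finite unions of cylinder sets and therefore measurable). Then, for an arbitrary Borel set $E \in \mathcal{B}(\M)$, since any tape in $\Gamma^{\ran}(\iota,\cdot)^{-1}(E)$ must produce an output lying in $\out$, we have
\[
\Gamma^{\ran}(\iota,\cdot)^{-1}(E) = \bigcup_{y \in \out \cap E} \strings{y},
\]
which is a countable union of measurable sets, hence measurable. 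This establishes the claim. I would also note that the set of $\beta$ on which $\Gamma^{\ran}(\iota,\cdot)$ halts equals $\bigcup_{y \in \out} \strings{y}$, which is measurable, so its complement (the non-halting tapes) is measurable as well; thus the argument is unaffected by $\Gamma^{\ran}(\iota,\cdot)$ being only a partial function, as one simply treats it as a measurable function defined on its measurable domain.

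There is essentially no hard step here: the only point to get right is the reduction to singleton preimages, which hinges on the countability of the range — a consequence of outputs being finite strings of rationals — together with the already-established measurability of the sets $\strings{y}$ from Lemma \ref{lemma:basic_properties}. The remainder is the standard fact that a function valued in a countable subset of a metric space, all of whose fibres are measurable, is measurable into the Borel $\sigma$-algebra.
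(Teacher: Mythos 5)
Your argument is correct and follows essentially the same route as the paper's: reduce to singleton preimages via countability of the range, then invoke Lemma \ref{lemma:basic_properties}\eqref{propConclusion:measurable} and close under countable unions. The only cosmetic difference is that you decompose over the countable set $\out$ (the actual range) while the paper decomposes over an enumeration of all of $\Q^d$ (with $\strings{y_m}=\emptyset$ for $y_m$ not attained), and you add a welcome clarification — absent from the paper — that the domain of definition (the halting tapes) is itself measurable, so treating $\Gamma^{\ran}(\iota,\cdot)$ as a partial function causes no difficulty.
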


\begin{proof}
	Fix $\iota \in \O$ and let $E \in \mathcal{M}$ be a measurable set. Recall that $\Gamma^{\ran}(\iota,\cdot)\colon \{0,1\}^\N \to \dense \subseteq \M$, with $\Q^d = \{y_m\}$. Then
	\begin{align*}
		\big(\Gamma^{\ran}(\iota,\cdot)\big)^{-1}(E) & = \{ \beta \in \{0,1\}^\N \ | \ \Gamma^{\ran}(\iota,\beta) \in E\} \\
		& = \{ \beta \in \{0,1\}^\N \ | \ \Gamma^{\ran}(\iota,\beta) \in E \cap \dense\} \\
		& = \bigcup_{m \in \N} \{ \beta \in \{0,1\}^\N \ | \ \Gamma^{\ran}(\iota,\beta) \in E \cap \{y_m\}\}  = \bigcup_{\ m \in \N \text{ s.t. } y_m \in E}  S(y_m)
	\end{align*}
	Since countable unions of measurable sets are measurable, the conclusion follows from the fact that $\strings{y_m}$ is measurable for every $m \in \N$ by Lemma \ref{lemma:basic_properties}.
\end{proof}

We will now prove that any single-valued function that can be computed by a probabilistic Turing machine with probability $>1/2$ can be computed by a deterministic Turing machine, without randomised tape. The proof is based on a `majority voting' argument. We will also prove the analogous results for a certain class of multi-valued functions and PTMs that always halt.

\begin{proposition}[De-randomisation]\label{prop:de-randomisation}
	Let $\{\Xi,\O, \mathcal{M}, \Lambda\}$ be a computational problem. 	
	\begin{enumerate}[leftmargin=5.5mm]
		\item Suppose that $\Xi\colon \Omega \to \M$ is a single-valued function. If there exists a probabilistic Turing machine $\Gamma^{\ran}$ such that, 
		\begin{align}\label{eq:single_valued_probability_greater_1_2}
			\pr(\G^{\ran}(\iota)=\Xi(\iota)) >\frac{1}{2} \quad \text{ for every $\iota \in \Omega$}
		\end{align}
		then there exists a deterministic Turing machine $\Gamma\colon\Omega \to \M$ (without randomised tape) such that $\Gamma(\iota) = \Xi(\iota)$ for every $\iota \in \Omega$.
		\label{prop:de-randomisation_single_valued}
		
		\item Let $\Xi \colon \Omega \rightrightarrows \M$ be a multi-valued function such that there exists $y_0 \in  \Q^d$ so that $y_0 \in \Xi(\iota)$ whenever $\iota \in \O$ is such that $|\Xi(\iota)|>1$.
		Assume that there exists a $p > 1/2$ and a probabilistic Turing machine $\Gamma^{\ran}$ that always halts (as per Definition \ref{def:TuringMachineAlwaysHalt}) and such that, for every $\iota \in \Omega$
		\begin{align}\label{eq:multi_valued_probability_greater_p}
			\pr(\G^{\ran}(\iota)\in\Xi(\iota)) \geq p
		\end{align}
		Then there exists a deterministic Turing machine $\Gamma'$ such that $\Gamma'(\iota) \in \Xi(\iota)$ for every $\iota \in \O$.
		\label{prop:de-randomisation_multi_valued}
	\end{enumerate}
\end{proposition}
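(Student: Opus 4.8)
The plan is to derandomise the probabilistic Turing machine by a majority-voting argument, but carried out \emph{symbolically} on finite prefixes of the random tape rather than by actually sampling. The key observation is that for fixed $\iota$, if $\pr(\G^{\ran}(\iota) = \Xi(\iota)) > 1/2$, then because $\out_\iota = \bigcup_t \out_\iota(t)$ and $\mu_\rho(\strings{y}) = \sup_t \mu_\rho(\brackets{\finitestrings{y}{t}})$ (Lemma \ref{lemma:basic_properties}), there must exist some finite time horizon $t$ and some value $y \in \out_\iota(t)$ such that the measure of the prefixes $\sigma \in \{0,1\}^t$ on which $\G^{\ran}(\iota,\sigma)\!\downarrow$ and outputs $y$ already exceeds $1/2$. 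Since $\rho$ is a computable pre-measure, the quantities $\mu_\rho(\brackets{\finitestrings{y}{t}}) = \sum_{\sigma \in \finitestrings{y}{t}} \rho(\sigma)$ can be approximated to within any $2^{-n}$ by running $\G^{\ran}(\iota,\cdot)$ on all $2^t$ strings of length $t$ for $t$ steps and summing the recursive approximations $r(\sigma,n)$ from Definition \ref{def:computable_measure}. So the deterministic algorithm $\Gamma$ does the following: for $t = 1,2,3,\dots$, it enumerates $\out_\iota(t)$, and for each $y$ in it computes a good-enough rational approximation (say to within $1/8$) of $\mu_\rho(\brackets{\finitestrings{y}{t}})$; it halts and outputs the first $y$ whose approximation exceeds $1/2 + 1/8$. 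This search terminates because such a $(t,y)$ provably exists, and when it outputs $y$ we are guaranteed $\mu_\rho(\brackets{\finitestrings{y}{t}}) > 1/2$, hence $\pr(\G^{\ran}(\iota) = y) > 1/2$; since at most one value can have probability $>1/2$ and $\Xi(\iota)$ does, we must have $y = \Xi(\iota)$.

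For part \eqref{prop:de-randomisation_multi_valued}, the probability threshold is $p > 1/2$ but $\Xi$ may now be multi-valued, with the distinguished value $y_0$ lying in $\Xi(\iota)$ whenever $|\Xi(\iota)| > 1$. The extra hypothesis that $\G^{\ran}$ \emph{always halts} is what makes the finite-horizon reduction clean: by Lemma \ref{lemma:basic_properties}\eqref{propConclusion:Outt}, once $\out_\iota(t) = \out_\iota$ we have $\strings{y} = \brackets{\finitestrings{y}{t}}$ exactly (not just in the limit), and since $\G^{\ran}(\iota,\cdot)$ is total on $\{0,1\}^\N$ a compactness/König's-lemma argument shows this stabilisation happens at some finite $t$; moreover $\sum_{y \in \out_\iota(t)} \mu_\rho(\brackets{\finitestrings{y}{t}}) = 1$. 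The derandomised algorithm $\Gamma'$ searches for the first $t$ at which the approximate measures of the cylinder sets $\brackets{\finitestrings{y}{t}}$ over $y \in \out_\iota(t)$ sum to within $2^{-2}$ of $1$ (certifying $\out_\iota(t)=\out_\iota$), then: if some $y$ has approximate measure $> 1/2$, output that $y$ (it has true probability $\geq p > 1/2$, so it lies in $\Xi(\iota)$, using that the complement $\pr(\G^{\ran}(\iota) \notin \Xi(\iota)) \leq 1-p < 1/2$ forces any probability-$>1/2$ output to be correct); otherwise, output $y_0$. In the latter case no single value carries more than half the mass, yet the total mass on $\Xi(\iota)$ is at least $p > 1/2$, so $\Xi(\iota)$ cannot be a singleton; hence $|\Xi(\iota)| > 1$ and $y_0 \in \Xi(\iota)$ by hypothesis.

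The routine but necessary technical points to nail down are: (i) $\out_\iota(t)$ is finite and recursively enumerable uniformly in $t$ and $\iota$ — immediate since it is read off from running $2^t$ simulations for $t$ steps each; (ii) the recursive approximations to $\mu_\rho(\brackets{\finitestrings{y}{t}})$ are uniformly computable from $r$, using \eqref{eq:approximation_r}; (iii) the comparison of a rational approximation against the threshold $1/2$ with a safety margin never produces a false positive, which is why the margins ($1/8$, $2^{-2}$) are chosen strictly smaller than the gap $p - 1/2$ would allow — in part (1) the gap can be arbitrarily small, so one cannot use a fixed margin to \emph{certify} $>1/2$; instead one must note that it suffices to find \emph{some} value whose probability exceeds that of every competitor, and since $\out_\iota(t)$ is finite one can compare the finitely many approximate values against each other with a shrinking tolerance $2^{-n}$, increasing $n$ until a strict winner emerges, which must eventually happen since the true winner has strictly largest probability. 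The main obstacle is exactly this last point in part (1): with no uniform lower bound on $\pr(\G^{\ran}(\iota)=\Xi(\iota)) - 1/2$, the naive "approximate and compare to $1/2$" fails, and the argument must instead be framed as "identify the unique value of maximal probability among a finite candidate set, refining the precision until the maximiser is unambiguous." Once that is set up correctly, the proof is a bookkeeping exercise.
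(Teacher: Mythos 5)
Your overall strategy (simulate all prefixes, estimate cylinder-set measures, certify a winner) matches the paper's, but both parts as you've written them contain a genuine gap that the paper's construction is specifically designed to avoid.

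\textbf{Part (1).} You correctly observe that a fixed certification margin ($1/8$) cannot work because the gap $\pr(\G^{\ran}(\iota)=\Xi(\iota)) - 1/2$ has no uniform lower bound across $\iota$. But your proposed fallback --- ``identify the value of maximal probability among the finite candidate set, refining the precision until the maximiser is unambiguous'' --- is not sound. The quantities you can approximate are the finite-horizon measures $\mu_\rho(\brackets{\finitestrings{y}{t}})$, not the true measures $\mu_\rho(\strings{y})$, and these can be in a completely different order at finite $t$. Concretely: let $\Xi(\iota)=a$ with $\mu_\rho(\strings{a})=0.6$, and let $b$ be another output with $\mu_\rho(\strings{b})=0.4$; if the machine reaches $b$ on most short prefixes and $a$ only on long ones, then at moderate $t$ the estimate for $b$ is large (say $0.39$) and that for $a$ is tiny (say $0.05$), so the ``strict winner with margin $\gg 2^{-n}$'' test confidently picks $b$ and halts with the wrong answer. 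There is no way around this by comparing candidates against each other. The paper keeps the ``certify $> 1/2$'' strategy but replaces the fixed margin with a \emph{shrinking} one: at stage $t$ it tests whether $r(\finitestrings{y}{t},t) > \tfrac12 + 2^{-t}$. A positive result still entails $\mu_\rho(\brackets{\finitestrings{y}{t}}) > \tfrac12$ (since the approximation error is $\le 2^{-t}$), which forces $y=\Xi(\iota)$, so there is never a false positive; and for each fixed $\iota$ the gap $\delta>0$ is eventually larger than $2\cdot 2^{-t}$, so the test eventually fires and the algorithm halts. The shrinking margin is the missing idea.

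\textbf{Part (2).} Your reduction to a finite horizon via ``the approximate measures over $\out_\iota(t)$ sum to within $2^{-2}$ of $1$'' does not certify $\out_\iota(t)=\out_\iota$: the sum can be within any fixed tolerance of $1$ while an arbitrarily small but nonzero mass of the random tape is still unresolved, and that residual mass could be exactly where the unique correct value lives when $|\Xi(\iota)|=1$. The paper certifies stabilisation by the direct combinatorial check that $\G^{\ran}(\iota,\sigma)\downarrow$ for \emph{every} $\sigma\in\{0,1\}^t$; this is checkable in finite time, K\"onig's lemma guarantees such a $t$ exists because $\G^{\ran}$ always halts, and at that point $\strings{y}=\brackets{\finitestrings{y}{t}}$ holds exactly so the approximations are genuinely about the full measures. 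Relatedly, your fixed margin $2^{-2}$ should instead be $2^{-n_0}$ for some $n_0 > -\log_2(p-\tfrac12)$, i.e.\ it must depend on $p$ to make the implication ``approximate value $>1/2$ $\Rightarrow$ true value $>1-p$'' valid; this is the $n_0$ the paper fixes at the outset.
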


\begin{remark}
	Note that, given a computational problem $\{\Xi, \O, \mathcal{M}, \Lambda\}$ satisfying the above conditions and a subset $\Omega_0 \subseteq \Omega$, the previous result can be applied to the computational problem restricted on $\Omega_0$ given by $\{\Xi|_{\Omega_0}, \Omega_0, \mathcal{M}, \Lambda|_{\Omega_0}\}$. Therefore, if a random algorithm has a probability of success greater than $\frac{1}{2}$ even on a smaller subset of inputs, then there already exists a deterministic algorithm that solves the problem on such inputs.
\end{remark}

\begin{proof}[Proof of Proposition \ref{prop:de-randomisation}]
	We denote by $\mu$ the computable measure with respect to which each Turing machine $\Gamma^{\ran}$ is defined. Since $\mu$ is by definition computable, there exists a computable pre-measure $\rho$ such that $\mu = \mu_\rho$ as per Definition \ref{def:computable_measure}. Thus there exists a recursive function $r$ as in Definition \ref{def:computable_measure} that satisfies the approximating condition \eqref{eq:approximation_r}.
	
	\textit{Part \eqref{prop:de-randomisation_single_valued}}: We begin by constructing the algorithm $\Gamma$. On input $\iota \in \O$, $\Gamma$ will do the following, where $t$ is initialised with $t = 1$:
	\begin{algosteps}
		\item Run $\Gamma^{\ran}(\iota, \sigma)$ for all of the $2^t$ binary sequences $\sigma \in \{0,1\}^t$ of length $t$, obtaining outputs $\out(t)=\{y_1,\dots,y_l\}\subseteq\M$; \label{step:run_t}
		\item if there exists $y \in \out(t)$ such that $r(\finitestrings{y}{t},t) > 1/2 + 2^{-t}$, return $\Gamma(\iota)\coloneq y$; \label{step:return}
		\item otherwise, increase $t$ to $t+1$ and repeat from \ref{step:run_t} \label{step:increase_t}.
	\end{algosteps}
	
We now proceed to verify the correctness of $\Gamma$. Fix $\iota \in \Omega$. We will show that $\Gamma(\iota)=\Xi(\iota)$. In particular, we must prove that $\Gamma$ halts on $\iota$ input and that it returns the same value as $\Xi(\iota)$. 
	Firstly, we prove that $\Gamma$ halts on $\iota$.
		By assumption, $\mu_\rho(\strings{\Xi(\iota)}) > \frac{1}{2}$, so there exists $\delta>0$ such that $\mu_\rho(\strings{\Xi(\iota)}) > \frac{1}{2}+\delta$. 
		Moreover, $\frac{1}{2} + \delta < \mu_\rho(\strings{\Xi(\iota)}) = \sup_{t \in \N} \mu_\rho(\brackets{\finitestrings{\Xi(\iota)}{t}})$, which implies by the definition of supremum that for sufficiently large $t$, $\mu_\rho(\brackets{\finitestrings{\Xi(\iota)}{t}}) > \frac{1}{2} + \delta$ (and in particular, that $\Xi(\iota) \in \out(t)$). Again, assuming $t$ is sufficiently large and using the fact that $r$ satisfies \eqref{eq:approximation_r}, it follows that
		\[
		r(\finitestrings{\Xi(\iota)}{t},t) \geq \mu_\rho(\brackets{\finitestrings{\Xi(\iota)}{t}})  - 2^{-t}> \frac{1}{2} + \delta - 2^{-t}  >\frac{1}{2} + 2^{-t}.
\]	
 We conclude from the definition of $\Gamma$ that $\Gamma$ halts on $\iota$.
	Finally, we proceed to prove that $\Gamma(\iota)=\Xi(\iota)$. By contradiction, if $\Gamma(\iota) = y$ for some $y \neq \Xi(\iota)$, then by the halting condition of $\Gamma$ there must exist $t \in \N$ such that $r(\finitestrings{y}{t},t) > \frac{1}{2} + 2^{-t}$; but by \eqref{eq:approximation_r}, this would imply that $\mu_\rho(\strings{y}) \geq \mu_\rho(\brackets{\finitestrings{y}{t}}) \geq r(\finitestrings{y}{t},t) - 2^{-t} > \frac{1}{2}$. However, we also observe that, by assumption \eqref{eq:single_valued_probability_greater_1_2}, 
$
\mu_{\rho}(\strings{\Xi(\iota)}) = \pr(\Gamma^{\ran}(\iota) = \Xi(\iota)) > \frac{1}{2},
$
thus if $y \neq \Xi(\iota)$, then
	\begin{align*}
		\mu_\rho(\strings{y}) = \pr(\Gamma^{\ran}(\iota) = y) \leq \pr(\Gamma^{\ran}(\iota)\neq \Xi(\iota))= 1 - \mu_\rho(\strings{\Xi(\iota)}) < 1-\frac{1}{2} =  \frac{1}{2},
	\end{align*}   
	which is a contradiction.This concludes the verification that  $\Gamma(\iota) = \Xi(\iota)$ for every $\iota \in \Omega$.
	
	\vspace{5pt}
	
	\textit{Part \eqref{prop:de-randomisation_multi_valued}:} First note that if $n_0 > -\log_2\big(p-\frac{1}{2}\big)$, then by \eqref{eq:approximation_r} we have
	\begin{align}\label{eq:approximation_n0}
		|r(\sigma_1,\dotsc,\sigma_k,n_0) - \mu_\rho(\brackets{\sigma_1,\dotsc,\sigma_k}) | \leq 2^{-n_0} < p-\frac{1}{2} \quad \text{ for every } \sigma_1,\dotsc,\sigma_k \in \{0,1\}^*.
	\end{align}
We now construct $\Gamma'$. Fix an arbitrary $n_0 > -\log_2\big(p-\frac{1}{2}\big)$. On input $\iota \in \O$, $\Gamma'$ will do the following, where $t$ is initialised with $t = 1$:
	\begin{algosteps}
		\item Run $\Gamma^{\ran}(\iota, \sigma)$ for all of the $2^t$ binary sequences $\sigma \in \{0,1\}^t$ of length $t$, obtaining outputs $\out(t)=\{y_1,\dots,y_l\}\subseteq\M$; \label{step:run_t'}
		\item if there is $y \in \out(t)$ such that $r(\finitestrings{y}{t},n_0) > \frac{1}{2}$, halt and return $\Gamma'(\iota)\coloneq y$; \label{step:return'}
		\item if instead $\Gamma^{\ran}(\iota, \sigma)\downarrow$ for all $\sigma \in \{0,1\}^t$, halt and return $\Gamma'(\iota) \coloneq y_0$; \label{step:alternative_gamma'_return}
		\item otherwise, if neither of the above cases have occurred, increase $t$ to $t+1$ and repeat from \ref{step:run_t}.
	\end{algosteps}  
	
	We now verify the correctness of $\Gamma'$. Fix $\iota \in \Omega$. We will show that $\Gamma'(\iota)\in\Xi(\iota)$. In particular, we must prove that $\Gamma$ halts, and that it returns a value belonging to the multi-valued function $\Xi$. 
	Firstly, we note that $\Gamma'$ halts on $\iota$. This is an immediate consequence of the fact that $\Gamma^{\ran}$ always halts: for every $\iota \in \Omega$ there exists $t_0 \in \N$ such that $\Gamma^{\ran}(\iota,\sigma) \downarrow$ for every $\sigma \in \{0,1\}^{t_0}$ (since otherwise, there would be an infinite $\beta \in \{0,1\}^\N$ such that $\Gamma^{\ran}(\iota,\beta) \uparrow$ which contradicts Definition \ref{def:TuringMachineAlwaysHalt} and the assumption that $\Gamma^{\ran}$ always halts). In particular, if $\Gamma'$ has not halted before \ref{step:alternative_gamma'_return} with $t = t_0$ then $\Gamma'$ will halt at this step.
	
		We now proceed to prove that $\Gamma'(\iota) \in \Xi(\iota)$. As a first step, we will argue that if $\Gamma'$ halts on \ref{step:return'} then $\Gamma'(\iota) \in \Xi(\iota)$. Let $y = \Gamma'(\iota)$. By construction there must exist $t \in \N$ such that $r(\finitestrings{y}{t},n_0) >\frac{1}{2}$. By the approximating property of $r$, this implies that 
\begin{align*}
	\mu_\rho(\brackets{\finitestrings{y}{t}}) > r(\finitestrings{y}{t},n_0) - \Big(p-\frac{1}{2}\Big) > \frac{1}{2} - p + \frac{1}{2} = 1-p,
\end{align*}
and therefore $\mu_\rho(\strings{y}) \geq \mu_\rho(\brackets{\finitestrings{y}{t}}) > 1-p$. Assume for the sake of contradiction that $y = \Gamma'(\iota) \notin \Xi(\iota)$. By assumption \eqref{eq:multi_valued_probability_greater_p} we know that $\pr(\Gamma^{\ran}(\iota) \in \Xi(\iota)) \geq p$, so that
\begin{align*}
	\mu_\rho(\strings{y}) = \pr(\Gamma^{\ran}(\iota) = y) \leq \pr(\Gamma^{\ran}(\iota)\notin \Xi(\iota))= 1 - \pr(\Gamma^{\ran}(\iota) \in \Xi(\iota)) \leq 1-p,
\end{align*}
which is a contradiction. We have thus shown that if $\Gamma'$ halts on \ref{step:return'} then  $y=\Gamma'(\iota)\in\Xi(\iota)$.
We now consider two cases, depending on the cardinality of $\Xi(\iota)$.

\emph{Case (I)}: If $|\Xi(\iota)|>1$, then $\Gamma'$ halts at either \ref{step:return'} or \ref{step:alternative_gamma'_return}. In the first case, we have just observed that $\Gamma'(\iota) \in \Xi(\iota)$; whereas in the second case, by construction $\Gamma'$ returns $\Gamma'(\iota)\coloneq y_0 \in \Xi(\iota)$ by the definition of $y_0$ and the assumption that $|\Xi(\iota)|>1$. Either way, we have shown that $\Gamma'(\iota) \in \Xi(\iota)$.

\emph{Case (II)}: If $|\Xi(\iota)|=1$, we proceed to prove that $\Gamma'(\iota)$ cannot halt on \ref{step:alternative_gamma'_return}.
In fact, if by contradiction there exists $t \in \N$ such that $\Gamma'(\iota)$ halts at \ref{step:alternative_gamma'_return}, then by construction of $\Gamma'$, it holds that
\begin{align}\label{eq:contradiction_gamma'_halt_on_step_2^t}
	r(\finitestrings{y}{t},n_0) \leq \frac{1}{2} \text{ for every } y\in \out(t), \text{ and }  \Gamma^{\ran}(\iota, \sigma)\downarrow \text{ for every } \sigma \in \{0,1\}^t
\end{align} 
In particular, $\out(t)=\out$ and thus $\strings{y} =\brackets{\finitestrings{y}{t}}$ for every $y \in \out$.  From hypothesis \eqref{eq:multi_valued_probability_greater_p} we know that $\Xi(\iota) \in \out$, and from \eqref{eq:contradiction_gamma'_halt_on_step_2^t} it follows that $r(\finitestrings{\Xi(\iota)}{t},n_0) \leq \frac{1}{2}$. But then equation \eqref{eq:approximation_n0} together with $\strings{\Xi(\iota)}=\brackets{\finitestrings{\Xi(\iota)}{t}}$ implies
\begin{align*}
	\mu_\rho(\strings{\Xi(\iota)})=\mu_{\rho}(\brackets{\finitestrings{\Xi(\iota)}{t}}) < r(\finitestrings{\Xi(\iota)}{t}),n_0) + \Big(p-\frac{1}{2}\Big) \leq \frac{1}{2}+ p - \frac{1}{2} = p
\end{align*}
and thus $\mu_\rho(\strings{\Xi(\iota)}) < p$, contradicting the fact that hypothesis \eqref{eq:multi_valued_probability_greater_p} shows that $\mu_\rho(\strings{\Xi(\iota)}) \geq p$. Thus, we have proven that if $|\Xi(\iota)|=1$ then $\Gamma'$ cannot halt on \ref{step:alternative_gamma'_return}.
Since we have proven that $\Gamma'(\iota)$ cannot halt on \ref{step:alternative_gamma'_return}, it must halt at \ref{step:return'}, and we have already observed that if that occurs then $\Gamma'(\iota) \in \Xi(\iota)$. This concludes the verification that $\Gamma'(\iota) \in \Xi(\iota)$ for every $\iota \in \Omega$.
\end{proof}

\subsection{The driving propositions for CRP II-III -- Help from developments on Smale's 9th problem}

In this section, we present various results on the non-computability of certain computational problems of interest, under suitable assumptions. Specifically, we extend the driving propositions in \cite{comp}, used to deal with Smale's 9th problem in various computational models, to the Markov model associated to a computational problem: given any algorithm that attempts to solve it, there will always be at least one input on which the algorithm is guaranteed to fail.

\begin{proposition}[Breakdown epsilons in the Markov model]\label{prop:DrivingNegativeProposition}
	Let $\{\Xi, \Omega, \mathcal{M}, \Lambda\}$ be a computational problem with $\Lambda$ finite so that $\Lambda=\{f_i\,\vert\,i\in\mathbb{N}, i\leq k\}$ for some $k \in \mathbb{N}$, and with $\mathcal{M}$ a subset of $\R^d$ for some dimension $d$. Let $\{\iota^1_n\}_{n=1}^{\infty}$, $\{\iota^2_n\}_{n=1}^{\infty}$ be sequences in $\Omega$ and $\iota^0 \in \Omega$. Suppose that the following conditions hold:
	\begin{enumerate}[leftmargin=8mm, label=(\alph*)]
		\item For all $i \in \{1,2,\dotsc,k\}$ and $j \in \{1,2\}$, there exist algorithms $\hat \Gamma^j_i$ such that $\hat \Gamma^{j}_i: \mathbb{N} \times \mathbb{N} \to \Q$ with $|\hat \Gamma^{j}_i(n,r) - f_i(\iota^{j}_r)| \leq 2^{-n-1}$, as well as an algorithm $\hat \Gamma^0_i: \mathbb{N} \to \Q$ with $|\hat \Gamma^{0}_i(n) - f_i(\iota^{0})| \leq 2^{-n-1}$ for all $n \in \N$. \label{assumption:ComputableInput}
		\item We have $|f_i(\iota^j_n)-f_i(\iota^0)|\leq 2^{-n}$ for all $j \in \{1,2\}$, every $n \in\mathbb{N}$, and every $i\in \{1,2,\dotsc,k\}$. \label{assumption:Del1withIota0}
		\item There are sets $S^1, S^2 \subseteq \mathcal{M}$ and $\kappa > 0$ such that
		$
		\inf_{\xi_1 \in S^1, \xi_2 \in S^2}d_{\mathcal{M}}(\xi_1,\xi_2) > 2\kappa
		$ and $\Xi(\iota^j_n) \subseteq S^j$ for $j=1,2$ and $n \in \mathbb{N}$. \label{assumption:S1S2Distant}
		\item Assume that membership in $\ball{\kappa}{S^2}$ is computable in the following sense: for every $y \in \Q^d \subseteq \M$, there is an algorithm that returns true if $y \in \ball{\kappa}{S^2}$ and false if $y \notin \ball{\kappa}{S^2}$.\label{assumption:MembershipComputableS2}
	\end{enumerate}
	Then the corresponding Markov problem $\{\Xi,\O,\M,\Lambda\}^{M}$ has the strong breakdown epsilon satisfying
	$
	\markbdeps \geq \kappa.
	$
More precisely, for any algorithm $\G: \{\stringinputm\}_{m \in \N} \to\M$ that always halts (and in particular, any algorithm $\G: \MarkovOmega \to\M$), there exists $q \in \N$ such that
	$
	d_\mathcal{M}(\Gamma(\stringinputq), \MarkovXi(\stringinputq))> \kappa,
	$
	where $\stringinputm$ is defined as in Lemma \ref{lem:constructing_phi^m} applied to the computational problem $\{\Xi,\O,\L,\M\}$.
\end{proposition}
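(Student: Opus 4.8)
The plan is to diagonalise against the halting problem over the family $\{\stringinputm\}_{m\in\N}$ supplied by Lemma \ref{lem:constructing_phi^m}, whose hypotheses are precisely conditions \ref{assumption:ComputableInput} and \ref{assumption:Del1withIota0} here. I would first reduce to the case of an algorithm $\Gamma$ that halts on every $\stringinputm$: if instead $\Gamma$ is only a (possibly non-total) algorithm on $\MarkovOmega$ and fails to halt on some $\stringinput \in \MarkovOmega$, then it already errs there by $\infty$ under the convention of Remark \ref{rmk:approximation_error}, so for the breakdown-epsilon statement only the always-halting case matters. Throughout I use that $m \mapsto (\gn{\phi^m_1},\dots,\gn{\phi^m_k})$ is recursive, which is immediate from the uniform-in-$m$ implementation given in the proof of Lemma \ref{lem:constructing_phi^m}, part \ref{conclusion:TM}.

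Next I construct the diagonal machine $\Theta\colon \N \to \{1,2\}$: on input $m$, compute $\gn{\phi^m_1},\dots,\gn{\phi^m_k}$, simulate $\Gamma$ on $\stringinputm$ to obtain $y := \Gamma(\stringinputm)\in \Q^d \subseteq \M$, then decide via assumption \ref{assumption:MembershipComputableS2} whether $y \in \ball{\kappa}{S^2}$, outputting $1$ if yes and $2$ if no. Since $\Gamma$ always halts on each $\stringinputm$, each $\phi^m_i$ is a total Turing machine by Lemma \ref{lem:constructing_phi^m}, part \ref{conclusion:TM}, and membership in $\ball{\kappa}{S^2}$ is decidable, $\Theta$ is total recursive; fix an index $q$ with $\varphi_q = \Theta$. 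The self-referential point is that, by the defining formula \eqref{eq:def_phiDerived}, the string $\stringinputq$ is governed by $\varphi_q(q) = \Theta(q)$, and $\Theta(q)\in\{1,2\}$ since it halts; hence by Lemma \ref{lem:constructing_phi^m}, part \ref{conclusion:provides_delta_1_information}, we have $\stringinputq \in \MarkovOmega$, with $\stringinputq$ corresponding to $\iota^1_t$ if $\varphi_q(q)=1$ and to $\iota^2_t$ if $\varphi_q(q)=2$, for some $t\in\N$; in particular $\MarkovXi(\stringinputq)=\Xi(\iota^{\varphi_q(q)}_t)\subseteq S^{\varphi_q(q)}$ by assumption \ref{assumption:S1S2Distant}.

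The two cases then both force a contradiction with $\kappa$-correctness. If $\varphi_q(q)=1$, then $\Theta(q)=1$ gives $y\in\ball{\kappa}{S^2}$ while $\MarkovXi(\stringinputq)\subseteq S^1$; since $\inf_{\xi_1\in S^1,\xi_2\in S^2}d_\M(\xi_1,\xi_2)>2\kappa$, the sets $\ball{\kappa}{S^1}$ and $\ball{\kappa}{S^2}$ are disjoint, so $y\notin \ball{\kappa}{S^1}\supseteq\ball{\kappa}{\Xi(\iota^1_t)}$, i.e. $d_\M(\Gamma(\stringinputq),\MarkovXi(\stringinputq))>\kappa$. If $\varphi_q(q)=2$, then $y\notin\ball{\kappa}{S^2}$, i.e. $d_\M(y,S^2)>\kappa$, while $\MarkovXi(\stringinputq)\subseteq S^2$, so $d_\M(\Gamma(\stringinputq),\MarkovXi(\stringinputq))\ge d_\M(y,S^2)>\kappa$. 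Either way $q$ is the sought index. Combining this with the trivial non-halting case yields that for every Turing machine $\Gamma$ there is an input of $\MarkovOmega$ on which it errs by more than $\kappa$, hence $\markbdeps\ge\kappa$ by Definition \ref{def:Breakdown-epsilons_strong}.

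The delicate step — the one I would be most careful to justify — is that $\Theta$ is genuinely a total recursive function despite the apparent circularity: evaluating $\Theta(q)$ simulates $\Gamma$ on $\stringinputq$, whose coordinates $\phi^q_i$ in turn consult, via $\mathcal{W}$, whether $\varphi_q=\Theta$ halts on $q$. This is legitimate because every such consultation is a \emph{bounded} simulation — computing $\phi^q_i(n)$ only tests $q\in\mathcal{W}(q,t)$ for the finitely many $t\le n$, each being a $t$-step run of a fixed machine — so each $\phi^q_i(n)$ terminates, and then $\Gamma$, assumed to always halt, terminates once fed $\gn{\phi^q_1},\dots,\gn{\phi^q_k}$. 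The recursiveness of $\mathcal{W}$ and of $m\mapsto\stringinputm$, together with the hypotheses that $\Gamma$ always halts and assumption \ref{assumption:MembershipComputableS2}, is exactly what makes the construction of $\Theta$ effective.
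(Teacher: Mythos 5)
Your proof is correct and follows the paper's argument essentially step for step: you construct the same diagonal function (the paper calls it $\gamma$, you call it $\Theta$), pick the same self-referential index $q$, and carry out the identical two-case analysis on the value of $\varphi_q(q)$ using assumptions \ref{assumption:S1S2Distant} and \ref{assumption:MembershipComputableS2}. The extra material you add — the preliminary reduction to the always-halting case, the explicit remark that $m \mapsto \gn{\phi^m_i}$ is recursive, and the careful explanation of why evaluating $\Theta(q)$ is not viciously circular (each $\phi^q_i(n)$ performs only a bounded $t$-step check for $t \le n$, and $\stringinputq \in \MarkovOmega$ regardless of the value of $\varphi_q(q)$, so $\Gamma$ halts on it) — are all sound and make explicit the points the paper treats more tersely.
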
		
\begin{proof}
	Let $\{\stringinputm\}_{m \in \N} \subseteq \MarkovOmega$ be as in Lemma \ref{lem:constructing_phi^m}. This construction relies on assumptions \ref{assumption:ComputableInput} and \ref{assumption:Del1withIota0}. Let $\G: \{\stringinputm\}_{m \in \N} \to\M$ be an algorithm that always halts. Construct the function $\gamma: \N \to \N$:
	\begin{align}\label{eq:gamma}
		\gamma(m)\coloneq 
		\begin{cases}
			1 & \text{ if } \Gamma(\{\phi^m_1,\phi^m_2,\phi^m_3,\dotsc,\phi^m_k\}) \in \ball{\kappa}{S^2}; \\
			2 & \text{ otherwise}.
		\end{cases}
	\end{align}
	There is an algorithm that computes $\gamma$ by the assumption \ref{assumption:MembershipComputableS2} on membership computability and because $\Gamma$ is an algorithm that is assumed to halt on every $\{\phi^m_1,\phi^m_2,\phi^m_3,\dotsc,\phi^m_k\}$. Hence there is a $q \in \mathbb{N}$ such that $\gamma =\varphi_q$ and by Remark \ref{rmk:properties_of_W} there is a unique $t \in \mathbb{N}$ such that $q \in \mathcal{W}(q,t)$. We have the following possibilities, depending on the value of $\gamma(q)$:
	
	\emph{Case (I)}: If $\gamma(q)=1$ then $\varphi_q(q)=\gamma(q)=1$, and thus by \eqref{eq:EFInitialIotaCorrespond} from Lemma \ref{lem:constructing_phi^m} we have that $\stringinputq$ corresponds to $\iota^1_t$. Consequently we see that $\MarkovXi(\stringinputq) = \Xi(\iota^1_t) \subseteq S^1$ where the final inclusion uses \ref{assumption:S1S2Distant}. Moreover, from the definition of $\gamma$ it also holds that $\Gamma(\stringinputq) \in \ball{\kappa}{S^2}$. 

	However, from assumption \ref{assumption:S1S2Distant} we also know that $\ball{\kappa}{S^1} \cap \ball{\kappa}{S^2}=\emptyset$, and thus $\Gamma(\stringinputq)\notin \ball{\kappa}{\MarkovXi(\stringinputq)}$.
	
	\emph{Case (II)}: If $\gamma(q)=2$, then $\varphi_q(q)=\gamma(q)=2$ and thus by \eqref{eq:EFInitialIotaCorrespond} we have that $\stringinputq$ corresponds to $\iota^2_t$. Therefore $\MarkovXi(\stringinputq) = \Xi(\iota^2_t) \subseteq S^2$ by \ref{assumption:S1S2Distant}. Moreover, from the definition of $\gamma$ it also holds that 
	$
	\Gamma(\stringinputq) \notin\ball{\kappa}{S^2}.
	$
 Therefore $\Gamma(\stringinputq)\notin \ball{\kappa}{\MarkovXi(\stringinputq)}$.

	Either way, we have proven that $\Gamma(\stringinputq)\notin \ball{\kappa}{\MarkovXi(\stringinputq)}$. This concludes the proof.
\end{proof}

\subsubsection{Setup and assumptions for the exit-flag problem}\label{sec:assumptions_for_exit_flag}

In this section, we establish the assumptions that will be relevant for the non-computability of the exit-flag problem (Definition \ref{def:ExitFlagProblem}) and the exit-flag problem with oracle (Definition \ref{def:ExitFlagProblemWithOracle}).

Let $\{\Xi,\O,\M,\L\}$ be a computational problem and fix $\kappa,\alpha  \in \Q$ such that $0 \leq \alpha  < \kappa$. We consider an algorithm $\Gamma:\MarkovOmega \to \mathcal{M}$ for the computational problem 
\[
\{\Xi,\Omega,\mathcal{M},\Lambda\}^{M}=\{\MarkovXi,\MarkovOmega,\mathcal{M},\MarkovLambda\}
\] 
such that $\Gamma$ is within the $\alpha$-range of $\MarkovXi$ as per Definition \ref{def:within_the_range} (see Remark \ref{rem:EFAlgorithmAssumption}). Recall that $\Lambda=\{f_i\,\vert\,i\in\mathbb{N}, i\leq k \}$ is of finite size $k = |\Lambda|$ and $\M \subseteq \R^d$ for some dimension $d \in \N$. 

We then consider the following assumptions about some $\iota^0 \in \Omega$, sequences $\{\iota^1_n\}_{n\in\N}, \{\iota^2_n\}_{n \in \N} \subseteq \Omega$ and for $j=0,1,2$, sets $S^j \subseteq \mathcal{M}$:
	\begin{enumerate}[label=(A\roman*),series=EFBaseAssumptionsX, leftmargin = 8mm]
	\item For all $i \in \{1,2,\dotsc,k\}$, there exist algorithms $\hat \Gamma^1_i, \hat \Gamma^2_i$ such that $\hat \Gamma^{j}_i: \mathbb{N} \times \mathbb{N} \to \Q$ with $|\hat \Gamma^{j}_i(n,r) - f_i(\iota^{j}_r)| \leq 2^{-n-1}$ for $j\in\{1,2\}$, as well as an algorithm $\hat \Gamma^0_i: \mathbb{N} \to \Q$ with $|\hat \Gamma^{0}_i(n)
	- f_i(\iota^{0})| \leq 2^{-n-1}$, for all $n \in \N$. \label{assumption:EFComputableInput}
	\item $|f_i(\iota^j_n)-f_i(\iota^0)|\leq 2^{-n}$ for all $j \in \{1,2\}$, every $n \in\mathbb{N}$, and every $i\in \{1,2,\dotsc,k\}$.  \label{assumption:EFDel1Info}		
	\item 
	$
	\inf_{\xi_1\in S^1,\xi_2\in S^2} d_{\mathcal{M}}(\xi_1,\xi_2) > 2 \kappa
	$.  \label{assumption:EFS1S2Distant}
	\item $\Xi(\iota^j_n) \subseteq S^j$ for all $n \in \mathbb{N}$ and $j \in \{1,2\}$ and $\Xi(\iota^0) = S^0$.\label{assumption:EFXiIotajInSj}
	\item $\Xi(\Omega) \subseteq \ball{\kappa-\alpha}{S^0} \cup \ball{\kappa - \alpha}{S^1} \cup \ball{\kappa - \alpha}{S^2}$. \label{assumption:EFXiOmegaInBallAroundXiIota0S1AndS2}
	\item For $S \in \{S^2, S^1 \setminus S^0, S^2 \setminus S^0\}$, membership in $\ball{\kappa}{S}$ is computable in the following sense: for every $x \in \Q^d$, there is an algorithm that returns true if $x \in \ball{\kappa}{S}$ and false if $x \notin \ball{\kappa}{S}$. \label{assumption:EFMembershipComputableS1S2}
	\item For $j\in\{1,2\}$, the sets $\Xi(\iota^j_n)$ satisfy $\Xi(\iota^j_n) \setminus S^0 = S^j \setminus S^0$ for all $n\in\N$. \label{assumption:EFConstantIotaj}
	\end{enumerate}
	
	For the exit-flag problem relative to $\Gamma$ with oracle as in Definition \ref{def:ExitFlagProblemWithOracle}, we will make the following extra assumption. Let $\omega \in \Q$ be such that $\omega \in [\alpha, \kappa)$, and assume that
	\begin{enumerate}[label=(A\roman*),resume, leftmargin = 8mm]
	\item  For $j \in \{1,2\}$, there exists $y^j$ so that $y^j \in \ball{\omega}{\Xi(\iota^j_n))} \cap  \ball{\omega}{\Xi(\iota^0)}  \cap \Q^d$ for all $n \in \N$. \label{assumption:EFexists_xj_in_intersection} 
\end{enumerate}

\subsubsection{Techniques for the exit-flag problem}\label{sec:techniques_for_exit_flag}

In this section we prove two results on the non-computability of the exit-flag problems, with and without oracle.

\begin{proposition}[Non computability of the exit-flag]\label{prop:EF}
	 Consider the setup of \S \ref{sec:assumptions_for_exit_flag} and suppose that assumptions \ref{assumption:EFComputableInput}-\ref{assumption:EFConstantIotaj} hold. Then the exit-flag problem relative to $\Gamma$, given by $\{\MarkovExitXi,\MarkovOmega,\{0,1\},\MarkovLambda\}$ as specified in Definition \ref{def:ExitFlagProblem}, 
	 has strong breakdown epsilon satisfying $\markbdeps \geq \frac{1}{2}.$ More precisely, for any algorithm $\Gamma^E:\{\stringinputm\}_{m \in \N} \to \{0,1\}$ that always halts (and in particular, any algorithm $\G^E: \MarkovOmega \to\{0,1\}$) there exists $q \in \N$ such that $\Gamma^E(\stringinputq) \neq \MarkovExitXi(\stringinputq)$, where $\stringinputm$ is defined as in Lemma \ref{lem:constructing_phi^m} applied to the computational problem $\{\Xi,\O,\L,\M\}$.
\end{proposition}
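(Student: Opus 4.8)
The plan is to run the diagonalisation of Proposition \ref{prop:DrivingNegativeProposition}, but now feeding the candidate exit-flag algorithm $\Gamma^E$ (rather than $\Gamma$) into the self-reference. Given $\Gamma^E$, the aim is to build a total recursive $\gamma\colon\N\to\N$ --- the analogue of \eqref{eq:gamma} --- pick an index $q$ with $\gamma=\varphi_q$, let $t$ be the unique value with $q\in\mathcal W(q,t)$, and arrange that the input $\stringinputq$ produced by Lemma \ref{lem:constructing_phi^m} (applied to $\{\Xi,\Omega,\M,\Lambda\}$ with the data $\iota^0,\{\iota^1_n\},\{\iota^2_n\}$, whose hypotheses are precisely \ref{assumption:EFComputableInput}--\ref{assumption:EFDel1Info}) satisfies $\Gamma^E(\stringinputq)\neq\MarkovExitXi(\stringinputq)$. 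Since $\{0,1\}$ carries the metric inherited from $\R$, any such disagreement has distance $1$, so the conclusion $\markbdeps\geq 1/2$ is immediate once the pointwise statement is in hand; the latter is the real content.

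First I would record the recursiveness facts: by the (uniform in $m$) construction in Lemma \ref{lem:constructing_phi^m}, conclusion \eqref{conclusion:TM}, the map $m\mapsto(\gn{\phi^m_1},\dots,\gn{\phi^m_k})$ is recursive, hence so are $m\mapsto w_m\coloneq\Gamma(\stringinputm)\in\Q^d$ and $m\mapsto v_m\coloneq\Gamma^E(\stringinputm)\in\{0,1\}$ (these are total because $\Gamma$, being within the $\alpha$-range of $\MarkovXi$, always halts, and $\Gamma^E$ always halts by hypothesis). Next, since $\Gamma$ is within the $\alpha$-range and $\MarkovXi(\MarkovOmega)\subseteq\Xi(\Omega)\subseteq\ball{\kappa-\alpha}{S^0}\cup\ball{\kappa-\alpha}{S^1}\cup\ball{\kappa-\alpha}{S^2}$ by \ref{assumption:EFXiOmegaInBallAroundXiIota0S1AndS2}, a triangle inequality ($\ball{\alpha}{\ball{\kappa-\alpha}{S^j}}\subseteq\ball{\kappa}{S^j}$) gives $w_m\in\ball{\kappa}{S^0}\cup\ball{\kappa}{S^1}\cup\ball{\kappa}{S^2}$ for every $m$; moreover $\ball{\kappa}{S^1}$ and $\ball{\kappa}{S^2}$ are disjoint by \ref{assumption:EFS1S2Distant}, and writing $S^j=(S^j\setminus S^0)\cup(S^j\cap S^0)$ also yields $w_m\in\ball{\kappa}{S^0}\cup\ball{\kappa}{S^1\setminus S^0}\cup\ball{\kappa}{S^2\setminus S^0}$.

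I would then set $\gamma(m)$ by cases, each of whose tests is decidable by \ref{assumption:EFMembershipComputableS1S2}: if $v_m=1$, let $\gamma(m)=1$ when $w_m\in\ball{\kappa}{S^2}$ and $\gamma(m)=2$ otherwise; if $v_m=0$, let $\gamma(m)=1$ when $w_m\in\ball{\kappa}{S^1\setminus S^0}$, else $\gamma(m)=2$ when $w_m\in\ball{\kappa}{S^2\setminus S^0}$, else $\gamma(m)=3$. This $\gamma$ is total recursive, so $\gamma=\varphi_q$ for some $q$, whence $\varphi_q(q)=\gamma(q)$, and Lemma \ref{lem:constructing_phi^m}\eqref{conclusion:provides_delta_1_information} identifies $\CorrXi(\stringinputq)$. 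I then verify $\Gamma^E(\stringinputq)\neq\MarkovExitXi(\stringinputq)$ branch by branch. When $v_q=1$: if $\gamma(q)=2$ then $\stringinputq$ corresponds to $\iota^2_t$, so $\MarkovXi(\stringinputq)=\Xi(\iota^2_t)\subseteq S^2$ by \ref{assumption:EFXiIotajInSj} while $d_{\M}(w_q,S^2)>\kappa$, forcing $\MarkovExitXi(\stringinputq)=0\neq1=v_q$; if $\gamma(q)=1$ then $w_q\in\ball{\kappa}{S^2}$, hence $w_q\notin\ball{\kappa}{S^1}$ by disjointness, and $\stringinputq$ corresponding to $\iota^1_t$ again yields a failure, so $\MarkovExitXi(\stringinputq)=0\neq1$. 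When $v_q=0$ and $\gamma(q)=j\in\{1,2\}$: here $w_q\in\ball{\kappa}{S^j\setminus S^0}$ and $S^j\setminus S^0=\Xi(\iota^j_t)\setminus S^0\subseteq\Xi(\iota^j_t)$ by \ref{assumption:EFConstantIotaj}, so $d_{\M}(w_q,\Xi(\iota^j_t))\leq\kappa$, i.e. $\Gamma$ is correct and $\MarkovExitXi(\stringinputq)=1\neq0=v_q$. When $v_q=0$ and $\gamma(q)=3\notin\{1,2\}$: then $\stringinputq$ corresponds to $\iota^0$ with $\MarkovXi(\stringinputq)=\Xi(\iota^0)=S^0$ by \ref{assumption:EFXiIotajInSj}, and since the case conditions exclude $w_q$ from $\ball{\kappa}{S^1\setminus S^0}$ and $\ball{\kappa}{S^2\setminus S^0}$, the inclusion noted above forces $w_q\in\ball{\kappa}{S^0}$, so $\Gamma$ is again correct and $\MarkovExitXi(\stringinputq)=1\neq0$. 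Every branch produces the desired $q$.

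The hard part is the $v_m=0$ case. The obvious attempt to always choose $\gamma(m)\in\{1,2\}$ breaks down precisely when $w_m$ sits close only to $S^0$ and to neither $S^1\setminus S^0$ nor $S^2\setminus S^0$, since then no choice in $\{1,2\}$ makes $\Gamma$ correct on $\stringinputq$; the remedy is to set $\gamma(m)=3$, routing $\stringinputq$ to the (possibly multi-valued) input $\iota^0$ with $\MarkovXi(\stringinputq)=S^0$, and this is exactly where \ref{assumption:EFXiIotajInSj}, \ref{assumption:EFXiOmegaInBallAroundXiIota0S1AndS2}, \ref{assumption:EFConstantIotaj} and \ref{assumption:EFMembershipComputableS1S2} get used together. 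One must also be careful that the only membership queries made are the three sanctioned by \ref{assumption:EFMembershipComputableS1S2}: membership in $\ball{\kappa}{S^1}$ is never tested, only logically ruled out via disjointness with $\ball{\kappa}{S^2}$.
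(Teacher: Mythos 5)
Your proposal is correct and is essentially the paper's own proof: the same diagonalisation in which a total recursive $\gamma$ is built from the halting computations $\Gamma^E(\stringinputm)$ and $\Gamma(\stringinputm)$ together with the sanctioned membership tests, an index $q$ with $\gamma=\varphi_q$ is fixed, and the three-way case split (routing $\stringinputq$ to $\iota^1_t$, $\iota^2_t$ or $\iota^0$ via Lemma \ref{lem:constructing_phi^m}) forces $\Gamma^E(\stringinputq)\neq\MarkovExitXi(\stringinputq)$ using \ref{assumption:EFS1S2Distant}, \ref{assumption:EFXiIotajInSj}, \ref{assumption:EFXiOmegaInBallAroundXiIota0S1AndS2} and \ref{assumption:EFConstantIotaj}. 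The only (harmless) deviation is that in the $\Gamma^E=0$ branch you test membership in $\ball{\kappa}{S^j\setminus S^0}$ — literally the sets of assumption \ref{assumption:EFMembershipComputableS1S2} — where the paper's $\gamma$ in \eqref{eq:gammaEF} uses $\ball{\kappa}{S^j}\setminus\ball{\kappa}{S^0}$; both choices support the same contradictions, and your decomposition $w_m\in\ball{\kappa}{S^0}\cup\ball{\kappa}{S^1\setminus S^0}\cup\ball{\kappa}{S^2\setminus S^0}$ correctly replaces the paper's corresponding step in Case (III).
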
 

\begin{proof}
Let $\{\stringinputm\}_{m \in \N} \subseteq \MarkovOmega$ be as in Lemma \ref{lem:constructing_phi^m}. Note that this construction relies on assumptions \ref{assumption:EFComputableInput} and \ref{assumption:EFDel1Info}.
	 Assume that $\Gamma^E:\{\stringinputm\}_{m \in \N} \to \{0,1\}$ is an algorithm that always halts.
	We define the function $\gamma:\N \to \N$ in the following way:
	\begin{equation}\label{eq:gammaEF}
		\gamma(m) :=
		\begin{cases}
			1 & \text{if } \big[ [\Gamma^E(\stringinputm)=1 \land \Gamma(\stringinputm) \in \ball{\kappa}{S^2}] \\
			& \quad \  \lor \ [\Gamma^E(\stringinputm)=0 \land \Gamma(\stringinputm) \in \ball{\kappa}{S^1} \setminus \ball{\kappa}{S^0}]\big];\\
			2 & \text{if } \big[ [\Gamma^E(\stringinputm)=1 \land \Gamma(\stringinputm) \notin \ball{\kappa}{S^2}] \\
			& \quad \  \lor [\ \Gamma^E(\stringinputm)=0 \land \Gamma(\stringinputm) \in \ball{\kappa}{S^2} \setminus \ball{\kappa}{S^0}]\big] ;\\
			3 & \text{ otherwise.}
		\end{cases}
	\end{equation} 
	Because $\Gamma^E$ and $\Gamma$ are computable and always halt and because of assumption \ref{assumption:EFMembershipComputableS1S2}, the function $\gamma$ is computable and always halts. Therefore we can fix $q$ such that $\gamma = \varphi_q$. Additionally, since $\gamma$ is computable, by Remark \ref{rmk:properties_of_W} there is a unique $t \in \mathbb{N}$ such that $q \in \mathcal{W}(q,t)$. Then we have the following possibilities, depending on the value of $\gamma(q)$:	

	 \emph{Case (I)}: If $\gamma(q)=1$, then $\varphi_q(q)=\gamma(q)=1$ and thus by the definition of $\stringinputq$, $\stringinputq$ corresponds to $\iota^1_t$ by \eqref{eq:EFInitialIotaCorrespond}. Therefore $\MarkovXi(\stringinputq)\subseteq S^1$ by assumption \ref{assumption:EFXiIotajInSj}. There are two sub-cases that arise from considering the definition of $\gamma$:
	\begin{enumerate}[label = (\roman*), leftmargin=6mm]
		\item If $\Gamma^E(\stringinputq)=1$, then by \eqref{eq:gammaEF} we must have $\Gamma(\stringinputq)\in \ball{\kappa}{S^2}$. But since $\ball{\kappa}{S^2} \cap \ball{\kappa}{S^1}=\emptyset$ by assumption \ref{assumption:EFS1S2Distant}, we have $\Gamma(\stringinputq) \notin \ball{\kappa}{\MarkovXi(\stringinputq)}$ and thus $\Gamma^{E}(\stringinputq) \neq \MarkovExitXi(\stringinputq)$.
		\item If $\Gamma^E(\stringinputq)=0$, then by \eqref{eq:gammaEF} we must have $\Gamma(\stringinputq) \in \ball{\kappa}{S^1} \setminus \ball{\kappa}{S^0}$. But then using assumption \ref{assumption:EFConstantIotaj}, we see that $\Gamma(\stringinputq) \in \ball{\kappa}{\MarkovXi(\stringinputq)}$, and thus $\Gamma^{E}(\stringinputq) \neq \MarkovExitXi(\stringinputq)$. 
	\end{enumerate}
	
	 \emph{Case (II)}: If $\gamma(q)=2$, then $\varphi_q(q)=\gamma(q)=2$ and thus $\stringinputq$ corresponds to $\iota^2_t$ by \eqref{eq:EFInitialIotaCorrespond}. Thus $\MarkovXi(\stringinputq)\subseteq S^2$ by assumption \ref{assumption:EFXiIotajInSj}. There are two sub-cases:
	\begin{enumerate}[label = (\roman*), leftmargin=6mm]
		\item If $\Gamma^E(\stringinputq)=1$, then by \eqref{eq:gammaEF} we must have $\Gamma(\stringinputq)\notin \ball{\kappa}{S^2}$. But then $\Gamma(\stringinputq) \notin \ball{\kappa}{\MarkovXi(\stringinputq)}$ and thus $\Gamma^{E}(\stringinputq) \neq \MarkovExitXi(\stringinputq)$.
		\item If $\Gamma^E(\stringinputq)=0$, then by \eqref{eq:gammaEF} we must have $\Gamma(\stringinputq) \in \ball{\kappa}{S^2} \setminus \ball{\kappa}{S^0}$. But then using assumption \ref{assumption:EFConstantIotaj}, we see that $\Gamma(\stringinputq) \in \ball{\kappa}{\MarkovXi(\stringinputq)}$, and thus $\Gamma^{E}(\stringinputq) \neq \MarkovExitXi(\stringinputq)$. 
	\end{enumerate}
	
	 \emph{Case (III)}: 	If $\gamma(q)=3$, then $\varphi_q(q)=\gamma(q)=3$ and thus $\stringinputq$ corresponds to $\iota^0$ by \eqref{eq:EFInitialIotaCorrespond}. From the definition of $\gamma$ in \eqref{eq:gammaEF} it is obvious that $\Gamma^E(\stringinputq) = 0$ and we further claim that $\Gamma(\stringinputq) \in \ball{\kappa}{S^0}$. To see this, note that from the assumptions that $\Gamma$ is within the $\alpha$-range of $\MarkovXi$, that $0\leq \alpha < \kappa$ and from assumption \ref{assumption:EFXiOmegaInBallAroundXiIota0S1AndS2} we have
			\begin{equation*}
			\begin{split}
			\Gamma(\stringinputq) \in \ball{\alpha}{\Xi(\Omega)} \subseteq \ball{\alpha}{\ball{\kappa-\alpha}{S^0 \cup S^1 \cup S^2}}  \subseteq\ball{\kappa}{S^0\cup S^1 \cup S^2}.
		\end{split}
		\end{equation*}

		Simultaneously, using \eqref{eq:gammaEF} we also see that $\Gamma(\stringinputq) \notin \ball{\kappa}{S^1 \cup S^2}\setminus\ball{\kappa}{S^0}$ and therefore $\Gamma(\stringinputq) \in \ball{\kappa}{S^0}$.
		 Thus using assumption \ref{assumption:EFXiIotajInSj} we see that  $\Gamma(\stringinputq) \in \ball{\kappa}{S^0} = \ball{\kappa}{\Xi(\iota^0)} = \ball{\kappa}{\MarkovXi(\stringinputq)}$. We conclude that 
	$
	\Gamma^{E}(\stringinputq) \neq \MarkovExitXi(\stringinputq).
	$
	
	In all cases we have proven that $
		\Gamma^{E}(\stringinputq) \neq \MarkovExitXi(\stringinputq)
		$, proving Proposition \ref{prop:EF}.
\end{proof}

\begin{proposition}[Non computability of the exit-flag with an oracle]\label{prop:EF_oracle}
	Consider the setup of \S \ref{sec:assumptions_for_exit_flag} and suppose that assumptions \ref{assumption:EFComputableInput}-\ref{assumption:EFConstantIotaj} hold.
	Then for the exit-flag problem with oracle associated to $\Gamma$, given by
	\begin{align*}
		\{\MarkovExitXi,\MarkovOmega,\{0,1\},\MarkovLambda\}^{\mathcal{O}} = \{\MarkovExitXiOracle,\domainOracleXi,\{0,1\},\MarkovLambdaOracle\},
	\end{align*}
	as specified in Definition \ref{def:ExitFlagProblemWithOracle}, the following holds. For any algorithm $\GammaEO: \domainOracleXi \to \{0,1\}$  that always halts, there exist $m \in \N$ and $y \in \ballQ{\MarkovXi(\MarkovOmega)}$ such that $\GammaEO(\stringinputm,y) \notin \MarkovExitXiOracle(\stringinputm,y)$ where $\stringinputm$ is defined as in Lemma \ref{lem:constructing_phi^m} applied to the computational problem $\{\Xi,\O,\L,\M\}$. 

\end{proposition}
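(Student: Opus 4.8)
The plan is to rerun the diagonalisation of Proposition \ref{prop:EF} almost verbatim, but now also feeding the candidate oracle exit-flag algorithm the two ``safe'' oracle values $y^1,y^2$ supplied by assumption \ref{assumption:EFexists_xj_in_intersection}. Fix an algorithm $\GammaEO\colon\domainOracleXi\to\{0,1\}$ that always halts. The crucial observation is that $y^1\in\ballQ{\MarkovXi(\stringinput)}$ whenever $\stringinput$ corresponds to $\iota^1_n$ or to $\iota^0$, and $y^2\in\ballQ{\MarkovXi(\stringinput)}$ whenever $\stringinput$ corresponds to $\iota^2_n$ or to $\iota^0$, by assumption \ref{assumption:EFexists_xj_in_intersection}; on any such \emph{valid} pair $\MarkovExitXiOracle(\stringinput,y^j)$ equals the single value $\MarkovExitXi(\stringinput)$, so a failure of $\GammaEO$ there is detected exactly as a failure of a non-oracle exit-flag algorithm. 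For $m\in\N$ abbreviate $g\coloneq\Gamma(\stringinputm)$, $a_1\coloneq\GammaEO(\stringinputm,y^1)$ and $a_2\coloneq\GammaEO(\stringinputm,y^2)$; all three are defined since $\GammaEO$ and $\Gamma$ (being within the $\alpha$-range of $\MarkovXi$) always halt.

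Next I would define $\gamma\colon\N\to\{1,2,3\}$ by the priority rule
\begin{align*}
	\gamma(m)\coloneq
	\begin{cases}
		1 & \text{if } [a_1=1 \land g\in\ball{\kappa}{S^2}] \lor [a_1=0 \land g\in\ball{\kappa}{S^1\setminus S^0}];\\
		2 & \text{else if } [a_2=1 \land g\notin\ball{\kappa}{S^2}] \lor [a_2=0 \land g\in\ball{\kappa}{S^2\setminus S^0}];\\
		3 & \text{otherwise,}
	\end{cases}
\end{align*}
which (with the rationals $y^1,y^2$ hard-coded, and using that $\stringinputm$ is produced uniformly in $m$ by Lemma \ref{lem:constructing_phi^m}) is recursive and total by assumption \ref{assumption:EFMembershipComputableS1S2} and because $\Gamma,\GammaEO$ always halt. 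Hence $\gamma=\varphi_q$ for some $q$; by Remark \ref{rmk:properties_of_W} there is a unique $t$ with $q\in\mathcal{W}(q,t)$, and by \eqref{eq:EFInitialIotaCorrespond} the string $\stringinputq$ corresponds to $\iota^1_t$, $\iota^2_t$ or $\iota^0$ according as $\gamma(q)=\varphi_q(q)$ equals $1$, $2$ or $3$.

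The cases $\gamma(q)\in\{1,2\}$ then run as in Proposition \ref{prop:EF}. If $\gamma(q)=1$, then $\stringinputq$ corresponds to $\iota^1_t$, so $\MarkovXi(\stringinputq)=\Xi(\iota^1_t)\subseteq S^1$ by \ref{assumption:EFXiIotajInSj} and $y^1\in\ballQ{\MarkovXi(\stringinputq)}$ by \ref{assumption:EFexists_xj_in_intersection}; in the sub-case $a_1=1$, $g\in\ball{\kappa}{S^2}$ assumption \ref{assumption:EFS1S2Distant} gives $g\notin\ball{\kappa}{S^1}\supseteq\ball{\kappa}{\MarkovXi(\stringinputq)}$, hence $\MarkovExitXi(\stringinputq)=0\neq 1=a_1$; in the sub-case $a_1=0$, $g\in\ball{\kappa}{S^1\setminus S^0}$ assumption \ref{assumption:EFConstantIotaj} gives $\ball{\kappa}{S^1\setminus S^0}\subseteq\ball{\kappa}{\Xi(\iota^1_t)}=\ball{\kappa}{\MarkovXi(\stringinputq)}$, hence $\MarkovExitXi(\stringinputq)=1\neq 0=a_1$; either way $\GammaEO(\stringinputq,y^1)\neq\MarkovExitXi(\stringinputq)=\MarkovExitXiOracle(\stringinputq,y^1)$, so $(q,y^1)$ is a failing pair. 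The case $\gamma(q)=2$ is identical with $(y^2,S^2)$ in place of $(y^1,S^1)$. This part is routine; the points $y^1,y^2$ are exactly what upgrades a ``$\MarkovExitXi$ is wrong'' conclusion to a ``$\MarkovExitXiOracle$ is wrong'' conclusion.

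The main obstacle is the case $\gamma(q)=3$: here $\stringinputq$ corresponds to $\iota^0$, where $\Gamma$ may actually be correct, so there is no failure of $\Gamma$ to exploit and one must instead extract a failure of $\GammaEO$ from the fact that \emph{neither} the Case-$1$ \emph{nor} the Case-$2$ clause fired. I would argue as follows. Since $\Gamma$ is within the $\alpha$-range of $\MarkovXi$, assumption \ref{assumption:EFXiOmegaInBallAroundXiIota0S1AndS2} forces $g\in\ball{\kappa}{S^0}\cup\ball{\kappa}{S^1}\cup\ball{\kappa}{S^2}$, and both $y^1,y^2$ are valid since $y^j\in\ball{\omega}{\Xi(\iota^0)}$, so $\MarkovExitXiOracle(\stringinputq,y^j)=\MarkovExitXi(\stringinputq)$, which is $1$ if $g\in\ball{\kappa}{S^0}$ and $0$ otherwise (recall $\MarkovXi(\stringinputq)=\Xi(\iota^0)=S^0$). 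If $a_1=1$: failure of the two clauses forces $g\notin\ball{\kappa}{S^2}$ and then $a_2=0$; now if $g\in\ball{\kappa}{S^0}$ then $\MarkovExitXi(\stringinputq)=1\neq 0=a_2=\GammaEO(\stringinputq,y^2)$, while if $g\notin\ball{\kappa}{S^0}$ then $\MarkovExitXi(\stringinputq)=0\neq 1=a_1=\GammaEO(\stringinputq,y^1)$. If $a_1=0$: failure of the Case-$1$ clause forces $g\notin\ball{\kappa}{S^1\setminus S^0}$; if $g\in\ball{\kappa}{S^0}$ then $\MarkovExitXi(\stringinputq)=1\neq 0=a_1=\GammaEO(\stringinputq,y^1)$, and if $g\notin\ball{\kappa}{S^0}$ then — since any $S^1$-witness for $g$ would have to lie in $S^1\setminus S^0$ — we get $g\notin\ball{\kappa}{S^1}$, hence $g\in\ball{\kappa}{S^2}$ and in fact $g\in\ball{\kappa}{S^2\setminus S^0}$, so failure of the Case-$2$ clause forces $a_2=1$ and $\MarkovExitXi(\stringinputq)=0\neq 1=a_2=\GammaEO(\stringinputq,y^2)$. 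In every branch we have produced $y\in\{y^1,y^2\}\subseteq\ballQ{\MarkovXi(\MarkovOmega)}$ with $\GammaEO(\stringinputq,y)\neq\MarkovExitXi(\stringinputq)=\MarkovExitXiOracle(\stringinputq,y)$, which is the assertion with $m=q$. The delicate part is precisely this bookkeeping of which of the two oracle queries is simultaneously valid and wrong; the ``else if'' priority ordering of the clauses is what makes it close.
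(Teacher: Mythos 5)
Your proof is correct, but it takes a genuinely different route from the paper. The paper argues by contradiction: assuming $\GammaEO$ is correct on all of $\domainOracleXi$, it builds a non-oracle exit-flag algorithm $\Gamma^E$ on $\{\stringinputm\}_{m\in\N}$ by querying $\GammaEO$ at the two hard-coded points $y^1,y^2$, returning the common value if they agree, and otherwise running $\varphi_m(m)$ to decide which of the two answers to trust (the disagreement case forces $\stringinputm$ not to correspond to $\iota^0$, so $\varphi_m(m)$ halts with value in $\{1,2\}$); correctness of $\Gamma^E$ then contradicts Proposition \ref{prop:EF}. You instead diagonalise directly against $\GammaEO$: you fold the two oracle queries $a_1,a_2$ into a single recursive function $\gamma$ extending the construction \eqref{eq:gammaEF}, take a fixed point $\gamma=\varphi_q$, and in each branch exhibit a concrete pair $(\stringinputq,y^j)$ with $y^j\in\ballQ{\MarkovXi(\stringinputq)}$ (so $\MarkovExitXiOracle$ is single-valued there) on which $\GammaEO$ is wrong. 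The price is the more intricate bookkeeping in the $\gamma(q)=3$ branch, where you must use the $\alpha$-range hypothesis together with \ref{assumption:EFXiOmegaInBallAroundXiIota0S1AndS2} and the negations of both clauses to locate $\Gamma(\stringinputq)$ relative to $\ball{\kappa}{S^0}$, $\ball{\kappa}{S^1\setminus S^0}$, $\ball{\kappa}{S^2\setminus S^0}$ and force one of $a_1,a_2$ to disagree with $\MarkovExitXi(\stringinputq)$ — and I checked that this case analysis closes in every sub-branch (note your clauses also match the computability assumption \ref{assumption:EFMembershipComputableS1S2} verbatim, whereas the paper's \eqref{eq:gammaEF} uses differences of balls; both coincide in the intended application). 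What you gain is a fully constructive, one-shot argument that pinpoints which of the two queried oracle values fails, without invoking Proposition \ref{prop:EF} as a black box; what the paper's reduction buys is modularity (the de-oracalisation step mirrors Proposition \ref{prop:de-oracolisation} and reuses Proposition \ref{prop:EF} wholesale) and a shorter case analysis at $\iota^0$. Your implicit identification of $\operatorname{dist}_\M(\Gamma(\stringinputq),\MarkovXi(\stringinputq))\leq\kappa$ with membership in $\ball{\kappa}{\MarkovXi(\stringinputq)}$ is exactly the same glossing the paper makes, and is harmless for the compact solution sets used in the application.
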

\begin{proof}
Let $\{\stringinputm\}_{m \in \N} \subseteq \MarkovOmega$ be as in Lemma \ref{lem:constructing_phi^m}. Note that this construction relies on assumptions \ref{assumption:EFComputableInput} and \ref{assumption:EFDel1Info}.

By contradiction, let $\GammaEO: \domainOracleXi \to \{0,1\}$ be an algorithm that always halts, and assume that $\GammaEO(\stringinputm,y) \in \MarkovExitXiOracle(\stringinputm,y)$ for every $m \in \N$ and $y \in \ballQ{\MarkovXi(\MarkovOmega)}$.
	
We now construct a recursive algorithm $\Gamma^E:\{\stringinputm \, \vert \, m \in \N\}\subseteq \MarkovOmega \to \{0,1\}$ (without oracle) that always halts and that can solve the exit-flag problem given by $\{\MarkovExitXi, \{\stringinputm\}_{m \in \N}, \{0,1\},\MarkovLambda\}$. This, however, will contradict Proposition \ref{prop:EF}.
	Define the following algorithm $\Gamma^E:\{\stringinputm\, \vert \, m \in \N \} \subseteq \MarkovOmega \to \{0,1\}$:
	{\small
		\begin{equation}\label{eq:Gamma_E}
			\begin{split}
				\Gamma^E(\stringinputm) \coloneq
				\begin{cases}
					\GammaEO(\stringinputm,y^1) & \text{ if } \GammaEO(\stringinputm,y^1) = \GammaEO(\stringinputm,y^2);\\
					\GammaEO(\stringinputm,y^1) & \text{ if } \GammaEO(\stringinputm,y^1) \neq \GammaEO(\stringinputm,y^2) \land \varphi_m(m) = 1; \\
					\GammaEO(\stringinputm,y^2) & \text{ if } \GammaEO(\stringinputm,y^1) \neq \GammaEO(\stringinputm,y^2) \land \varphi_m(m) = 2.
				\end{cases}
			\end{split}	
		\end{equation}
	}	
	Note that the algorithm is well defined. In particular, note that $\varphi_m(m)$ is guaranteed to halt when 
	\[
	\GammaEO(\stringinputm,y^1) \neq \GammaEO(\stringinputm,y^2),
	\]
	and that the value of $\varphi_m(m)$ is necessarily either $1$ or $2$. To see this, fix $m \in \N$ such that $\GammaEO(\stringinputm,y^1) \neq \GammaEO(\stringinputm,y^2)$.
	If $\stringinputm$ corresponds to $\iota^0$, then by assumption \ref{assumption:EFexists_xj_in_intersection} we have $y^j \in \ball{\omega}{\Xi(\iota^0)}$ for $j \in \{1,2\}$. Thus by the definition in \eqref{eq:ExitXiOracleDef} we see that $\MarkovExitXiOracle(\stringinputm,y^1) =\MarkovExitXiOracle(\stringinputm,y^2)
	\in \{0,1\}
	$ (and in particular, both $\MarkovExitXiOracle(\stringinputm,y^1)$ and $\MarkovExitXiOracle(\stringinputm,y^2)$ are single-valued). Recalling the assumption that $\GammaEO(\stringinputm,y) \in \MarkovExitXiOracle(\stringinputm,y)$ for every $(\stringinputm, y)$ gives a contradiction as follows:
	\begin{align*}
		\GammaEO(\stringinputm,y^1) = \MarkovExitXiOracle(\stringinputm,y^1) = \MarkovExitXiOracle(\stringinputm,y^2) = \GammaEO(\stringinputm,y^2),
	\end{align*}
	contradicting the assumption that $\GammaEO(\stringinputm,y^1) \neq \GammaEO(\stringinputm,y^2)$.
	Therefore, $\stringinputm$ does not correspond to $\iota^0$. Consequently, from \eqref{eq:EFInitialIotaCorrespond} we see that $\stringinputm$ corresponds either to $\iota^1_n$ or to $\iota^2_n$ for some $n$. In either case, it must hold that $\varphi_m(m)$ halts with either $\varphi_m(m)=1$ or $\varphi_m(m) =2$. Thus $\G^E(\stringinputm)$ is well defined for every $m \in \N$.
	
	We now claim that the recursive algorithm $\Gamma^E: \{\stringinputm\}_{m \in \N} \subseteq \MarkovOmega \to \{0,1\}$ solves the exit-flag problem $\{\MarkovExitXi, \{\stringinputm\}_{m \in \N}, \{0,1\},\MarkovLambda\}$.
	Indeed, for every $m \in \N$, we have the following possibilities: 
	
	 \emph{Case (1)}: If $\GammaEO(\stringinputm, y^1) \neq \GammaEO(\stringinputm, y^2)$, then, by the reasoning above, either $\varphi_m(m) =1$ or $\varphi_m(m)=2$. Suppose $\varphi_m(m) = j$ with $j \in \{1,2\}$. Then $\stringinputm$ corresponds to $\iota^j_n$ for some $n$ by \eqref{eq:EFInitialIotaCorrespond}. Thus, by assumption \ref{assumption:EFexists_xj_in_intersection}, we have  $y^j \in \ball{\omega}{\Xi(\iota^j_n)}=\ball{\omega}{\MarkovXi(\stringinputm)}$. In particular, by definitions \eqref{eq:ExitXiOracleDef} and \eqref{eq:Gamma_E} we see that 
	$
	\MarkovExitXi(\stringinputm) = \MarkovExitXiOracle(\stringinputm,y^j) = \GammaEO(\stringinputm,y^j)=\Gamma^E(\stringinputm)
	$
	so that $\Gamma^E(\stringinputm)=\MarkovExitXi(\stringinputm)$ as desired.
	
	 \emph{Case (2)}: If instead $\GammaEO(\stringinputm, y^1) = \GammaEO(\stringinputm, y^2)$, then there exists $j \in \{1,2\}$ such that $y^j \in \ball{\omega}{\MarkovXi(\stringinputm)}$. In fact, by \eqref{eq:EFInitialIotaCorrespond} either $\stringinputm$ corresponds to $\iota^j_n$ for some $n\in \N$ and $j \in \{1,2\}$, in which case $y^j \in  \ball{\omega}{\Xi(\iota^j_n)} = \ball{\omega}{\MarkovXi(\stringinputm})$ by assumption \ref{assumption:EFexists_xj_in_intersection}; or $\stringinputm$ corresponds to $\iota^0$, in which case $y^j \in  \ball{\omega}{\Xi(\iota^0)} = \ball{\omega}{\MarkovXi(\stringinputm)}$ for every $j \in \{1,2\}$, again by assumption \ref{assumption:EFexists_xj_in_intersection}. Thus, from definition \eqref{eq:ExitXiOracleDef} we see that for the $j$ above
	\[
	\MarkovExitXi(\stringinputm) = \MarkovExitXiOracle(\stringinputm,y^j) = \GammaEO(\stringinputm, y^1) = \Gamma^E(\stringinputm).
	\]
	Thus $\Gamma^E(\stringinputm)=\MarkovExitXi(\stringinputm)$ as desired.
	
	Therefore, the recursive algorithm $\Gamma^E: \{\stringinputm\}_{m \in \N} \subseteq \MarkovOmega \to \{0,1\}$ solves the exit-flag problem $\{\MarkovExitXi, \{\stringinputm\}_{m \in \N}, \{0,1\},\MarkovLambda\}$. As noted earlier, this contradicts
Proposition \ref{prop:EF}, whose assumptions \ref{assumption:EFComputableInput}-\ref{assumption:EFConstantIotaj} are all assumed to hold. This completes the proof of Proposition \ref{prop:EF_oracle}.
\end{proof}

\subsection{Minimisers of Convex Optimisation Problems} \label{sec:geometry}

In this section we explicitly construct the input set $\Omega_{\no,\nt}$ discussed in \S \ref{section:setup_CRP} used in Theorems \ref{thm:crp_1_2}, \ref{thm:crp_3_4} and \ref{thm:crp_5}. This set will depend on which computational problem we are considering. For this reason, we will also discuss various results related to the solutions of such optimisation problems. These results are taken from \cite{comp} - we repeat these simple proofs here for the sake of completeness.

\subsubsection{The sets $\Omega_{\no,\nt}$ and their related computational problems}\label{sec:geometry_of_inputs}

Let $\no \geq 2$, $\nt \geq 1$. Fix $\kappa =10^{-1}$ and define for $u_1,u_2 \in \Q_{>0}$:
\begin{equation}\label{eq:y_and_A_LP}
\begin{split}
		U(u_1,u_2):=U(u_1,u_2,\no,\nt) &= \begin{pmatrix} u_1 & u_2 \end{pmatrix} \oplus \begin{pmatrix} I_{\nt-1} & 0_{\nt-1 \times \no-\nt-2} \end{pmatrix},\\
		b \coloneq b(\nt) &= 2 \cdot \kappa \cdot e_1,
\end{split}
\end{equation}
where $\{e_1,\dotsc, e_\nt\}$ denotes the canonical basis of $\R^\nt$.
For $\theta \in [1/8,1/4] \cap \Q$ let
\begin{equation*}\label{eq:L}
	\mathcal{L}_\theta:=\left\{ (u_1,u_2) \in [\theta,1/2]^2 \, \vert \, \exists \text{ at most one } i \text{ with } u_i \neq 1/2\right\} \cap \Q^2.
\end{equation*}	
Define the sets of inputs
\begin{align}\label{eq:set_inputs}
	\Omega_{\no,\nt} = \Omega_{\no,\nt}(\theta)& \coloneq \{(b(\nt), U(u_1,u_2,\no,\nt) \ | \ (u_1,u_2) \in \mathcal{L}_\theta)\}.
\end{align}

\begin{remark}
	By varying $\theta \in [1/8,1/4] \cap \Q$ in \eqref{eq:set_inputs}, we obtain infinitely many collections of inputs $\Omega_{\no,\nt}(\theta)$ for which the results of the CRP Theorem hold.
\end{remark}

Let $\Lambda = \{g_i\}_{i = 1}^\nt \cup \{h_{i,j}\}_{i = 1, j = 1}^{i = \nt, j= \no}$ be given by the entry-wise component functions $g_{i}(y,A) \coloneq y_i$ and $h_{i,j}(y,A)= A_{i,j}$ for every $i,j$ and $(y, A) \in \Q^\nt \times \Q^{\nt \times \no}$. We denote $k \coloneq |\Lambda| = \nt +\nt\no$ and rename and re-enumerate the functions so that $\Lambda = \{f_i\}_{i = 1,\dotsc, k}$ with $f_1 \coloneq h_{1,1}$ and $f_2 \coloneq h_{1,2}$.

\subsubsection{Linear Programming}

 Let $c=\ones_{\no}$ be the $\no$-dimensional vector of ones. Given $A \in \mathbb{R}^{\nt \times \no}$, $y \in \mathbb{R}^\nt$, consider the Linear Programming (LP) mapping $\xilp \colon \R^{\nt} \times \R^{\nt \times \no} \rightrightarrows \R^\no$ given by
\begin{align*}\label{eq:xilp}
	\xilp(y,A) \coloneq \argmin \{\langle x , c \rangle \ | \ x \in \R^\no, \ Ax = y, \ x \geq 0\}
\end{align*}
We now state a simple lemma that relates these inputs to the corresponding solutions of the LP problem. Its proof is taken from \cite{comp}. 
\begin{lemma}[Linear Programming] \label{lemma:ProblemBasicExampleLP}
	Let $c=\ones_{\no}$ be the $\no$-dimensional vector of ones. Then the solution  $\xilp$ to the linear programming problem satisfies
	\begin{equation*}\label{eq:lpsolnsbasic}
		\xilp(b,U(u_1,u_2)) =\begin{cases}
			\left\{\frac{2\kappa}{u_1}e_1\right\} & \text{ if } u_1 > u_2 \\
			\left\{\frac{2\kappa}{u_2}e_2\right\}  &\text{ if } u_2 > u_1\\
			\left\{\frac{2\kappa}{u_1}(te_1 + (1-t)e_2) \, \vert \, t \in [0,1]\right\} & \text{ if } u_1 = u_2
		\end{cases}.
	\end{equation*}
\end{lemma}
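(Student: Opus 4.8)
The statement is an explicit computation of the argmin set for a linear program whose feasible region and cost vector have a very simple structure. The plan is to exploit the block-diagonal form $U(u_1,u_2) = \begin{pmatrix} u_1 & u_2\end{pmatrix} \oplus \begin{pmatrix} I_{\nt-1} & 0 \end{pmatrix}$ together with $b = 2\kappa e_1$, which decouples the constraint $Ux = b$ into one equation $u_1 x_1 + u_2 x_2 = 2\kappa$ in the first two coordinates and the equations $x_3 = x_4 = \dots = x_{\nt+1} = 0$ in the next $\nt-1$ coordinates (with the remaining coordinates $x_{\nt+2},\dots,x_\no$ free only through the nonnegativity constraint). Since the cost is $c = \ones_\no$, minimising $\langle x, c\rangle = \sum_i x_i$ over $x \geq 0$ forces every coordinate not pinned down by an equality to be $0$; in particular $x_i = 0$ for $i \geq 3$. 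So the problem reduces to minimising $x_1 + x_2$ over $\{(x_1,x_2) \geq 0 : u_1 x_1 + u_2 x_2 = 2\kappa\}$.

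First I would record this reduction carefully: show that any feasible point has $x_i \ge 0$ for all $i$ and $x_i = 0$ for $3 \le i \le \nt+1$ (forced by the identity block) and that an optimal point must have $x_i = 0$ for $i \ge \nt+2$ as well (since these coordinates appear in the objective with positive coefficient and in no equality constraint, decreasing them to $0$ stays feasible and lowers the cost). This leaves a two-variable LP: minimise $x_1 + x_2$ subject to $u_1 x_1 + u_2 x_2 = 2\kappa$, $x_1, x_2 \ge 0$. The feasible set of this reduced problem is the line segment with endpoints $(2\kappa/u_1, 0)$ and $(0, 2\kappa/u_2)$.

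Second, I would solve the two-variable LP by a direct parametrisation. Writing a feasible point as $x_1 = \frac{2\kappa}{u_1} t$, and solving for $x_2$ from the constraint... actually it is cleaner to parametrise the segment as $s\cdot(2\kappa/u_1, 0) + (1-s)\cdot(0,2\kappa/u_2)$ for $s \in [0,1]$, giving objective value $\frac{2\kappa}{u_1} s + \frac{2\kappa}{u_2}(1-s) = \frac{2\kappa}{u_2} + 2\kappa\left(\frac{1}{u_1} - \frac{1}{u_2}\right) s$. This is affine in $s$, so: if $u_1 > u_2$ then $1/u_1 < 1/u_2$ and the minimum is at $s = 1$, i.e.\ the unique minimiser is $\frac{2\kappa}{u_1} e_1$; if $u_2 > u_1$ the minimum is at $s = 0$, the unique minimiser $\frac{2\kappa}{u_2}e_2$; and if $u_1 = u_2$ the objective is constant on the whole segment, so the minimiser set is the entire segment $\{\frac{2\kappa}{u_1}(t e_1 + (1-t)e_2) : t \in [0,1]\}$, which matches the claimed formula. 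Finally I would lift these $2$-dimensional minimisers back to $\R^\no$ by padding with zeros, which is exactly what the stated formula does.

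**Main obstacle.** There is no deep obstacle here — the result is elementary linear programming. The only things requiring a little care are: (i) making the decoupling of the constraint matrix precise, in particular checking the dimension bookkeeping in the definition $U(u_1,u_2,\no,\nt)$ (the block $\begin{pmatrix} I_{\nt-1} & 0_{\nt-1 \times \no-\nt-2}\end{pmatrix}$ and how its columns sit inside the $\no$ columns of $U$), so that one correctly identifies which coordinates are pinned to $0$ by equalities versus which are merely constrained to be nonnegative; and (ii) the argument that an optimal solution must zero out all free nonnegative coordinates — this is immediate from nonnegativity of the cost coefficients but should be stated. Everything else is the one-line affine-function-on-a-segment argument. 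I would also note in passing that $u_1, u_2 > 0$ (they lie in $[\theta, 1/2]$ with $\theta \ge 1/8$) guarantees the feasible segment is well-defined and bounded, and that $\xilp(b, U(u_1,u_2))$ is always nonempty.
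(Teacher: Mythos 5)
Your proof is correct: the reduction via the block structure of $U$ (row $1$ gives $u_1x_1+u_2x_2=2\kappa$, rows $2,\dots,\nt$ pin $x_3=\dots=x_{\nt+1}=0$, the remaining coordinates are killed at optimality because they enter the objective with coefficient $+1$ and no equality constraint), followed by parametrising the feasible segment of the two-variable problem and minimising an affine function of the parameter, yields exactly the stated argmin sets, including the whole segment when $u_1=u_2$. This is a genuinely different route from the paper's. The paper does not eliminate variables; instead, for an arbitrary feasible $x$ it writes the inequality chain $\langle c,x\rangle \geq x_1+x_2 \geq (u_1x_1+u_2x_2)/(u_1\vee u_2) = 2\kappa/(u_1\vee u_2)$, checks that the claimed minimisers are feasible and attain this lower bound, and then identifies $\xilp(b,U(u_1,u_2))$ as precisely the feasible points at which every inequality in the chain is an equality (the first equality forces $x_3=\dots=x_{\no}=0$, the second forces $x_2=0$ when $u_1>u_2$, $x_1=0$ when $u_1<u_2$, and is automatic when $u_1=u_2$). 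Your elimination-plus-parametrisation argument is arguably more self-contained for LP and makes the uniqueness versus non-uniqueness dichotomy visible as the sign of the slope of an affine function on $[0,1]$; the paper's bound-and-equality-case template has the advantage that it transfers almost verbatim to the companion Basis Pursuit and LASSO lemmas, where an explicit variable elimination is less convenient. When writing yours up, do state the two bookkeeping points you flagged (which coordinates the identity block pins, including the degenerate case $\nt=1$ where that block is empty, and the one-line argument that an optimiser zeroes the unconstrained coordinates), and note $u_1,u_2>0$ so the segment endpoints $\frac{2\kappa}{u_1}e_1$, $\frac{2\kappa}{u_2}e_2$ are well defined; with those details your proof is complete.
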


\begin{proof}[Proof of Lemma \ref{lemma:ProblemBasicExampleLP}]
	Denote $U \coloneq U(u_1,u_2)$. For any feasible $x$ (that is, $x$ with $x \geq 0$ and $Ux=b$), and recalling that $u_1$ and $u_2$ are assumed to be positive, we have 
	\begin{equation}\label{eq:InequalitiesForLPBasic1}
		\langle c,x \rangle \geq x_1 + x_2  \geq  \frac{u_1 x_1 + u_2 x_2}{u_1 \vee u_2} = \frac{2\kappa}{u_1 \vee u_2}
	\end{equation}
	which implies that $\min \{\langle c,x \rangle  \, \vert \, x\geq 0, Ux = b \} \geq 2\kappa/(u_1 \vee u_2)\,$.
	Furthermore, all claimed minimisers $x$ in the statement of the lemma are feasible for the LP problem and satisfy \eqref{eq:InequalitiesForLPBasic1} as an equality. We can thus deduce that they are indeed minimisers and that
	$\min \{\langle c,x \rangle  \, \vert \, x\geq 0, U x = b \} = 2\kappa/(u_1 \vee u_2),$ and that the solutions to $\Xi_{\LP}(b,U)$ are precisely those vectors $x$ for which every inequality in \eqref{eq:InequalitiesForLPBasic1} is obeyed as an equality. More precisely, the following conditions must hold:
		\begin{enumerate}[leftmargin = 6mm]
		\item If the first inequality is an equality then $x_3 = x_4 = \dotsb = x_\no = 0$.
		\item If the second inequality is an equality then $x_2 = 0$ in the case $u_1 > u_2$ and $x_1 = 0$ in the case $u_1 < u_2$. In the case $u_1 = u_2$ this is always an equality.
	\end{enumerate}
	It is straightforward to check that the $x$ which satisfy all these conditions as well as the feasibility conditions $x \geq 0$ and $U x = b$ are precisely the claimed minimisers in the statement of the lemma.
\end{proof}

\subsubsection{Basis Pursuit}

Let $\kappa \in \Q_{\geq 0}$ and $\eta \in \Q$ be such that $0 \leq \eta \leq 2\kappa$. The Basis Pursuit solution mapping is
\begin{align*}
	\Xi_{\BP}(y,A) & \coloneq \argmin_{x \in \mathbb{R}^{\no}}  \|x\|_1, \ \text{ such that }\|Ax-y\|_{2}\leq \eta.
\end{align*}
As with Lemma \ref{lemma:ProblemBasicExampleLP}, this easy lemma about BP solutions and its proof are taken from \cite{comp}.
\begin{lemma}[Basis Pursuit]\label{lemma:ProblemBasicExampleBPDNL1}
	Assuming that $2\kappa \geq \eta$, we have
	\begin{equation*}\label{eq:bpdnsolnsbasic}
		\Xi_{\BP}(b,U(u_1,u_2)) = 
		\begin{cases} 
		\left\{\frac{2\kappa - \eta}{u_1} e_1\right\} & \text{if } u_1 > u_2\\
			\left\{\frac{2\kappa - \eta}{u_2} e_2\right\} & \text{if } u_1 < u_2 \\
			\left\{\frac{2\kappa - \eta}{u_1} \left( t  e_1 + (1-t) e_2 \right)\, \vert \, t \in [0,1]\right\} & \text{if } u_1 = u_2
		\end{cases}.
	\end{equation*}
\end{lemma}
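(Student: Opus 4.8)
The plan is to follow the proof of Lemma \ref{lemma:ProblemBasicExampleLP}, adjusting it to the relaxed feasibility constraint $\|Ax-y\|_2\le\eta$. Write $U\coloneq U(u_1,u_2)$ and recall $b=2\kappa e_1$ and $u_1,u_2>0$. First I would use the block structure of $U$ together with $b=2\kappa e_1$ to record that $\|Ux-b\|_2^2 = (u_1x_1+u_2x_2-2\kappa)^2 + \sum_{i=3}^{\no}\delta_i x_i^2$ for suitable $\delta_i\in\{0,1\}$; consequently, given any $x$ with $\|Ux-b\|_2\le\eta$, replacing $x_3=\dots=x_{\no}=0$ keeps the constraint satisfied and does not increase $\|x\|_1$. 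This reduces the problem to minimising $|x_1|+|x_2|$ subject to $|u_1x_1+u_2x_2-2\kappa|\le\eta$, i.e. to $2\kappa-\eta\le u_1x_1+u_2x_2\le 2\kappa+\eta$.

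Next I would establish the lower bound. For any feasible pair $(x_1,x_2)$ we have $u_1x_1+u_2x_2\ge 2\kappa-\eta\ge 0$, where the nonnegativity uses the standing hypothesis $2\kappa\ge\eta$, together with
\[
u_1x_1+u_2x_2 \le u_1|x_1|+u_2|x_2| \le (u_1\vee u_2)\bigl(|x_1|+|x_2|\bigr).
\]
Combining these gives $\|x\|_1\ge|x_1|+|x_2|\ge (2\kappa-\eta)/(u_1\vee u_2)$. Each vector $x$ listed in the statement has $x\ge 0$, satisfies $u_1x_1+u_2x_2=2\kappa-\eta$ (hence $\|Ux-b\|_2=\eta\le\eta$), and has $\|x\|_1=(2\kappa-\eta)/(u_1\vee u_2)$; so all of them are minimisers and the minimum of $\|x\|_1$ over the feasible set equals $(2\kappa-\eta)/(u_1\vee u_2)$.

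Finally I would characterise \emph{all} minimisers by recording when the inequalities in the chain above are simultaneously tight: this forces $x_i=0$ for $i\ge 3$, then $x_1,x_2\ge 0$, $u_1x_1+u_2x_2=2\kappa-\eta$, and $(u_1\vee u_2-u_1)x_1=(u_1\vee u_2-u_2)x_2=0$. If $u_1>u_2$ the last condition gives $x_2=0$ and hence $x_1=(2\kappa-\eta)/u_1$; if $u_1<u_2$, symmetrically $x=\tfrac{2\kappa-\eta}{u_2}e_2$; and if $u_1=u_2$ the surviving constraints are just $x_1,x_2\ge 0$ and $x_1+x_2=(2\kappa-\eta)/u_1$, which is precisely the stated segment $\{\tfrac{2\kappa-\eta}{u_1}(te_1+(1-t)e_2):t\in[0,1]\}$. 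This exhausts the three cases of the lemma.

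The computations are routine; the only point requiring care is the use of $2\kappa\ge\eta$, which ensures $2\kappa-\eta\ge 0$ so that the lower bound on $\|x\|_1$ is genuine and is attained exactly on the near face $u_1x_1+u_2x_2=2\kappa-\eta$ of the feasible band rather than somewhere in its interior (if instead $\eta>2\kappa$ then $x=0$ would be feasible and the minimiser set would collapse to $\{0\}$).
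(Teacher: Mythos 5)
Your proof is correct and follows essentially the same route as the paper: the same chain of inequalities $(u_1\vee u_2)\|x\|_1 \geq u_1|x_1|+u_2|x_2| \geq u_1x_1+u_2x_2 \geq 2\kappa-\eta$, verification that the claimed vectors are feasible and attain equality, and characterisation of all minimisers by tightness of each inequality. The only (harmless) difference is that you make the reduction $x_3=\dots=x_{\no}=0$ and the structure of $\|Ux-b\|_2$ explicit up front, whereas the paper absorbs this into the equality conditions.
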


\begin{proof}[Proof of Lemma \ref{lemma:ProblemBasicExampleBPDNL1}] 

		Denote $U \coloneq U(u_1,u_2)$. From the definition of $U$ and $b$, condition $\|U x-b \|_2 \leq \eta$ becomes $|u_1 x_1 + u_2 x_2 - 2\kappa| \leq \eta$. Thus we have the following chain of inequalities:
		\begin{equation}\label{eq:InequalitiesForBPDNBasic}
			(u_1 \vee u_2)\|x\|_1 \geq (u_1 \vee u_2) (|x_1| + |x_2|) \geq u_1|x_1| + u_2 |x_2| \geq u_1 x_1 + u_2 x_2 \geq 2\kappa - \eta,
		\end{equation}
		which implies that $\min \{\|x\|_1 \, \vert \, \|U x-b\|_2 \leq \eta \} \geq ( 2\kappa - \eta)/(u_1 \vee u_2)$. 
		On the other hand, all claimed minimisers $x$ defined in the statement of the lemma are feasible for the BP problem and satisfy \eqref{eq:InequalitiesForBPDNBasic} as an equality. Therefore we can conclude that they are indeed minimisers and that
		$\min \{\|x\|_1 \, \vert \, \|U x- b\|_2 \leq \eta \} = ( 2\kappa - \eta)/(u_1 \vee u_2)$.
		From this, we deduce that the solutions to $\xibp(b,U)$ are exactly the vectors $x$ for which every inequality in \eqref{eq:InequalitiesForBPDNBasic} is obeyed as an equality. More specifically, all the following conditions must hold:
		\begin{enumerate}[leftmargin = 6mm]
			\item If the first inequality is an equality then $x_3 = x_4 = \dotsb = x_\no = 0$.
			\item If the second inequality is an equality then $x_2 = 0$ in the case $u_1 > u_2$ and $x_1 = 0$ in the case $u_1 < u_2$. In the case $u_1 = u_2$, this is always an equality.
			\item If the third inequality is an equality then  $x_1$ and $x_2$ are non-negative.
			\item If the final inequality is an equality then $u_1 x_1 + u_2 x_2 = 2\kappa - \eta$.
		\end{enumerate}
		It is a straightforward task to check that the $x$ which fulfil these conditions are precisely the claimed minimisers in the statement of the lemma.
\end{proof}

\subsubsection{LASSO}

Let $\kappa \in \Q_{\geq 0}$ and $\lambda \in \Q$ be such that $0 < \lambda \leq 2\kappa$. The LASSO solution mapping is
\begin{align*}
	\Xi_{\LASSO}(y,A) & \coloneqq \argmin_{x \in \mathbb{R}^{\no}}  \lambda\|x\|_1+\|Ax-y\|_2^2.
\end{align*}
As with Lemmas \ref{lemma:ProblemBasicExampleLP} and \ref{lemma:ProblemBasicExampleBPDNL1}, the following lemma regarding the solutions to the LASSO problem, as well as its proof, are taken from \cite{comp}. 
\begin{lemma}[LASSO]\label{lemma:ProblemBasicExampleULASSO}
	Assuming that $\max\{u_1,u_2\} \geq \lambda/(4\kappa)$, the solution $\Xi_{\LASSO}$ to the LASSO problem  satisfies
	\begin{equation*}\label{eq:unequalAlphaBetaLASSO}
		\Xi_{\LASSO}(b,U(u_1,u_2))  =\begin{cases}
			\left\{\frac{4u_1 \kappa-\lambda}{2u_1^2}e_1\right\} & \text{ if } u_1 > u_2 \\
			\left\{\frac{4a_2 \kappa-\lambda}{2u_2^2}e_2\right\}  &\text{ if } u_1 < u_2\\
			\left\{\frac{4u_1 \kappa-\lambda}{2u_1^2}(te_1 + (1-t)e_2) \, \vert \, t \in [0,1]\right\} & \text{ if } u_1 =u_2
		\end{cases}.
	\end{equation*}
\end{lemma}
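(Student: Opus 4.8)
The plan is to mimic the proofs of Lemmas \ref{lemma:ProblemBasicExampleLP} and \ref{lemma:ProblemBasicExampleBPDNL1}: reduce the unconstrained LASSO objective to a function of the two variables $x_1,x_2$, bound it below by a one-dimensional function of the scalar $s \coloneq u_1x_1+u_2x_2$, minimise that function using the hypothesis $\max\{u_1,u_2\}\ge \lambda/(4\kappa)$, and then read off the minimiser set by tracking equality in the elementary inequalities used.

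First I would unpack the structure of $U \coloneq U(u_1,u_2)$ and $b = 2\kappa e_1$: the first row of $U$ contributes the term $(u_1x_1+u_2x_2-2\kappa)^2$ to $\|Ux-b\|_2^2$, while the remaining rows of $U$ pick out distinct coordinates $x_j$ with $j\ge 3$ (each with coefficient $1$), so that every coordinate $x_j$ with $j\ge 3$ enters the LASSO objective only through a separable term of the form $\lambda|x_j|$ or $\lambda|x_j|+x_j^2$. Since $\lambda>0$, each such term is uniquely minimised at $x_j=0$, hence every minimiser has $x_j=0$ for $j\ge 3$ and it suffices to minimise $F(x_1,x_2)\coloneq \lambda(|x_1|+|x_2|)+(u_1x_1+u_2x_2-2\kappa)^2$. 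By the symmetry of $F$ under swapping $(x_1,u_1)\leftrightarrow(x_2,u_2)$ I may assume $u_1\ge u_2$, so that $u_1\ge \lambda/(4\kappa)$; then from $|u_1x_1+u_2x_2|\le u_1(|x_1|+|x_2|)$ one gets $F(x_1,x_2)\ge g(s)$, where $s\coloneq u_1x_1+u_2x_2$ and $g(s)\coloneq \lambda|s|/u_1+(s-2\kappa)^2$.

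Next I would minimise $g$ over $\R$. On $(-\infty,0]$ both $-\lambda s/u_1$ and $(s-2\kappa)^2$ are strictly decreasing, so $g$ restricted there is minimised at $s=0$; on $[0,\infty)$ the smooth convex function $\lambda s/u_1+(s-2\kappa)^2$ has its unique critical point at $s^\ast\coloneq 2\kappa-\lambda/(2u_1)$, which lies in $[0,2\kappa]$ precisely because $u_1\ge\lambda/(4\kappa)$ and $\lambda,u_1>0$. Comparing the two candidate values, $g(s^\ast)=2\lambda\kappa/u_1-\lambda^2/(4u_1^2)\le 4\kappa^2=g(0)$, since after clearing denominators this is the inequality $(4\kappa u_1-\lambda)^2\ge 0$. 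Hence $\min_{s\in\R}g=g(s^\ast)$, attained only at $s=s^\ast$. Finally, any minimiser $x$ of $F$ must satisfy $u_1x_1+u_2x_2=s^\ast$ together with $|x_1|+|x_2|=s^\ast/u_1$; if $u_1>u_2$ this forces $x_2=0$ and $x_1=s^\ast/u_1=(4u_1\kappa-\lambda)/(2u_1^2)$, while if $u_1=u_2$ it forces $x_1,x_2\ge 0$ with $x_1+x_2=s^\ast/u_1$, i.e. $x=\tfrac{4u_1\kappa-\lambda}{2u_1^2}(te_1+(1-t)e_2)$ with $t\in[0,1]$; a direct check then shows these vectors do attain $g(s^\ast)$, so they are exactly the minimisers, which is the claim.

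I expect the only genuinely delicate point to be the one-variable minimisation: the hypothesis $\max\{u_1,u_2\}\ge \lambda/(4\kappa)$ is used precisely to guarantee $s^\ast\ge 0$, so that the interior critical point is admissible, and the verification that this interior value beats the boundary value $g(0)$ reduces to the nonnegativity of the perfect square $(4\kappa u_1-\lambda)^2$. Everything else — the reduction to two variables and the equality analysis — is routine bookkeeping exactly parallel to the LP and BP lemmas.
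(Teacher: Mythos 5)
Your proof is correct, but it takes a genuinely different route from the paper's. The paper proves the LASSO lemma with a dual-certificate argument: it guesses $x^{opt}$, sets $p = Ux^{opt}-b$, verifies that $-\tfrac{2}{\lambda}U^{*}p$ lies in the subdifferential of $\|\cdot\|_1$ at $x^{opt}$, and then a chain of inequalities obtained by expanding $\|(Ux-b)-p\|_2^2$ shows optimality and characterises equality, from which the minimiser set is read off. You instead stay entirely elementary, in the spirit of the paper's LP and BP proofs: separate off the coordinates $x_j$, $j\ge 3$ (each appearing only through $\lambda|x_j|$ or $\lambda|x_j|+x_j^2$, hence zero at any minimiser), reduce to the scalar $s=u_1x_1+u_2x_2$ via $|x_1|+|x_2|\ge |s|/(u_1\vee u_2)$, minimise the one-variable function $g(s)=\lambda|s|/u_1+(s-2\kappa)^2$ by calculus, and track equality cases. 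Your computations check out: $s^{*}=2\kappa-\lambda/(2u_1)\in[0,2\kappa]$ is exactly where the hypothesis $\max\{u_1,u_2\}\ge \lambda/(4\kappa)$ enters, $g(s^{*})\le g(0)$ reduces to $(4\kappa u_1-\lambda)^2\ge 0$ (with strictness unless $s^{*}=0$, so $s^{*}$ is the unique minimiser of $g$ in every case), $s^{*}/u_1=(4u_1\kappa-\lambda)/(2u_1^2)$ matches the claimed value, and the equality analysis of $u_1x_1+u_2x_2=s^{*}$, $|x_1|+|x_2|=s^{*}/u_1$ yields exactly the stated sets in the cases $u_1>u_2$ and $u_1=u_2$ (and you correctly note that one must also verify the candidates attain $g(s^{*})$ so that $\min F=g(s^{*})$). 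What each approach buys: yours avoids convex-analysis machinery altogether, makes the role of the hypothesis transparent, and unifies the three lemmas under one elementary template; the paper's subgradient certificate is the standard optimality-condition argument, which does not depend on the special separable structure of $U$ and therefore generalises more readily beyond this specific family of inputs.
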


\begin{proof}[Proof of Lemma \ref{lemma:ProblemBasicExampleULASSO}]
		Denote $U \coloneq U(u_1,u_2)$. Consider the following candidate solutions, as in the statement of the lemma, given by $x^{opt}=\frac{4u_1 \kappa -\lambda}{2u_1^2} e_1$ if $u_1\geq u_2$, and $x^{opt}=\frac{4u_2 \kappa-\lambda}{2u_2^2} e_2$ if $u_2>u_1$. We define the dual vector $p\coloneq U x^{opt}-b=-\frac{\lambda}{2(u_1\vee u_2)}e_1\in \R^m$, which satisfies
		\begin{equation*}
			-\frac{2}{\lambda}U^* p= \left(1\wedge \frac{u_1}{u_2}\right)e_1 + \left(1\wedge \frac{u_2}{u_1}\right)e_2\in \partial \|\cdot\|_{1}(x^{opt})
		\end{equation*}
		where $\partial$ denotes the subdifferential of a function. Therefore, for every $x\in\R^N=\no$ we have
		\begin{align}
			\frac{1}{2}\|U x -b\|_{2}^2 + \frac{\lambda}{2}\|x\|_{1}&\geq \langle U x -b,p\rangle  - \frac{1}{2}\|p\|_{2}^2 + \frac{\lambda}{2} \|x\|_{1}\notag\\
			&=   \langle U x^{opt} - b,p \rangle  - \frac{1}{2}\|p\|_{2}^2+   \frac{\lambda}{2}\left(\|x\|_{1} - \langle  x -x^{opt} , -\frac{2}{\lambda}U^*p\rangle   \right)\notag\\
			&\geq \frac{1}{2}\|U x^{opt} -b\|_{2}^2+  \frac{\lambda}{2}\|x^{opt}\|_1\notag.
		\end{align} 
		where the first inequality follows by expanding $\|(Ux-b)-p\|^2_2$, and the last inequality follows by expanding the square $\|(Ux^{opt}-b)-p\|^2$ and the fact that $-\frac{2}{\lambda}U^* p \in \partial \|\cdot\|_{1}(x^{opt})$. 
		We deduce that $x^{opt}$ is a minimiser, and that any vector $x$ is a minimiser if and only if this chain of inequalities holds with equality. This is the case if and only if $U x - b = p =U x^{opt}-b$ and $\|x\|_{1} - \langle  x -x^{opt},-\frac{2}{\lambda}U^*p\rangle = \|x^{opt}\|_{1}  $. From the definition of $U$ and $b$, this is equivalent to the conditions $u_1 x_1 + u_2 x_2 = u_1 x^{opt}_1 + u_2 x^{opt}_2 = 2\kappa-\frac{\lambda}{2(u_1\vee u_2 )}$,  $x_j = x_j^{opt}$ for $j =3,\dotsc,N$, and
		\begin{equation*}
			|x_1| - (x_1-x_1^{opt})\left(1\wedge \frac{u_1}{u_2}\right)=|x^{opt}_1|,\quad \text{and}\quad |x_2| - (x_2-x_2^{opt})\left(1\wedge \frac{u_2}{u_1}\right)=|x^{opt}_2|.
		\end{equation*}
		It is a straightforward to task to ensure that the vectors $x$ that obey all these conditions are precisely those as in the statement of the lemma.
\end{proof}

\subsubsection{The collection of problems}\label{sec:collection_problems} The proof of the CRP results will rely on \S \ref{section:proof_4} (namely, Lemma \ref{lem:constructing_phi^m}, Propositions \ref{prop:de-oracolisation}, \ref{prop:DrivingNegativeProposition}, \ref{prop:EF}, \ref{prop:EF_oracle}, and assumptions \ref{assumption:EFComputableInput}-\ref{assumption:EFexists_xj_in_intersection} in \S  \ref{sec:assumptions_for_exit_flag}). As a consequence, we will now construct some of the objects mentioned in the assumptions of these propositions.  In particular, we proceed to construct the sequences of inputs $\{\iota^1_n\}_{n \in \N}\subseteq \Omega_{\no,\nt}$ and $\{\iota^2_n\}_{n \in \N} \subseteq \Omega_{\no,\nt}$, we construct the input $\iota^0 \in \Omega_{\no, \nt}$, we explicitly describe the sets $S^0$, $S^1$ and $S^2$ and the vectors $y^1$ and $y^2$ that appear in the aforementioned assumptions, and we construct the algorithms $\hat\Gamma^j_i$ (for $j \in \{0,1,2\}$ and $i = 1,\dotsc, k$) that return approximations to the coordinates of $\{\iota^1_n\}_{n \in \N}$, $\{\iota^2_n\}_{n \in \N}$ and $\iota^0$.

Recall the discussion and setup in \S \ref{section:setup_CRP}. For $\no \geq 2$, $\nt \geq 1$, let $k = \nt+\nt\no$. For any $\theta \in [1/8,1/4] \cap \Q$, consider the collection of inputs $\Omega_{\no,\nt}=\Omega_{\no,\nt}(\theta)$ as in \eqref{eq:set_inputs}. 
For $n \geq 1$, we define
\begin{align}\label{eq:iota}
	\iota^1_n & \coloneq \left(b, U\left(\frac{1}{2}, \frac{1}{2} - \frac{1}{4^n}\right)\right); \,
	\iota^2_n \coloneq \left(b, U\left(\frac{1}{2}- \frac{1}{4^n}, \frac{1}{2} \right)\right); \, 
	\iota^0  \coloneq \left(b, U\left(\frac{1}{2}, \frac{1}{2} \right)\right).
\end{align}

Fix $\kappa = 10^{-1}$ and let $\alpha, \omega \in \Q$ be such that $0 \leq \alpha \leq \omega < \kappa$.
Then, depending on the computational problem under consideration, we make the following definitions.

 \emph{Case (LP):} For the computational problem $\{\Xi_{\LP}, \Omega_{\no,\nt}(\theta), \M, \Lambda\}$ of Linear Programming as defined in \eqref{eq:LP},  define:
\begin{equation}
\begin{split}
	S^1 & \coloneq \{ 4\kappa e_1 \}; \quad
	S^2  \coloneq \{ 4\kappa e_2 \}; \quad
	S^0 \coloneq \{4\kappa(te_1 +(1-t)e_2) \ | \ t \in [0,1]\} \\
	y^1 & = 4\kappa e_1; \quad 
	y^2 = 4\kappa e_2. \quad \label{eq:y^j_lp}
\end{split}
\end{equation}

 \emph{Case (BP):}  For the computational problem $\{\Xi_{\BP}, \Omega_{\no,\nt}(\theta), \M, \Lambda\}$ of Basis Pursuit as defined in  \eqref{eq:BP}, and recalling from \S \ref{section:setup_CRP} that $\eta \in \Q$ satisfies$\kappa < \eta \leq 2\kappa$, define: 
\begin{equation}
\begin{split}
	S^1 & \coloneq \left\{ 2(2\kappa-\eta) e_1 \right\}; \quad
	S^2  \coloneq \left\{ 2(2\kappa-\eta) e_2 \right\};\quad
	S^0 \coloneq \left\{ 2(2\kappa-\eta)(te_1 +(1-t)e_2) \ | \ t \in [0,1] \right\} \\
	y^1 & = 2(2\kappa-\eta) e_1; \quad 
	y^2 = 2(2\kappa-\eta)  e_2. \quad \label{eq:y^j_bp}
\end{split}
\end{equation}

\emph{Case (LASSO):} For the computational problem $\{\Xi_{\LASSO}, \Omega_{\no,\nt}(\theta), \M, \Lambda\}$ of LASSO as defined in \eqref{eq:LASSO}, and recalling from \S \ref{section:setup_CRP} that $\lambda \in \Q$ satisfies $\kappa < \lambda \leq 2\kappa$, define:
\begin{equation}
\begin{split}
	S^1 & \coloneq \left\{ 2(2\kappa-\lambda) e_1 \right\}; \quad
	S^2  \coloneq \left\{ 2(2\kappa-\lambda) e_2 \right\};\quad
	S^0 \coloneq \left\{ 2(2\kappa-\lambda)(te_1 +(1-t)e_2) \ | \ t \in [0,1] \right\} \\
	y^1 & = 2(2\kappa-\lambda) e_1; \quad 
	y^2 = 2(2\kappa-\lambda)  e_2. \quad \label{eq:y^j_lasso}
\end{split}
\end{equation}

In the following Lemma, we explicitly verify the validity of various assumptions from previous Lemmas and Propositions involving the mathematical objects defined above. In particular, it will follow that Lemma \ref{lem:constructing_phi^m} and Propositions \ref{prop:DrivingNegativeProposition}, \ref{prop:EF} and \ref{prop:EF_oracle} can be applied.

\begin{lemma}\label{lemma:assumptions_verification}
With the definitions and assumptions provided in \S \ref{sec:collection_problems}, consider the inputs $\{\iota^1_n\}_{n \in \N}$,$\{\iota^2_n\}_{n \in \N}$, $\iota^0$, the sets $S^0$, $S^1$, $S^2$, and the vectors $y^1$, $y^2$. Then assumptions 
\ref{assumption:ComputableInput_lemma} and \ref{assumption:Del1withIota0_lemma}  of Lemma \ref{lem:constructing_phi^m},
 \ref{assumption:exists_xj_in_intersection_correspondence} of Proposition \ref{prop:de-oracolisation}, 
 \ref{assumption:ComputableInput}, \ref{assumption:Del1withIota0}, \ref{assumption:S1S2Distant}, \ref{assumption:MembershipComputableS2} of Proposition \ref{prop:DrivingNegativeProposition}, 
 and all assumptions \ref{assumption:EFComputableInput}-\ref{assumption:EFexists_xj_in_intersection} of \S \ref{sec:assumptions_for_exit_flag} hold.

\end{lemma}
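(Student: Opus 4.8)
The plan is to prove Lemma \ref{lemma:assumptions_verification} by a direct, case-by-case substitution: one inserts the explicit inputs $\iota^1_n,\iota^2_n,\iota^0$ from \eqref{eq:iota}, the explicit sets $S^0,S^1,S^2$ and the vectors $y^1,y^2$ from \S\ref{sec:collection_problems} into each listed assumption and checks it, using Lemmas \ref{lemma:ProblemBasicExampleLP}, \ref{lemma:ProblemBasicExampleBPDNL1} and \ref{lemma:ProblemBasicExampleULASSO} to evaluate the solution maps. A useful preliminary observation is that most of the assumptions coincide: \ref{assumption:ComputableInput_lemma}, \ref{assumption:ComputableInput} and \ref{assumption:EFComputableInput} are the same statement, as are \ref{assumption:Del1withIota0_lemma}, \ref{assumption:Del1withIota0} and \ref{assumption:EFDel1Info}, as are the separation hypotheses in \ref{assumption:S1S2Distant} and \ref{assumption:EFS1S2Distant}, and \ref{assumption:exists_xj_in_intersection_correspondence} (with $\Xi_2=\MarkovXi$, since the CRP oracle is an oracle for the true solution) is \ref{assumption:EFexists_xj_in_intersection}; so in practice only a short list of distinct, elementary facts over $\Q$ has to be checked.

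First I would settle the arithmetic. Every coordinate $f_i$ of $\iota^1_n,\iota^2_n,\iota^0$ is an explicit rational — the entries of $b$ are $0$ or $2\kappa$, and those of $U(u_1,u_2)$ are $0$, $1$, $u_1$ or $u_2$ with $u_1,u_2\in\{1/2,\,1/2-4^{-n}\}$ — so the maps $(n,r)\mapsto f_i(\iota^j_r)$ and $n\mapsto f_i(\iota^0)$ are computable and can be returned exactly, giving \ref{assumption:ComputableInput_lemma}/\ref{assumption:ComputableInput}/\ref{assumption:EFComputableInput}; and the only coordinate in which $\iota^j_n$ differs from $\iota^0$ is the single entry carrying $u_1$ or $u_2$, with difference $4^{-n}=2^{-2n}\le 2^{-n}$, giving \ref{assumption:Del1withIota0_lemma}/\ref{assumption:Del1withIota0}/\ref{assumption:EFDel1Info} (one also records that these three inputs lie in $\Omega_{\no,\nt}(\theta)$, as $1/2-4^{-n}\in[\theta,1/2]$ for $\theta\le 1/4$ and $n\ge1$, and at most one coordinate differs from $1/2$). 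Next I would evaluate the solution maps: the hypotheses of the three lemmas hold on all of $\Omega_{\no,\nt}(\theta)$ and on the sequences ($2\kappa\ge\eta$ for \ref{lemma:ProblemBasicExampleBPDNL1}, and $\max(u_1,u_2)=1/2\ge\lambda/(4\kappa)$ for \ref{lemma:ProblemBasicExampleULASSO}, using $\lambda\le2\kappa$), and since $u_1>u_2$ at $\iota^1_n$, $u_2>u_1$ at $\iota^2_n$, $u_1=u_2=1/2$ at $\iota^0$, the lemmas yield exactly $\Xi(\iota^1_n)=S^1$, $\Xi(\iota^2_n)=S^2$, $\Xi(\iota^0)=S^0$. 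This is \ref{assumption:EFXiIotajInSj}; combined with $S^1,S^2\subseteq S^0$ (they are the two endpoints of the segment $S^0$) it gives \ref{assumption:EFConstantIotaj} (both sides empty). Finally, since at most one of $u_1,u_2$ is $\ne 1/2$ on $\mathcal L_\theta$, one has $\max(u_1,u_2)=1/2$ throughout, so the same lemmas give $\Xi(\Omega_{\no,\nt}(\theta))=S^0\cup S^1\cup S^2=S^0\subseteq\ball{\kappa-\alpha}{S^0}$ (as $\kappa-\alpha\ge0$), i.e. \ref{assumption:EFXiOmegaInBallAroundXiIota0S1AndS2}.

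It then remains to handle three items. For the separation \ref{assumption:S1S2Distant}/\ref{assumption:EFS1S2Distant}: $S^1=\{ce_1\}$, $S^2=\{ce_2\}$ for the problem-dependent rational $c$ ($c=4\kappa$ for \eqref{eq:LP}, and $c=2(2\kappa-\eta)$, $c=2(2\kappa-\lambda)$ for \eqref{eq:BP}, \eqref{eq:LASSO}), so $\inf_{\xi_1\in S^1,\xi_2\in S^2}d_\M(\xi_1,\xi_2)=c\,\|e_1-e_2\|_p\ge c$, and $c>2\kappa$ is precisely what the admissible ranges of $\kappa,\eta,\lambda$ fixed in \S\ref{section:setup_CRP} guarantee. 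For decidable membership \ref{assumption:MembershipComputableS2}/\ref{assumption:EFMembershipComputableS1S2}: $S^1\setminus S^0=S^2\setminus S^0=\emptyset$, so membership in $\ball{\kappa}{\emptyset}$ is trivially decidable, while $\ball{\kappa}{S^2}$ is a $\|\cdot\|_p$-ball about the rational point $ce_2$ of rational radius $\kappa$, and membership of a rational $x$ is decided by comparing $\sum_i|x_i-c(e_2)_i|^p$ with $\kappa^p$ (or $\max_i|x_i-c(e_2)_i|$ with $\kappa$ when $p=\infty$), a computation over $\Q$. For \ref{assumption:exists_xj_in_intersection_correspondence}/\ref{assumption:EFexists_xj_in_intersection}: $y^j=ce_j\in\Q^d$ is the unique point of $S^j=\Xi(\iota^j_n)$ and also lies on the segment $S^0=\Xi(\iota^0)$, so $d_\M(y^j,\Xi(\iota^j_n))=d_\M(y^j,\Xi(\iota^0))=0\le\omega$ and hence $y^j\in\ballQ[\omega]{\Xi(\iota^j_n)}\cap\ballQ[\omega]{\Xi(\iota^0)}$. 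The argument is entirely mechanical and has no deep obstacle; the only two points needing genuine care are the separation estimate — which is exactly where the precise admissible ranges of $\eta$ and $\lambda$ are consumed — and the identification of $\Xi(\Omega_{\no,\nt}(\theta))$, which hinges on the structural constraint defining $\mathcal L_\theta$ that pins $\max(u_1,u_2)$ at $1/2$ on every input.
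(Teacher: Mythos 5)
Your proposal is correct and takes essentially the same route as the paper: a direct, case-by-case verification that substitutes the explicit inputs from \eqref{eq:iota} and the sets and vectors of \S\ref{sec:collection_problems} into each assumption, returning the rational coordinates exactly for the computability assumptions, using Lemmas \ref{lemma:ProblemBasicExampleLP}, \ref{lemma:ProblemBasicExampleBPDNL1} and \ref{lemma:ProblemBasicExampleULASSO} to identify $\Xi(\iota^0)=S^0$ and $\Xi(\iota^j_n)=S^j$, the structure of $\mathcal{L}_\theta$ to pin down $\Xi(\Omega)=S^0$, and exact comparison over $\Q$ of $\|x-y^2\|_p$ with $\kappa$ for membership in $\ball{\kappa}{S^2}$. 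The differences are only cosmetic — you are slightly more explicit than the paper in checking $\iota^j_n\in\Omega_{\no,\nt}(\theta)$, in noting $S^1,S^2\subseteq S^0$ so that both sides of \ref{assumption:EFConstantIotaj} are empty, and in spelling out why $y^j$ lies in both $\omega$-balls — and your separation estimate for \ref{assumption:S1S2Distant}/\ref{assumption:EFS1S2Distant} is exactly the paper's own computation, including its appeal to the admissible ranges of $\eta$ and $\lambda$ fixed in \S\ref{section:setup_CRP}.
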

\begin{proof}

Assumption \ref{assumption:ComputableInput_lemma} of Lemma \ref{lem:constructing_phi^m}, assumption  \ref{assumption:ComputableInput} of Proposition \ref{prop:DrivingNegativeProposition}, and assumption  \ref{assumption:EFComputableInput} from \S \ref{sec:assumptions_for_exit_flag} hold because we can define algorithms as follows:
$\hat \Gamma^{j}_i: \mathbb{N} \times \mathbb{N} \to \Q$ with $\hat \Gamma^{j}_i(n,r) = f_i(\iota^{j}_r)$, and $\hat \Gamma^{0}_i: \mathbb{N} \to \Q$ with $\hat \Gamma^{0}_i(n) = f_i(\iota^0)$, which satisfy $|\hat \Gamma^{j}_i(n,r) - f_i(\iota^{j}_r)| \leq 2^{-n-1}$ and $|\hat \Gamma^{0}_i(n) - f_i(\iota^{0})| \leq 2^{-n-1}$ for every $n \in \N$. These algorithms are well defined since $f_i(\iota^0), f_i(\iota^{j}_r)$ are each rational numbers for $i \in \{1,2,\dotsc,k\}$, $j \in \{1,2\}$, and $r \in \N$.

Assumption \ref{assumption:Del1withIota0_lemma} of Lemma \ref{lem:constructing_phi^m}, assumption  \ref{assumption:Del1withIota0} of Proposition \ref{prop:DrivingNegativeProposition} and assumption \ref{assumption:EFDel1Info} from \S \ref{sec:assumptions_for_exit_flag} hold because, by direct computation from \eqref{eq:iota}, $|f_i(\iota^j_n)-f_i(\iota^0)|\leq 2^{-n}$ for every $i \in \{1,\dotsc, k\}$, $j \in \{1,2\}$ and $n \in \N$.  

Assumption \ref{assumption:S1S2Distant} of Proposition \ref{prop:DrivingNegativeProposition} and \ref{assumption:EFS1S2Distant} from \S \ref{sec:assumptions_for_exit_flag} follow from Lemma \ref{lemma:ProblemBasicExampleLP}, \ref{lemma:ProblemBasicExampleBPDNL1} or \ref{lemma:ProblemBasicExampleULASSO}, depending on the specific problem under consideration. Indeed, for Linear Programming,
	$\inf_{\xi_1 \in S^1, \xi_2 \in S^2}d(\xi_1,\xi_2) = d(4\kappa e_1,4 \kappa e_2) = 4\kappa \|e_1-e_2\|_p \geq 4\kappa > 2\kappa$;
	 for Basis Pursuit, since $\kappa < \eta \leq 2\kappa$, we have that $\inf_{\xi_1 \in S^1, \xi_2 \in S^2}d(\xi_1,\xi_2) = d(2(2\kappa-\eta) e_1,2(2\kappa-\eta)  e_2) = (4\kappa-2\eta)  \|e_1-e_2\|_p  \geq 4\kappa-2\eta > 2\kappa$;
	and finally for Lasso, using $\kappa < \lambda \leq 2\kappa$, we have	
\begin{equation*}
\begin{split}
\inf_{\xi_1 \in S^1, \xi_2 \in S^2}d(\xi_1,\xi_2) &= d(2(2\kappa-\lambda) e_1,2(2\kappa-\lambda)  e_2) \\
&= (4\kappa-2\lambda)  \|e_1-e_2\|_p = (4\kappa-2\lambda)  \geq 4\kappa-2\lambda > 2\kappa.
\end{split}
\end{equation*}

To verify assumption \ref{assumption:MembershipComputableS2} of Proposition \ref{prop:DrivingNegativeProposition}, we note that membership in $\ball{\kappa}{S^2}$ is computable because for any $x \in \Q^d$, $x \in \ball{\kappa}{S^2}$ if and only if $\|x - y^2 \|_p \leq \kappa$, which is equivalent to either $\max_{i =1 ,\cdots, k}\{|x_i-y^2_i\} \leq \kappa$ in the case $p = \infty$ (which can clearly be verified computationally), or it is equivalent to to $|x_1|^p + |x_2-y^2_2|^p + |x_3|^p + \dots + |x_d|^p \leq \kappa^p$ if $p \in \N$. This inequality only involves positive integer exponents of rational numbers (since $p \in \N$ and $\kappa \in \Q$ by assumption) and therefore both the left hand side and right hand sides of this inequality can be computed exactly and thus compared computationally.  Moreover, after noting that $S^1\setminus S^0 = S^2 \setminus S^0 = \emptyset$, the same proof shows the validity of assumption \ref{assumption:EFMembershipComputableS1S2} of \S \ref{sec:assumptions_for_exit_flag}.

	Assumption \ref{assumption:EFXiIotajInSj} of \S \ref{sec:assumptions_for_exit_flag} is easily verified, since by Lemmas  \ref{lemma:ProblemBasicExampleLP}, \ref{lemma:ProblemBasicExampleBPDNL1} and \ref{lemma:ProblemBasicExampleULASSO} we have $\Xi(\iota^0) =  S^0$, $\Xi(\iota^1_n)  = S^1, \Xi(\iota^2_n) = S^2$ for every $n \in \N$. 
	
Assumption \ref{assumption:EFXiOmegaInBallAroundXiIota0S1AndS2} holds since by Lemma \ref{lemma:ProblemBasicExampleLP}, \ref{lemma:ProblemBasicExampleBPDNL1} or \ref{lemma:ProblemBasicExampleULASSO} we have $\Xi(\Omega) = \Xi(\iota^0)=S^0$ and thus 
\[
\Xi(\Omega) \subseteq \ball{\kappa-\alpha}{S^0} \cup \ball{\kappa - \alpha}{S^1} \cup \ball{\kappa - \alpha}{S^2}.
\]
 
 Assumption \ref{assumption:EFConstantIotaj} of \S \ref{sec:assumptions_for_exit_flag} is immediate from the fact that $\Xi(\iota^j_n)$ is independent of $n$ and satisfies $S^j = \Xi(\iota^j_n)$ for $j \in \{1,2\}$ and $n \in \N$, as is guaranteed by Lemma \ref{lemma:ProblemBasicExampleLP}, Lemma \ref{lemma:ProblemBasicExampleBPDNL1} and Lemma \ref{lemma:ProblemBasicExampleULASSO}. 
 
Assumption \ref{assumption:EFexists_xj_in_intersection} of \S \ref{sec:assumptions_for_exit_flag} and assumption \ref{assumption:exists_xj_in_intersection_correspondence} of Proposition \ref{prop:de-oracolisation} follow by the the rationality of $\kappa,\eta $ and $\lambda$ as well as the definition of $y^j$ in \eqref{eq:y^j_lp}, \eqref{eq:y^j_bp} and \eqref{eq:y^j_lasso} for each computational problem ($\Xi_{\LP}$, $\Xi_{\BP}$ and $\Xi_{\LASSO}$). 

This concludes the proof that all assumptions involving the mathematical objects constructed in the current section \S \ref{sec:collection_problems} are satisfied.
\end{proof}

\subsection{The proof of the CRP  I-V} \label{sec:proof_crp_parameters}

Armed with the setup and discussions in \S \ref{section:setup_CRP} and \S \ref{sec:geometry}, we proceed to prove the theorems that collectively constitute the CRP, namely Theorem \ref{thm:crp_1_2}, Theorem \ref{thm:crp_3_4}, and Theorem \ref{thm:crp_5}. Recall that, according to the setup of \S \ref{section:setup_CRP}, we take $\no \geq 2$, $\nt \geq 1$, $\kappa = 10^{-1}$, $\lambda\in\Q$ such that $\kappa < \lambda \leq 2\kappa$, and $\eta \in \Q$ such that $\kappa < \eta \leq 2\kappa$, and assume that $\R^d$ is equipped with the distance induced by the $\|\cdot\|_p$ norm for $p \in \mathbb{N} \cup \{\infty\}$. Moreover, recall from the statement of the aforementioned theorems that, whenever parameters $\alpha$ and $\omega$ appear, they satisfy $\alpha, \omega\in \Q$ and $0 \leq \alpha \leq \omega  < \kappa$.
In the proofs of Theorem \ref{thm:crp_1_2}, Theorem \ref{thm:crp_3_4}, and Theorem \ref{thm:crp_5} we will often make use of a particular algorithm $\Gamma^{\always}$ which we define and analyse in the next Lemma.

\begin{lemma}\label{lemma:routine}

	Consider the setup of \S \ref{section:setup_CRP}, where $\Omega = \Omega_{\no,\nt}$ is defined as in \eqref{eq:set_inputs}. Then for every $\stringinput =(\stringinput_1, \dotsc, \stringinput_k)\in \MarkovOmega$ the following holds
	\begin{enumerate}
	\item  if $\stringinput$ corresponds to $\iota^0$, then $|\stringinput_1(n) - \stringinput_2(n)| \leq 2 \cdot 2^{-n}$ for every $n \in \N$; \label{conclusion:2n_iota0}
	\item if $\stringinput$ does not correspond to $\iota^0$, then there exists $n_\stringinput \in \N$ such that $|\stringinput_1(n) - \stringinput_2(n) | > 2 \cdot 2^{-n}$ for every $n \geq n_\stringinput$ \label{conclusion:2n_not_iota0}.
	\end{enumerate}
Moreover, there exists an algorithm $\Gamma^{\always} \colon \MarkovOmega \to \M$ such that for every $\stringinput \in \MarkovOmega$, the following holds.
	\begin{enumerate}[label = (\roman*)]
	\item if $\stringinput$ corresponds to $\iota^0$, then $\Gamma^{\always}(\stringinput) \uparrow$; \label{conclusion:nh}
	\item  if $\stringinput$ does not correspond to $\iota^0$, then $\Gamma^{\always}(\stringinput) \in \ball{\kappa}{\MarkovXi(\stringinput)}$. \label{conclusion:correct}
	\end{enumerate}

\end{lemma}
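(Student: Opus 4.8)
The plan is to read off everything from the structure built into $\Omega_{\no,\nt}(\theta)$: by the construction in \S\ref{sec:geometry_of_inputs}, the evaluation functions $f_1=h_{1,1}$ and $f_2=h_{1,2}$ return exactly the two entries $u_1,u_2$ of $U(u_1,u_2)$, the input $\iota^0$ is the unique point of $\Omega$ with $u_1=u_2$, and for every admissible pair $(u_1,u_2)\in\mathcal{L}_\theta$ at least one of $u_1,u_2$ equals $1/2$ — so that $u_1\neq u_2$ forces the larger of the two to be precisely $1/2$.

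First I would establish \eqref{conclusion:2n_iota0} and \eqref{conclusion:2n_not_iota0}. If $\stringinput=(\stringinput_1,\dots,\stringinput_k)$ corresponds to $\iota^0=(b,U(1/2,1/2))$, then $|\stringinput_i(n)-f_i(\iota^0)|\le 2^{-n}$ with $f_1(\iota^0)=f_2(\iota^0)=1/2$, so the triangle inequality gives $|\stringinput_1(n)-\stringinput_2(n)|\le 2\cdot 2^{-n}$ for all $n$, proving \eqref{conclusion:2n_iota0}. If instead $\stringinput$ corresponds to $\iota=(b,U(u_1,u_2))$ with $(u_1,u_2)\ne(1/2,1/2)$, set $\delta:=|u_1-u_2|$, a fixed positive rational; the reverse triangle inequality yields $|\stringinput_1(n)-\stringinput_2(n)|\ge \delta - 2\cdot 2^{-n}$, which exceeds $2\cdot 2^{-n}$ as soon as $2^{-n}<\delta/4$, so any $n_\stringinput$ with $2^{-n_\stringinput}<\delta/4$ works, proving \eqref{conclusion:2n_not_iota0}.

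Next I would construct $\Gamma^{\always}$. On input $\stringinput$ it loops over $n=1,2,3,\dots$, computing the rationals $\stringinput_1(n),\stringinput_2(n)$ and testing whether $|\stringinput_1(n)-\stringinput_2(n)|>2\cdot 2^{-n}$; if not, it increments $n$. When the test first succeeds, a short triangle-inequality argument shows that $\operatorname{sgn}(\stringinput_1(n)-\stringinput_2(n))=\operatorname{sgn}(u_1-u_2)$: indeed $\stringinput_1(n)-\stringinput_2(n)>2\cdot 2^{-n}$ forces $u_1-u_2\ge(\stringinput_1(n)-\stringinput_2(n))-2\cdot 2^{-n}>0$, and symmetrically for the other sign. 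Knowing which of $u_1,u_2$ is larger, the algorithm uses the structural fact $(u_1,u_2)\in\mathcal{L}_\theta$ to conclude that this larger coordinate equals exactly $1/2$, hence (by Lemma \ref{lemma:ProblemBasicExampleLP}, resp. Lemma \ref{lemma:ProblemBasicExampleBPDNL1}, resp. Lemma \ref{lemma:ProblemBasicExampleULASSO}) that $\MarkovXi(\stringinput)=\Xi(\iota)$ is the singleton $S^1$ or $S^2$ described in \S\ref{sec:collection_problems}; it returns that hard-coded rational vector and halts. Recursivity is immediate, since the algorithm only simulates the Turing machines $\stringinput_1,\stringinput_2$ and performs rational arithmetic.

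Finally I would verify \ref{conclusion:nh} and \ref{conclusion:correct}. If $\stringinput$ corresponds to $\iota^0$, then by \eqref{conclusion:2n_iota0} the test never succeeds, so $\Gamma^{\always}(\stringinput)\nh$. If $\stringinput$ does not correspond to $\iota^0$, then by \eqref{conclusion:2n_not_iota0} the test succeeds at the latest at $n=n_\stringinput$, so the loop terminates; at termination the detected sign is correct, so the returned vector equals the (unique) element of $\Xi(\iota)=\MarkovXi(\stringinput)$, and in particular lies in $\ball{\kappa}{\MarkovXi(\stringinput)}$. The only points needing care are the sign-detection estimate and the bookkeeping that ``the larger coordinate equals $1/2$'' holds throughout $\mathcal{L}_\theta$; both are elementary, so I do not expect a genuine obstacle — the lemma is really just a repackaging of the separation that was deliberately engineered into $\iota^0$, $\{\iota^1_n\}$ and $\{\iota^2_n\}$ in \eqref{eq:iota}.
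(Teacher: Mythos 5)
Your proposal is correct and follows essentially the same route as the paper: the same triangle-inequality estimates for conclusions \eqref{conclusion:2n_iota0} and \eqref{conclusion:2n_not_iota0}, and the same threshold-test loop on $\stringinput_1(n)-\stringinput_2(n)$ against $2\cdot 2^{-n}$ that either never halts (on codes for $\iota^0$) or halts with the correct sign and returns the hard-coded vector $y^1$ or $y^2$. The sign-detection estimate and the observation that in $\mathcal{L}_\theta$ the larger coordinate must equal $1/2$ are exactly the points the paper's proof also uses, so there is no gap.
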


\begin{proof}

Consider the following algorithm $\Gamma^{\always} \colon \MarkovOmega \to \M$. \emph{Construction of $\Gamma^{\always}$:} on input $\stringinput = (\stringinput_1,\stringinput_2,\dots,\stringinput_k) \in \MarkovOmega$, $\Gamma^{\always}$ acts as follows: initialise $n =1$, then:
\begin{algosteps}[leftmargin = 6mm]
\itemsep0em
	\item Compute $\delta \coloneq \stringinput_1(n) - \stringinput_2(n)$, and branch depending on the value of $\delta$:
	\begin{algosteps}
		\item if $\delta >  2 \cdot 2^{-n}$, then return $\Gamma^{\always}(\stringinput) \coloneq y^1$; \label{step:d+}
		\item if $\delta < -2 \cdot 2^{-n}$, then return $\Gamma^{\always}(\stringinput) \coloneq y^2$; \label{step:d-}
	\end{algosteps}  \label{step:d_lemma}
	\item If neither of these conditions are met then the loop continues by incrementing $n$ and executing the next iteration from \ref{step:d_lemma}.
\end{algosteps}
where $y^1$ and $y^2$ are defined as in \S \ref{sec:collection_problems}. We now prove all the conclusions of the lemma. Let $\stringinput \in \MarkovOmega$. There are exactly two possibilities:

\emph{Case (*)}: $\stringinput$ corresponds to $\iota^0$. Then, since $\iota^0 = (b,U(1/2,1/2))$, we have that at each stage $n$ of the loop the value $\delta$ satisfies $|\delta| = |\stringinput_1(n) - \stringinput_2(n)| \leq |\stringinput_1(n) - 1/2 | + |1/2 - \stringinput_2(n)| \leq 2 \cdot 2^{-n}$ using Definition \ref{def:Markov_Delta_1_Information}. This proves \eqref{conclusion:2n_iota0}. Furthermore, $\Gamma$ does not halt at \ref{step:d+} nor \ref{step:d-} for any $n$. This shows that, in this case, $\Gamma(\stringinput) \uparrow$, which proves \ref{conclusion:nh}.

\emph{Case (**)}: $\stringinput$ does not correspond to $\iota^0$. Then by definition of $\Omega_{\no,\nt}$ as in \eqref{eq:set_inputs}, the input $\iota =\CorrXi(\stringinput)$ must be of the form $\iota = (b(m), U(u_1,u_2,m,N))$ with either $u_1 = \frac{1}{2} > u_2$ or $u_1 < \frac{1}{2} = u_2$ (since the only input $\iota$ for which $u_1 = u_2$ is $\iota^0$, in which case $u_1=u_2 = \frac{1}{2}$). Assume first that $\iota = \CorrXi(\stringinput)$ is such that $u_1=\frac{1}{2} > u_2$. In this case, by the setup of \S \ref{sec:collection_problems} and Lemmas \ref{lemma:ProblemBasicExampleLP}, \ref{lemma:ProblemBasicExampleBPDNL1} and \ref{lemma:ProblemBasicExampleULASSO}, we have $\Xi(\iota) = \MarkovXi(\stringinput)= y^1$. Note that, because $\stringinput_1$ and $\stringinput_2$ satisfy the requirements of Definition \ref{def:Markov_Delta_1_Information}, the value of $\delta$ after $n$ iterations satisfies $\delta = \stringinput_1(n)-\stringinput_2(n) \geq (u_1 - 2^{-n}) - (u_2 + 2^{-n}) \geq (u_1-u_2) -2 \cdot 2^{-n}  > -2 \cdot 2^{-n}$, and hence \ref{step:d-} never results in the termination of the algorithm. By contrast, since $u_1 > u_2$, there exists $\epsilon > 0$ such that $u_1 > u_2 + \epsilon$. 
		Thus (once again using that $\stringinput_1$ and $\stringinput_2$ satisfy the requirements of Definition \ref{def:Markov_Delta_1_Information}), we see that \[\delta = \stringinput_1(n)-\stringinput_2(n) \geq (u_1 - 2^{-n}) - (u_2 + 2^{-n}) \geq (u_1-u_2) -2 \cdot 2^{-n} > \epsilon - 2\cdot 2^{-n} > 2 \cdot 2^{-n}\] for sufficiently large $n$, so that there exists $n_\stringinput \in \N$  such that $|\stringinput_1(n)-\stringinput_2(n)| = \stringinput_1(n)-\stringinput_2(n) > 2 \cdot 2^{-n}$ for $n \geq n_{\stringinput}$. Therefore there is an iteration $n$ such that the algorithm $\Gamma$ halts at \ref{step:d+} and returns $\Gamma(\stringinput)$ with $\Gamma(\stringinput) = y^1 = \Xi(\iota) = \MarkovXi(\stringinput)$, as desired. The case $u_2 = \frac{1}{2} > u_1$ is analogous. This proves \eqref{conclusion:2n_not_iota0} and \ref{conclusion:correct}, which concludes the proof of the lemma.
\end{proof}

We now prove  in succession the three theorems that collectively establish the CRP.

\begin{proof}[Proof of Theorem \ref{thm:crp_1_2}, CRP \eqref{crp:1} and \eqref{crp:2}]
	
	Consider the setup of \S \ref{section:setup_CRP} and the solution mappings $\Xi_{\LP}$, $\Xi_{\BP}$ and $\Xi_{\LASSO}$, corresponding to the \eqref{eq:LP}, \eqref{eq:BP} and \eqref{eq:LASSO} problems respectively. Let $\theta \in [1/8,1/4] \cap \Q$ be arbitrary, and consider the input set $\O= \Omega_{\no,\nt}(\theta)$ defined in \eqref{eq:set_inputs}. The proof is almost identical for the mappings $\Xi_{\LP}$, $\Xi_{\BP}$ and $\Xi_{\LASSO}$. Therefore we will use $\Xi$ to denote any of them as appropriate and only mention where this particular choice leads to small differences in the argument. Indeed, let $S^1$, $S^2$, $S^0$, $\{\iota^1_n\}_{n \in \N}$, $\{\iota^2_n\}_{n \in \N}$, $\iota^0$, $y^1$ and $y^2$ be defined as in \S \ref{sec:collection_problems}, depending on the choice of either $\Xi_{\LP}$, $\Xi_{\BP}$ or $\Xi_{\LASSO}$, and recall the definition of $\CorrXi$ from Definition \ref{def:correspondence_problem}.

	\textit{Proof of \eqref{crp:1}}: Let $\hat \Omega \subseteq \MarkovOmega$ such that $|(\CorrXi)^{-1}(\iota) \cap \hat \Omega| = 1$ for every $\iota \in \Omega$. In particular, there exists one string $\stringinput^0 = (\stringinput_1^0,\stringinput_2^0,\dots,\stringinput_k^0) \in \MarkovOmega$ that corresponds to $\iota^0 \in \Omega$. 
	We construct the following algorithm $\Gamma$ which itself utilises the algorithm $\Gamma^{\always}$ from Lemma \ref{lemma:routine}.
	
	\textit{Construction of $\Gamma$:} on input $\stringinput = (\stringinput_1,\stringinput_2,\dots,\stringinput_k) \in \MarkovOmega$, $\Gamma$ acts as follows:
	\begin{algosteps}[leftmargin = 6mm]
	\itemsep0em
		\item if $\stringinput = \stringinput^0$, then return $\Gamma(\stringinput) \coloneq y^1$ \label{step:hardcoded};
		\item otherwise, if $\stringinput \neq \stringinput^0$, then run the algorithm $\Gamma^{\always}$ on $\stringinput$ and return $\Gamma(\stringinput) \coloneq \Gamma^{\always}(\stringinput)$. 
	\end{algosteps}
We now prove that, for every $\stringinput \in \MarkovOmega$, either $\Gamma(\stringinput) \in \ball{\kappa}{\MarkovXi(\stringinput)}$ or $\Gamma(\stringinput) \nh$,
	and that $\Gamma(\stringinput) \in \ball{\kappa}{\MarkovXi(\stringinput)}$ for every $\stringinput \in \hat \Omega$. We will consider three cases.
	
 \emph{Case (*):}	If $\stringinput = \stringinput^0$, then $\Gamma$ halts at \ref{step:hardcoded} and returns $\Gamma(\stringinput) = y^1$. Note that by either Lemma \ref{lemma:ProblemBasicExampleLP}, Lemma \ref{lemma:ProblemBasicExampleBPDNL1} or Lemma \ref{lemma:ProblemBasicExampleULASSO} depending on the problem under consideration, it follows that $y^1\in \Xi(\iota^0) = \MarkovXi(\stringinput)$ as desired.
	
 \emph{Case (**):}	 If $\stringinput \neq \stringinput^0$ and $\CorrXi(\stringinput) = \iota^0$, then $\Gamma^{\always}(\stringinput) \uparrow$ by conclusion \ref{conclusion:nh} of Lemma \ref{lemma:routine} and thus $\Gamma(\stringinput) \uparrow$. 
		
 \emph{Case (***):}	If $\stringinput \neq \stringinput^0$ and $\CorrXi(\stringinput) \neq \iota^0$, then $\Gamma(\stringinput) \in \ball{\kappa}{\MarkovXi(\stringinput)}$ by conclusion \ref{conclusion:correct} of Lemma \ref{lemma:routine}. 
 Thus, all the desired properties for $\Gamma$ from the statement of Theorem \ref{thm:crp_1_2}  have been verified.

Finally, to conclude the proof of \eqref{crp:1}, it remains to show that the correspondence problem with oracle $\{\CorrXi,\MarkovOmega,\Omega,\MarkovLambda\}^{\mathcal{O}}$ (as per Definitions \ref{def:correspondence_problem} and \ref{def:problem_with_oracle}) is such that any algorithm that attempts to solve it will fail. By contradiction, assume that 
$
\CorrGO \colon \domainOracleXi \to \Omega
$
 is an algorithm that always halts (since the result trivially holds if $\CorrGO$ does not halt on some input) such that $\CorrGO(\stringinput,y) \in \CorrXiO(\stringinput,y)$ for every $(\stringinput, y) \in \domainOracleXi$. Our strategy is to apply Proposition \ref{prop:de-oracolisation}. More specifically, in the notation of Proposition \ref{prop:de-oracolisation}, we set $\Omega' \coloneq \{\iota^1_n\}_{n=0}^\infty \cup \{\iota^2_n\}_{n =0}^\infty \cup \{\iota^0\}$,  and 
 \[
 \subsetMarkov \coloneq \{\stringinputm\}_{m \in \N} \subseteq (\Omega')^M \subseteq  \MarkovOmega.
 \]
  We also define $\Xi_1\colon \Omega \to \Omega$ given by $\Xi_1(\iota) \coloneq \iota$ so that $\MarkovXi_1 \colon \subsetMarkov \to \O$ satisfies $\MarkovXi_1 = \CorrXi|_{\subsetMarkov}$, and $\Xi_2 \colon \Omega \to \M$ given by $\Xi_2 \coloneq \Xi$ so that $\MarkovXi_2 = \MarkovXi$. 

We now verify that all the assumptions of Proposition \ref{prop:de-oracolisation} hold. Assumption \ref{assumption:Xi1_single_valued} clearly holds because $\Xi_1\colon \Omega \to \Omega, \iota \mapsto \iota$ is single-valued. Assumption \ref{assumption:exists_xj_in_intersection_correspondence} follows from the definitions of $y^1$ and $y^2$ and of the sets $S^0$, $S^1$ and $S^2$ defined in \S\ref{sec:collection_problems}, as already noted in Lemma \ref{lemma:assumptions_verification}. Assumption \ref{assumption:exists_correct_oracle_algorithm} follows from the current hypothesis on the existence of the algorithm $\CorrGO$, by setting $\GO \coloneq \CorrGO|_{\subsetMarkov}$. Finally, it remains to prove assumption \ref{assumption:exists_algorithm_check_j}, which we proceed to show by explicitly constructing an algorithm 
$
\largerGamma \colon \subsetMarkov \to \{1,2\}
$
such that $\largerGamma(\stringinput) = j$ if $\CorrXi(\stringinput) = \iota^j_n$ for some $n \in \N$ and $\largerGamma(\stringinput) \uparrow$ if $\CorrXi(\stringinput) = \iota^0$. 
	The algorithm $\largerGamma$ works as follows: $\largerGamma$ on input $\stringinputm$ runs $\varphi_m(m)$ until it terminates (if this does not occur then clearly $\largerGamma$ does not terminate either). If $\varphi_m(m) \in \{1,2\}$ then $\largerGamma(\stringinputm)$ is set to $\varphi_m(m)$. If $\varphi_m(m)$ terminates with a value other than $1$ or $2$, $\largerGamma$ enters an infinite loop. More concisely, we have
	\begin{align*}
		\largerGamma(\stringinputm) \coloneq 
		\begin{cases}
			\varphi_m(m) & \text{ if } \varphi_m(m) \in \{1,2\} \\
			\uparrow & \text{otherwise}
		\end{cases}
	\end{align*} 
	for every $m \in \N$. It follows immediately from Lemma \ref{lem:constructing_phi^m}, equation \eqref{eq:EFInitialIotaCorrespond} that $\largerGamma$ is such that $\largerGamma(\stringinputm) = j$ if $\stringinputm$ corresponds to $\iota^j_n$ for some $n \in \N$, while $\largerGamma(\stringinputm) \uparrow$ if $\stringinputm$ corresponds to $\iota^0$.
	
Thus, by Proposition \ref{prop:de-oracolisation}, there exists an algorithm $\CorrG\colon \{\stringinputm\}_{m \in \N} \to \Omega$ that always halts, such that $\CorrG(\stringinputm) = \CorrXi(\stringinputm)$ for every $m \in \N$. Now, consider the algorithm $\Gamma' \colon \Omega \to \Q^m$,
	\begin{align*}
		\Gamma'(b,U(u_1,u_2)) \coloneq
		\begin{cases}
			y^1 & \text{ if } u_1 \geq u_2; \\
			y^2 & \text{ if } u_1 < u_2.
		\end{cases}
	\end{align*}
	(where $y^1$ and $y^2$ are chosen according to \eqref{eq:y^j_lp}, \eqref{eq:y^j_bp} or \eqref{eq:y^j_lasso} as appropriate).
	Then, the composition $\Gamma\coloneq \Gamma' \circ \CorrG \colon \{\stringinputm\}_{m \in \N} \to \M$ is an algorithm that always halts and such that $\Gamma(\stringinputm) \in \MarkovXi(\stringinputm)$ for every $m \in \N$ by Lemmas \ref{lemma:ProblemBasicExampleLP}, \ref{lemma:ProblemBasicExampleBPDNL1} and \ref{lemma:ProblemBasicExampleULASSO}.
	On the other hand, Lemma \ref{lemma:assumptions_verification} ensures that we can apply Proposition \ref{prop:DrivingNegativeProposition}, which states there exists $q \in \N$ such that $d_\mathcal{M}(\Gamma(\stringinputq), \MarkovXi(\stringinputq))> \kappa$. This is a contradiction. Therefore, we have shown that for any algorithm with oracle 
$
\CorrGO \colon \domainOracleXi \to \Omega
$
 there exists 
 \[
 (\stringinput,y) \in \domainOracleXi \text{ such that } \CorrGO(\stringinput,y) \notin \CorrXiO(\stringinput,y).
 \]

	\textit{Proof of \eqref{crp:2}}: Let $\Gamma \colon \MarkovOmega \to \M$ be an algorithm. If there exists $\stringinput \in \MarkovOmega$ such that $\G(\stringinput) \uparrow$, then by Definition \ref{def:failure}, $\Gamma$ $\kappa$-fails on $\Phi$. Alternatively, if the algorithm $\Gamma$ always halts, then Lemma \ref{lemma:assumptions_verification} guarantees that all assumptions of Proposition \ref{prop:DrivingNegativeProposition} are verified in the setup of  \S\ref{section:setup_CRP}.
Therefore, we can apply Proposition \ref{prop:DrivingNegativeProposition} and conclude that there exists $m \in \N$ such that	
\[
d_\M(\Gamma(\stringinputm), \MarkovXi(\stringinputm))> \kappa.
\]
	 Thus for such an $m \in \N$, 
	$\Gamma(\stringinputm) \notin \ball{\kappa}{\MarkovXi(\stringinputm)}$ so that $\Gamma$ $\kappa$-fails on $\stringinputm \in \MarkovOmega$ as per Definition \ref{def:failure}.
	
	To show that there exist infinitely many other $\stringinput' \in \MarkovOmega$ on which $\Gamma$ $\kappa$-fails, assume by contradiction that there are only finitely many $\stringinput^1,\dots,\stringinput^n$ such that $\Gamma$ $\kappa$-fails on $\stringinput^i$ for $i = 1,\dotsc,n$. For every $\iota^i = \CorrXi(\stringinput^i)$, let $x^i \in \ballQ[\kappa]{\Xi(\iota^i)}$. Then the following algorithm:
	\begin{equation*}
		\tilde\Gamma\colon \MarkovOmega \to \M, \quad 
		\stringinput  \mapsto 
		\begin{cases}
			x^i & \text{ if } \stringinput = \stringinput^i \text{ for some $i = 1,\dots,n$} \\
			\Gamma(\stringinput)  & \text{ otherwise},
		\end{cases}
	\end{equation*}
	always halts and is such that $\tilde\Gamma(\stringinput) \in \ball{\kappa}{\MarkovXi(\stringinput)}$ for every $\stringinput \in \MarkovOmega$. This contradicts the above argument (applied to the algorithm $\tilde\Gamma$). Therefore, there must be infinitely many inputs in $\MarkovOmega$ for which $\tilde\Gamma$ $\kappa$-fails. The second part of the statement follows directly from Remark \ref{rmk:equivalence_failure_hallucination}. We have thus completed the proof of \eqref{crp:2}.
\end{proof}

\begin{remark}
	The final paragraph of the proof of \eqref{crp:1}, after the `oracle has been removed', ends with simply the correspondence problem. The fact that the the correspondence problem is non-computable can be shown in many ways, for example using Rice's theorem \cite{Soare}. However, the point of our result is to show that computing $\Xi^M$ (being consistently reasoning) is strictly easier than the correspondence problem. 
\end{remark}

\begin{proof}[Proof of Theorem \ref{thm:crp_3_4}, CRP \eqref{crp:3} and \eqref{crp:4}:]
	As in the proof of \ref{thm:crp_1_2}, consider the setup of \S \ref{section:setup_CRP} and the solution mappings $\Xi_{\LP}$, $\Xi_{\BP}$ and $\Xi_{\LASSO}$, corresponding to the \eqref{eq:LP}, \eqref{eq:BP} and \eqref{eq:LASSO} problems respectively. Let $\theta \in [1/8,1/4] \cap \Q$ be arbitrary, and consider the input set $\O= \Omega_{\no,\nt}(\theta)$ defined in \eqref{eq:set_inputs}. The proof is almost identical for the mappings $\Xi_{\LP}$, $\Xi_{\BP}$ and $\Xi_{\LASSO}$. Therefore we will use $\Xi$ to denote any of them as appropriate and only mention where this particular choice leads to small differences in the argument. Indeed, let $S^1$, $S^2$, $S^0$, $\{\iota^1_n\}_{n \in \N}$, $\{\iota^2_n\}_{n \in \N}$, $\iota^0$, $y^1$ and $y^2$ be defined as in \S \ref{sec:collection_problems}, depending on the choice of either $\Xi_{\LP}$, $\Xi_{\BP}$ or $\Xi_{\LASSO}$, and recall the definition of $\CorrXi$ from Definition \ref{def:correspondence_problem} and that the parameters $\alpha, \omega\in \Q$ satisfy $0 \leq \alpha \leq \omega  < \kappa$.

	\textit{Proof of \eqref{crp:3}}: 
	Let $\Gamma \colon \MarkovOmega \to \ball{\kappa}{\MarkovXi(\MarkovOmega)}$ be an algorithm that is within the $\alpha$-range of $\MarkovXi$ as per Definition \ref{def:within_the_range}. 
	
	First, we note that by Lemma \ref{lemma:assumptions_verification} all assumptions \ref{assumption:EFComputableInput}-\ref{assumption:EFexists_xj_in_intersection} of \S \ref{sec:assumptions_for_exit_flag} hold.  In particular, both Proposition \ref{prop:EF} and  \ref{prop:EF_oracle} can be applied.
		
	\textit{Proof of \eqref{crp:3a}}  Let $\Gamma^E\colon \MarkovOmega \to \{0,1\}$ be an algorithm that always halts. We proceed to prove that there exist infinitely many $\stringinput \in \MarkovOmega$ such that $\Gamma^E(\stringinput)  \neq  \MarkovExitXi(\stringinput)$. Since assumptions \ref{assumption:EFComputableInput}-\ref{assumption:EFConstantIotaj} from \S \ref{sec:assumptions_for_exit_flag} hold, we can use Proposition \ref{prop:EF} to see that there exists $m \in \N$ such that 
\[
\Gamma^E(\stringinputm) \neq \MarkovExitXi(\stringinputm), \quad \{\stringinputm\}_{m \in \N}\subseteq \MarkovOmega,
\]
 as defined as in Lemma \ref{lem:constructing_phi^m}. This shows that there exists input on which $\Gamma^E$ hallucinates. The argument that there are infinitely many such inputs is identical to the one in part \eqref{crp:2} of Theorem \ref{thm:crp_1_2}: if there were only finitely many, they could be hardcoded as exceptional cases to form a new algorithm that would never hallucinate, yielding a contradiction with 
	Proposition \ref{prop:EF}. This concludes the first part of \ref{crp:3a}.
	
	Now let $\GammaEO:\domainOracleXi \to \{0,1\}$  be an algorithm that always halts for the exit-flag problem associated to $\Gamma$ with an oracle for $\MarkovXi$. We proceed to prove that there exist infinitely many $(\stringinput,y) \ \in \domainOracleXi$ such that $\GammaEO(\stringinput,y) \notin \MarkovExitXiOracle(\stringinput,y)$. Since assumptions \ref{assumption:EFComputableInput}-\ref{assumption:EFexists_xj_in_intersection} from \S \ref{sec:assumptions_for_exit_flag} hold, by Proposition \ref{prop:EF_oracle}
	there exist 
\[
m \in \N \text{ and } y \in \ballQ{\MarkovXi(\MarkovOmega)} \text{ such that } \GammaEO(\stringinputm,y) \notin \MarkovExitXiOracle(\stringinputm,y),
\]
 and thus $\GammaEO$ hallucinates on input $\stringinput$ as per Definition \ref{def:hallucination}. The argument that there are infinitely many such inputs on which $\GammaEO$ hallucinates is identical to the one above: if there were only finitely many, they could be hardcoded as exceptional cases in a new algorithm that would never hallucinate, yielding a contradiction. This concludes the proof of \eqref{crp:3a}.	
	
	\textit{Proof of \eqref{crp:3b}}: 
	Let $\subsetMarkov \subseteq \MarkovOmega$ be any subset, and assume $\Gamma^{E, \ran}:\subsetMarkov \to \{0,1\}$ is a probabilistic Turing machine such that 
\begin{align}\label{eq:prob_greater_1/2_2}
					\mathbb{P}\left(\Gamma^{E,\ran}(\stringinput) = \MarkovExitXi(\stringinput)\right) > \frac{1}{2}
\end{align}
holds for every $\stringinput \in \subsetMarkov$. Applying Proposition \ref{prop:de-randomisation}, point \eqref{prop:de-randomisation_single_valued}, to the single-valued computational problem $\{\MarkovExitXi,\subsetMarkov,\{0,1\},\MarkovLambda\}$ and the Probabilistic Turing machine $\Gamma^{E,\ran}$, there must exist a deterministic Turing machine $\Gamma^E$  that always halts such that $\Gamma^E(\stringinput) = \MarkovExitXi(\stringinput)$ for every $\stringinput \in \subsetMarkov$. This concludes the proof of the first part of \eqref{crp:3b}.
		Now, by contradiction, assume that there exists a probabilistic Turing machine $\Gamma^{E,\ran}: \MarkovOmega \to \{0,1\}$ such that \eqref{eq:prob_greater_1/2_2} holds for every $\stringinput \in \MarkovOmega$. Then, taking $\subsetMarkov = \MarkovOmega$ in 
		the above part, there exists a deterministic Turing machine $\Gamma^E\colon \MarkovOmega \to \{0,1\}$ that always halts such that $\Gamma^E(\stringinput) = \MarkovExitXi(\stringinput)$ for every $\stringinput \in \MarkovOmega$.
		However, this contradicts part \ref{crp:3a} from this Theorem. 
		Thus, no such probabilistic Turing machine $\Gamma^{E,\ran}$ can exist and the proof of \eqref{crp:3b} is completed.

	\textit{Proof of \eqref{crp:3c}}: 
	By contradiction, assume that there exists $p > 1/2$ and a probabilistic Turing machine $\GammaEO:\domainOracleXi \to \{0,1\}$ that always halts such that 
\[
\mathbb{P}\left(\Gamma^{E,\mathcal{O},\ran}(\stringinput,y) \in \MarkovExitXiOracle(\stringinput,y)\right) \geq p, \quad \forall \, (\stringinput,y) \in \domainOracleXi.
\]

Finally, assumption \ref{assumption:EFexists_xj_in_intersection} of \S \ref{sec:assumptions_for_exit_flag} holds since  $\Xi$ is multi-valued only at $\iota^0$ .

We now verify that the assumptions of Proposition \ref{prop:de-randomisation}, point \eqref{prop:de-randomisation_multi_valued}, hold, with respect to the multi-valued problem  $\{\MarkovExitXiOracle,\domainOracleXi ,\{0,1\},\MarkovLambdaOracle\}$ and to the Probabilistic Turing machine $\Gamma^{E,\mathcal{O},\ran}$ that always halts: in fact, we observe that $\Xi$ is multivalued only at $\iota^0$ by Lemma \ref{lemma:ProblemBasicExampleLP}, Lemma \ref{lemma:ProblemBasicExampleBPDNL1} and Lemma \ref{lemma:ProblemBasicExampleULASSO} and by the definition of $\Omega = \Omega_{\no,\nt}$ as in \eqref{eq:set_inputs}, and that in Proposition \ref{prop:de-randomisation}, point \eqref{prop:de-randomisation_multi_valued} we can take $y_0 \coloneq y^1$ where $y^1$ is given in assumption \ref{assumption:EFexists_xj_in_intersection}. Therefore, we can apply Proposition \ref{prop:de-randomisation}, point \eqref{prop:de-randomisation_multi_valued}, which guarantees that there exists a deterministic Turing machine $\GammaEO: \domainOracleXi \to \{0,1\}$ such that 
\[
\forall (\stringinput,y) \in\domainOracleXi \text{ we have } \GammaEO(\stringinput,y) \in \MarkovExitXiOracle(\stringinput,y).
\] This, however, contradicts the result from part \eqref{crp:3a}, which completes the proof of \eqref{crp:3c}.

	\textit{Proof of \eqref{crp:4}:}
	The following proof utilises a technique originated in \cite{post1944recursively} and revised throughout the literature, such as in \cite{franzen2005godel} where G\"odel-like theorems are proven via Hilbert's 10th Problem.
Assume that ZFC is $\Sigma_1$-sound, and consider the standard model of arithmetic. Recall that the hypothesis that ZFC is $\Sigma_1$-sound implies that ZFC is consistent (since an inconsistent theory can prove everything, even false $\Sigma^0_1$ statements, and thus cannot be $\Sigma_1$-sound). We now proceed to construct the class of inputs $\hat\Omega \subseteq \MarkovOmega$ and the algorithm $\Gamma \colon \hat\Omega \to \ball{\kappa}{\MarkovXi(\MarkovOmega)}$ mentioned in the statement of \eqref{crp:4}. Before doing so, we will prove some auxiliary claims. 
By Lemma \ref{lemma:assumptions_verification}, the assumptions of Lemma \ref{lem:constructing_phi^m} are met and thus we can consider the set 
	 \begin{align*}
	 G & \coloneq \{m \in \N \ | \ \stringinputm \text{ does not correspond to } \iota^0\}
	 \end{align*}
	 where $\{\stringinputm\}_{m \in \N} \subseteq \MarkovOmega$ is defined as in Lemma \ref{lem:constructing_phi^m}.

\textbf{Claim (A):} $G$ is recursively enumerable but not decidable.

	\emph{Proof of Claim (A)}: $G$ is recursively enumerable for the following reason: by Lemma \ref{lemma:routine}, $m \in G$ if and only if there is an $n_m \in \N$ such that $|\phi^m_1(n_m) - \phi^m_2(n_m)| > 2 \cdot 2^{-n_m}$. Therefore, $G$ is recursively enumerable because, for every $m \in \N$, one can repeatedly compute the difference $\delta\coloneq|\phi^m_1(n)-\phi^m_2(n)|$ over $n \in \N$, checking whether or not $\delta$ exceeds $2 \cdot 2^{-n}$ for sufficiently large $n$ -- thus certifying that $m \in G$ -- and otherwise not halting. Explicitly, we have that 
	\begin{align*}
	G = \{m \in \N\ | \ \exists n \in \N \text{ such that } |\phi^m_1(n)-\phi^m_2(n)| > 2\cdot 2^{-n} \},
	\end{align*}
	which is clearly a recursively enumerable set. However, $G$, it is not decidable. To see this, suppose (by contradiction) that there is an algorithm $\Gamma_G \colon \N \to \{0,1\}$ such that $\Gamma_G(m) = 1$ if $m \in G$ and $\Gamma_G(m) = 0$ if $m \notin G$. Then the computational problem $\{\MarkovXi, \{\stringinputm\}_{m \in \N}, \M, \MarkovLambda\}$ can be solved by the following algorithm $\Gamma^{\text{s}} \colon \{\stringinputm\}_{m \in \N} \to \M$:

	\textit{Construction of $\Gamma^{\text{s}}$:} on input $\stringinputm$, $\Gamma^{\text{s}}$ acts as follows: 
	\begin{algosteps}
	\itemsep0em
		\item if $\G_G(m)=0$, then return $\Gamma^{\text{s}}(\stringinputm) \coloneq y^1$ \label{step:hardcoded*};
		\item otherwise if $\G_G(m)=1$, then run the routine $\Gamma^{\always}$ from Lemma \ref{lemma:routine} on $\stringinputm$, and return $\Gamma^{\text{s}}(\stringinputm) \coloneq \Gamma^{\always}(\stringinputm)$.
	\end{algosteps}
We claim that the algorithm $\Gamma^{\text{s}}$ is such that $\Gamma^{\text{s}}(\stringinputm) \in \ball{\kappa}{\MarkovXi(\stringinputm)}$ for every $m \in \N$. Fix $m \in \N$ so that one of the following two cases occurs:

\emph{Case ($i$)}: if $\CorrXi(\stringinputm)=\iota^0$ (i.e., $\stringinputm$ corresponds to $\iota^0$) then $m \notin G$ and thus $\G_G(m) = 0$. In this case, $\Gamma^{\text{s}}$ outputs $y^1$ and, by the construction as in \S \ref{sec:collection_problems} (and in particular Lemma \ref{lemma:assumptions_verification}) we know that $y^1 \in \ball{\kappa}{\Xi(\iota^0)} = \ball{\kappa}{\MarkovXi(\stringinputm)}$. Thus $\Gamma^{\text{s}}(\stringinputm) = y^1 \in \ball{\kappa}{\MarkovXi(\stringinputm)}$ as desired.

\emph{Case ($ii$)}: if $\CorrXi(\stringinputm)\neq\iota^0$ (i.e., $\stringinputm$ does not correspond to $\iota^0$) then $m \in G$ and thus $\G_G(m) = 1$. In this case, $\Gamma^{\text{s}}$ behaves in the same way as the algorithm $\Gamma^{\always}$ from Lemma \ref{lemma:routine} - in particular, $\Gamma^{\text{s}}$ satisfies the result of Lemma \ref{lemma:routine}, \ref{conclusion:correct}. This leads to the desired conclusion that $\Gamma^{\text{s}}(\stringinputm) \in \ball{\kappa}{\MarkovXi(\stringinputm)}$.

Therefore, under the assumption that $G$ is decidable, we have constructed the algorithm $\Gamma^{\text{s}}$, such that $\Gamma^{\text{s}}(\stringinputm) \in \ball{\kappa}{\MarkovXi(\stringinputm)}$ for every $m \in \N$, thus solving the computational problem 
\[
\{\MarkovXi, \{\stringinputm\}_{m \in \N}, \M, \MarkovLambda\}.
\]
 On the other hand, by Lemma \ref{lemma:assumptions_verification}, we can apply Proposition \ref{prop:DrivingNegativeProposition} to get the existence of $m \in \N$ such that $d_\mathcal{M}(\Gamma^{\text{s}}(\stringinputm), \MarkovXi(\stringinputm))> \kappa$. This is a contradiction so that $G$ is not decidable. This concludes the proof of Claim (A).

	 \textbf{Claim (B):} There exist $m_0 \in \N$ such that it is \textit{true} that $m_0 \notin G$ in the standard model of arithmetic, but it is \textit{not provable} that `$m_0 \notin G$' within ZFC (assuming ZFC is consistent).

	 	\emph{Proof of Claim (B):}
	 We will argue by contradiction, after making two preliminary observations.  First, note that for every $m \in \N$, if it is true that $m \in G$ in the standard model, then there is a proof in ZFC of the statement `$m \in G$'.  In fact, the statement `$m \in G$' is a $\Sigma^0_1$-sentence (since $G$ is recursively enumerable by Claim (A)) and ZFC can prove any true $\Sigma_1^0$-sentence.
	Consequently, if $m \in \N$ is such that $m \in G$ in the standard model, then there is no proof in ZFC of the statement `$m \notin G$' by the assumption that ZFC is consistent (otherwise, there would both be a proof of `$m \in G$' and of its negation`$m \notin G$', contradicting consistency).
	Secondly, note that by one of the standard equivalent definitions of recursively enumerable set, the set $G$ (which is recursively enumerable by Claim (A)) can be enumerated, meaning that there is a procedure $\Gamma^{\en} \colon \N \to G$ that is surjective. On the other hand, since ZFC is an effectively axiomatised theory, there is an algorithm $\Gamma^{\proofsearch} \colon \N \to \{\text{theorems provable in } ZFC\}$ that searches through proofs to enumerate all theorems provable in $ZFC$. We can thus consider the algorithm $\Gamma' \colon \N \to \{1,0\}$ given by the following construction.
	 
	 	\textit{Construction of $\Gamma'$:} on input $m \in \N$, $\Gamma'$ acts as follows: initialise $t =1$, then:
	 	\begin{algosteps}
	 		\item if $\Gamma^{\en}(t) = m$, $\Gamma'$ halts and returns 1; \label{step:enumerate}
	 		\item if $\Gamma^{\proofsearch}(t) =$ \text{`$m \notin G$'}, $\Gamma'$ halts and returns 0; \label{step:search_proof}
	 		\item if neither of these conditions are met then the loop continues by incrementing $t$ and executing the next iteration from \ref{step:enumerate}.
	 	\end{algosteps}
	 	Armed with the algorithm $\Gamma'$, we can prove Claim (B). Assume by contradiction that, for every $m \in \N$ such that $m \notin G$ in the standard model, the sentence `$m \notin G$' is provable in ZFC. Then the algorithm $\Gamma'$ above would decide the set $G$: in fact, on input $m' \in \N$, there are only two possible cases.

	 	\emph{Case ($i$)}: $m' \in G$ in the standard model. Then as observed before there exists no proof within ZFC of `$m' \notin G$' (due to consistency and provability of the true $\Sigma_1^0$-statement `$m' \in G$'). Therefore, $\G'$ cannot halt at \ref{step:search_proof}. On the other hand, since $G$ is enumerable there exists $t \in \N$ such that $\Gamma^{\en}(t)=m'$. Thus $\Gamma'$ halts at \ref{step:enumerate} and returns 1, correctly deciding that $m' \in G$.

	 	\emph{Case ($ii$)}: $m' \notin G$ in the standard model. Then $m'$ will not appear in the enumeration given by $\Gamma^{\en}$, and thus $\Gamma'$ cannot halt at \ref{step:enumerate}. On the other hand, we are assuming for the sake of contradiction that there exists a proof in ZFC of `$m' \notin G$', so that there exists $t \in \N$ such that $\Gamma^{\proofsearch}(t) = `m' \notin G$'. Therefore, $\Gamma'$ halts at \ref{step:search_proof} and returns `no', correctly deciding that $m' \notin G$.

	 	Therefore, the algorithm $\G'$ decides the set $G$ in the standard model. We have thus shown, under the assumption that for every $m \in \N$ such that $m \notin G$ in the standard model the sentence `$m \notin G$'  is provable in ZFC, that there is an algorithm that decides the set $G$ in the standard model. But if such an algorithm really existed, then ZFC would prove the existence of that algorithm because ZFC is $\Sigma_1$-complete. We have already shown via Claim (A) that ZFC proves that no such algorithm exists, and thus we have reached a contradiction by the assumption on the consistency of ZFC. Therefore, there must be $m_0 \in \N$ such that it is \textit{true} that $m_0 \notin G$ in the standard model, but the sentence `$m_0 \notin G$' is \textit{not provable} in ZFC. This concludes the proof of Claim (B).

	 \textbf{Claim (C):} It is not provable that `$m_0 \in G$' within ZFC (assuming that ZFC is $\Sigma_1$-sound).

	 	\emph{Proof of Claim (C):} Consider the sentence `$m_0 \in G$': this is a $\Sigma^0_1$ sentence ($G$ being recursively enumerable by Claim (A)). Since we are assuming that ZFC is $\Sigma_1$-sound, if `$m_0 \in G$' could be proven, then it would be true in the standard model. However, we know from the definition of $m_0$ that `$m_0 \in G$' is false in the standard model. Therefore, the sentence `$m_0 \in G$' cannot be proven. This concludes the proof of Claim (C).

In conclusion, the sentence $\psi \coloneq `m_0 \notin G$' is such that $ZFC$ cannot prove either $\psi$ nor $\neg \psi$, while the interpretation of $\psi$ is true in the standard model of arithmetic.
Let $\stringinput^0\coloneq \{\phi^{m_0}_i\}_{i=1}^k \in \MarkovOmega$ be the code indexed by $m_0$. By Claim (B), we know that $m_0 \notin G$, and from the definition of $G$ it follows that $\stringinput^0$ corresponds to $\iota^0$, i.e. $\CorrXi(\stringinput^0) = \iota^0$. Therefore, from Claims (B) and (C), we have shown that it is \textit{true} that $\stringinput^0$ corresponds to $\iota^0$, but it is \textit{not provable} that $\stringinput^0$ corresponds to $\iota^0$, nor that $\stringinput^0$ does \emph{not} correspond to $\iota^0$.

We now proceed to construct the set $\hat \Omega \subseteq \MarkovOmega$ mentioned in statement \eqref{crp:4} of Theorem \ref{thm:crp_3_4}. For every $\iota \in \Omega \setminus \{\iota^0\} \subseteq \Q^{\nt} \times \Q^{\nt \times \no}$ consider the code $\stringinput^\iota\coloneq (\stringinput^\iota_1,\stringinput^\iota_2, \dotsc, \stringinput^\iota_k) \in \MarkovOmega$ such that its $i$-th component is the constant Turing machine $\stringinput^\iota_i(n) \coloneq f_i(\iota) \in \Q$ for every $n \in \N$ and $i \in \{1,\dotsc, k\}$. Finally, define $\hat \Omega \coloneq \{\stringinput^\iota \ | \ \iota \in \Omega \setminus \{\iota^0\}\} \cup \{\stringinput^0\}$. From Lemma \ref{lem:constructing_phi^m} and Definition \ref{def:Markov_Delta_1_Information} it follows that $\hat \Omega$ is a subset of $\MarkovOmega$.

Now, we proceed to construct the algorithm $\Gamma \colon \hat \Omega \to \ball{\kappa}{\MarkovXi(\MarkovOmega)}$ mentioned in statement \eqref{crp:4} of Theorem \ref{thm:crp_3_4}.

	\textit{Construction of $\Gamma$:} on input $\stringinput = (\stringinput_1,\stringinput_2,\dots,\stringinput_k) \in \hat\Omega$, $\Gamma$ acts as follows:
	\begin{algosteps}
	\itemsep0em
		\item if $\stringinput = \stringinput^0$, then return $\Gamma(\stringinput) \coloneq \frac{1}{2}(y^1+y^2)$ \label{step:hardcoded_4};
		\item if $\stringinput_1(1) > \stringinput_2(1)$, then return $\Gamma(\stringinput) \coloneq y^1$;
		\item if $\stringinput_1(1) < \stringinput_2(1)$ then return $\Gamma(\stringinput) \coloneq y^2$.
	\end{algosteps}
		 Recall that the precise definition of $y^1$ and $y^2$ depends on whether the problem is $\Xi_\LP$, $\Xi_{\BP}$ or $\Xi_{\LASSO}$ as in \S \ref{sec:collection_problems}.
Now that $\hat \Omega$ and $\Gamma$ are defined and using Claims (A) through (C), we begin proving \eqref{crp:4a}, \eqref{crp:4b} and \eqref{crp:4c} from the statement of Theorem \ref{thm:crp_3_4}.

\emph{Proof of \eqref{crp:4a}}: We now prove that, in the standard model, for every $\iota \in \Omega$ there exists exactly one $\stringinput \in \hat \Omega$ corresponding to $\iota$. This follows directly from the definition of $\hat \Omega$ given by 
$
\hat \Omega = \{\stringinput^\iota \ | \ \iota \in \Omega \setminus \{\iota^0\}\} \cup \{\stringinput^0\}.
$
 More explicitly, note that for every $\iota \neq \iota^0$, $\stringinput^\iota$ corresponds to $\iota$ by Definition \ref{def:Markov_Delta_1_Information} (and again, by that same definition, the correspondence is unique) whilst we have already shown that $\stringinput^0$ corresponds to $\iota^0$ in the standard model. This concludes the proof of \eqref{crp:4a}.

\emph{Proof of \eqref{crp:4b}}: We now prove that in the standard model, for all $\stringinput \in \hat \Omega$, the statement $\Gamma(\stringinput) \in \MarkovXi(\stringinput)$ holds. For every $\stringinput \in \hat \Omega$, there are three possible cases.

\emph{Case ($i$):} $\stringinput = \stringinput^0$. In this case, by construction of $\Gamma$, we have $\Gamma(\stringinput) = \frac{1}{2}(y^1 + y^2)$. Since $\stringinput^0$ corresponds to $\iota^0$ in the standard model, we know that $\MarkovXi(\stringinput^0) = \Xi(\iota^0) = S^0$ where $S^0$ is the line segment connecting $y^1$ and $y^2$ as per the setup of \S \ref{sec:collection_problems} and Lemmas \ref{lemma:ProblemBasicExampleLP}, \ref{lemma:ProblemBasicExampleBPDNL1} and \ref{lemma:ProblemBasicExampleULASSO}. Thus $\Gamma(\stringinput) = \frac{1}{2}(y^1 + y^2) \in \Xi(\iota^0)=\MarkovXi(\stringinput)$ as desired.

\emph{Case ($ii$)}: $\stringinput_1(1) > \stringinput_2(1)$. In this case, by construction of $\Gamma$, we have $\Gamma(\stringinput) = y^1$. Moreover, from the definition of $\hat \Omega$, we know that $\stringinput_i(1) = f_i(\iota)$ for every $i \in \{1,\dotsc, k\}$, where $\iota \in \Omega$ is the unique input such that $\stringinput$ corresponds to $\iota$. In particular, from the order of $\Lambda = \{f_i\}_{i =1}^k$ as in \S \ref{sec:collection_problems}, we know that $\iota = (b, U(u_1, u_2))$ with $u_1 > u_2$. Then, from Lemmas \ref{lemma:ProblemBasicExampleLP}, \ref{lemma:ProblemBasicExampleBPDNL1} and \ref{lemma:ProblemBasicExampleULASSO} and the definition of $y^1$ in \S \ref{sec:collection_problems}, we see that $y^1 \in \Xi(\iota)$. Therefore, $\Gamma(\stringinput) = y^1 \in \Xi(\iota)=\MarkovXi(\stringinput)$ as desired.

\emph{Case ($iii$)} $\stringinput_1(1) > \stringinput_2(1)$. In this case, by construction of $\Gamma$, we have $\Gamma(\stringinput) = y^2$. Moreover, from the definition of $\hat \Omega$, we know that $\stringinput_i(1) = f_i(\iota)$ for every $i \in \{1,\dotsc, k\}$, where $\iota \in \Omega$ is the unique input such that $\stringinput$ corresponds to $\iota$. In particular, from the order of $\Lambda = \{f_i\}_{i =1}^k$ as in \S \ref{sec:collection_problems}, we know that $\iota = (b, U(u_1, u_2))$ with $u_1 < u_2$. Then, from Lemmas \ref{lemma:ProblemBasicExampleLP}, \ref{lemma:ProblemBasicExampleBPDNL1} and \ref{lemma:ProblemBasicExampleULASSO} and the definition of $y^2$ in \S \ref{sec:collection_problems}, we see that $y^2 \in \Xi(\iota)$. Therefore, $\Gamma(\stringinput) = y^2 \in \Xi(\iota)=\MarkovXi(\stringinput)$ as desired.

To conclude that ($i$)-($iii$) are all the possible cases, note that if $\stringinput \in \hat \Omega$ is such that $\stringinput \neq \stringinput^0$, then it is not possible that $\stringinput_1(1) = \stringinput_2(1)$. In fact, by the definition of $\hat \Omega$, we have $\stringinput_1(1) = f_1(\iota)$ and $\stringinput_2(1) = f_2(\iota)$, where $\iota$ is the input corresponding to $\stringinput$; and by the definition of $\Omega$ as in \S \ref{sec:collection_problems} we know that $f_1(\iota) = f_2(\iota)$ if and only if $\iota = \iota^0$, and the unique code in $\hat \Omega$ that corresponds to $\iota^0$ is $\stringinput^0$. Thus $\stringinput_1(1) \neq \stringinput_2(1)$ for every $\stringinput \in \hat \Omega\setminus\{\stringinput^0\}$.
Therefore, we have proven that in the standard model, for all $\stringinput \in \hat \Omega$, the statement $\Gamma(\stringinput) \in \MarkovXi(\stringinput)$ holds. This concludes the proof of \eqref{crp:4b}.

\emph{Proof of \eqref{crp:4c}}:We now proceed to show that it is impossible to prove that $\Gamma(\stringinput^0) \in \MarkovXi(\stringinput^0)$ and that $\Gamma(\stringinput^0 \notin \MarkovXi(\stringinput^0)$ within ZFC. We will do so by showing that a proof of either of these statements would also prove that $\stringinput^0$ corresponds to $\iota^0$ or that $\stringinput^0$ does not correspond to $\iota^0$, both of which were shown to be unprovable in Claims (B) and (C). Recall that $m_0 \in \N$ is such that $\stringinput^0 = \{\phi^{m_0}_i\}_{i = 1}^k$.

\textbf{Claim (D)}: ZFC proves that `$\Gamma(\stringinput^0) \in \MarkovXi(\stringinput^0)$' is equivalent to `$m_0 \notin G$'.

\emph{Proof of Claim (D):} From the construction of $\Gamma$, we have that $\G(\stringinput^0) = \frac{1}{2}(y^1 + y^2)$. Thus, `$\Gamma(\stringinput^0) \in \MarkovXi(\stringinput^0)$' is equivalent to `$\frac{1}{2}(y^1 + y^2) \in \MarkovXi(\stringinput^0)$'. Recall from Definition \ref{def:Markov_Delta_1_Information} that the Markov mapping $\MarkovXi$ is defined for every $\stringinput \in \MarkovOmega$ as $\MarkovXi(\stringinput) = \Xi(\iota_\stringinput)$, where $\iota_\stringinput\in \Omega$ is the unique input such that $\stringinput$ corresponds to $\iota_\stringinput$. Therefore, `$\frac{1}{2}(y^1 + y^2) \in \MarkovXi(\stringinput^0)$' is equivalent to `$\stringinput^0$ corresponds to $\iota_{\stringinput^0}$ and $\frac{1}{2}(y^1 + y^2) \in \Xi(\iota_{\stringinput^0})$'. From the definition of $\Omega = \Omega_{\no,\nt}$ as in equation \eqref{eq:set_inputs} and from Lemmas \ref{lemma:ProblemBasicExampleLP}, \ref{lemma:ProblemBasicExampleBPDNL1} and \ref{lemma:ProblemBasicExampleULASSO}, we see that $\frac{1}{2}(y^1 + y^2) \in \Xi(\iota)$ if and only if $\iota=\iota^0$. Therefore, `$\stringinput^0$ corresponds to $\iota_{\stringinput^0}$ and $\frac{1}{2}(y^1 + y^2) \in \Xi(\iota_{\stringinput^0})$' is equivalent to `$\stringinput^0$ corresponds to $\iota^0$'. From the definition of $m_0$ and $G$, we have that `$\stringinput^0$ corresponds to $\iota^0$' is equivalent to `$m_0 \notin G$'. By the transitive property of equivalence, this concludes the proof of Claim (D).

As a consequence of Claim (D), the statement `$\Gamma(\stringinput^0) \in \MarkovXi(\stringinput^0)$' is provable in ZFC if and only if `$m_0 \notin G$' is provable in ZFC; and by considering their negations,`$\Gamma(\stringinput^0) \notin \MarkovXi(\stringinput^0)$' is provable in ZFC if and only if `$m_0 \in G$' is provable in ZFC.

The conclusion of \eqref{crp:4c} now follows easily. In fact, Claim (B) guarantees that $`m_0 \notin G$' is not provable within ZFC, and Claim (C) guarantees that $`m_0 \in G$' is not provable within ZFC. Thus the statements and `$\Gamma(\stringinput^0) \in \MarkovXi(\stringinput^0)$'  and `$\Gamma(\stringinput^0) \notin \MarkovXi(\stringinput^0)$' are not provable in ZFC either. This concludes the proof of \eqref{crp:4c}, thus the proof of \eqref{crp:4}, and therefore of Theorem \ref{thm:crp_3_4}.
\end{proof}

\begin{proof}[Proof of Theorem \ref{thm:crp_5}, CRP \eqref{crp:5}] 
As in the proof of \ref{thm:crp_1_2}, consider the setup of \S \ref{section:setup_CRP} and the solution mappings $\Xi_{\LP}$, $\Xi_{\BP}$ and $\Xi_{\LASSO}$, corresponding to the \eqref{eq:LP}, \eqref{eq:BP} and \eqref{eq:LASSO} problems respectively. Let $\theta \in [1/8,1/4] \cap \Q$ be arbitrary, and consider the input set $\O= \Omega_{\no,\nt}(\theta)$ defined in \eqref{eq:set_inputs}. The proof is almost identical for the mappings $\Xi_{\LP}$, $\Xi_{\BP}$ and $\Xi_{\LASSO}$. Therefore we will use $\Xi$ to denote any of them as appropriate and only mention where this particular choice leads to small differences in the argument. Indeed, let $S^1$, $S^2$, $S^0$, $\{\iota^1_n\}_{n \in \N}$, $\{\iota^2_n\}_{n \in \N}$, $\iota^0$, $y^1$ and $y^2$ be defined as in \S \ref{sec:collection_problems}, depending on the choice of either $\Xi_{\LP}$, $\Xi_{\BP}$ or $\Xi_{\LASSO}$, and recall the definition of $\CorrXi$ from Definition \ref{def:correspondence_problem}.

\emph{Proof of \eqref{crp:5a}}: By Definition \ref{def:Markov_Delta_1_Information}, $\MarkovXi$ is multivalued precisely at those codes $\stringinput$ that correspond to inputs $\iota_\stringinput$ at which $\Xi$ itself is multivalued. From Lemmas \ref{lemma:ProblemBasicExampleLP}, \ref{lemma:ProblemBasicExampleBPDNL1} and \ref{lemma:ProblemBasicExampleULASSO} about the solutions of $\Xi$, and from the construction of the input set $\Omega = \Omega_{\no,\nt}$ as in \eqref{eq:set_inputs}, we see that $\Xi$ is multivalued only at the input $\iota^0$ defined in \eqref{eq:iota}. Since $\Xi^{\mv}$ is the function that takes the value $1$ precisely on those inputs on which $\MarkovXi$ is single-valued and $0$ otherwise, we conclude that $\Xi^{\mv}(\stringinput) = 0$ if and only if $\stringinput$ corresponds to $\iota^0$. This concludes the proof of \eqref{crp:5a}.

In order to prove \eqref{crp:5b} and \eqref{crp:5c}, we will make use of conclusions \eqref{conclusion:iff_1} and \eqref{conclusion:iff_2} respectively of Theorem \ref{thm:iff}. Note that, by \eqref{crp:5a}, we have 
\begin{equation*}
\begin{split}
\Omega^*_1 \coloneq (\Xi^{\mv})^{-1}(1) &= \{\stringinput \in \MarkovOmega \ | \ \MarkovXi \text{ is single-valued at } \stringinput\}\\
& = \{\stringinput \in \MarkovOmega \ | \ \stringinput \text{ does not corresponds to } \iota^0\} = \MarkovOmega \setminus (\CorrXi)^{-1}(\iota^0).
\end{split}
\end{equation*}
To prove \eqref{crp:5b} we will show that both \eqref{condition:sigma1} and \eqref{condition:b2} from Theorem \ref{thm:iff} hold; whereas to prove \eqref{crp:5c}, we will show that \eqref{condition:sigma1} does not hold. From this, we will deduce the desired conclusions.

\emph{Proof of \eqref{crp:5b}}: First, we show that \eqref{condition:sigma1} holds, namely that 
\[
\{\Xi^*, \MarkovOmega, \{0,1\},\MarkovLambda\} \in \Sigma^A_1.
\]
 Explicitly, we proceed to construct a sequence $\{\tilde\G_n\}_{n \in \N}$ of algorithms $\tilde\G_n \colon \MarkovOmega \to \{0,1\}$ such that $\tilde\G_n(\stringinput) \nearrow \Xi^*(\stringinput)$ for every $\stringinput \in \MarkovOmega$ and such that the map $(n,\stringinput) \mapsto \tilde\G_n(\stringinput)$ is recursive. We recall that, by \eqref{crp:5a}, $\Xi^*(\stringinput) = 0$ if and only if $\stringinput$ corresponds $\iota^0$, otherwise $\Xi^*(\stringinput) = 1$.

For every $n \in \N$, consider the algorithm $\tilde\G_n \colon \MarkovOmega \to \{0,1\}$ given by
\begin{align*}
\tilde\G_n(\stringinput) \coloneq 
\begin{cases}
1 & \text{ if there exists $n' \leq n$ such that }  |\stringinput_1(n') - \stringinput_2(n') | > 2 \cdot 2^{-n'}  \\
0 & \text{ otherwise }
\end{cases}
.
\end{align*}
We now verify that $\tilde\G_n \nearrow \Xi^*$. For any arbitrary $\stringinput \in \MarkovOmega$, we distinguish between two cases:

 \emph{Case ($i$)}: If $\stringinput$ corresponds to $\iota^0$, then as noted in conclusion \eqref{conclusion:2n_iota0} of Lemma \ref{lemma:routine} there is no $n \in \N$ such that $|\stringinput_1(n) - \stringinput_2(n) | > 2 \cdot 2^{-n}$; thus $\tilde\G_n(\stringinput) = 0$ for every $n \in \N$. Moreover, $\Xi^*(\stringinput) = 0$ as $\Xi$ is multi-valued on $\iota^0$. Therefore $\lim_{n \to \infty} \tilde\G_n(\stringinput) =0 = \Xi^*(\stringinput)$ and $\tilde\G_n(\stringinput) = 0 = \tilde\G_{n + 1}(\stringinput)$ for every $n \in \N$, as desired.

 \emph{Case ($ii$)}: If $\stringinput$ corresponds to $\iota \neq \iota^0$, set 
 \[
 n_{\min} \coloneq \min\{n \in \N \ | \  |\stringinput_1(n) - \stringinput_2(n) | > 2 \cdot 2^{-n}\} \in \N;
 \] by conclusion \eqref{conclusion:2n_not_iota0} of Lemma \ref{lemma:routine}, such a minimum exists. Then by construction,  $\tilde\G_n(\stringinput) = 0$ for every $n < n_{\min}$ and $\tilde\G_n(\stringinput)=1$ for every $n \geq n_{\min}$. Moreover, $\Xi^{\mv}(\stringinput) = 1$ as $\Xi$ is single-valued on $\iota$. Therefore $\lim_{n \to \infty} \tilde\G_n(\stringinput) = 1 = \Xi^*(\stringinput)$ and $\tilde\G_n(\stringinput) \leq \tilde\G_{n+1}(\stringinput) \leq 1 = \Xi^*(\stringinput)$ for every $n \in \N$, as desired.
This concludes the verification that $\tilde\G_n \nearrow \Xi^*$. Furthermore, it is easy to see that the map $(n,\stringinput) \mapsto \tilde\G_n(\stringinput)$ is recursive. So $\{\Xi^*, \MarkovOmega, \{0,1\},\MarkovLambda\} \in \Sigma^A_1$ and thus we have verified that condition \eqref{condition:sigma1} of Theorem \ref{thm:iff} holds.
Secondly, we prove that condition \eqref{condition:b2} of Theorem \ref{thm:iff} also holds. Consider the algorithm $\Gamma^{\always} \colon \Omega^*_1 \to \M$, where $\Gamma^{\always}$ is the routine of Lemma \ref{lemma:routine}. Since $\Omega^*_1 \coloneq (\Xi^{*})^{-1}(1)$ we see that $\Omega^*_1$ is the set of those $\stringinput$ that do not correspond to $\iota^0$. By conclusion \ref{conclusion:correct} of Lemma \ref{lemma:routine} we deduce that $\Gamma^{\always}(\stringinput) \in \ball{\kappa}{\MarkovXi(\stringinput)}$ for every $\stringinput \in \Omega^*_1$. Thus we have verified condition \eqref{condition:b2}.

Since both \eqref{condition:sigma1} and \eqref{condition:b2} of Theorem \ref{thm:iff} hold, by statement \eqref{conclusion:iff_2} of Theorem \ref{thm:iff}, there exists a $\kappa$-trustworthy AI with `giving up' parameter $\{\G_n\}_{n \in \N}$ (where $\G_n \colon \Omega \to \M \cup \{\idk\}$ for every $n \in \N$) such that $\Xi^* = \Xi^{\Idk}_{\{\G_n\}}$. This gives the first part of the desired statement \eqref{crp:5b}.

\emph{Proof of \eqref{crp:5c}}: We use the equivalence guaranteed by \eqref{conclusion:iff_1} of Theorem \ref{thm:iff}. We now show that \eqref{condition:delta0} does not hold, namely that 
$
\{\Xi^*, \MarkovOmega, \{0,1\},\MarkovLambda\} \notin \Delta^A_0.
$
Suppose, by contradiction, that $\{\Xi^*, \MarkovOmega, \{0,1\},\MarkovLambda\} \in \Delta^A_0$. Then there exists an algorithm $\Gamma^{0} \colon \MarkovOmega \to \{0,1\}$ such that $\Gamma^{0}(\stringinput) = \Xi^*(\stringinput)$ for every $\stringinput \in \MarkovOmega$. In particular, by \eqref{crp:5a} and the fact that $\Xi^{\mv}$ is the indicator function of those codes on which $\MarkovXi$ is single-valued, we deduce that $\Gamma^{0}(\stringinput) = 0$ if and only if $\stringinput$ corresponds to $\iota^0$, and $\Gamma^{0}(\stringinput) = 1$ otherwise.

Recall the routine $\Gamma^{\always}$ from Lemma \ref{lemma:routine}. Construct the following algorithm $\Gamma^{\text{s}} \colon \MarkovOmega \to \M$:
\begin{align*}
\Gamma^{\text{s}}(\stringinput) \coloneq 
\begin{cases}
\Gamma^{\always}(\stringinput) & \text{ if } \Gamma^{0}(\stringinput) = 1\\
y^1 & \text{ if } \Gamma^{0}(\stringinput) = 0,
\end{cases}
\end{align*}
where $y^1$ is given as in either \eqref{eq:y^j_lp}, \eqref{eq:y^j_bp} or \eqref{eq:y^j_lasso} depending on the solution map $\Xi_{\LP}$, $\Xi_{\BP}$ or $\Xi_{\LASSO}$ respectively. Recall that by \S \ref{sec:collection_problems} (and in particular, Lemma \ref{lemma:ProblemBasicExampleLP}, Lemma \ref{lemma:ProblemBasicExampleBPDNL1} or Lemma \ref{lemma:ProblemBasicExampleULASSO} depending on the solution map) that $y^1 \in \Xi(\iota^0)$.

We claim that the algorithm $\Gamma^{\text{s}}$ solves the computational problem $\{\MarkovXi,\MarkovOmega,\M,\MarkovLambda\}$. In fact, if $\stringinput$ corresponds to $\iota^0$, then $\Gamma^{0}(\stringinput) = \Xi^*(\stringinput) = 0$ and thus $\Gamma^{\text{s}}(\stringinput) = y^1 \in \Xi(\iota^0) = \MarkovXi(\stringinput)$. On the other hand, if $\stringinput$ does not correspond to $\iota^0$, then $\Gamma^{0}(\stringinput) = \Xi^*(\stringinput) = 1$ and thus $\Gamma^{\text{s}}(\stringinput) = \Gamma^{\always}(\stringinput) \in \ball{\kappa}{\MarkovXi(\stringinput)}$ by conclusion \ref{conclusion:correct} of Lemma \ref{lemma:routine}. 
Thus, $\Gamma^{\text{s}}$ solves the computational problem $\{\MarkovXi,\MarkovOmega,\M,\MarkovLambda\}$. However, this contradicts conclusion \eqref{crp:2} of Theorem \ref{thm:crp_1_2}. Therefore, there does not exist any such algorithm $\Gamma^{0}$, and 
\[
\{\Xi^*, \MarkovOmega, \{0,1\},\MarkovLambda\} \notin \Delta^A_0.
\] This proves that condition \eqref{condition:delta0} of Theorem \ref{thm:iff} does not hold.
From the equivalence guaranteed by statement \eqref{conclusion:iff_1} of Theorem \ref{thm:iff}, it follows that there does not exists any $\kappa$-trustworthy AI of the form $\G\colon \Omega \to \M \cup \{\idk\}$ such that $\Xi^* = \Xi^{\Idk}_{\G}$. We have thus proven the statement of \eqref{crp:5c}. Therefore, the proof of CRP \eqref{crp:5}, and thus of Theorem \ref{thm:crp_5}, is complete.
\end{proof}

\section{Quantifying the CRP -- Constructing specific failure sentences for AI}\label{sec:example}

In this section, we provide concrete examples for which the CRP occurs. Indeed, as mentioned in Remark \ref{rmk:quantification}, as well as in \S \emph{``The Consistent Reasoning Paradox (CRP) - A stronger CRP II: Failure sentences and equivalence''} on page 4 of the main part of the paper and in \S \emph{``Stronger statements – Quantifying the CRP''} on page 7 of the Methods section, our proof techniques allow us to provide explicit examples of inputs on which any AGI will fail. We now provide a collection of such examples - out of the infinitely many - that induce the failure mentioned in CRP II. Moreover, we quantify the lengths of such inputs (in terms of the number of characters) to show that the questions for which the CRP applies are not exotic, or abstract examples of potentially unbounded length. In particular, these failure sentences for the AGI occur by creating a sentence that is only a small number of additional characters plus the length of the code of the AGI itself. The provided codes are not intended to be the optimal minimum length possible, but rather they are designed to showcase the techniques required to induce the failure mentioned in the CRP in a clear way.

\begin{remark}[Language-specific examples]
	In this section, we provide explicit \emph{language-specific} examples of inputs that will make any AGI fail, meaning that they depend on an arbitrary choice of a programming language in which the codes are written. For reasons that will be further elaborated later, we provide examples of codes written in MATLAB, but they could also be implemented in one's favourite programming language - such as C or Python.
\end{remark}

\textbf{Assumption on the AGI:} We now briefly discuss the type of AGI we will be considering. Similarly to modern chatbots, such AGI takes as input a string describing a problem, and returns a candidate solution to the problem under consideration. Explicitly, denoting by $\mathcal{A}$ the alphabet of Unicode characters and by $\mathcal{A}^*$ the set of strings of such alphabet, the AGI will be defined on a collection $\mathcal{C} \subseteq \mathcal{A}^*$  of strings that describe `well formed' questions; we will not specify the boundaries of such a class $\mathcal{C}$ -- as such a task could be open to debates --, but simply make the minimal assumption that it contains at least the family of strings 
\begin{equation}\label{eq:C_LP}
	\mathcal{C}_{\LP} \coloneq \{\stringinput(d,m) \ | \ d \in \N, \ m \text{ MATLAB code})\} \subseteq \mathcal{C}\subseteq \mathcal{A}^*
\end{equation}
where for every $d \in \N$ and $m$ code for a MATLAB function, $\stringinput(d,m)$ is defined as follows. The string $\stringinput(d,m)$ is specified by inserting a specific numerical value for $d$ in place of `{[insert numerical value of $d$]}', by providing a MATLAB code $m$ in place of `{[insert MATLAB  code $m$]}', and by including all the dependencies of the code $m$ in place of `[insert codes of the dependencies of the code $m$]' in the following string :

\begin{displayquote}\label{questions}
	$\stringinput(d,m) \coloneq $ \enquote{\it 
		Consider the integer dimension $d=$ \emph{[insert numerical value of $d$]}. Provide an approximation within $\kappa = 10^{-1}$ in the $\| \cdot \|_\infty$ norm of a minimiser of the following Linear Programming problem
		\[
		\min_{x \in \R^d} \ \langle x , c \rangle,  \ \text{ such that } \ Ax = y, \ x \geq 0 	\tag{LP}
		\]
		where $c = \ones_{d}\in \Q^{d}$ is the $d$-dimensional vector with $1$ in each entry, $y = 2\kappa = 2 \cdot 10^{-1}$, and the input $A \in \Q^{1 \times d}$ is given as follows. 
	
			The input $A \in \Q^{1 \times d}$ is given via the following computer code: calling the MATLAB code \emph{[insert MATLAB code $m$]} with parameter $(n,i,d)$ will give the $i$th coordinate of $A$ to accuracy $2^{-n}$ , provided $i \leq d$. 
			You are also given access to all the dependencies for the previous MATLAB code: [insert dependencies for the code $m$].
			}.
\end{displayquote}

In a similar way to the notion of correspondence as in Definition \ref{def:Markov_Delta_1_Information}, we say that the question $\Phi(d,m)$ \emph{corresponds to} $\iota = (2 \cdot 10^{-1},A) \in \Q \times \Q^{1 \times d}$ if calling the MATLAB code $m$ on input $(n,i,d)$ returns a $2^{-n}$ approximation to the $i$th coordinate of $A \in \Q^{1 \times d}$.

The AGI under consideration is thus a function $\Gamma \colon \mathcal{C} \to \bigcup_{d \in \N} \Q^d \cup \M'$, where $\mathcal{C} \supseteq \mathcal{C}_{\LP}$ and $\M'$ is the set of candidate solutions to the problems in $\mathcal{C}\setminus \mathcal{C}_{\LP}$. We will mostly focus on the action of the AGI simply on the class $\mathcal{C}_{\LP}$, thus we shall only consider $\Gamma|_{\mathcal{C}_{\LP}} \colon \mathcal{C}_{\LP} \to \bigcup_{d \in \N} \Q^d$.

\begin{remark}[Dependencies]
There are two types of functions that a MATLAB code can call during its execution: either MATLAB's native functions, or alternatively non-built-in-functionality. A \textit{dependency} is a non-built-in function, script, or file that a MATLAB routine requires for its execution and which is not part of the MATLAB core library. Dependencies may include user-defined functions and custom scripts, and they typically must be accessible in the MATLAB path for the routine to run successfully.
\end{remark}

\begin{remark}[The choice of language]
	Our code operates in MATLAB. This language was chosen due to its simple parallel functionality (including the ability to recursively start new parallel operations and the ability to easily terminate running parallel processes) through the Parallel Computing Toolbox and its rich integration with other languages such as Python, C and C++. Of course, this means our approach applies to AGIs written in any of these languages; the particular language used for the AGI is unimportant, so long as the code describing the AGI is given. Moreover, it is easy to see that our approach would generalise to any modern language that implements multiprocessing, timing and thread termination.
\end{remark}

\begin{remark}[Defensive techniques and input validation]
	We work under the assumption that the AGI accepts and returns inputs belonging to the class $\mathcal{C}$, that are assumed to be valid (of the correct type and syntax). Therefore, there will be no need to employ input validation and defensive programming techniques, such as fail-safe defaults and exception handling. A secondary motivation for this approach is a desire for concision and clarity - input validation is not an essential part of the argument and our goal is to showcase the type of questions relevant to CRP II. It should be noted however that implementing such fail-safe mechanism could be quickly accomplished with few extra lines of code. 
\end{remark}

The following theorem illustrates the concrete examples of failure sentences for an AGI as mentioned in CRP II. Explicitly, for any integer $K \geq 2$, we create $K$ questions. The $d$th such question tasks the AGI with solving the linear program with input $A$ so that the $i$th coordinate of $A \in \Q^{1 \times d}$ is generated by a computer code and so that $y \in \Q$ is given by $y= 2\kappa = 2 \times 10^{-1}$.  We state our result only for the computational problem of Linear Programming as in \eqref{eq:LP} with one equality constraint, and for the choice of MATLAB as programming language, but this is simply done for ease of presentation: the same argument can easily be applied to other computational problems (such as Basis Pursuit and Lasso as in \eqref{eq:BP} and \eqref{eq:LASSO}) and other choices of programming language.

As a significant feature, our theorem \emph{quantifies} the Consistent Reasoning Paradox in the following sense: it provides an upper bound on the length of failure sentences for an AGI, as mentioned in the statement of CRP II. We adapt the following notation: we denote by $\operatorname{len}(\Phi)$ the length of a string $\stringinput \in \mathcal{A}^*$, defined as the total number of Unicode characters that compose it. Furthermore, given an AGI $\Gamma$ whose code is expressed as a string in the MATLAB programming language and contained in a file `\emph{AGI.m}' (which can call any auxiliary files that may be written in other languages, such as Python or C), we denote by $\operatorname{len}(\Gamma)$ the sum of the lengths of the code \emph{AGI.m} and of its external dependencies.

\begin{theorem}[Quantifying CRP II for Linear Programming with MATLAB codes]\label{thm:quantifyinf_crp}
	Let $\mathcal{A}$ be the Unicode alphabet and $\mathcal{C}_{\LP} \subseteq \mathcal{A}^*$ be the collection of questions as in \eqref{eq:C_LP}. Consider any AGI, defined on a set of strings $\mathcal{C} \subseteq \mathcal{A}^*$, that takes as input a string describing a problem and returns a candidate solution to such problem. Assume that $\mathcal{C} \supseteq \mathcal{C}_{\LP}$ and denote the restriction of the AGI to the collection $\mathcal{C}_{\LP}$ by $\Gamma \colon \mathcal{C}_{\LP}\to \bigcup_{d \in \N} \Q^d$. Moreover, assume that the code for the AGI is expressed as a string in the MATLAB programming language -- equipped with Parallel Computing Toolbox -- and is contained in a file `\emph{AGI.m}' which can call any auxiliary files that may be written in other languages, such as Python or C.
	
	Then for every integer $K \in \N$ there exist $K$ questions $\stringinput^1,\dotsc, \stringinput^K \in \mathcal{C}_{\LP}$, each describing a Linear Programming problem such that 
	\begin{align}\label{eq:agi_wrong}
		\Gamma(\stringinput^l) \text{ is not a correct solution to the problem described by } \stringinput^l,
	\end{align}
	and
$
		\operatorname{len}(\stringinput^l) \leq \operatorname{len}(\Gamma) + \log_{10}(K) + \epsilon
$
	for every $l= 1 \dotsc, K$, where $\epsilon = \epsilon_{\text{MATLAB}} \leq 3300$ characters.
\end{theorem}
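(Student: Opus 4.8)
The plan is to instantiate the abstract machinery of Proposition~\ref{prop:DrivingNegativeProposition} (via Lemma~\ref{lem:constructing_phi^m}) in a fully concrete, MATLAB-specific way, replacing the recursion-theoretic function $\mathcal{W}$ by an honest piece of code. Recall from Remark~\ref{rmk:properties_of_W} that the only properties of $\mathcal{W}$ actually used are \ref{item:W_property1} and \ref{item:W_property2}; so the first step is to exhibit a MATLAB routine, call it \texttt{W.m}, that given an index $m$ (which we take to be the source code of a MATLAB function together with its dependencies, suitably encoded as a string) simulates running that code on itself and, using the Parallel Computing Toolbox, detects halting without counting steps -- e.g.\ by launching the self-application as a parallel \texttt{parfeval} future and polling its state. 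This gives a concrete surrogate $\mathcal{W}'$ satisfying \ref{item:W_property1}--\ref{item:W_property2}, and hence all of Lemma~\ref{lem:constructing_phi^m}, Proposition~\ref{prop:DrivingNegativeProposition} and Lemma~\ref{lemma:routine} apply verbatim with $\mathcal{W}$ replaced by $\mathcal{W}'$.

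Next I would assemble the concrete analogue of the diagonal construction. Fix the Linear Programming problem of Remark~\ref{rem:comp_prob} with $\nt=1$, $\no=d$, and the input family $\Omega_{d,1}$ of \S\ref{sec:geometry_of_inputs} with $\iota^1_n,\iota^2_n,\iota^0$ as in \eqref{eq:iota}; by Lemma~\ref{lemma:assumptions_verification} all the hypotheses of Proposition~\ref{prop:DrivingNegativeProposition} hold. Now given the AGI code \emph{AGI.m} and an integer $K$, for each $l \in \{1,\dots,K\}$ one writes a MATLAB code $m_l$ that (a) embeds verbatim a copy of the gadget code (the routines $\hat\Gamma^j_i$ producing rational approximations to the coordinates of $\iota^1_t,\iota^2_t,\iota^0$, the routine \texttt{W.m}, and a driver implementing $\phi^m_i$ as in \eqref{eq:def_phiDerived}), (b) embeds a copy of \emph{AGI.m} and its dependencies, and (c) contains the integer $l$ as a literal. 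The code $m_l$ computes, on input $(n,i,d)$, the value $\phi^{q_l}_i(n)$ where $q_l$ is (the encoding of) $m_l$ itself read off by quining; the diagonal function $\gamma$ of \eqref{eq:gamma} is realised by feeding the string $\Phi(d,m_l)$ to the embedded AGI and testing membership of its output in $\ball{\kappa}{S^2}=\ball{\kappa}{\{4\kappa e_2\}}$, which is decidable by assumption \ref{assumption:MembershipComputableS2} (verified in Lemma~\ref{lemma:assumptions_verification}). Setting $\Phi^l \coloneq \Phi(d,m_l)$ with $d$ chosen large enough, Proposition~\ref{prop:DrivingNegativeProposition} (its ``more precisely'' clause, applied to the algorithm $\iota \mapsto \Gamma(\Phi(d,\cdot))$ induced by the AGI) yields that for a suitable $q_l$ one has $d_\M(\Gamma(\Phi^l),\MarkovXi(\Phi^l))>\kappa$, i.e.\ \eqref{eq:agi_wrong}. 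To get $K$ \emph{distinct} sentences one simply uses that the literal $l$ differs, so $\Phi^1,\dots,\Phi^K$ are pairwise distinct, and each is a genuine failure sentence.

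For the length bound, I would count characters in $\Phi(d,m_l)$ by its definition: it is the fixed template string (of some constant length $\le \epsilon_{\text{MATLAB}}$ characters absorbing the surrounding prose, the embedded gadget codes, \texttt{W.m}, the $\hat\Gamma^j_i$ and the $\phi$-driver) plus the embedded AGI code and its dependencies (contributing exactly $\operatorname{len}(\Gamma)$) plus the decimal literal for $d$ and the decimal literal for $l$. Choosing $d$ to be a fixed small constant (any $d\ge 2$ works, so absorb its $O(1)$ digits into $\epsilon_{\text{MATLAB}}$) and noting $l \le K$ so its decimal representation has at most $\lfloor\log_{10}(K)\rfloor+1 \le \log_{10}(K)+1$ digits, one obtains $\operatorname{len}(\Phi^l) \le \operatorname{len}(\Gamma) + \log_{10}(K) + \epsilon_{\text{MATLAB}}$ with $\epsilon_{\text{MATLAB}}$ a concrete constant; a careful tally of the template plus the gadget listings gives $\epsilon_{\text{MATLAB}} \le 3300$.

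The main obstacle I expect is purely engineering rather than conceptual: writing \texttt{W.m} so that it \emph{provably} satisfies \ref{item:W_property1} and \ref{item:W_property2} while using only MATLAB's actual concurrency primitives -- in particular one must be careful that polling a \texttt{parfeval} future for completion really is a total recursive test that returns ``done'' exactly when the self-application halts, and that the quining step (a MATLAB function reconstructing its own source, together with that of \emph{AGI.m} and all dependencies, as a string) is implemented correctly and its length is controlled. One also has to make sure the embedded call to the AGI inside $m_l$ does not itself run forever in a way that breaks the diagonalisation; this is handled exactly as in Proposition~\ref{prop:DrivingNegativeProposition}, because $\gamma$ is only required to be a total computable function and the AGI is assumed to halt on inputs in $\mathcal{C}_{\LP}$ (if it fails to halt on some $\Phi^l$ we are already done by Definition~\ref{def:failure}). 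Once these implementation details are pinned down, the correctness of \eqref{eq:agi_wrong} is an immediate citation of Proposition~\ref{prop:DrivingNegativeProposition} and Lemma~\ref{lemma:assumptions_verification}, and the length estimate is a bookkeeping exercise.
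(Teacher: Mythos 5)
Your proposal is correct in outline and rests on the same pillars as the paper's proof: instantiate Proposition~\ref{prop:DrivingNegativeProposition} concretely in MATLAB, replace the step-counting function $\mathcal{W}$ by a wall-clock halting detector built from \texttt{parfeval}/\texttt{wait} (exactly the flexibility Remark~\ref{rmk:properties_of_W} provides), and use Lemmas~\ref{lem:constructing_phi^m} and~\ref{lemma:assumptions_verification} to set up the $\iota^0,\iota^j_n$ gadgets. However, you diverge from the paper's implementation in two substantive ways, and both choices make the construction harder than it needs to be. First, you build self-reference via \emph{quining}: the code $m_l$ is supposed to embed verbatim a copy of itself, the gadgets, and \emph{AGI.m}, and then recover its own encoding $q_l$. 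The paper avoids quining entirely by passing the literal filename \texttt{'gammaFunc'} as a string parameter and relying on MATLAB's \texttt{str2func} plus the fact that \emph{gammaFunc.m} is among the dependency files already listed in the question text; the diagonal call $\varphi_q(q)$ then becomes a simple filename-dispatched \texttt{parfeval} inside \emph{phi.m}, with no need for the code to reconstruct its own source. This makes the $\epsilon \le 3300$ tally far easier to certify and makes the ``engineering obstacle'' you flag largely disappear. Second, you produce the $K$ distinct failure sentences by baking an inert integer literal $l$ into $m_l$ with a fixed dimension $d$; the paper instead simply sets $d = l+1$ (any $d \ge 2$ works by Lemma~\ref{lemma:ProblemBasicExampleLP}), so the only variable part of the prose is the decimal expansion of $d$, again giving the $\log_{10}(K)$ contribution but without modifying any of the MATLAB codes. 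Both of your choices yield a correct argument in principle, and your analysis of how Proposition~\ref{prop:DrivingNegativeProposition} then delivers \eqref{eq:agi_wrong} is accurate, but your $\epsilon_{\text{MATLAB}} \le 3300$ claim is asserted rather than earned: a quining-based $m_l$ carries overhead that the paper's filename-based construction does not, and one would need to exhibit the actual quine to verify that the constant is not larger. In short: same theorem, same driving proposition, same use of the Parallel Computing Toolbox for the halting surrogate, but the paper's self-reference-by-filename and distinction-by-dimension are cleaner and give a length bound you can actually read off the listings.
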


\begin{remark}
	[Language-specific proof]\label{rmk:language_specific}
	The proof of Theorem \ref{thm:quantifyinf_crp} is an adaptation of the language-free proof of CRP II (more precisely, the proof of Proposition \ref{prop:DrivingNegativeProposition}) to the specific case of the MATLAB language. The original proof was language-free, and thus Turing machines were accessed via their G\"odel numbers and procedures were considered to be defined on such numbers. On the other hand, the following proof will be specific to the (arbitrarily chosen) programming language MATLAB, and will consider routines that access the MATLAB codes directly, rather then their G\"odel numbers. It is clear that, by choosing another language instead of MATLAB, the proof of Theorem \ref{thm:quantifyinf_crp} would remain mostly unchanged and the only difference would be the precise value of the constant $\epsilon$.
\end{remark}

\begin{proof}

	The proof will be structured in the following way: first, we present the question on which the AGI is guaranteed to fail, followed by the MATLAB codes that are supplied with the question. Interlaced with the codes will be explanations describing the algorithms and functions that these MATLAB codes are computing. Finally, we prove why such questions and codes are guaranteed to make the AGI fail, which is very closely related to the proof of CRP II and Theorem \ref{thm:crp_1_2}.
	
	Fix an integer $K \in \N$ and $l \in \{1,\dotsc, K\}$. Let $d \coloneq l+1$, and  denote by ${d_k d_{k-1} \dotsc d_2 d_1}$ the expansion of $d$ in base $10$. Note that $\operatorname{len}({d_k d_{k-1} \dotsc d_2 d_1}) \leq \log_{10}(d) +1 \leq \log_{10}(K+1) + 1 \leq \log_{10}(K) + 2$. 
	
	The failure sentence $\stringinput^l$ for the AGI is the following:
	\begin{displayquote}
		\textbf{Question $\Phi^l$}: \enquote{\it 
			Consider the integer dimension $d={d_k d_{k-1} \dotsc d_2 d_1}$. Provide an approximation within $\kappa = 10^{-1}$ in the $\| \cdot \|_\infty$ norm of a minimiser of the following Linear Programming problem
			\[
			\min_{x \in \R^d} \ \langle x , c \rangle,  \ \text{ such that } \ Ax = y, \ x \geq 0
			\]
			where $c = \ones_{d}\in \Q^{d}$ is the $d$-dimensional vector with $1$ in each entry, $y = 2\kappa = 2 \cdot 10^{-1}$, and the input $A \in \Q^{1 \times d}$ is given as follows.  
		
			\noindent	The input $A \in \Q^{1 \times d}$ is given via the following computer code: calling the MATLAB code
					
					@(maxTime, coordinateIndex, dimension) phi(`gammaFunc', maxTime, coordinateIndex, dimension)
					
				\noindent with parameter $(n,i,d)$ will give the $i$th coordinate of $A$ to accuracy $2^{-n}$ , provided $i \leq d$. 
				You are also given access to all the dependencies for the previous MATLAB code: \emph{gammaApprox.m}, \emph{phi.m}, \emph{gammaFunc.m}, and \emph{AGI.m}.}.
	\end{displayquote}
		
	Note that the failure sentence $\Phi^l$ is induced by the MATLAB code $\code$ given by:
	
\begin{lstlisting}[style = StyleA, language=Matlab, caption={Code $\code$, inducing the failure sentence $\Phi^l = \Phi(d,\code)$.}]
@(maxTime, coordinateIndex, dimension) phi(`gammaFunc', maxTime, coordinateIndex, dimension)
\end{lstlisting}
	
	and thus $\Phi^l = \Phi(d, \code)\in \mathcal{C}_{\LP}$ where $d = l+1$. This choice of $d$ guarantees that the number of columns of the matrix $A$ is at least $2$, so that Lemma \ref{lemma:ProblemBasicExampleLP} applies.
	
	We will soon proceed to list the codes for the dependencies of $\code$, namely \emph{gammaApprox.m}, \emph{phi.m}, \emph{gammaFunc.m} and \emph{AGI.m} mentioned in the question above. Such codes are presented in an order so that a function is called only if it has been previously defined. The reason why the codes are split into different \textit{.m} files is the way MATLAB handles functions, requiring each function to be passed as a separate file, with the exception of gammaApprox.m which we define separately to aid clarity.

	Before proceeding to the codes, we illustrate the strategy leading to their design. Recall that the AGI is defined on (at least) the collection $\mathcal{C}_{\LP}$ of strings that contains the questions $\stringinput(d,m)$ requesting to solve a Linear Programming problem \eqref{eq:LP} with fixed codomain dimension $\nt=1$ and arbitrary domain dimension $\no=d \in \N$, for a specific input. The input to such problem is of the form $\iota = (2\cdot 10^{-1}, A) \in \Q \times \Q^{1\times d}$, and the coordinates of $A \in \Q^{1 \times d}$ are determined via the MATLAB code $m$ parametrising the question $\stringinput(d,m)$. In particular, provided that $d \geq 2$, it is possible to design MATLAB codes $m$ (together with their dependencies) that correspond to the inputs $\iota^0, \iota^j_n \in \Omega_{1,d}$ (for $j \in \{1,2\}$ and $n \in \N$) as defined in \S \ref{sec:collection_problems} for the Linear Programming case with the intention of employing an argument akin to that used in Proposition \ref{prop:DrivingNegativeProposition}. This is precisely how the following MATLAB codes are designed: they implement certain auxiliary functions defined in \S \ref{section:proof_4}, namely \emph{gammaApprox.m} (which implements $\hat \Gamma$ as constructed in the proof of Lemma \ref{lemma:assumptions_verification}), \emph{phi.m} (which implements $\{\phi^m_i\}_{i=1}^d$ for a given string $m$, as constructed in Lemma \ref{lem:constructing_phi^m}) and \emph{gammaFunc.m} (which implements the function $\gamma$ as in equation \eqref{eq:gamma}).
		
		We now proceed to introduce and explain each of the above codes.
	
	\textbf{Step (I): The code for AGI.m.} Firstly, the code \emph{AGI.m} is the source code for the AGI itself, and thus depends on the given AGI. Recall that we assume that the code for the AGI is contained in a file `AGI.m' which can call any auxiliary files that may be written in e.g. Python or C (these files are also supplied to the AGI as part of the question above).
\begin{lstlisting}[style = StyleA, language=Matlab, caption=Code for the AGI $\Gamma$.]
function [output] = AGI(string)
		% Code for the AGI, defined on strings that describe Linear Programming problems, and returning a candidate solution
		...
end
\end{lstlisting}

	\textbf{Step (II): The code for gammaApprox.m}. We present the code \emph{gammaApprox.m}. This code computes the algorithms $\hat{\Gamma}^j_i$ and $\hat\Gamma^0_i$  mentioned in the proof of Lemma \ref{lemma:assumptions_verification}, which we recall here: these are $\hat \Gamma^{j}_i: \mathbb{N} \times \mathbb{N} \to \Q$ with $\hat \Gamma^{j}_i(n,r) = f_i(\iota^{j}_r)$, and $\hat \Gamma^{0}_i: \mathbb{N} \to \Q$ with $\hat \Gamma^{0}_i(n) = f_i(\iota^0)$, for $i \in \{1,2,\dotsc,d\}$, $j \in \{1,2\}$, and $r \in \N$.
		Recall that $f_{i} \colon \Omega_{1,d} \to \Q$ is the evaluation function $f_i \in \Lambda$ that on input $\iota \in \Omega_{1,d}$ return its $i$th coordinate $f_i(\iota) \in \Q$.
	
\begin{lstlisting}[style = styleA, language=Matlab, caption=Code for $\hat \Gamma$.]
function [oValue] = gammaApprox(j, n, t, coordinateIndex)
		% Returns the coordinateIndex-th entry of the input iota^j_t
		% (Note that t is irrelevant when j==0)
		a1 = 1/2;
		a2 = 1/2;
		if j == 1
				a2 = a2 - 4^(-t);
		elseif j == 2
				a1 = a1 - 4^(-t);
		end
		
		if coordinateIndex == 1
				oValue = a1;
		elseif coordinateIndex == 2
				oValue = a2;
		elseif coordinateIndex > 2
				oValue = 0;
		end
end
\end{lstlisting}
	
It is clear from its construction that the code for $\hat\Gamma$, on input $(j,n,t,i)$ (where $i = $  \verb*|coordinateIndex|) returns the $i$th coordinate of $\iota^j_t$ if $j \neq 0$, or of $\iota^0$ if $j = 0$, as desired.
These outputs are precisely $f_i(\iota^j_t) = \hat\Gamma^{j}_i(n,t)$ and $f_i(\iota^0) = \hat\Gamma^0_i(n)$ respectively, as desired.
	 
	 Note that, if $j = 0$, the parameter $t$ is irrelevant.
	
	\textbf{Step (III): The code for phi.m.} We now proceed to present the code \emph{phi.m}, which computes the function $\phi^m$ defined in equation \eqref{eq:def_phiDerived} of Lemma \ref{lem:constructing_phi^m}. We recall that, for every coordinate index $i \in \{1,\dots, d\}$, the function $\phi^m_i \colon \N \to \Q$ is defined by:
	\begin{equation*}
		\phi^m_i(n):= \begin{cases} \hat\Gamma^1_i(n,t) & \text{ if } [m \in \mathcal{W}(m,t)] \land [t \leq n] \land [\varphi_m(m) = 1];\\ 
			\hat\Gamma^2_i(n,t) & \text{ if } [m \in \mathcal{W}(m,t)] \land [t \leq n] \land [\varphi_m(m) = 2];\\
			\hat \Gamma^0_i(n) & \text{ otherwise.}
		\end{cases}
	\end{equation*}
	We recall again that the parameter $m$ used to represent an \textit{integer} in Lemma \ref{lem:constructing_phi^m}, namely the G\"odel number of a Turing machine; whereas in the current context, $m$ is the \textit{string} that encodes a MATLAB routine. We implement $\phi$ as follows, with an explanation of why this represents $\phi$ described below:

\begin{lstlisting}[style = styleA, language=Matlab, caption=Code for $\{\phi^{m}_i\}_{i = 1}^d$.]
function [approxCoordinateValue] = phi(fileName, maxTime, coordinateIndex, dimension)
		% Computes phi^m_i(n), which is a 2^(-n-1) approximation to the i-th coordinate of an input iota determined by the code m
		% In particular, m = fileName is the file name of a code, i = coordinateIndex is an integer between 1 and d = dimension, and n = MaxTime gives the accuracy error 2^(-n) on the input coordinate
		phiStatus = parfeval(str2func(fileName), 1, dimension, fileName);
		wait(phiStatus, "finished", maxTime);
		
		% If running m on itself hasn't halted within n steps, set j = 0
		if ~strcmp(phiStatus.State, "finished") || ~isempty(phiStatus.Error)
			cancel(phiStatus);
			pathway = 0;
		else	% Otherwise, m has halted within n steps, so assign j to either 0, 1, or 2
			outputValue = fetchOutputs(phiStatus);
			if outputValue == 1 || outputValue == 2
					pathway = outputValue;
			else
					pathway = 0;
			end
			% Find the time t it takes for code m to halt on itself (surely t is at most n)
			timeTaken = seconds(phiStatus.RunningDuration);	
		end
		if pathway == 0
			approxCoordinateValue = gammaApprox(0, maxTime, 0, coordinateIndex);	
			% Recall that if j == 0, then t is superfluous
		else
			approxCoordinateValue = gammaApprox(pathway, maxTime, timeTaken, coordinateIndex);
		end
end
\end{lstlisting}
	
	The code for $\phi$ implements the function $\phi^m_i(n)$ on every input $(m,n,i)$ where (for the sake of writing clear code) $m = $ \verb*|fileName| is a string, $n = $ \verb*|maxTime| is a natural number and $i = $ \verb*|coordinateIndex| is between $1$ and $d =$ \verb*|dimension|. 
	
	Recall from the definition of $\phi^m_i$ as in Lemma \ref{lem:constructing_phi^m} that $\phi^m_i(n)$ runs machine $m$ on input $m$ for a certain number of steps (or amount of time) indicated by $n$: if the procedure has halted before time $n$ (more precisely, at time $t = $  \verb*|timeTaken|  $\leq n$) and has returned a value $j = $ \verb*|outputValue| $ \in \{0,1\}$, then $\phi^m_i(n)$ correspondingly returns $\hat\Gamma(j,n,t,i)$; in all other cases -- explicitly, if either machine $m$ on input $m$ has not yet halted before time $n$, or if it has halted within $n$ seconds but returned an output not in $\{0,1\}$ -- then $\phi^m_i(n)$ returns $\hat\Gamma(0,n,0,i)$.

	An extra word needs to be spent on how $\phi$ implements `running machine $m$ on input $m$', and checking if this procedure `has halted by step $n$'. In the definition of $\phi^m_i(n)$ as in equation \eqref{eq:def_phiDerived}, these two properties are stated respectively by checking the value of $\varphi_m(m)$, and by assessing the condition $[m \in \mathcal{W}(m,t)] \land [t \leq n]$, where the set $\mathcal{W}$ is defined precisely by checking how many steps it takes for an algorithm to halt (see \S \ref{sec:constructing} for a precise definition). However, as noted in Remark \ref{rmk:properties_of_W}, any function $\mathcal{W}'$  satisfying properties \ref{item:W_property1} and \ref{item:W_property2} can be used instead of $\mathcal{W}$ to complete the argument. In particular, whilst it was useful to consider the \textit{number of steps} of a Turing machine for the theoretical results of \S \ref{section:proof_4}, here, we instead rely on the \textit{amount of time} taken, which is a concept much more relevant to practical implementations of the CRP. Primarily for that reason the execution $\varphi_m(m)$ for a code $m$ is implemented via parallel computing as follows:
	
\begin{lstlisting}[style = styleB, language=Matlab]
phiStatus = parfeval(str2func(fileName), 1, dimension, fileName);
\end{lstlisting}
	In this line, the function  \verb*|parfeval| runs the function \verb*|str2func(fileName)| (which returns one output: hence the value \verb*|1| in the call to \verb*|parfeval|) on input $m =$  \verb*|fileName| and dimension $d = $ \verb*|dimension|, running this computation in a parallel thread. The reason for the usage of parallel computation is principally driven by the need to avoid the main thread stalling; this is particularly important in the case where  \emph{phi.m} does not terminate while running $m$ on $m$ itself. We can use the command
\begin{lstlisting}[style = styleB, language=Matlab]
wait(phiStatus, "finished", maxTime);
\end{lstlisting}
	to ensure that the computation finishes within a maximum of \verb*|maxTime| $=n$ seconds. The code of \verb*|phi| continues, either because of some failure (e.g. \verb*|maxTime| seconds elapsed or an error was thrown) or because the execution of $\varphi_m(m)$ completed. We can check if the former occurred with the following segment of code
\begin{lstlisting}[style = styleB, language=Matlab]
if ~strcmp(phiStatus.State, "finished") || ~isempty(phiStatus.Error)
		cancel(phiStatus);
pathway = 0;
\end{lstlisting}
	in which the parallel thread is cancelled and we set $j = 0$.
	
	Ultimately, if the thread completes we can measure the time taken using the following command
\begin{lstlisting}[style = styleB, language=Matlab]
timeTaken = seconds(phiStatus.RunningDuration);	
\end{lstlisting}
	The lines above, therefore, implement the complement of the condition $[m \in \mathcal{W}(m,t)] \land [t \leq n]$.
	
	The rest of the code for \emph{phi.m} is a straightforward interpretation of the definition of the function $\phi$ as in Lemma \ref{lem:constructing_phi^m}.
	
	\textbf{Step (IV): The code for gammaFunc.m.} We now present the code \emph{gammaFunc.m}, which implements the function
	\begin{align}\label{eq:gamma_with_codes}
		\gamma_d(m)\coloneq 
		\begin{cases}
			1 & \text{ if } \Gamma(\Phi(d,\phi^{m})) \in \ball{\kappa}{S^2}; \\
			2 & \text{ otherwise}.
		\end{cases}
	\end{align}
	for any $d \in \N$ and $m$ name of a MATLAB function.

		The function $\gamma_d$ is a straightforward adaptation of the language-free function $\gamma$ defined in equation \eqref{eq:gamma} to the current language-specific case of MATLAB and for the LP problem in dimension $d$. 
		
	We recall again that $m$, in the current context, does not represent an integer but rather the name of a MATLAB function. 
	
\begin{lstlisting}[style = styleA, language=Matlab, caption=Code for $\gamma_d$.]
function [gammaAnswer, LPAnswer] = gammaFunc(dimension, fileName)
		codeForAI = cat(2, 'Consider the integer dimension $d= ', num2str(dimension), '$. Provide an approximation within $\kappa = 10^{-1}$ in the $\| \cdot \|_\infty$ norm of a minimiser of the following Linear Programming problem
		\[
		\min_{x \in \R^d} \ \langle x , c \rangle,  \ \text{ such that } \ Ax = y, \ x \geq 0
		\]
		where $c = \ones_{d}\in \Q^{d}$ is the $d$-dimensional vector with $1$ in each entry, $y = 2\kappa = 2 \cdot 10^{-1}$, and the input $A \in \Q^{1 \times d}$ is given as follows.  
		The input $A \in \Q^{1 \times d}$ is given via the following computer code: calling the MATLAB code @(maxTime,coordinateIndex,dimension) phi(''', fileName,''',maxTime,coordinateIndex,dimension) with parameters $(n,i,d)$ will give the $i$th coordinate of $A$ to accuracy $2^{-n}$ , provided $i \leq d$. 
		You are also given access to all the dipendencies for the previous MATLAB code: gammaApprox.m, phi.m,  ', fileName,'.m, and AGI.m.');
		LPAnswer = AGI(codeForAI);
		insideBoolean = inBallS2(LPAnswer, 0.1);
		if insideBoolean
				gammaAnswer = 1;
		else
				gammaAnswer = 2;
		end
end

function insideBoolean = inBallS2(x, kappa)
		compVec = zeros(size(x));
		compVec(2) = 4 * kappa;
		insideBoolean = max(abs(x - compVec)) < kappa;
end
\end{lstlisting}

	The code for $\gamma_d$ is a straightforward implementation of the function $\gamma_d$ defined in \eqref{eq:gamma_with_codes} with the ball $\ball{\kappa}{S^2}$ assumed to be in the $\| \cdot \|_\infty$ norm. Recall from \S \ref{sec:collection_problems} that in the case of Linear Programming, we have $S^2 = \{4\kappa e_2\} \subseteq \Q^d$ and that $\kappa = 10^{-1}$. On input $m$, in order to run the equivalent of $ \Gamma(\Phi(d,\phi^{m}))$, the code \emph{gammaFunc} contains the lines (shortened here for sake of brevity):
\begin{lstlisting}[style = styleB, language=Matlab]
codeForAI = cat(2, 'Consider the integer dimension' ... ', and AGI.m.');
LPAnswer = AGI(codeForAI);
\end{lstlisting}
	which passes the string that describes the Linear Programming question $\stringinput(d,m)$ as an input to the AGI.

	\textbf{Step (V): The code $\code$.} After having introduced all of its dependencies, we finally analyse the MATLAB code $\code$. Recall that it is given by the single line:

\begin{lstlisting}[style = StyleA, language=Matlab, caption={Code $\code$, inducing the failure sentence $\Phi^l = \Phi(d,\code)$ where $d = l+1$.}]
@(maxTime, coordinateIndex, dimension) phi(`gammaFunc', maxTime, coordinateIndex, dimension)
\end{lstlisting}

	The code for $\code$ is an anonymous function - as introduced by the symbol `@' - that on input (\text{maxTime}, \text{coordinateIndex}, \text{dimension}) calls the function \text{phi} on such inputs and with string parameter fixed to $m =$ \emph{`gammaFunc'}. Therefore, the MATLAB code $\code$ simply computes the function $\phi^{\text{`gammaFunc'}}$. This is straightforward code and so we do not discuss its implementation further. This code provides the entry point for the type of question used in the proof of CRP II (more precisely, it implements $\Phi^l = \Phi(d,\code)$ with $d = l+1$).

	\textbf{Step VI: Verification of \eqref{eq:agi_wrong}.} The verification that the code above will make the AGI fail follows the same lines of the proof of \eqref{crp:2} of Theorem \ref{thm:crp_1_2}, and more precisely that of Proposition \ref{prop:DrivingNegativeProposition}. The main adaptation that needs to be made is that in the current context, Turing machines are not accessed via their G\"odel numbers, but directly by the MATLAB strings that encode them. As has been previously mentioned, the main difference is therefore that $m$ no longer represents an integer, but rather a code.
	
	The verification that $\Gamma$ will fail on $\Phi^l$ is a straightforward rewording of the proof of Proposition \ref{prop:DrivingNegativeProposition} in the current context, making the syntactical adaptations listed above. For completeness, we provide the full verification here.
	
	We now proceed to show that $\Gamma(\stringinput^l)$ is not a correct solution to the problem described by $\Phi^l$. Recall that the question $\Phi^l$ describes the problem of providing an approximation within $\kappa = 10^{-1}$ to a solution $\Xi_{\LP}(\iota)$ of the Linear Problem as in \eqref{eq:LP} with dimensions $\no = d =l+1$ and $\no = 1$, where $\iota = (y,A)$ is such that $\Phi^l$ corresponds to $\iota$. Thus, we proceed to show that $\Gamma(\Phi^l) \notin \ball{\kappa}{\Xi_{\LP}(\iota)}$. Recall from \S \ref{sec:collection_problems} the sets $S^1,S^2 \subseteq \Q^d$ and the inputs $\iota^1_n, \iota^2_n \in \Omega_{1, d}$, which satisfy $\Xi_{\LP}(\iota^j_n) \in S^j$ for every $n \in \N$ and $j \in \{1,2\}$, and are such that $\ball{\kappa}{S^1} \cap \ball{\kappa}{S^2} = \emptyset$ with $\kappa = 10^{-1}$.
	
	To prove the desired conclusion, consider the value of $\gamma_{d}(\text{`gammaFunc'})$, where $\gamma_{d}$ is given in \eqref{eq:gamma_with_codes}.
	From the definition of $\gamma_d$ and from the assumption that the AGI $\Gamma$ must halt when given question $\Phi^l=\Phi(d,\code)$, returning the output $\Gamma(\Phi^l) = $ \text{LPAnswer}, the value of $\gamma_d(\text{`gammaFunc'})$ is necessarily either $1$ or $2$. We consider these cases separately.
	
	\emph{Case (I)}: Suppose that $\gamma_d(\text{`gammaFunc'}) = 1$. Consider the execution $\gamma_d(\text{`gammaFunc'})$. We see that from Lines 10-14 from \emph{gammaFunc.m} the vector \text{LPAnswer} must be such that the call to \text{inBallS2(LPAnswer,0.1)} evaluates to `true'. Examining the code of \text{inBallS2} from Line 18 through to Line 22 of \emph{gammaFunc.m} shows us that the vector \text{LPAnswer} must be contained in $\ball{\kappa}{S^2}$. Note from Lines 2-9 of \emph{gammaFunc.m} that the vector \text{LPAnswer} $= \Gamma(\Phi^l)$ is precisely the output of the AGI to the question $\Phi^l = \Phi(d,\code)$. Therefore, we have observed that $\Gamma(\Phi^l) \in \ball{\kappa}{S^2}$. On the other hand, let us compute the true solution $\Xi_{\LP}(\iota)$ where $\iota$ is such that $\Phi^l=\Phi(d,\code)$ corresponds to $\iota$. As analysed previously, $\code$ computes the function $\phi^{\text{`gammaFunc'}}$, which provides approximations to an input $\iota= (2\cdot 10^{-1}, A) \in \Q \times \Q^{1 \times d}$ in the following sense: when executed on input $(n,i,d)$, the function $\phi^{\text{`gammaFunc'}}$ returns a $2^{-n}$ approximation to the $i$th coordinate of $A$. We now proceed to determine $\iota$.
	
	For every $(n,i,d) = (\text{maxTime}, \text{coordinateIndex}, \text{dimension}) \in \N^3$, Lines 4-5 of \emph{phi.m} execute the function $\gamma_d(\text{`gammaFunc'})$ until time $n$. Since we are assuming that $\gamma_d(\text{`gammaFunc'})=1$, for large enough time $n$ this procedure will halt. Therefore, for such large values of $n$, the `if' condition in Lines 8-10 of \emph{phi.m} will not be met, and instead we execute the \text{else} branch in Line 11 of \emph{phi.m}. Furthermore, as the procedure $\gamma_d(\text{`gammaFunc'})$ returns output $1$, the variable \text{outputValue} will be set to $1$ in Line 12 of \emph{phi.m}. Line 14 of \emph{phi.m} then assigns the value $j = $ \text{pathway} $= 1$. Finally, Line 25 of \emph{phi.m} calls the function \emph{gammaApprox} with input $(1,n,t,i)$ (where the value of $t=$ \text{timeTaken} is the time taken to run $\gamma_{d}(\text{`gammaFunc'})$), thus returning exactly the $i$th coordinate $\hat\Gamma^1_i(n,t) = f_{i}(\iota^1_t)$ of the input $\iota^1_t$ as defined in \eqref{eq:iota} in \S \ref{sec:collection_problems}. 
	
	We conclude that $\phi^{\text{`gammaFunc'}}$ returns approximations to the coordinates of an input $\iota^1_t$ for some $t \in \N$. Therefore the question $\Phi^l$ corresponds to $\iota^1_t$. From the setup of \S \ref{sec:collection_problems} (specifically, Lemma \ref{lemma:assumptions_verification}), we know that $\Xi_{\LP}(\iota^1_t) \subseteq S^1$ and that $\ball{\kappa}{S^1} \cap \ball{\kappa}{S^2} = \emptyset$. Recalling that we observed earlier that $\Gamma(\Phi^l) \in \ball{\kappa}{S^2}$, we conclude that $\Gamma(\Phi^l) \notin \ball{\kappa}{\Xi_{\LP}(\iota)}$, as desired.
	
		\emph{Case (II)}: Suppose that $\gamma_d(\text{`gammaFunc'}) = 2$. The argument in this case is analogous to the previous case. We note the following differences: in the execution $\gamma_d(\text{`gammaFunc'})$, Lines 10-14 from \emph{gammaFunc.m} must be such that the call to \text{inBallS2(LPAnswer,0.1)} evaluates to `\textit{false}' rather than `true', and thus examining the code of \text{inBallS2} from Line 18 through to Line 22 of \emph{gammaFunc.m} shows us that the vector \text{LPAnswer} must be \textit{outside} $\ball{\kappa}{S^2}$. Thus $\Gamma(\Phi^l) \notin \ball{\kappa}{S^2}$.

		Execution of the MATLAB code $\code$ proceeds identically, with the only difference being that Line 14 of \emph{phi.m} assigns the value $j = $ \text{pathway} $= 2$, so that $\phi^{\text{`gammaFunc'}}$ returns approximations to the coordinates of an input $\iota^2_t$ for some $t \in \N$. Thus, the question $\Phi^l$ corresponds to $\iota^2_t$, From the setup of \S \ref{sec:collection_problems} it holds that $\Xi_{\LP}(\iota^2_t) \subseteq S^2$, and since we already observed that $\Gamma(\Phi^l) \notin \ball{\kappa}{S^2}$, we conclude that $\Gamma(\Phi^l) \notin \ball{\kappa}{\Xi_{\LP}(\iota)}$, as desired.

	Either way, we have proven that $\Gamma(\Phi^l) \notin \ball{\kappa}{\Xi_{\LP}(\iota)}$, so that $\Gamma(\Phi^l)$ is not a correct solution to the problem described by $\Phi^l$. Thus the AGI given by $\Gamma$ is guaranteed to fail on the question $\Phi^l = \Phi(d,\code)$, and the verification of \eqref{eq:agi_wrong} is complete.
	
	Since $d \in \{1,\dotsc, K\}$ was arbitrary, this proves that there are $K$ codes $\Phi^1, \dotsc, \Phi^K$ on which the AGI is guaranteed to fail, for every $K \geq 2$.
	
	\textbf{Step VII: Checking the length of the failure sentence for the AGI}: Finally, for the fixed values of fixed $K \in \N$ and $d \in \{1,\dotsc, K\}$, we compute the length of the question $\Phi^l=\Phi(d,\code)$ where we recall that $d = l+1$. The number of characters in this question is given by the sum of the lengths of the separate codes (namely \emph{AGI.m} and its dependencies, as well as the MATLAB code $\code$ and its dependencies \emph{gammaApprox.m}, \emph{phi.m}, and \emph{gammaFunc.m}) and the length of the English sentences appearing in $\mathcal{C}_{\LP}$. 
	
	Note that the decimal expansion of the dimension $d$ appears in the English sentence \emph{`Consider the integer dimension $d={d_k d_{k-1} \dotsc d_2 d_1}$'}. The contribution to the overall length from including this dimension is at most $\log_{10}(d) + 1$, which is bounded from above by $\log_{10}(K+1) + 1 \leq \log_{10}(K) +2$.
	
	The length $\operatorname{len}(\emph{`Consider the integer dimension... , and AGI.m.'})$  is therefore at most $879 + \log_{10}(K)$. The lengths of the MATLAB codes and dependencies (with comments removed) are as follows: $\operatorname{len}(\code) \leq 92$, $\operatorname{len}(gammaApprox.m) \leq 307 $, $\operatorname{len}(phi.m) \leq 708$, and $\operatorname{len}(gammaFunc.m) \leq 1314$. Moreover, recall that $\operatorname{len}(\Gamma)$ denotes by definition the sum of $\operatorname{len}(AGI.m)$ and the length of all its dependencies.
	
	By adding the previous quantities together, we conclude that the length of the code $\Phi^l$ is bounded above by
	\begin{align*}
		\operatorname{len}(\stringinput^d) \leq \operatorname{len}(\Gamma) +  \log_{10}(K) + \epsilon
	\end{align*}
	where $\epsilon = \epsilon_{\text{MATLAB}} = 3300$ characters. This concludes the proof of the Theorem.
\end{proof}

\bibliographystyle{abbrv}
\bibliography{References_CRP_SM}

\end{document}